\documentclass[11pt]{article}

\usepackage{natbib}
\usepackage{amsmath}
\usepackage{amssymb}
\usepackage{amsthm}
\usepackage{url}

\usepackage{hyperref}

\hypersetup{colorlinks,
            linkcolor=blue,
            citecolor=blue,
            urlcolor=magenta,
            linktocpage,
            plainpages=false}
\usepackage{algorithm}
\usepackage{algpseudocode}
\usepackage{mathtools}
\usepackage{ifthen}
\usepackage{xcolor}
\usepackage{booktabs}
\usepackage{makecell}
\usepackage{pifont}
\usepackage{fullpage}

\newenvironment{keywords}
{\bgroup\leftskip 27pt\rightskip 27pt \small\noindent{\bfseries
Keywords:} \ignorespaces}%
{\par\egroup\vskip 0.25ex}

\reversemarginpar

\newcommand{\bs}[1]{\boldsymbol{\mathbf{#1}}}
\newcommand{\hbs}[1]{\hat{\boldsymbol{\mathbf{#1}}}}
\newcommand{\tbs}[1]{\tilde{\boldsymbol{\mathbf{#1}}}}

\newcommand{\bbC}{\mathbb{C}}

\newcommand{\bbE}{\mathbb{E}}

\newcommand{\bbI}{\mathbb{I}}

\newcommand{\bbN}{\mathbb{N}}

\newcommand{\bbR}{\mathbb{R}}

\newcommand{\cB}{\mathcal{B}}

\newcommand{\cD}{\mathcal{D}}

\newcommand{\cF}{\mathcal{F}}

\newcommand{\cM}{\mathcal{M}}
\newcommand{\cN}{\mathcal{N}}
\newcommand{\cO}{\mathcal{O}}
\newcommand{\cP}{\mathcal{P}}

\newcommand{\cR}{\mathcal{R}}

\newcommand{\rmmin}{\mathrm{min}}
\newcommand{\rmmax}{\mathrm{max}}

\newcommand{\rmbias}{\mathrm{bias}}
\newcommand{\rmvar}{\mathrm{var}}
\newcommand{\rmeff}{\mathrm{eff}}

\newcommand{\defeq}{\stackrel{\smash{\mathrm{\scriptscriptstyle def}}}{=}}
\newcommand{\phrel}[1][=]{\mathrel{\phantom{#1}}}
\renewcommand{\leq}{\leqslant}
\renewcommand{\geq}{\geqslant}

\DeclareMathOperator{\tr}{tr}

\DeclareMathOperator*{\argmin}{arg\,min}

\newcommand{\subtitle}[2][]{%
  \par
  \ifthenelse{\equal{#1}{}}%
    {\noindent\emph{#2}}%
    {\noindent\textbf{#1} \emph{#2}}%
  \par
}

\theoremstyle{definition}
\newtheorem{definition}{Definition}

\theoremstyle{plain}
\newtheorem{theorem}{Theorem}

\theoremstyle{plain}
\newtheorem{assumption}{Assumption}

\theoremstyle{plain}
\newtheorem{lemma}{Lemma}

\theoremstyle{plain}
\newtheorem{corollary}{Corollary}

\theoremstyle{remark}
\newtheorem{remark}{Remark}

\theoremstyle{plain}

\title{Accelerating Single-Pass SGD for Generalized Linear Prediction}

\author{Qian Chen$^{\dagger}$ \quad Shihong Ding$^{\dagger}$ \quad Cong Fang$^{\dagger}$ \\
\\
        \small $^{\dagger}$Peking University
}
\date{}

\begin{document}

\maketitle

\begin{abstract}

We study generalized linear prediction under a streaming setting, where each iteration uses only one fresh data point for a gradient-level update. While momentum is well-established in deterministic optimization, a fundamental open question is whether it can \emph{accelerate} such single-pass non-quadratic stochastic optimization. We propose the \emph{first} algorithm that successfully incorporates momentum via a novel data-dependent proximal method, achieving dual-momentum acceleration. Our derived excess risk bound decomposes into three components: an \emph{improved} optimization error, a minimax \emph{optimal} statistical error, and a \emph{higher-order} model-misspecification error. 
The proof handles mis-specification via a fine-grained stationary analysis of inner updates, while localizing statistical error through a two-phase outer-loop analysis.
As a result, we resolve the open problem posed by \citet{jain2018accelerating} and demonstrate that momentum acceleration is more effective than variance reduction for generalized linear prediction in the streaming setting.
\end{abstract}

\begin{keywords}
  Momentum Acceleration; Data-dependent Proximal Method; Single-Pass SGD
\end{keywords}

\section{Introduction}
This paper considers Generalized Linear Prediction in the form of :
\begin{equation}
    \min_{\bs x\in\bbR^d} F(\bs x)= \bbE_{(\bs a,b)\sim\cD}~\ell(\bs a^\top\bs x,b),\tag{GLP}\label{eq:glp}
\end{equation}
where the objective $F$ minimizes the expected convex loss $\ell$ over linearly projected data $\bs a$ with $(\bs a, b)$ drawn from an underlying distirbution $\cD$. 
This problem is ubiquitous in machine learning and finds broad application across domains. Formally, it encapsulates maximum likelihood estimation (MLE) for generalized linear models (GLMs), where specific choices of the loss $\ell$ recover fundamental learning problems including linear and logistic regression. We study this problem in a large-scale streaming setting, where an algorithm is permitted only a gradient-level $\cO(d)$ computation from a \emph{fresh} data point per-itertion. The central interest is to improve the data/runtime complexity required to approximately solve \eqref{eq:glp}. We primarily focus on loss functions $\ell$ that are strongly convex and gradient Lipschitz continuous for the first argument, and we denote  $\alpha$ as the condition number of the loss (see Section~\ref{sec:assumptions} for the definition). 

A direct comparison baseline of our constrained streaming algorithms is those free of computational and memory constraints.  Among them, a commonly used approach is to compute the Empirical Risk Minimization of the form:
\begin{equation}
   \hat{\bs x} \in  \argmin_{\bs x} \frac{1}{N}\sum_{i=1}^N \ell(\bs a_i^\top\bs x,b).\label{eq:glperm}
\end{equation}
For \eqref{eq:glperm}, a non-asymptotic sample complexity of $\frac{\alpha \sigma_0^2}{\varepsilon}$ for the excess risk $F(\hat{\bs x})-F(\bs x^*)$ can be established via the celebrated localization Rademacher technique~\citep{bartlett2005local,wainwright2019high}, where $\bs x^*$ denotes the population minimizer of \eqref{eq:glp}, $\sigma_0^2$ relates to the Lipschitzness of $\ell$. The sample complexity is at least $\frac{\alpha\sigma_*^2}{\varepsilon}$, where $\sigma_*^2=\tr(\bs H^{-1}\bs Q)$, $\bs Q$ is the stochastic gradient covariance at $\bs x^*$, and $\bs H$ is an upper bound of $\nabla^2 F$ (see Section~\ref{sec:quantity} for definition). We show that this sample complexity is optimal when only strong convexity and gradient Lipschitz continuity are imposed on $\ell$ in Appendix~\ref{sec:statistical-optimality}.

Turning to the online setting, a natural question arises: how much additional data is required under our streaming constraint? It should be noted that standard Stochastic Gradient Descent (SGD) is inefficient for solving \eqref{eq:glp}, as it fails to exploit the problem structure. For linear regression (where $\alpha = 1$), it is known that certain variants of SGD can indeed improve the sample complexity. For example, under a well-specified model, a sample complexity of $\kappa \log(1/\varepsilon) + \frac{\sigma_*^2}{\varepsilon}$ for SGD with tail-averaging~\citep{jain2018parallelizing,zou2023benign} and exponential step size decay~\citep{ge2019step,wu2022last}, respectively. Interestingly, the complexity decomposes into the optimal statistical term and an optimization term that depends only logarithmically on the target accuracy, scaled by $\kappa$. Here, $\kappa$ denotes the condition number of the data distribution (see Section~\ref{sec:assumptions} for details). It can be bounded as $\kappa \leq L/\mu$, provided that the data satisfy $\|\bs a\|^2\leq L$ and $\mu$ is the minimum eigenvalue of the data covariance matrix $\bs\Sigma\defeq \bbE_{\bs a\sim\cD} [\bs a \bs a^\top]$.

In recent years, several works~\citep{frostig2015competing,li2022root} have attempted to solve \eqref{eq:glp} in the general setting, obtaining complexity bounds that decompose into a statistical term and an optimization term imposed by the streaming constraint. 
The core idea of these algorithms is to use \emph{variance-reduction} techniques that separate the gradient variance into the variance at the optimum and an optimization-controllable error component, thereby localizing the statistical error. However, a common limitation of these results is  the optimization complexity dependence  on the problem condition number $\alpha^2\kappa$. The optimization complexity characterizes the time required to converge from an initial error. Specifically, the excess risk does not reduce to a non-trivial level (i.e., becomes $o(1)$) unless the sample size exceeds $\alpha^2\kappa$.

In deterministic optimization, momentum techniques such as Nesterov's acceleration~\citep{nesterov1983method} and the heavy-ball method~\citep{polyak1964some} are well-known to accelerate convergence across different settings. Yet, how to effectively incorporate momentum into our streaming optimization problem remains largely open.

For general stochastic convex problems, there is a prevailing view that momentum may not offer efficient acceleration. For instance, \citet{agarwal2012informationtheoretic} establishes the optimality of SGD under general strongly convex objectives with gradient variance. In contrast, for well-specified linear regression—as a structured estimation problem—\citet{jain2018accelerating} shows that the optimization term can indeed be accelerated while preserving the optimal statistical complexity. Specifically, \citet{jain2018accelerating} introduces a statistical condition number of the data, denoted $\tilde{\kappa}$ (detailed in Section~\ref{sec:assumptions}), which is strictly smaller than $\kappa$, and improves the optimization complexity to $\sqrt{\kappa\tilde{\kappa}}$. Their work leaves two open problems: (i) extending the analysis to allow for model misspecification, and (ii) exploring more general estimation settings where momentum remains efficient. This paper addresses both challenges by proposing the \emph{first} algorithm that successfully incorporates momentum for \emph{generalized linear prediction} without relying on fixed Hessian structure and model specification.

\subsection{Review: Previous Results}
\paragraph{Well-specified Linear Regression.}
The central idea enabling SGD-type algorithms to attain optimal statistical complexity—along with an additional optimization term—is to model stochastic gradient descent as a stochastic process and to establish convergence of the variable in distribution~\citep{dieuleveut2016nonparametric,jain2018parallelizing,ge2019step,zou2023benign,wu2022last,li2024risk}. This convergence analysis naturally yields the optimization complexity, while a variance computation for the stationary distribution captures the statistical error.
\citet{jain2018accelerating} improves the optimization term  and attains the complexity of
\begin{equation*}
    \sqrt{\kappa\tilde{\kappa}}+\frac{\sigma_*^2}{\varepsilon},
\end{equation*}
ignoring logarithmic factors, by employing momentum acceleration with a finer-grained analysis. This result, however, is restricted to well-specified linear regression, since the stationary-distribution analysis critically relies on the quadratic form of the objective and the commutativity of the gradient noise covariance and the problem Hessian.

\paragraph{Variance Reduction for Generalized Linear Prediction.} \citet{frostig2015competing} and \citet{li2022root} study a slightly more general stochastic convex optimization problem that does not assume a linear structure in the individual functions. They require the objective $F$ to be $\mu_h$-strongly convex and each component to be $L_h$-gradient Lipschitz continuous, and further crucially impose a Hessian dominance condition $\nabla^2 F(\bs x^*)\preceq\alpha_h\nabla^2 F(\bs x)$ tailored to generalized linear prediction. Adapting their results to our setting, \citet{frostig2015competing} attains a complexity\footnote{\citet{frostig2015competing} obtain a sample complexity of $\alpha_h\kappa_h+\frac{\alpha_h\tr((\nabla F(\bs x^*))^{-1}\bs Q)}{\varepsilon}$ in Corollary~4, where $\kappa_h=L_h/\mu_h$. In the worst case, one has $\alpha_h=\alpha$, $\kappa_h=\alpha\kappa$ and $\tr((\nabla F(\bs x^*))^{-1}\bs Q)=\alpha\sigma_*^2$, which is the complexity in \eqref{eq:svrg-complexity}.} of
\begin{equation}    
    \alpha^2\kappa+\frac{\alpha^2\sigma_*^2}{\varepsilon},\label{eq:svrg-complexity}
\end{equation}
where the term $\alpha^2\kappa$ corresponds to the optimization cost and $\frac{\alpha^2\sigma_*^2}{\varepsilon}$ is the statistical sample complexity. 
\citet{li2022root} obtains the sample complexity\footnote{\citet{li2022root} obtain a sample complexity of $\kappa_h+\frac{\ell_\Xi^2}{\mu_h^2}+\frac{\tr((\nabla F(\bs x^*))^{-1}\bs Q)}{\varepsilon}+\cO(\varepsilon^{-2/3})$ in Corollary~3. In the worst case, one has $\kappa_h=\alpha\kappa$, $\ell_\Xi^2/\mu_h^2=\alpha^2\kappa\cdot\frac{\lambda_\rmmax(\bs\Sigma)}{\lambda_\rmmin(\bs\Sigma)}$ and $\tr((\nabla F(\bs x^*))^{-1}\bs Q)=\alpha\sigma_*^2$, which is the complexity in \eqref{eq:root-sgd-complexity}.} of
\begin{equation}
    \alpha^2\kappa\cdot\frac{\lambda_\rmmax(\bs\Sigma)}{\lambda_\rmmin(\bs\Sigma)}+ \frac{\alpha\sigma_*^2}{\varepsilon}+\cO\left(\frac{1}{\varepsilon^{2/3}}\right).\label{eq:root-sgd-complexity}
\end{equation}
This work requires an additional Hessian smoothness assumption to further reduce the statistical complexity. Moreover, the last term above hides problem-dependent factors, including the Hessian smoothness parameter.  It remains open whether variance reduction is fundamentally necessary for \eqref{eq:glp} and how to improve the dependence on the optimization complexity.

\subsection{Our Results and Implications}
This paper proposes momentum acceleration for \eqref{eq:glp} in the streaming setting.  Our algorithm's main idea lies in a data-dependent proximal scheme, with the proximal operator explicitly constructed from the expected data covariance $\bs \Sigma$.  This makes double momentum acceleration applicable to both the inner and outer loops, and we obtain a complexity of
\begin{equation*}
    \left(\sqrt{\alpha\kappa\tilde{\kappa}}+\alpha^2\tilde{\kappa}\right)+\frac{\alpha\sigma_*^2}{\varepsilon}+\left(\frac{\alpha^2\tilde{\kappa}^2\tr\bs Q}{L_\ell\mu\varepsilon}\right)^{1/3}.
\end{equation*}

The above excess risk bound is composed of three interpretable components: 
(i) an optimization term with \emph{improved} dependence on the problem and data condition numbers, where the first term from the doubly accelerated scheme, and the second term is the effect of noise; (ii) the estimation term matching the \emph{optimal} statistical risk, and (iii) a \emph{higher-order} mis-specification error term capturing the coupling effect of constrained computation with model mis-specification, which we provide refined characterization. Our framework can be extended to weakly convex objectives and naturally accommodates the use of unlabeled data, mini-batching, and parallel implementations.

The analysis of our algorithm is composed of two parts: (i) In the inner loop, the sub-problem resembles linear regression but with model mis-specification. We provide a fine-grained characterization of the effect of model mis-specification; (ii) The outer-loop requires us to localize the statistical error to the minimum points. We propose a two-phase analysis showing that variance reduction is not needed in our problem.

As a result, we resolve the open problems posed by \citet{jain2018accelerating}. We also demonstrate that momentum acceleration is more efficient than variance reduction for streaming generalized linear prediction. This finding is fundamentally different from known results for non-convex optimization in the streaming setting. There, the goal is to find an approximate stationary point, and
momentum fails to improve the worst-case rate over plain SGD even with Hessian smoothness assumptions \citep{fang2019sharp,jin2018accelerated}. However, variance-reduction methods such as SVRG improve $\varepsilon^{-3.5}$ to $\varepsilon^{-10/3}$~\citep{allen2016variance,reddi2016stochastic}, with SPIDER~\citep{fang2018spider} and SARAH~\citep{nguyen2017sarah} attaining the optimal $\varepsilon^{-3}$. We hope our work offers insights toward designing accelerated streaming methods for broader classes of convex and non-convex problems.

\paragraph{Notations.}
For a vector $\bs x$, let $\|\bs x\|$ denote $2$-norm, and for a symmetric positive semidefinite matrix $\bs A$, $\|\bs x\|_{\bs A}=\sqrt{\bs x^\top\bs A\bs x}$ denotes the induced norm. Let $\|\bs A\|$ denote the spectral norm of $\bs A$. For positive semi-definite matrix $\bs\Sigma$, we denote the maximum and minimum eigenvalue of $\bs\Sigma$ by $\lambda_\rmmax(\bs\Sigma)$ and $\lambda_\rmmin(\bs\Sigma)$, respectively. Define $\kappa(\bs\Sigma)=\lambda_\rmmax(\bs\Sigma)/\lambda_\rmmin(\bs\Sigma)$ as the condition number of $\bs\Sigma$.
We denote the Gaussian distribution with mean $\bs\mu$ and covariance $\bs\Sigma$ by $\cN(\bs\mu,\bs\Sigma)$.
For two nonnegative sequences $\{a_n\}$ and $\{b_n\}$, we write $a_n = \cO(b_n)$ (resp.\ $a_n = \Omega(b_n)$) if there exists a universal constant $C>0$ such that $a_n \le C b_n$ (resp.\ $a_n \ge C b_n$) for all sufficiently large $n$, and we say $a_n = \Theta(b_n)$ if both hold. We use $\tilde{\cO}(\cdot)$, $\tilde{\Omega}(\cdot)$, and $\tilde{\Theta}(\cdot)$ to hide logarithmic factors. We write $a_n \lesssim b_n$ (resp.\ $a_n \gtrsim b_n$) if $a_n=\cO(b_n)$ (resp.\ $a_n=\Omega(b_n)$).

\section{Related Work}
\paragraph{Stochastic Approximation.}
Stochastic Approximation methods date back to the seminal work of \citet{robbins1951stochastic}. A line of work has developed non-asymptotic convergence rate for stochastic gradient descent in convex and strongly convex settings, and analyzed the effect of step size schedules, and averaging schemes~\citep{moulines2011non,bach2013non,defossez2015averaged,dieuleveut2016nonparametric,ge2019step,zhang2025learning,sheshukova2025nonasymptotic}. In the linear regression setting, refined non-asymptotic analyses show that iterate-averaged SGD attains the optimal statistical rate $\sigma^2d/n$~\citep{bach2013non,jain2018parallelizing,dieuleveut2017harder,jain2018accelerating}. However, such optimal rates do not generally extend beyond quadratic objectives. Recent works developed algorithms to achieve such an optimal statistical rate~\citep{frostig2015competing,li2022root}. Specifically, Streaming SVRG~\citep{frostig2015competing} adopts the idea of variance reduction to the streaming setting and achieves the statistical performance of ERM. ROOT-SGD~\citep{li2022root} achieves a nonasymptotic rate, and asymptotic rate matches the Cram\'er-Rao lower bound. These analyses rely on third-order continuous conditions such as self-concordance or Hessian smoothness, while we need only second-order conditions.

\paragraph{Momentum Acceleration and Variance Reduction.}
Momentum methods originate from the heavy-ball method~\citep{polyak1964some} and Nesterov’s accelerated gradient descent~\citep{nesterov1983method}, which achieve accelerated and optimal convergence rates. In the streaming setting, the effect of momentum is more subtle, as it does not improve the worst-case statistical rate compared to SGD in general~\citep{agarwal2012informationtheoretic}. However, in the linear regression, several works establish accelerated convergence of the optimization error and optimal statistical error by exploiting the quadratic structure~\citep{jain2018accelerating,dieuleveut2017harder,pan2024accelerated,li2024risk,zhang2024optimality,liu2025optimal}. However, their analyses rely heavily on matrix calculations, which are non-trivial to extend to general objectives. For non-convex problems, \citet{fang2019sharp} proved the complexity for SGD to find a second-order stationary point is $\cO(\varepsilon^{-3.5})$, matches that of momentum-based methods~\citep{jin2018accelerated}. Variance reduction techniques accelerate stochastic optimization by exploiting problem structure to reduce stochastic noise. For convex problem, in the finite-sum setting, SDCA~\citep{shalev2013stochastic} considers the same problem as \eqref{eq:glp}. A line of work developed various algorithms with momentum acceleration~\citep{shalev2014accelerated,lin2018catalyst,allen2018katyusha}. In the streaming setting, SVRG~\citep{frostig2015competing} and ROOT-SGD~\citep{li2022root} improve the statistical term but offer no acceleration to the optimization term. For non-convex problems, variance reduction techniques have provably shown benefits in both finite-sum and streaming settings. Specifically, in the streaming setting, SVRG achieves a complexity of $\varepsilon^{-10/3}$~\citep{allen2016variance,reddi2016stochastic}, while recursive variance reduction methods, such as SARAH~\citep{nguyen2017sarah}, attain the optimal rate of $\varepsilon^{-3}$.

\section{Problem Setup}\label{sec:setup}
We restate Generalized Linear Prediction as follows:

\begin{equation*}
    \min_{\bs x\in\bbR^d} F(\bs x)\defeq\bbE_{\bs a,b\sim\cD}\ \ell(\bs a^\top\bs x,b).\tag{GLP}
\end{equation*}

\subsection{Assumptions}\label{sec:assumptions}

We make the following assumptions on the objective~\eqref{eq:glp} and the data distribution $\cD$.
\begin{assumption}[Condition Number of Loss Function]\label{assumption:l-condition}
    Let $\ell''(\cdot,\cdot)$ denote the second derivative with respect to its first argument. Assume there exists $L_\ell,\mu_\ell>0$ such that $\mu_\ell\leq\ell''(\bs a^\top\bs x,b)\leq L_\ell$. Let $\alpha\defeq L_\ell/\mu_\ell$ be the condition number of the loss function $\ell$.
\end{assumption}

Assumption~\ref{assumption:l-condition} implies that $\ell$ is $L_\ell$-smooth and $\mu_\ell$-strongly convex, which is widely used in optimization literature~\citep{nesterov2018lectures}. This assumption is also commonly adopted in the analysis of generalized linear models, such through tools like local Rademacher complexity~\citep{bartlett2005local,wainwright2019high}.

\begin{assumption}[Second Moment Condition on Data]\label{assumption:regularity}
    We assume that the second-order moment matrix $\bs\Sigma\defeq\bbE\bs a\bs a^\top$ exists and is finite. Moreover, we assume that $\bs\Sigma$ is positive definite, with minimum eigenvalue $\mu\defeq\lambda_{\rmmin}(\bs\Sigma)>0$.
\end{assumption}

Assumption~\ref{assumption:regularity} is a mild regularity condition. Combined with Assumption~\ref{assumption:l-condition}, it implies that the objective $F$ is $L_\ell\lambda_\rmmax(\bs\Sigma)$-smooth and $\mu_\ell\mu$-strongly convex. We will extend
our result to the weakly convex case in Section \ref{sec:extensions}.

\begin{assumption}[Fourth Moment Condition on Data]\label{assumption:fourth-moment}
    We assume the following boundness condition of the fourth moment:
    \begin{equation*}
        \bbE_{\bs a\sim\cD}\left(\|\bs a\|^2\bs a\bs a^\top\right)\preceq R^2\bs\Sigma,\quad
        \bbE_{\bs a,b\sim\cD}\left(\|\bs a\|_{\bs\Sigma^{-1}}^2\bs a\bs a^\top\right)\preceq\tilde{\kappa}\bs\Sigma.
    \end{equation*}
\end{assumption}

Assumption~\ref{assumption:fourth-moment} is commonly used in the analysis of SGD for linear regression~\citep{jain2018accelerating,zou2023benign,wu2022last,li2024risk}. 
The quantity $R$ can be viewed as the effective radius of the data $\bs a\sim\cD$. Specifically, if $\| \bs a \| \leq L$ almost surely, then $R=L$. The quantity $\tilde{\kappa}$ can be understood similarly. In our setting, $R^2$ and $\tilde{\kappa}$ characterize the $2$-norm and $\bs\Sigma^{-1}$-norm of the gradient noise, respectively.
We refer to $\kappa=R^2/\mu$ as the condition number of $\cD$ and $\tilde{\kappa}$ as the statistical condition number following \citet{defossez2015averaged,jain2018accelerating}.
Note we always have $\tilde{\kappa}\leq\kappa $ since $\bbE(\|\bs a\|_{\bs\Sigma^{-1}}^2\bs a\bs a^\top)\preceq\frac{1}{\mu}\bbE_{\bs a\sim\cD}(\|\bs a\|^2\bs a\bs a^\top)\preceq\kappa\bs\Sigma$~\citep{jain2018accelerating}.

In the linear regression setting, \citet{jain2018accelerating} improve the optimization complexity from $\kappa$ to $\sqrt{\kappa\tilde{\kappa}}$, where $\tilde{\kappa}\leq\kappa$ holds for all distribtution. The improvement is significant when $\bs\Sigma$ is poor-conditioned, i.e. $\tilde{\kappa}\ll\kappa$. The statistical condition $\tilde{\kappa}$ is necessary since $\tilde{\kappa}$ governs the concentration of the empirical covariance matrix to $\bs\Sigma$ and in some sense, it captures the statistical complexity under the noiseless-label setting~\citep{jain2018accelerating}. We refer the reader to \citet{jain2018accelerating} for more discussion on the necessity of $\tilde{\kappa}$.

\begin{remark}[Effect of Acceleration]\label{remark:examples}
     In the Gaussian design $\bs a \sim \cN(\bs 0, \bs\Sigma)$, one may take $\kappa=3\tr\bs\Sigma/\mu$ and $\tilde{\kappa} = 3d$. So $\kappa \geq \tilde{\kappa}$. More generally, if $\bs\Sigma^{-1/2} \bs a$ is $\sigma_a^2$-sub-Gaussian, then the condition holds with $\kappa = 16 \sigma_a^4\tr\bs\Sigma/\mu$ and $\tilde{\kappa} = 16 \sigma_a^4 d$~\citep{zou2023benign}. In the worst case, $\tr\bs\Sigma/\mu =  \kappa(\bs\Sigma)\tilde{\kappa}$.

\end{remark}

\subsection{Summary of Problem-Dependent Quantities}\label{sec:quantity}
We collect the quantities and present the goal for the overall complexity. We denote $\bs H=L_\ell\bs\Sigma$, which is an upper bound of the objective Hessian $\nabla^2 F$. Let $\bs x^*$ denote the minimizer of $F$, we define the second moment matrix of the gradient noise at minimizer $\bs x^*$ as
\begin{equation*}
    \bs Q=\bbE_{\bs a,b\sim\cD}\left(\ell'(\bs a^\top\bs x^*,b)\right)^2\bs a\bs a^\top.
\end{equation*}

\begin{itemize}
    \item For optimization complexity, the goal is to \emph{improve} the dependence on the loss condition number $\alpha$ and data condition number $\kappa$, $\tilde{\kappa}$. In particular, we are interested in improving the dependence on $\kappa$.  As shown in Remark \ref{remark:examples}, it can be $\kappa(\bs \Sigma)$ times larger than $\tilde{\kappa}$.
    \item 
    For the statistical complexity, our objective is to \emph{preserve} the optimal rate $\frac{\alpha\tr(\bs H^{-1}\bs Q)}{\varepsilon}$.
 In the well-specified linear regression model $b = \bs a^\top \bs x^*+\varepsilon_\mathrm{noise}$, this expression reduces to  $\frac{\sigma^2 d}{\varepsilon}$, where $\sigma^2$
  is the variance of the noise $\varepsilon_\mathrm{noise}$.
  \item For additional complexity, it may emerge from the coupling between the optimization process and model misspecification. We expect this complexity to exhibit a lower-order dependence on $\varepsilon^{-1}$ and $\kappa$ (i.e., to appear as a higher-order term in the risk rate).

\end{itemize}

\begin{algorithm}[t]
    \caption{\textbf{S}tochastic \textbf{A}ccelerated \textbf{D}ata-Dependent \textbf{A}lgorithm (SADA)}\label{alg:sada}
    \begin{algorithmic}
    \Require {Initialization $\tbs x_0$, regularization parameters $\{h_k\}_{k=1}^K$, step sizes $\eta$, $\gamma$, and momentum parameters $\{\beta_k\}_{k=1}^K$, $\theta$, $\tbs x_{-1}=\tbs x_0$}
    \For{$k=1,2,\ldots,K$}
        \State $\tilde{\bs y}_{k-1}\gets\tbs x_{k-1}+\beta_k(\tbs x_{k-1}-\tbs x_{k-2})$ \Comment{Extraplotation}
        \State $\bs x_0\gets\tilde{\bs y}_{k-1}$, $\bs z_0\gets\tilde{\bs y}_{k-1}$
        \For{$t=1,2,\ldots,T$} \Comment{Inner loop for solving subproblem~\eqref{eq:subproblem}}
            \State Sample fresh data $(\bs a_t,b_t)\sim\cD$
            \State $\bs y_{t-1}\gets\frac{1}{1+\theta}\bs x_{t-1}+\frac{\theta}{1+\theta}\bs z_{t-1}$
            \State $\bs{\hat g}_t\gets h_k\ell'(\bs a_t^\top\tbs y_{k-1},b_t)\bs a_t+\left[\bs a_t^\top(\bs y_{t-1}-\tbs y_{k-1})\right] \bs a_t$ \Comment{Compute $\hat{\bs g}_t$ by streaming data}
            \State $\bs x_t\gets\bs y_{t-1}-\eta\hat{\bs g}_t$
            \State $\bs z_t\gets\theta\bs y_{t-1}+(1-\theta)\bs z_{t-1}-\gamma\hat{\bs g}_t$
        \EndFor
        \State $\tbs x_k\gets\frac{2}{T}\sum_{t=T/2+1}^T\bs x_t$ \Comment{Tail-averaging scheme}
    \EndFor

    \State\Return $\tbs x_K$
    
    \end{algorithmic}
\end{algorithm}

\section{Stochastic Accelerated Data-Dependent Algorithm}

We propose the \textbf{Stochastic Accelerated Data-Dependent Algorithm (SADA)} to solve \eqref{eq:glp}, as summarized in Algorithm~\ref{alg:sada}. SADA combines momentum and data-dependent proximal methods to achieve acceleration in the streaming setting. At a high level, the outer loop iteratively constructs data-dependent proximal subproblems. The proximal term is induced by the data covariance and cannot be accessed explicitly. The inner loop approximately solves the subproblem using streaming data to approximate the proximal term, and returns the solution obtained by by tail-averaging the last half iterations.

\subsection{Inner Loop: Accelerated Solver with Tail-Averaging}
The inner loop adopts momentum to accelerate the convergence of optimization error and a tail-averaging scheme to reduce variance. At the $k$-th outer iteration, let $\tilde{\bs y}_{k-1}$ denote the extrapolated variable. The inner loop solves the following data-dependent proximal subproblem:
\begin{equation}
    \min_{\bs x\in\bbR^d}\;\bbE_{\bs a,b\sim \cD} h_k\langle\ell'(\bs a^\top\tbs y_{k-1},b)\bs a,\bs x-\tbs y_{k-1}\rangle+\frac{1}{2}\|\bs x-\tbs y_{k-1}\|_{\bs \Sigma}^2,\label{eq:subproblem}
\end{equation}
where $\ell'(\cdot,\cdot)$ denotes the first-order partial derivative of $\ell$ with respect to its first argument.

Since population covariance $\bs\Sigma$ is not directly accessible, the inner loop uses fresh samples $(\bs a,b)\sim\cD$ to approximate population covariance $\bs\Sigma$ by $\bs a\bs a^\top$. This resembles linear regression with data pair $(\bs a, -h_k\ell'(\bs a^\top\tbs y_{k-1},b))$. However, model mis-specification will occur, and its analysis is non-trivial. Previous studies have only focused on the well-specified setting. To address this gap, we propose a Layer-Peeled Decomposition method to study its stationary distribution. The details are presented in Section~\ref{sec:sketch-inner-loop}.

\subsection{Outer Loop: Data-Dependent Proximal Method with Acceleration}

The outer loop iteratively constructs the proximal subproblem~\eqref{eq:subproblem} and employs momentum to accelerate convergence. Since acceleration often amplifies the error, we need to carefully control the inner loop optimization accuracy.
SADA uses a two-phase step size to achieve an accelerated optimization convergence rate and control the stochastic noise. The first phase uses a large constant step size to rapidly reduce the optimization error, and the second phase uses a gradually decaying step size to control the stochastic noise.

\section{Convergence Result and Algorithmic Complexity}
In this section, we present the convergence rate and complexity analysis of SADA.

\paragraph{Hyperparameter Choice.} We choose inner loop hyperparameters as follows:
\begin{equation}
    \eta\leq\frac{1}{16 R^2},\quad\gamma=\frac{1}{4}\sqrt{\frac{\eta}{\tilde{\kappa}\mu}},\quad\theta=\frac{1}{4}\sqrt{\frac{\mu\eta}{\tilde{\kappa}}},\quad T\geq\tilde{\Omega}\left(\sqrt{\frac{\tilde{\kappa}}{\mu\eta}}\right).\label{eq:param-choice-inner}
\end{equation}
For the hyperparameters of the outer loop, we set step size $h_k=2\alpha\tilde{\theta}_k^2/L_\ell$, and momentum parameters $\beta_k=(1-\tilde{\theta}_k)/(1+\tilde{\theta}_k)$, where
\begin{equation}
    \tilde{\theta}_\rmmax\asymp\min\left\{\frac{1}{\sqrt{\alpha}},\frac{T}{\alpha^2\tilde{\kappa}}\right\},\quad\tilde{\theta}_k=\begin{cases}
        \tilde{\theta}_\rmmax,&k\leq K/2,\\
        \dfrac{4}{4/\tilde{\theta}_\rmmax+k-K/2},&k>K/2.
    \end{cases}\label{eq:param-choice-outer}
\end{equation}
\begin{remark}
    
    In the above setting, the regularization parameter $h_k$ can be viewed as the step size of the outer loop; we will refer to $h_k$ as the step size in the following. $\eta$ is required to be smaller than $1/R^2$ to guarantee convergence, and $\gamma$ and $\theta$ are acceleration hyperparameters, which are proposed in stochastic coordinate descent~\citep{nesterov2012efficiency} and linear regression \citep{jain2018accelerating,li2024risk}. We require the number of inner loop iterations to exceed $T\geq\tilde{\Omega}(\sqrt{\tilde{\kappa}/(\mu\eta)})$ to guarantee the inner loop is sufficiently converged. In the outer loop, by substituting $\tilde{\theta}_k$ in \eqref{eq:param-choice-outer} into $h_k=2\alpha\tilde{\theta}_k^2/L_\ell$, one has the step size $h_k$ is bounded by the inverse smoothness parameter $1/L_\ell$ and an additional term $\frac{T^2}{L_\ell\alpha^3\tilde{\kappa}^2}$ imposed by additional noise. In the first phase ($k\leq K/2$), we choose the maximum step size to accelerate the convergence in distribution; in the second phase ($t>K/2$), we carefully decay the step size to control noise.
\end{remark}

\begin{remark}
    For the Gaussian and sub-Gaussian setting in Remark~\ref{remark:examples}, the inner loop hyperparameters are $\eta\lesssim 1/\tr\bs\Sigma$, $\gamma\asymp\sqrt{\eta/(\mu d)}$, $\theta\asymp\sqrt{\mu\eta/d}$ and $T\gtrsim\tilde{\Omega}(1/(\mu\gamma))$. For the worst case where $\tr\bs\Sigma/\mu=\kappa(\bs\Sigma)d$, the step size $\gamma$ is larger than $\eta$ by a factor of $\sqrt{\kappa(\bs\Sigma)}$, where $\eta$ is also the step size of plain SGD~\citep{jain2018parallelizing,zou2023benign}. This suggests an acceleration by a factor of $\sqrt{\kappa(\bs\Sigma)}$ in the convergence rate.
\end{remark}

\begin{theorem}\label{thm:glm}
    Suppose Assumptions~\ref{assumption:l-condition}, \ref{assumption:regularity} and \ref{assumption:fourth-moment} hold. Let Algorithm~\ref{alg:sada} start from $\tilde{\bs x}_0$, and choose the hyperparameters as specified above in \eqref{eq:param-choice-inner} and \eqref{eq:param-choice-outer}. Let $\tbs x_K$ denote the output of Algorithm~\ref{alg:sada} after $K$ outer iterations, each consisting of $T\geq\tilde{\Omega}(\sqrt{\tilde{\kappa}/(\mu\eta)})$ inner updates. Then we have
    \begin{equation*}
        \bbE F(\tbs x_K)-F(\bs x^*)\lesssim\exp\left(-\frac{c_0 K}{\sqrt{\alpha}+\alpha^2\tilde{\kappa}/T}\right)\bigl(F(\tbs x_0)-F(\bs x^*)\bigr)+\frac{\alpha\tr(\bs H^{-1}\bs Q)}{n}+\frac{\alpha\eta\tilde{\kappa}\tr\bs Q}{L_\ell n},
    \end{equation*}
    where $n=KT$ is the sample size and $c_0$ is a universal constant.
\end{theorem}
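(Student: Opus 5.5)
The proof has two layers. First we bound, for one inner loop, the error of the tail average $\tbs x_k$ relative to the exact minimizer of subproblem~\eqref{eq:subproblem}. Then we treat the outer loop as an inexact accelerated proximal-point method in the $\bs\Sigma$-geometry.

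\textbf{Inner loop.} For fixed $\tbs y_{k-1}$, the subproblem~\eqref{eq:subproblem} has the closed-form minimizer
\[
\bs x_k^\star=\tbs y_{k-1}-h_k\bs\Sigma^{-1}\nabla F(\tbs y_{k-1}).
\]
The inner iteration is accelerated SGD on a least-squares problem with covariates $\bs a_t$ and response $-h_k\ell'(\bs a_t^\top\tbs y_{k-1},b_t)$. This problem is misspecified: the residual
\[
\bs\xi_t=h_k\ell'(\bs a_t^\top\tbs y_{k-1},b_t)\bs a_t+\bs a_t\bs a_t^\top(\bs x_k^\star-\tbs y_{k-1})
\]
has mean zero but is not independent of $\bs a_t$. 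We write the error process $\bs x_t-\bs x_k^\star$ (jointly with $\bs z_t$) as the sum of two parts:
\begin{itemize}
\item a bias part, driven by the initialization $\tbs y_{k-1}-\bs x_k^\star$, which contracts at rate $(1-c\theta)$ per step by the Jain et al.\ potential argument under Assumption~\ref{assumption:fourth-moment} with $\eta,\gamma,\theta$ from \eqref{eq:param-choice-inner};
\item a noise part driven by $\bs\xi_t$.
\end{itemize}
For the noise part we peel layers. The zeroth layer replaces $\bs a_t\bs a_t^\top$ by $\bs\Sigma$, which gives a linear recursion whose tail average has $\bs\Sigma$-covariance of order $\frac{1}{T}\tr(\bs\Sigma^{-1}\mathrm{Cov}\,\bs\xi)$. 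The higher layers come from $\bs a_t\bs a_t^\top-\bs\Sigma$ acting on the stationary covariance. We bound these by $\eta\tilde\kappa\tr(\mathrm{Cov}\,\bs\xi)/T$ using the second inequality of Assumption~\ref{assumption:fourth-moment}. The target inner lemma is
\[
\bbE\|\tbs x_k-\bs x_k^\star\|_{\bs\Sigma}^2\lesssim e^{-c\theta T}\|\tbs y_{k-1}-\bs x_k^\star\|_{\bs\Sigma}^2+\frac{1}{T}\Bigl(\tr(\bs\Sigma^{-1}\bs Q_k)+\eta\tilde\kappa\tr\bs Q_k\Bigr),
\]
where $\bs Q_k$ is the second moment of $\bs\xi$. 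Splitting $\ell'(\bs a^\top\tbs y,b)=\ell'(\bs a^\top\bs x^*,b)+O(L_\ell|\bs a^\top(\tbs y-\bs x^*)|)$ gives
\[
\bs Q_k\lesssim h_k^2\bs Q+h_k^2L_\ell^2\,\tilde\kappa\,\|\tbs y_{k-1}-\bs x^*\|_{\bs\Sigma}^2\,\bs\Sigma
\]
for the $\bs\Sigma^{-1}$-trace, with the analogous bound using $R^2$ for the plain trace. The choice $T\gtrsim\theta^{-1}\log(\cdot)$ makes the contraction term negligible.

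\textbf{Outer loop.} With $h_k\le 2\alpha\tilde\theta_k^2/L_\ell$, the exact step $\bs x_k^\star$ is a preconditioned gradient step. Since $\mu_\ell\bs\Sigma\preceq\nabla^2F\preceq L_\ell\bs\Sigma$, $F$ is $\mu_\ell$-strongly convex and $L_\ell$-smooth in $\|\cdot\|_{\bs\Sigma}$, with condition number $\alpha$. We therefore run Nesterov's estimate-sequence analysis with Lyapunov function
\[
\Phi_k=F(\tbs x_k)-F(\bs x^*)+\tfrac{\mu_\ell}{2}\|\bs v_k-\bs x^*\|_{\bs\Sigma}^2 .
\]
The inexactness $\bs e_k=\tbs x_k-\bs x_k^\star$ has zero mean at the noise level, since tail averaging of a mean-zero residual gives a nearly unbiased error. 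Its second moment enters the recursion multiplied by $1/h_k$-type factors. This yields
\[
\bbE\Phi_k\le(1-c\tilde\theta_k)\bbE\Phi_{k-1}+\frac{C}{h_k}\,\bbE\|\bs e_k\|_{\bs\Sigma}^2 .
\]
The term $\|\tbs y_{k-1}-\bs x^*\|^2$ inside $\bs Q_k$ contributes $h_kL_\ell^2\tilde\kappa\Phi/(\mu_\ell T)$. This is absorbed into the contraction precisely when $\tilde\theta_k\lesssim T/(\alpha^2\tilde\kappa)$, which explains the second entry of $\tilde\theta_{\max}$. What remains is the additive noise
\[
\frac{h_k}{T}\Bigl(\tr(\bs\Sigma^{-1}\bs Q)+\eta\tilde\kappa\tr\bs Q\Bigr),\qquad h_k\asymp\alpha\tilde\theta_k^2/L_\ell .
\]

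\textbf{Two phases.} In the first phase ($k\le K/2$), the constant $\tilde\theta_{\max}$ gives
\[
\exp\Bigl(-cK\min\{\alpha^{-1/2},\,T/(\alpha^2\tilde\kappa)\}\Bigr),
\]
which is the stated exponential, plus a noise floor of order $\tilde\theta_{\max}\alpha\,(\cdots)/(L_\ell T)$. In the second phase, with $\tilde\theta_k\sim 4/(k-K/2)$, we unroll the recursion using the weights $\prod_j(1-c\tilde\theta_j)\sim (k/K)^{4c}$, in the standard $O(1/k)$ step-decay argument. The sum of $\tilde\theta_k^2/T$ weighted by these products gives $\frac{\alpha}{L_\ell KT}(\cdots)$, and the phase-one floor is also killed. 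Using $\tr(\bs\Sigma^{-1}\bs Q)/L_\ell=\tr(\bs H^{-1}\bs Q)$ then yields the final two terms with $n=KT$.

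\textbf{Main obstacle.} The hardest step is the misspecified inner-loop stationary analysis. There the noise and the random covariance $\bs a\bs a^\top$ are dependent and $\bs Q$ does not commute with $\bs\Sigma$. Obtaining the leading term with exactly $\tr(\bs\Sigma^{-1}\bs Q)/T$, while pushing all cross terms into the higher-order $\eta\tilde\kappa\tr\bs Q$ term, requires carefully controlling the layer expansion in the accelerated (non-symmetric) iteration matrix. I would first establish the expansion for the non-accelerated recursion and then transfer it using the Jain et al.\ accelerated potential norm.
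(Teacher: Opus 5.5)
Your architecture matches the paper's. It has a bias/variance split of the inner loop and a layer-peeled expansion of the variance covariance around $\bs A=\bbE\hbs A_t$. The inner noise is localized to $\bs Q$ plus a part proportional to $\|\tbs y_{k-1}-\bs x^*\|_{\bs\Sigma}^2$. That part is absorbed into an inexact accelerated Lyapunov recursion exactly when $\tilde\theta_k\lesssim T/(\alpha^2\tilde\kappa)$, and the two-phase schedule finishes the argument. The outer-loop bookkeeping and the final arithmetic are right (up to the bias remark below), and your target inner lemma is the right statement.

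The gap is the proof of that lemma, which carries the theorem. You assert it, and the one concrete claim you make about it is the plain-SGD picture, which fails here. You say the layers $\ell\ge1$ are of size $\eta\tilde\kappa\tr(\mathrm{Cov}\,\bs\xi)/T$, and you aim to push all cross terms there. That presumes the stationary level of layer $0$ is $\approx\eta\tr\bs M$, which is false for the accelerated matrix. The paper's explicit $2\times2$ computation gives $\langle\mathrm{diag}(\bs O,\bs\Sigma),\tbs\Pi_\infty(\bs M)\rangle\le\frac{\tr(\bs\Sigma^{-1}\bs M)}{4\tilde\kappa}+4\eta\tr\bs M$, and the first piece is genuine. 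For $\eta\lambda_i\gtrsim\theta$, the $\lambda_i$-weighted $(2,2)$ entry is of order $m_i\bigl(\frac{1}{\tilde\kappa\lambda_i}+\eta\bigr)$, dominated by the first part when $\eta\lambda_i\ll1/\tilde\kappa$. Pass this through $\tr(\bs\Sigma^{-1}\bs R^{(\ell)})\le\tilde\kappa\sigma^{(\ell-1)}$, where $\sigma^{(\ell)}=\langle\mathrm{diag}(\bs O,\bs\Sigma),\tbs C_\infty^{(\ell)}\rangle$, and through $L_\rmvar\le64\tr(\bs\Sigma^{-1}\bs R^{(\ell)})/T$. The higher layers then also produce leading-order $\tr(\bs\Sigma^{-1}\bs R)/T$ terms; the paper's bound on $\sum_{\ell\ge1}$ carries $640\tr(\bs\Sigma^{-1}\bs Q)/T$ against $320$ for layer $0$. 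So your goal is unattainable, and also unnecessary. What is needed, and what your sketch never identifies, is that the layer series contracts: $\sigma^{(\ell)}\le\frac{1}{4\tilde\kappa}\cdot\tilde\kappa\,\sigma^{(\ell-1)}+4\eta R^2\sigma^{(\ell-1)}\le\frac12\sigma^{(\ell-1)}$. This uses both fourth-moment inequalities, and it is exactly where the coupling $\theta\gamma=\eta/(16\tilde\kappa)$ and the bound $\eta\le1/(16R^2)$ are forced.

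Your proposed route for the non-symmetric matrix also does not deliver the tail-average bound. You suggest proving the expansion for plain SGD and transferring it through the Jain et al.\ potential. A potential argument controls per-iterate second moments, at best the stationary level above. The $\frac1T\tr(\bs\Sigma^{-1}\bs M)$ rate instead comes from the cross-time terms $\bs A^{|s-t|}\bs C_{\min\{s,t\}}$, i.e., from the resolvent of $\bs A$. The paper uses that $\bs A$ is block-diagonal in the eigenbasis of $\bs\Sigma$. Hence every functional of $\tbs\Pi_t(\bs M)$ it needs depends only on the diagonal entries $m_i$ of $\bs M$ in that basis. This is also why the non-commutativity of $\bs Q$ and $\bs\Sigma$ that you single out is harmless. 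The paper then works eigenvalue by eigenvalue with three tools: $(\bs I-\bs A(\lambda))^{-1}(\eta,q)^\top=\lambda^{-1}(1,1)^\top$, the uniform bound $|(\bs A^k(\lambda)(1,1)^\top)_1|\le2$, and the closed-form stationary $2\times2$ covariance.

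A smaller point in the outer loop: the bias part of $\bs e_k$ is not mean-zero. It cannot simply enter as $\frac{C}{h_k}\bbE\|\bs e_k\|_{\bs\Sigma}^2$, because its cross terms with $\nabla F(\tbs y_{k-1})$ and the momentum sequence do not vanish. The paper keeps it separate as a relative error, $\bbE\|\bs r_k\|_{\bs H}^2\le\frac{L_\ell h_k}{8\alpha}\|\nabla F(\tbs y_{k-1})\|_{\bs H^{-1}}^2$ with $\frac{L_\ell h_k}{8\alpha}=\frac{\tilde\theta_k^2}{4}$. It then absorbs those cross terms by Young's inequality into the gradient-norm descent term, which is why $T$ carries the factor $\ln(4/\tilde\theta_K^2)$.
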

\begin{corollary}[Sample Complexity]\label{corollary:sample-complexity}
    Under the setting of Theorem~\ref{thm:glm}, to obtain a solution $\tilde{\bs x}_K$ with excess risk at most $\varepsilon$, the required sample size $n=KT$ is
    
    \begin{equation*}
        \tilde{\cO}\Bigg( \underbrace{\left(\sqrt{\alpha\kappa\tilde{\kappa}}+\alpha^2\tilde{\kappa}\right)\vphantom{\biggl(\biggr)}}_\text{Optimization term}+\underbrace{\frac{\alpha\tr(\bs H^{-1}\bs Q)}{\varepsilon}\vphantom{\biggl(\biggr)}}_\text{Statistical term}+\underbrace{\left(\frac{\alpha^2\tilde{\kappa}^2\tr\bs Q}{L_\ell\mu\varepsilon}\right)^{1/3}\vphantom{\biggl(\biggr)}}_\text{Mis-specification term} \Bigg).
    \end{equation*}
    We choose the hyperparameter as $\tilde{\Theta}\left(\frac{\alpha\tilde{\kappa}}{\mu n^2}\right)\leq\eta\leq\tilde{\Theta}\left(\min\left\{\frac{1}{R^2},\frac{n\varepsilon L_\ell}{\alpha\tilde{\kappa}\tr\bs Q}\right\}\right)$, $T=\tilde{\Theta}(\sqrt{\tilde{\kappa}/(\mu\eta)})$, $K=n/T$ and other hyperparameters as specified in \eqref{eq:param-choice-inner} and \eqref{eq:param-choice-outer}.
    
\end{corollary}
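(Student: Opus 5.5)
The corollary follows from Theorem~\ref{thm:glm} by choosing $\eta$, $T$ and $K$ so that each of the three error terms is at most $\varepsilon/3$, and then reading off $n=KT$.

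\textbf{Optimization term.} The first term of Theorem~\ref{thm:glm} is at most $\varepsilon/3$ once
\[
K\gtrsim\bigl(\sqrt{\alpha}+\alpha^2\tilde{\kappa}/T\bigr)\log\frac{F(\tbs x_0)-F(\bs x^*)}{\varepsilon}.
\]
Multiplying by $T$, this needs
\[
n=KT\gtrsim\bigl(\sqrt{\alpha}\,T+\alpha^2\tilde{\kappa}\bigr)\log(1/\varepsilon).
\]
Take $T=\tilde{\Theta}(\sqrt{\tilde{\kappa}/(\mu\eta)})$, the smallest value allowed by Theorem~\ref{thm:glm}. Then the requirement becomes
\[
n\gtrsim\sqrt{\alpha\tilde{\kappa}/(\mu\eta)}+\alpha^2\tilde{\kappa}
\]
up to logarithms. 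At the largest admissible step size $\eta\asymp 1/R^2$, we have $\sqrt{\tilde{\kappa}R^2/\mu}=\sqrt{\kappa\tilde{\kappa}}$, which gives the optimization term $\sqrt{\alpha\kappa\tilde{\kappa}}+\alpha^2\tilde{\kappa}$. If $\eta$ is smaller, $\sqrt{\alpha\tilde{\kappa}/(\mu\eta)}\le n$ is equivalent to $\eta\gtrsim\alpha\tilde{\kappa}/(\mu n^2)$. This is exactly the lower end of the stated range of $\eta$.

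\textbf{Statistical term.} The second term is at most $\varepsilon/3$ as soon as $n\gtrsim\alpha\tr(\bs H^{-1}\bs Q)/\varepsilon$.

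\textbf{Mis-specification term.} The third term is at most $\varepsilon/3$ when $\eta\lesssim n\varepsilon L_\ell/(\alpha\tilde{\kappa}\tr\bs Q)$. This is the upper end of the stated range. Together with $\eta\le 1/(16R^2)$, it gives the stated upper bound $\min\{1/R^2,\,n\varepsilon L_\ell/(\alpha\tilde{\kappa}\tr\bs Q)\}$.

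\textbf{Feasibility.} The construction works only if the admissible interval for $\eta$ is nonempty. The lower bound must not exceed $1/R^2$:
\[
\frac{\alpha\tilde{\kappa}}{\mu n^2}\lesssim\frac{1}{R^2}\iff n\gtrsim\sqrt{\alpha\kappa\tilde{\kappa}}.
\]
This is already part of the optimization term. The lower bound must also not exceed the mis-specification cap:
\[
\frac{\alpha\tilde{\kappa}}{\mu n^2}\lesssim\frac{n\varepsilon L_\ell}{\alpha\tilde{\kappa}\tr\bs Q}\iff n^3\gtrsim\frac{\alpha^2\tilde{\kappa}^2\tr\bs Q}{L_\ell\mu\varepsilon}.
\]
This is exactly the cube-root mis-specification term.

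\textbf{Conclusion.} If $n$ exceeds the sum of the three terms, up to logarithmic factors, a valid $\eta$ exists. With $T$ and $K=n/T$ as above, all three terms of Theorem~\ref{thm:glm} are then each at most $\varepsilon/3$.

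\textbf{Bookkeeping.} Two routine points remain. First, $K=n/T$ should be an integer and at least a constant, which holds since $n\gtrsim\sqrt{\alpha}\,T$ up to logarithms. Second, the logarithmic factors hidden in $T$ and in $\log(1/\varepsilon)$ are absorbed into $\tilde{\cO}$.
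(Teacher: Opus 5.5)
Your proposal is correct and follows essentially the same route as the paper. The paper also makes the optimization and statistical terms of Theorem~\ref{thm:glm} small using the lower bounds on $n$, caps the mis-specification term using the upper end of the $\eta$ range, and checks that the $\eta$ interval is nonempty via the two implications $n\gtrsim\sqrt{\alpha\kappa\tilde{\kappa}}$ and $n^3\gtrsim\alpha^2\tilde{\kappa}^2\tr\bs Q/(L_\ell\mu\varepsilon)$. You are in fact more explicit than the paper in showing that the lower end $\eta\gtrsim\alpha\tilde{\kappa}/(\mu n^2)$ is exactly what guarantees $K=n/T\gtrsim\sqrt{\alpha}$ up to logarithms.
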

\begin{remark}
    The sample complexity requirement is sufficient to guarantee the existence of a step size $\eta$ satisfying the above condition (see Appendix~\ref{sec:proof-sample-complexity}). Note that this $\eta$ is chosen for a finite-horizon setting, as it depends on the pre-known sample size $n$. This choice can be extended to an infinite-horizon setting (where the algorithm may be stopped at any time) using the doubling trick~\citep{hazan2014beyond}, incurring only logarithmic overhead. Moreover, the choice of hyperparameters requires pre-knowing some problem-dependent parameters, which is common in optimization methods~\citep{nesterov2018lectures}. Such dependence may be removed by restarting~\citep{o2015adaptive,necoara2019linear} or line-search procedures~\citep{bubeck2015convex}. We leave further investigation for future work.
    
\end{remark}

\paragraph{Optimization Term.}

The optimization complexity consists of two terms. The first term, $\sqrt{\alpha \kappa \tilde{\kappa}}$, corresponds to an accelerated decay of the initial error. This rate reflects a double acceleration effect, induced by momentum mechanisms at both the outer and inner loops of Algorithm~\ref{alg:sada}. The second term $\alpha^2\tilde{\kappa}$ arises from the additional noise caused by localization with magnitude proportional to the distance $\|\tilde{\bs y}_k-\bs x^*\|_{\bs\Sigma}^2$. We improve the complexity  from  previous $\alpha^2 \kappa$ in VR methods to $\alpha^2 \tilde{\kappa}$.

As a representative example, consider the Gaussian case in Remark~\ref{remark:examples} and assume outer condition number $\alpha\asymp 1$ (e.g., linear regression or logistic regression with normalized features and constant level optimal solution). In this case, Algorithm~\ref{alg:sada} achieves an optimization complexity of order $\cO((d\tr\bs\Sigma/\mu)^{1/2})$, improving over the unaccelerated rate $\cO(\tr\bs\Sigma/\mu)$ (note that $d \leq\tr\bs\Sigma/\mu$). In the worst case, $\tr\bs\Sigma/\mu =  \kappa(\bs\Sigma)d$, so our algorithm improves the optimization convergence rate by a factor of $\sqrt{\kappa(\bs\Sigma)}$. The improvement is significant when $\bs\Sigma$ is poorly conditioned.

\paragraph{Statistical Term.}
The statistical term can be interpreted as follows. This term corresponds to the noise covariance $\bs Q$ at the minimizer $\bs x^*$ suppressed along the eigen-direction of the lower bound of the objective Hessian, which is $\bs H/\alpha$. Specifically, the noise $\bs Q$ is filtered through the local curvature: directions with larger curvature contract the noise, whereas flatter directions amplify it.

This statistical term cannot be improved without additional assumptions on the third-order smoothness of $\ell$. We construct a problem class in Appendix~\ref{sec:statistical-optimality} whose minimax risk is lower bounded by $\Omega(\alpha\tr(\bs H^{-1}\bs Q)/n)$, thereby establishing the worst-case optimality of our bound.

\paragraph{Mis-specification Term.}
The mis-specification term is a coupling effect of the approximation error $\bs a\bs a^\top\neq\bs\Sigma$ (also known as fourth moment effect~\citep{jain2018accelerating,li2024risk}) and the possibility that the noise covariance $\bs Q\neq\bs H$. Note that the complexity does not depend on $\kappa$, and  on $\alpha^{2/3}$, $\mu^{-1/3}$, and $\varepsilon^{-1/3}$. For high accuracy with $\varepsilon$, a small step size is required. Theorem~\ref{thm:glm} implies that the misspecification error vanishes asymptotically. 

For well-specified models with $\bs Q=\sigma^2\bs\Sigma$, the mis-specification term can be absorbed into the statistical term with some additional analysis. This holds directly for Gaussian and sub-Gaussian distributions in Remark~\ref{remark:examples} by applying $\kappa\asymp\tr\bs\Sigma/\mu$ and $\tilde{\kappa}\asymp d$ to the bound in Theorem~\ref{thm:glm}, so the mis-specification term $\frac{\alpha\eta\tilde{\kappa}\tr\bs Q}{L_\ell n}=\frac{\sigma^2\alpha\eta d\tr\bs\Sigma}{n}\leq\frac{\sigma^2\alpha d}{n}=\frac{\alpha\tr(\bs H^{-1}\bs Q)}{n}$, where we use $\eta\tr\bs\Sigma\leq 1$.

\paragraph{Answers to Open Problem.}
Our result answers the open problem by \citet{jain2018accelerating} in the General Linear Prediction setting with minimum assumptions. Concretely, we show that acceleration achieves the asymptotic optimal convergence rate in the following sense:
\begin{equation*}
    \limsup_{n\to\infty}\frac{\bbE F(\bs x_n^\text{SADA})-F(\bs x^*)}{\tr(\bs H^{-1}\bs Q)/n}=\cO(\alpha),
\end{equation*}
where, as discussed above, the term $\cO(\alpha)$ is optimal in our setting.

\section{Proof Sketch}\label{sec:proof-sketch}

\subsection{Part I: Analysis of Inner Loop}\label{sec:sketch-inner-loop}

The subproblem~\eqref{eq:subproblem} resembles linear regression with data pair $(\bs a, -h_k\ell'(\bs a^\top\tbs y_{k-1},b))$. We denote the minimizer of \eqref{eq:subproblem} by $\tilde{\bs x}_k^*$. Applying the standard bias-variance decomposition~\citep{dieuleveut2016nonparametric,jain2018accelerating,zou2023benign,wu2022last,li2024risk}, we can decompose the dynamics of $\bs\eta_t\defeq\begin{pmatrix}
    \bs x_t-\tbs x_k^* \\
    \bs y_t-\tbs x_k^*
\end{pmatrix}\in\bbR^{2d}$ into the sum of bias term $\bs\eta_t^\rmbias$ and variance term $\bs\eta_t^\rmvar$:

\begin{equation*}
    \begin{cases}
        \bs\eta_t^\rmbias=\hat{\bs A}_t\bs\eta_{t-1}^\rmbias,&\bs\eta_0^\rmbias=\bs\eta_0, \\
        \bs\eta_t^\rmvar=\hat{\bs A}_t\bs\eta_{t-1}^\rmvar\bs+\bs\zeta_t,&\bs\eta_0^\rmvar=\bs 0,
    \end{cases}\quad\text{where}\quad\hbs A_t=\begin{pmatrix}
        \bs O & \bs I-\eta\bs a_t\bs a_t^\top \\
        -\frac{1-\theta}{1+\theta}\bs I & \frac{2}{1+\theta}\bs I-\frac{\eta+\theta\gamma}{1+\theta}\bs a_t\bs a_t^\top
    \end{pmatrix}.
\end{equation*}
The inner loop output $\tilde{\bs x}_k$ can be decomposed as $\tilde{\bs x}_k=\tilde{\bs x}_k^*+h_k\bs r_k+h_k\bs v_k$, where $\bs r_k\in\bbR^d$ is the bias term that captures the optimization error, and $\bs v_k\in\bbR^d$ is the noise term with zero mean. The overall result in the inner loop is shown in Lemma~\ref{lemma:inner-loop}.
The bound of $\bs r_k$  follows from \citet{jain2018accelerating} using standard acceleration technique shown in Appendix~\ref{sec:bias-bound}.  The main focus is to derive a bound of $\|\bs v_k\|_{\bs H}^2$, which is non-trivial. 

\paragraph{Bound of $\bs v_k$.}
It requires to  analyze the covariance $\bs C_t=\bbE\left(\bs\eta_t^\rmvar(\bs\eta_t^\rmvar)^\top\right)$. Its dynamics of $\bs C_t$ admits
\begin{equation*}
    \bs C_t=\cB\circ\bs C_{t-1}+\begin{pmatrix}
    \eta^2\bs R & \eta q\bs R \\
    \eta q\bs R & q^2\bs R \\
\end{pmatrix},\quad\bs C_0=\bs O.
\end{equation*}
where $\bs R\in\bbR^{2d\times 2d}$ is the noise covariance and $\cB:\bs C\mapsto\bbE\hbs A_t\bs C\hbs A_t^\top$ is a linear matrix operator acts on $\bbR^{2d\times 2d}$ and $\circ$ represents the operation of a linear matrix operator on a matrix. Note that in prior work on well-specified linear regression, where $\bs R = \sigma^2 \bs\Sigma$ commutes with the data covariance $\mathbf{\Sigma}$, the stationary property of $\bs C_t$ is relatively simple to analyze. In contrast, our setting necessitates understanding the behavior of $\bs C_t$ for a general $\bs R$.

We propose the Layer-Peeled Decomposition, which decomposes $\bs C_t$ to layers of dynamics $\tbs C_t$ and $\tbs C_t^{(\ell)}$. 
Specifically, we introduce $\bs C_t=\tbs C_t+\sum_{\ell=0}^T\tbs C_t^{(\ell)}$, where
\begin{equation*}
\begin{aligned}
    &\tbs C_t=\bs A\tbs C_{t-1}\bs A+\bs R,&\quad&\tbs C_0=\bs O,&\quad&\tbs C_t^{(0)}=\tbs C_t, \\
    &\tbs C_t^{(\ell)}=\bs A\tbs C_{t-1}^{(\ell)}\bs A+(\cB-\bs A\otimes\bs A)\circ\tbs C_t^{(\ell-1)},&\quad&\tbs C_0^{(\ell)}=\bs O, &\quad&\text{for $\ell=1,2,\ldots$}
\end{aligned}
\end{equation*}
and $\bs A=\bbE\hat{\bs A}_t$. Intuitively, $\tbs C_t$ can be viewed as the covariance of dynamics by replacing $\bs a\bs a^\top$ by $\bs\Sigma$, and $\tbs C_t^{(\ell)}$ collects these approximation errors. In order to study  $\tbs C_t$ and  $\tbs C_t^{(\ell)}$,   we introduce a core auxiliary dynamics $\tbs\Pi_t(\bs M)\in\bbR^{2d\times 2d}$ for $\bs M\in\bbR^{d\times d}$  as follows:
\begin{equation}
    \tbs\Pi_t(\bs M)=\bs A\tbs\Pi_{t-1}(\bs M)\bs A+\begin{pmatrix}
        \eta^2\bs M & \eta q\bs M \\
        \eta q\bs M & q^2\bs M
    \end{pmatrix},\quad\tbs\Pi_0(\bs M)=\bs O.\label{eq:sketch-Pi}
\end{equation}
Because we have $\tbs C_t=\tbs\Pi_t(\bs R)$, and $\tbs C_{t-1}^{(\ell)}$ can be bounded by $\tbs\Pi_t(\bs R^{(\ell)})$ for some $\bs R^{(\ell)}$ which relates to the stationary covariance of $\bs C_t^{(\ell-1)}$. As a result, we can recursively have the result for $\tbs C_t^{(\ell)}$ to its limit, thereby bounding the variance.

The detailed proof for bounding the variance is provided in Appendix \ref{sec:variance-bound}. In Definition~\ref{def:L-var}, we introduce a linear mapping 
$L_\rmvar$ to calculate the loss of a general covariance matrix sequence.
In Appendix~\ref{sec:core-dynamics}, we study the dynamics of \eqref{eq:sketch-Pi}. In particular, we obtain a sharp upper bound of $L_\rmvar(\{\tbs\Pi_t(\bs M)\}_{t\in\bbN})$ in Lemma~\ref{lemma:Pi-average-bound} and stationary covariance $\tbs\Pi_\infty(\bs M)=\lim_{t\to\infty}\tbs\Pi_t(\bs M)$ in Lemma~\ref{lemma:Pi-stationary-bound}. In Appendix~\ref{sec:localize-R}, we study the noise covariance matrix $\bs R$, and decompose $\tr(\bs\Sigma^{-1}\bs R)$ and $\tr(\bs R)$. Finally, in Appendix~\ref{sec:bound-C-1}, we introduce $\tbs C_t$ and $\tbs C_t^{(\ell)}$. We first show that the variance can be further decomposed as follows: $\bbE\|\bs v_k\|_{\bs\Sigma}^2=L_\rmvar(\{\tbs C_t\}_{t\in\bbN})+\sum_{\ell=1}^T L_\rmvar(\{\tbs C_t^{(\ell)}\}_{t\in\bbN})$ in Lemma~\ref{lemma:v-decom}. Then by plugging in Lemma~\ref{lemma:L-var-bound-C-l}, we have the final bound of variance.

\subsection{Part II: Analysis of Outer Loop}\label{sec:sketch-outer-loop}

The bounds of the outer loop mainly exploit the structure of the solution $\tbs x_k$ returned by the inner loop. Our proof technique follows the analysis of AGD, but with a careful control of the noise Term (C). The main result is the following lemma.
\begin{lemma}
    Suppose we have $h_k\lesssim\min\left\{\frac{1}{L_\ell},\frac{T^2}{L_\ell\alpha^3\tilde{\kappa}^2}\right\}$ and set $\theta_k=\sqrt{\frac{h_k}{2\alpha}}$. Let the energy $L_k=\bbE F(\tbs x_k)-F(\bs x^*)+\frac{2\theta_k^2}{3 L_\ell h_k}\bbE_{k-1}\|\tbs z_k-\bs x^*\|_{\bs H}^2$, then we have
    \begin{equation*}
        L_k\leq\left(1-\frac{\theta_k}{2}\right) L_{k-1}+\frac{3 h_k\sigma^2}{2},\quad\text{where $\sigma^2\lesssim\frac{L_\ell\tr(\bs H^{-1}\bs Q)}{T}+\frac{\eta\tilde{\kappa}\tr\bs Q}{T}$}.
    \end{equation*}
\end{lemma}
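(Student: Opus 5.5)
We treat the outer loop as an inexact accelerated proximal-gradient method in the geometry of $\bs H=L_\ell\bs\Sigma$. The output of the inner loop is written as $\tbs x_k=\tbs x_k^*+h_k\bs r_k+h_k\bs v_k$, where
\[
\tbs x_k^*=\tbs y_{k-1}-h_k\bs\Sigma^{-1}\nabla F(\tbs y_{k-1})
\]
is the exact minimizer of \eqref{eq:subproblem}. Equivalently, $\tbs x_k^*$ is a preconditioned gradient step $\tbs y_{k-1}-L_\ell h_k\bs H^{-1}\nabla F(\tbs y_{k-1})$. Assumption~\ref{assumption:l-condition} gives
\[
\tfrac{1}{\alpha}\bs H\preceq\nabla^2F\preceq\bs H,
\]
so $F$ is $1$-smooth and $1/\alpha$-strongly convex in $\|\cdot\|_{\bs H}$. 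The momentum variable $\tbs z_k$ is the usual AGD estimate sequence, implicitly defined through $\tbs y_{k-1}=\frac{1}{1+\theta_k}\tbs x_{k-1}+\frac{\theta_k}{1+\theta_k}\tbs z_{k-1}$ with $\beta_k=(1-\theta_k)/(1+\theta_k)$. The proof then runs the standard Lyapunov argument for strongly convex AGD with step $L_\ell h_k\le 1$ and $\theta_k=\sqrt{h_k/(2\alpha)}$, which is $\sqrt{\text{step}\times\text{strong convexity}}$ in the $\bs H$-norm up to constants.

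\textbf{Deterministic descent.} We first apply the descent lemma at $\tbs y_{k-1}$ with the perturbed step $\tbs x_k=\tbs x_k^*+\bs e_k$, where $\bs e_k=h_k(\bs r_k+\bs v_k)$. This gives
\[
F(\tbs x_k)\le F(\tbs y_{k-1})-c\,h_k\|\nabla F(\tbs y_{k-1})\|_{\bs H^{-1}}^2+\text{cross terms in }\bs e_k+\tfrac12\|\bs e_k\|_{\bs H}^2.
\]
We combine this with strong convexity evaluated at $\bs x^*$ and at $\tbs x_{k-1}$, weighted by $\theta_k$ and $1-\theta_k$, and with the three-point expansion of $\|\tbs z_k-\bs x^*\|_{\bs H}^2$. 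In the noiseless case this yields $L_k\le(1-\theta_k)L_{k-1}$. With errors, we lose half the contraction to absorb the cross terms via Young's inequality, which leaves the factor $1-\theta_k/2$ and an additive term of order $\frac{1}{h_k}\bbE\|\bs e_k\|_{\bs H}^2$ (the $\tbs z$ update amplifies the error by $1/\theta_k$).

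\textbf{Error terms from Lemma~\ref{lemma:inner-loop}.} Three pieces remain. (A) The mean-zero noise $\bs v_k$ has vanishing cross terms conditional on $\cF_{k-1}$, so only $h_k^2\bbE\|\bs v_k\|_{\bs H}^2$ remains. (B) The bias $\bs r_k$ is exponentially small in $T\sqrt{\mu\eta/\tilde\kappa}$ relative to $\|\tbs y_{k-1}-\tbs x_k^*\|_{\bs\Sigma}^2$ by the choice of $T$, so it is absorbed into the contraction. (C) The variance bound from the inner loop is a localized quantity. The noise $\bs R$ involves $\ell'(\bs a^\top\tbs y_{k-1},b)^2\bs a\bs a^\top$, and we split it as
\[
\ell'(\bs a^\top\tbs y_{k-1},b)=\ell'(\bs a^\top\bs x^*,b)+O\bigl(L_\ell|\bs a^\top(\tbs y_{k-1}-\bs x^*)|\bigr).
\]
The first part gives
\[
h_k^2\bbE\|\bs v_k\|_{\bs H}^2\lesssim h_k^2\Bigl(\frac{L_\ell\tr(\bs H^{-1}\bs Q)}{T}+\frac{\eta\tilde\kappa\tr\bs Q}{T}\Bigr),
\]
with the $\tr(\bs\Sigma^{-1}\bs R)$ and $\tr\bs R$ parts coming from the decomposition in Appendix~\ref{sec:localize-R}. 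This is the $\frac32h_k\sigma^2$ term after one power of $h_k$ is spent on the $1/h_k$ amplification. The second part uses Assumption~\ref{assumption:fourth-moment}:
\[
\bbE\bigl[(\bs a^\top\bs u)^2\|\bs a\|_{\bs\Sigma^{-1}}^2\bigr]\le\tilde\kappa\|\bs u\|_{\bs\Sigma}^2 .
\]
It therefore contributes roughly $h_kL_\ell^2\tilde\kappa\|\tbs y_{k-1}-\bs x^*\|_{\bs\Sigma}^2/T$. In turn, $\|\tbs y_{k-1}-\bs x^*\|_{\bs H}^2\lesssim\alpha(F(\tbs x_{k-1})-F^*)+\|\tbs z_{k-1}-\bs x^*\|_{\bs H}^2$, which is $\lesssim\alpha L_{k-1}$ after matching the weight $\theta_k^2/(L_\ell h_k)\asymp1/(\alpha L_\ell)$.

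\textbf{Main obstacle.} Step (C) is where we expect the difficulty: the localization term must be dominated by $\frac{\theta_k}{4}L_{k-1}$. Tracking powers gives the requirement $h_kL_\ell\alpha^{3/2}\tilde\kappa/T\lesssim\theta_k\asymp\sqrt{h_k/\alpha}$, which is exactly the stated condition $h_k\lesssim T^2/(L_\ell\alpha^3\tilde\kappa^2)$. The first attempt will be this crude Young/Cauchy–Schwarz bookkeeping. If the powers do not close, we would refine by bounding the $\bs\Sigma^{-1}$-weighted fourth-moment term directly via Lemma~\ref{lemma:L-var-bound-C-l} with $\bs R^{(\ell)}$ localized at $\tbs y_{k-1}$.
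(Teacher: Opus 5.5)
Your proposal is correct and follows essentially the paper's argument: an inexact AGD Lyapunov analysis in the $\bs H$-geometry. In it, the mean-zero error $\bs v_k$ contributes $\tfrac{3}{2}h_k\sigma^2$ after the $1/(L_\ell h_k)$ amplification, the bias $\bs r_k$ is made negligible by the choice of $T$, and the localized part of the inner-loop variance must be dominated by a fraction of $\theta_k$, which is exactly what forces $L_\ell h_k\lesssim T^2/(\alpha^3\tilde\kappa^2)$. The only difference is bookkeeping: the paper takes the localized variance from Lemma~\ref{lemma:inner-loop} as $L_{\mathrm{eff}}L_\ell^2(F(\tbs y_{k-1})-F(\bs x^*))$ and absorbs the resulting factor $(1+\tfrac{3}{2}L_{\mathrm{eff}}L_\ell h_k)$ on $F(\tbs y_{k-1})-F(\bs x^*)$ via $3L_{\mathrm{eff}}L_\ell h_k\le\theta_k/4$, whereas you bound $\|\tbs y_{k-1}-\bs x^*\|_{\bs H}^2\lesssim\alpha L_{k-1}$ and eat into the contraction directly (your displayed requirement should read $h_kL_\ell\alpha\tilde\kappa/T\lesssim\theta_k\asymp\sqrt{L_\ell h_k/\alpha}$, without the extra $\sqrt{\alpha}$).
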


We employ a two-phase step size scheme as in \eqref{eq:param-choice-outer}. For the first phase ($t\leq K/2$), we choose the maximum step size to accelerate the convergence of the initial error. For the second phase ($t>K/2$), we use $h_{k+K/2}\asymp\frac{1}{\mu_\ell(a+k)^2}$ to control the noise accumlation.

\section{Extensions and Discussions}\label{sec:extensions}

\subsection{Weakly Convex Case}
We apply the reduction technique to extend our results to the weakly convex case. Suppose $\|\bs x^*\|\leq D$ and $\|\bs x^*\|_{\bs H}\leq M$, then we consider optimizing the following sorrogate function:
\begin{equation*}
    G(\bs x)=\bbE_{\bs a,b\sim\cD}\bigg(\ell(\bs a^\top\bs x,b)+\underbrace{\frac{\varepsilon}{2 M^2}\left(\bs a^\top\bs x\right)^2}_{\text{Term (A)}}\bigg)+\underbrace{\frac{\varepsilon}{2 D^2}\|\bs x\|^2}_\text{Term (B)},
\end{equation*}
where Term (A) is needed when $\ell$ is weakly convex ($\mu_\ell\leq \varepsilon$), and Term (B) is required when $\bs\Sigma$ is poor-conditioned ($\mu\leq\varepsilon$). The algorithm SADA-WC (Algorithm~\ref{alg:sada-wc}) is deferred to Appendix~\ref{sec:sada-extension}.

\begin{corollary}[Sample Complexity]
    Under the setting of Theorem~\ref{thm:glm}. Suppose we run Algorithm~\ref{alg:sada-wc} to obtain a solution $\tilde{\bs x}_K$ with excess risk at most $\varepsilon$, the required sample size $n$ is
    \begin{equation*}
        \left(\sqrt{\frac{L_\ell R^2\tilde{\kappa}}{\mu'_\ell\mu'}}+\frac{L_\ell^2\tilde{\kappa}}{(\mu'_\ell)^2}\right)+\frac{L_\ell\tr((\bs H+\varepsilon\bs I/D^2)^{-1}\bs Q)}{\mu'_\ell\varepsilon}\vphantom{\left(\frac{\tilde{\kappa}^2\tr\bs Q}{\mu\varepsilon}\right)^{1/3}}+\left(\frac{L_\ell\tilde{\kappa}^2\tr\bs Q}{(\mu'_\ell)^2\mu'\varepsilon}\right)^{1/3},
    \end{equation*}
    where $\mu'_\ell=\max\{\mu_\ell,\varepsilon\}$ and $\mu'=\max\{\mu,\varepsilon\}$.
    
\end{corollary}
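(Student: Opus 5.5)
The plan is to reduce the weakly convex case to Theorem~\ref{thm:glm} and Corollary~\ref{corollary:sample-complexity}, applied to the surrogate $G$. Everything then comes down to tracking how the problem parameters change. I will write $G(\bs x)=\bbE\,\tilde\ell(\bs a^\top\bs x,b)+\frac{\varepsilon}{2D^2}\|\bs x\|^2$ with $\tilde\ell(s,b)=\ell(s,b)+\frac{\varepsilon}{2M^2}s^2$. The ridge term will be handled by viewing it as a linear loss over augmented data: append to the stream, with small probability, a deterministic-design component. Equivalently, replace $\bs\Sigma$ by $\bs\Sigma'=\bs\Sigma+\frac{\varepsilon}{L_\ell D^2}\bs I$ in the proximal term, which is what SADA-WC does.

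\textbf{Step 1: constants of the surrogate.} Normalize $M$ so that $\frac{\varepsilon}{M^2}\lesssim\mu_\ell'$. Then $\tilde\ell''\in[\mu_\ell',\,L_\ell+\mu'_\ell]$, so $\tilde\alpha\lesssim L_\ell/\mu'_\ell$. The new covariance $\bs\Sigma'$ satisfies $\lambda_\rmmin(\bs\Sigma')\gtrsim\mu'$, up to the normalization $\varepsilon/(L_\ell D^2)$, which I absorb as in the statement. The fourth-moment constants are also preserved: $R'^2\lesssim R^2$, and $\tilde\kappa'\lesssim\tilde\kappa$ because $\|\bs a\|^2_{\bs\Sigma'^{-1}}\le\|\bs a\|^2_{\bs\Sigma^{-1}}$ while $\bs\Sigma\preceq\bs\Sigma'$. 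The ridge part contributes a deterministic, noise-free piece, so it adds nothing to $\bs Q$. Substituting into the three terms of Corollary~\ref{corollary:sample-complexity}:
\begin{itemize}
\item $\sqrt{\alpha\kappa\tilde\kappa}$ becomes $\sqrt{L_\ell R^2\tilde\kappa/(\mu'_\ell\mu')}$;
\item $\alpha^2\tilde\kappa$ becomes $L_\ell^2\tilde\kappa/(\mu_\ell')^2$;
\item $\alpha^{2}\tilde\kappa^2\tr\bs Q/(L_\ell\mu\varepsilon)$ becomes the stated cube-root term;
\item the statistical term becomes $\tilde\alpha\tr(\bs H'^{-1}\bs Q')/\varepsilon$ with $\bs H'\succeq\bs H+\varepsilon\bs I/D^2$.
\end{itemize}

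\textbf{Step 2: noise at the surrogate minimizer.} Let $\bs x_G^*$ be the minimizer of $G$. The noise covariance $\bs Q'$ is evaluated at $\bs x_G^*$, not at $\bs x^*$, so I need $\bs Q'\lesssim\bs Q$ plus lower-order terms. I would bound $|\ell'(\bs a^\top\bs x_G^*)-\ell'(\bs a^\top\bs x^*)|\le L_\ell|\bs a^\top(\bs x_G^*-\bs x^*)|$ together with the regularizer's own derivative $\frac{\varepsilon}{M^2}\bs a^\top\bs x_G^*$. Then I control $\|\bs x_G^*-\bs x^*\|$ through the optimality of $\bs x^*_G$: $G(\bs x_G^*)\le G(\bs x^*)\le F(\bs x^*)+\varepsilon$. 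The resulting extra covariance terms should be of order $\varepsilon$ times fourth-moment quantities, and I expect them to fold into the mis-specification term. \emph{This step is the main obstacle}: without strong convexity of $F$, the distance $\bs x_G^*-\bs x^*$ is only controlled in the $G$-geometry, and I would need Assumption~\ref{assumption:fourth-moment} to convert it into a bound on $\bbE(\bs a^\top\Delta)^2\bs a\bs a^\top$.

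\textbf{Step 3: transfer the risk.} Run SADA on $G$ with target accuracy $\varepsilon/2$. Since $F\le G$ pointwise,
\[
F(\tilde{\bs x}_K)-F(\bs x^*)\le G(\tilde{\bs x}_K)-G(\bs x_G^*)+G(\bs x^*)-F(\bs x^*),
\]
and the choice of $M$ and $D$ gives $G(\bs x^*)-F(\bs x^*)=\frac{\varepsilon}{2M^2}\|\bs x^*\|_{\bs\Sigma}^2+\frac{\varepsilon}{2D^2}\|\bs x^*\|^2\lesssim\varepsilon$. Rescaling $\varepsilon$ by a constant then yields the stated complexity.
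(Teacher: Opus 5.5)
Your plan is the paper's own reduction. You run SADA on $G$ with proximal metric $\bs\Sigma'=\bs\Sigma+\frac{\varepsilon}{L_\ell D^2}\bs I$, reread Theorem~\ref{thm:glm} with $\alpha\to L_\ell/\mu'_\ell$, $\mu\to\mu'$, $\bs H\to\bs H+\varepsilon\bs I/D^2$, and transfer the risk as in your Step~3.

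The paper justifies your Step~1 with one identity, $\cM_\varepsilon-\tilde{\cM}_\varepsilon=\cM-\tilde{\cM}$. The deterministic ridge cancels in the fourth-moment deviation operator that drives the layer-peeled analysis. So every inner-loop bound survives with $\lambda_i\to\lambda_i+\frac{\varepsilon}{L_\ell D^2}$ and with $R^2,\tilde\kappa$ unchanged.

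That identity is why your constant bounds are right. Your ``augmented stream with small probability'' picture is not equivalent to it. Injecting a ridge sample with probability $p$ would blow up $R^2$ and $\tilde\kappa$ like $1/p$, so drop that framing and keep only the deterministic $\bs\Sigma'$ version. Separately, the lower bound $\tilde\ell''\gtrsim\mu'_\ell$ needs $\varepsilon/M^2\gtrsim\varepsilon$, not the upper normalization you impose. The corollary tacitly takes $M\asymp 1$ and $L_\ell D^2\asymp 1$.

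Your Step~2 is a genuine point that the paper passes over. It only mentions ``slight modifications to the proof of the noise bound'' and still writes $\bs Q$ at $\bs x^*$. The step closes with the tools you list:
\begin{itemize}
\item Since $F\le G$ and $F(\bs x^*)\le F(\bs x_G^*)$, you get $G(\bs x^*)-G(\bs x_G^*)\le G(\bs x^*)-F(\bs x^*)\lesssim\varepsilon$. Strong convexity of $G$ then gives $\mu'_\ell\|\bs x_G^*-\bs x^*\|_{\bs\Sigma}^2\lesssim\varepsilon$.
\item Assumption~\ref{assumption:fourth-moment} bounds the extra parts of $\tr((\bs H')^{-1}\bs Q')$ and $\tr\bs Q'$ by $L_\ell\tilde\kappa\varepsilon/\mu'_\ell$ and $L_\ell^2R^2\varepsilon/\mu'_\ell$, respectively.
\item Multiplying by the prefactors $\tilde\alpha/n$ and $\tilde\alpha\eta\tilde\kappa/(L_\ell n)$, and using $\eta R^2\le 1/16$, both contributions become $\lesssim\frac{L_\ell^2\tilde\kappa}{(\mu'_\ell)^2}\cdot\frac{\varepsilon}{n}$.
\end{itemize}
This is $\lesssim\varepsilon$ once $n$ exceeds the second optimization term $L_\ell^2\tilde\kappa/(\mu'_\ell)^2$. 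So these terms are absorbed there, not into the mis-specification term as you expected. The regularizer's own derivative $\frac{\varepsilon}{M^2}\bs a^\top\bs x_G^*$ adds only $\cO(\tilde\kappa\varepsilon/n)$ under the same normalization.
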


\subsection{Use of Unlabeled Data}\label{sec:extensions-unlabeled-data}
In the inner loop, we approximate $\bs{\Sigma}$ using a single data point $\bs{a}\bs{a}^\top$. This process does not require the corresponding label $b$. We can leverage available unlabeled data to refine our estimate of $\bs{\Sigma}$. Specifically, at certain steps, we use $m$ unlabeled samples to obtain a better estimate (see Algorithm~\ref{alg:sada-ud} in Appendix~\ref{sec:sada-extension}). Then we can improve $R^2$, $\kappa$ and $\tilde{\kappa}$ to $R_m^2=\frac{R^2+m\lambda_\rmmax(\bs\Sigma)}{m+1}$, $\kappa_m=\frac{R_m^2}{\mu}$ and $\tilde{\kappa}_m=\frac{\tilde{\kappa}+m}{m+1}$, respectively. In other words, to obtain a solution with excess risk at most $\varepsilon$, the required labeled sample size $n$ is
\begin{equation*}
    n\geq\tilde{\Theta}\left(\left(\sqrt{\alpha\kappa_m\tilde{\kappa}_m}+\alpha^2\tilde{\kappa}\right)+\frac{\alpha\tr(\bs H^{-1}\bs Q)}{\varepsilon}+\left(\frac{\alpha^2\tilde{\kappa}_m^2\tr\bs Q}{L_\ell\mu\varepsilon}\right)^{1/3}\right).
\end{equation*}
The total unlabeled data size is at least $\tilde{\Omega}(m\sqrt{\alpha\tilde{\kappa}_m/\mu\eta})$, where we set $\eta=\tilde{\Theta}\left(\min\left\{\frac{1}{R_m^2},\frac{n\varepsilon L_\ell}{\alpha\tilde{\kappa}_m
\tr\bs Q}\right\}\right)$ and $K=\tilde{\Theta}(\sqrt{\alpha})$ and $T=n/K$. 
The redundancy in the labeled-data complexity can be attributed to optimization and mis-specification errors, and is quantified directly by $R_m^2$, $\kappa_m$, and $\tilde{\kappa}_m$.

\subsection{Mini-batching and Parallelization}

\paragraph{Mini-batching.}The algorithm defaults to single-data updates but can be extended to a batch setting. Using a batch size of $B$, we
can redefine $R^2$, $\kappa$ and $\tilde{\kappa}$ to $R_B^2=\frac{R^2+(B-1)\lambda_\rmmax(\bs\Sigma)}{B}$, $\kappa_B=\frac{R_B^2}{\mu}$ and $\tilde{\kappa}_B=\frac{\tilde{\kappa}+(B-1)}{B}$, respectively. To obtain a solution with excess risk at most $\varepsilon$, the sample size $n$ is
\begin{equation*}
    n\geq\tilde{\Theta}\left(B\left(\sqrt{\alpha\kappa_B\tilde{\kappa}_B}+\alpha^2\tilde{\kappa}_B\right)+\frac{\alpha\tr(\bs H^{-1}\bs Q)}{\varepsilon}+\left(\frac{\alpha^2\tilde{\kappa}_B^2 B^2\tr\bs Q}{L_\ell\mu\varepsilon}\right)^{1/3}\right),
\end{equation*}
and the maximum batch-size $B_\rmmax=n\sqrt{\mu\eta/(\alpha\kappa_B)}$, where we set $\eta=\tilde{\Theta}\left(\min\left\{\frac{1}{R_B^2},\frac{n\varepsilon L_\ell}{\alpha\tilde{\kappa}_B
\tr\bs Q}\right\}\right)$.
and $K=\tilde{\Theta}(\sqrt{\alpha})$ and $T=n/K$.

\paragraph{Parallelization.} The inner loop of Algorithm~\ref{alg:sada} can be implemented in parallel when $T\geq T_0=\tilde{\Theta}(\sqrt{\tilde{\kappa}/(\mu\eta)})$. The $T$ inner iterations can be partitioned into $T/T_0$ independent inner runs, and the outer iterate $\tbs x_k$ is defined as the average of their outputs. The convergence guarantee in Theorem~\ref{thm:glm} holds under this parallel implementation.

\subsection{High-order Smoothness Assumption}
Our analysis, like the conventional gradient-based optimization and Local Rademacher Complexity frameworks it relates to,  does not assume third-order smoothness of $\ell$. 
Under this minimal regularity, the obtained statistical rate is minimax optimal. If Hessian smoothness is assumed, one may expect the sharper rate $\tr((\nabla^2 F(\bs x^*))^{-1}\bs Q)/n$, matching the asymptotic efficiency of ERM and Cram\'er-Rao lower bound. 
A direct approach would be to first run SADA until the iterate enters a neighborhood of $\bs x^*$, e.g., $\| \tilde{\bs x}  - \bs x^* \| \leq \frac{1}{\mu_\ell \mu}$. Then, when solving subproblem~\ref{eq:subproblem}, one would replace $\mathbf{a}\mathbf{a}^\top$ with $\ell''(\mathbf{a}^\top \tilde{\mathbf{x}}, b) \mathbf{a}\mathbf{a}^\top$ using a carefully chosen step size. However, the analysis is much more involved. Moreover, higher-order methods, such as cubic regularization~\citep{nesterov2006cubic}, could be more appropriate as it achiever faster rate for the outer loop when the Hessian is smooth. We leave the study as future work.

\bibliographystyle{plainnat}
\bibliography{references}

\newpage

\appendix

\tableofcontents

\section{Organization}
The appendix is organized as follows:
\begin{itemize}
    \item In Appendix~\ref{sec:generalize-assumptions}, we present two general assumptions (Assumptions~\ref{assumption:gradient-noise-I} and \ref{assumption:gradient-noise}) to characterize the stochastic gradient noise of objective $F$. Under Assumptions~\ref{assumption:l-condition}, \ref{assumption:regularity} and \ref{assumption:fourth-moment} in the main text, Assumptions~\ref{assumption:gradient-noise-I} and \ref{assumption:gradient-noise} hold with $B=\alpha R^2$ and $L=\alpha\tilde{\kappa}$ (see Remark~\ref{remark:B-L}). The results in the Appendix are stated with dependence on $B$ and $L$, and immediately transferred to the results in the main text by choosing $B=\alpha R^2$ and $L=\alpha\tilde{\kappa}$.
    \item In Appendix~\ref{sec:hyperparameter-choice}, we detail the hyperparameter choice with explicit dependence on the constant and logarithmic factors.
    \item In Appendix~\ref{sec:inner-loop}, we presents the analysis of inner loop of Algorithm~\ref{alg:sada}.
    \item In Appendix~\ref{sec:outer-loop}, we presents the analysis of outer loop of Algorithm~\ref{alg:sada}.
    \item In Appendix~\ref{sec:erm-excess-risk}, we present a proof of the non-asymptotic excess risk upper bound of ERM using the localization technique~\citep{bartlett2005local,wainwright2019high}.
    \item In Appendix~\ref{sec:statistical-optimality}, we show that the statistical term in Theorem~\ref{thm:glm} is minimax optimal by constructing a hard problem instance. For this problem, the excess risk must be lower bounded by $\Omega(\alpha\tr(\bs H^{-1}\bs Q)/n)$.
    \item In Appendix~\ref{sec:sada-extension}, we present algorithms that work for the weakly convex case (Algorithm~\ref{alg:sada-wc}) and can use unlabeled data (Algorithm~\ref{alg:sada-ud}).
\end{itemize}

\section{Generalized Assumptions}\label{sec:generalize-assumptions}
This section presents two general assumptions that provide a fine characterization of the gradient noise, and then we show that they hold under Assumptions~\ref{assumption:l-condition}, \ref{assumption:regularity}, and \ref{assumption:fourth-moment}.

\begin{assumption}[Bound of $2$-norm of Gradient Noise]\label{assumption:gradient-noise-I}
    We assume there exists $B>0$ such  that for all $\bs x$, the following holds:
    \begin{equation*}        
        \bbE_{\bs a,b\sim\cD}\left(\ell'(\bs a^\top\bs x,b)-\ell'(\bs a^\top\bs x^*,b)\right)^2\|\bs a\|^2\leq 2L_\ell B(F(\bs x)-F(\bs x^*)).
    \end{equation*}
\end{assumption}

\begin{assumption}[Bound of $\bs\Sigma^{-1}$-norm of Gradient Noise]\label{assumption:gradient-noise}
    We assume there exists $L>0$ such  that for all $\bs x$, the following holds:
    \begin{equation*}        
        \bbE_{\bs a,b\sim\cD}\left(\ell'(\bs a^\top\bs x,b)-\ell'(\bs a^\top\bs x^*,b)\right)^2\|\bs a\|_{\bs\Sigma^{-1}}^2\leq 2L_\ell L(F(\bs x)-F(\bs x^*)).
    \end{equation*}
\end{assumption}
\begin{remark}\label{remark:B-L}
    Under Assumption~\ref{assumption:l-condition}, \ref{assumption:regularity} and \ref{assumption:fourth-moment}, the above assumptions hold with $B=\alpha R^2$ and $L=\alpha\tilde{\kappa}$, as shown in Lemma~\ref{lemma:gradient-noise-I} and Lemma~\ref{lemma:gradient-noise}. In addition, if a sharper bound is available, then the convergence rate in Theorem~\ref{thm:glm} can be improved. See Theorem~\ref{thm:full-glm} for the possibly sharper convergence rate that explicitly depends on $B$ and $L$.
\end{remark}

We verify Assumptions~\ref{assumption:gradient-noise-I} and \ref{assumption:gradient-noise} under Assumptions~\ref{assumption:l-condition}, \ref{assumption:regularity} and \ref{assumption:fourth-moment}.

\begin{lemma}[Verification of Assumption~\ref{assumption:gradient-noise-I}]\label{lemma:gradient-noise-I}
    Suppose Assumptions~\ref{assumption:l-condition}, \ref{assumption:regularity} and \ref{assumption:fourth-moment} hold. Then for any $\bs x\in\bbR^d$, we have
    \begin{equation*}
        \bbE_{\bs a,b\sim\cD}\left(\ell'(\bs a^\top\bs x,b)-\ell'(\bs a^\top\bs x^*,b)\right)^2\|\bs a\|^2\leq 2L_\ell\alpha R^2(F(\bs x)-F(\bs x^*)).
    \end{equation*}
    \begin{proof}
        By Assumption~\ref{assumption:l-condition}, we have $0\leq\ell''(\cdot,\cdot)\leq L_\ell$. Thus, $|\ell'(\bs a^\top\bs x,b)-\ell'(\bs a^\top\bs x^*,b)|\leq L_\ell|\bs a^\top(\bs x-\bs x^*)|$. By Assumption~\ref{assumption:fourth-moment}, the left-hand side can be bounded as follows:
        \begin{equation*}
        \begin{aligned}
            \phrel\bbE_{\bs a,b\sim\cD}\left(\ell'(\bs a^\top\bs x,b)-\ell'(\bs a^\top\bs x^*,b)\right)^2\|\bs a\|^2&\leq L_\ell^2\bbE_{\bs a,b\sim\cD}\left(\bs a^\top(\bs x-\bs x^*)\right)^2\|\bs a\|^2 \\
            &\leq L_\ell^2 R^2\|\bs x-\bs x^*\|_{\bs\Sigma}^2\leq 2L_\ell\alpha R^2 (F(\bs x)-F(\bs x^*)).
        \end{aligned}
        \end{equation*}
        This completes the proof.
    \end{proof}
\end{lemma}

\begin{lemma}[Verification of Assumption~\ref{assumption:gradient-noise}]\label{lemma:gradient-noise}
    Suppose Assumptions~\ref{assumption:l-condition}, \ref{assumption:regularity} and \ref{assumption:fourth-moment} hold. Then for any $\bs x\in\bbR^d$, we have
    \begin{equation*}
        \bbE_{\bs a,b\sim\cD}\left(\ell'(\bs a^\top\bs x,b)-\ell'(\bs a^\top\bs x^*,b)\right)^2\|\bs a\|_{\bs\Sigma^{-1}}^2\leq 2L_\ell\alpha\tilde{\kappa}(F(\bs x)-F(\bs x^*)).
    \end{equation*}
    \begin{proof}
        By Assumptions~\ref{assumption:l-condition} and \ref{assumption:fourth-moment}, the left-hand side can be bounded as follows:
        \begin{equation*}
        \begin{aligned}
            \phrel\bbE_{\bs a,b\sim\cD}\left(\ell'(\bs a^\top\bs x,b)-\ell'(\bs a^\top\bs x^*,b)\right)^2\|\bs a\|_{\bs\Sigma^{-1}}^2&\leq L_\ell^2\bbE_{\bs a,b\sim\cD}\left(\bs a^\top(\bs x-\bs x^*)\right)^2\|\bs a\|_{\bs\Sigma^{-1}}^2 \\
            &\leq L_\ell^2\tilde{\kappa}\|\bs x-\bs x^*\|_{\bs\Sigma}^2\leq 2L_\ell\alpha\tilde{\kappa}(F(\bs x)-F(\bs x^*)).
        \end{aligned}
        \end{equation*}
        This completes the proof.
    \end{proof}
\end{lemma}

\section{Hyperparameter Choice}\label{sec:hyperparameter-choice}
In this section, we provide hyperparameter setup that depends on the constants in Assumptions~\ref{assumption:gradient-noise-I} and \ref{assumption:gradient-noise}. We also provide explicit constants and logarithmic factors for hyperparameters $T$ and $\tilde{\theta}_\rmmax$. We choose the hyperparameters $\eta$ and $T$ such that
\begin{equation*}
    \eta\leq\frac{1}{16 R^2},\quad T\geq\sqrt{\frac{\tilde{\kappa}}{\mu\eta}}\ln\kappa(\bs\Sigma)\ln\frac{4}{\tilde{\theta}_K^2}.
\end{equation*}
Then choose the hyperparameters of the inner loop as follows:
\begin{equation*}
    \gamma=\frac{1}{4}\sqrt{\frac{\eta}{\tilde{\kappa}\mu}},\quad\theta=\frac{1}{4}\sqrt{\frac{\mu\eta}{\tilde{\kappa}}}.
\end{equation*}
For the outer loop, let $L_\rmeff=160(6L+\tilde{\kappa}(7+16\eta B))$, where $B$ and $L$ are defined in Assumptions~\ref{assumption:gradient-noise-I} and \ref{assumption:gradient-noise} (see also Remark~\ref{remark:B-L}). We define
\begin{equation*}
    \tilde{\theta}_\rmmax=\min\left\{\sqrt{\frac{1}{12\alpha}},\frac{T}{12\sqrt{2}\alpha L_\rmeff}\right\},\quad\tilde{\theta}_k=\begin{cases}
        \tilde{\theta}_\rmmax,&k\leq K/2,\\
        \dfrac{4}{4/\tilde{\theta}_\rmmax+k-K/2},&k>K/2.
    \end{cases}
\end{equation*}
Then we set the hyperparameters of the outer loop as follows:
\begin{equation*}
    h_k=\frac{2\alpha\tilde{\theta}_k^2}{L_\ell},\quad \beta_k=\frac{1-\tilde{\theta}_k}{1+\tilde{\theta}_k}.
\end{equation*}

\section{Part I: Analysis of Inner Loop}\label{sec:inner-loop}
\paragraph{Organization.}
In this section, we study the inner loop corresponding to the $k$-th outer iteration of Algorithm~\ref{alg:sada}.
\begin{itemize}
    \item In Appendix~\ref{sec:preliminary}, we introduce preliminary tools for analyzing linear regression. In Appendix~\ref{sec:bias-variance-decomposition}, we introduce the bias-variance decomposition technique~\citep{dieuleveut2016nonparametric,jain2018accelerating,zou2023benign} to decompose the dynamics into a bias term, which characterizes the optimization error, and a variance term, which characterizes the stochastic noise. In Appendix~\ref{sec:covariance-dynamics}, we derive the dynamics of the bias iterate covariance $\bs B_t$ and the variance iterate covariance $\bs C_t$. In Appendix~\ref{sec:property-A}, we summarize the results concerning the momentum matrix $\bs A$, which governs the contraction of the dynamics $\bs\eta_t^\rmbias$ and $\bs\eta_t^\rmvar$.
    \item In Appendix~\ref{sec:variance-bound}, we present the complete proof for bounding $\bs C_t$, which is sketched in Section~\ref{sec:sketch-inner-loop}. In Appendix~\ref{sec:core-dynamics}, we propose the core auxiliary dynamics $\tbs\Pi_t(\cdot)$. Then, we derive an upper bound of $L_\rmvar(\{\tbs\Pi_t(\bs M)\}_{t\in\bbN})$ and analyze the stationary covariance $\tbs\Pi_\infty(\bs M)$. By a sharp bound of $\tbs\Pi_t(\cdot)$, we are able to derive the bound of the variance term. In Appendix~\ref{sec:bound-tilde-C}, we apply the tools in Appendix~\ref{sec:core-dynamics} to prove bounds on the layer 0 dynamics $\tbs C_t$. In Appendix~\ref{sec:bound-C-1}, we apply the tools in Appendix~\ref{sec:core-dynamics} to prove bounds on the dynamics $\tbs C_t^{(\ell)}$ for $\ell\geq 1$.
    \item In Appendix~\ref{sec:bias-bound}, we presents the proof for bounding the bias iterate $\bs\eta_t^\rmbias$. The proof mainly follows \citet{jain2018accelerating}.
    \item In Appendix~\ref{sec:proof-inner-loop-lemma}, we presents the proof of Lemma~\ref{lemma:inner-loop}, which is the full version of Lemma~\ref{lemma:inner-loop}. This lemma is the basis for the analysis of the outer loop.
\end{itemize}

To simplify the notation, we denote $\tbs x\defeq\tbs x_{k-1}$, $\tilde{\bs y}\defeq\tilde{\bs y}_{k-1}$ and $\tbs x_+\defeq\tbs x_k$. Let $\tbs x_+^*$ denote the minimizer of the $k$-th outer iteration objective
\begin{equation*}    
    \min_{\bs x\in\bbR^d}\;\bbE_{\bs a,b\sim \cD}\langle\ell'(\bs a^\top\tbs y_{k-1},b)\bs a,\bs x-\tbs y_{k-1}\rangle+\frac{1}{2 h_k}\|\bs x-\tbs y_{k-1}\|_{\bs \Sigma}^2.
\end{equation*}

The goal of this section is to derive the following decomposition:
\begin{equation*}
     \tbs x_+=\tbs x_+^*+h_k\bs r_k+h_k\bs v_k,
\end{equation*}
where $\bs r_k$ and $\bs v_k$ are random variables which satisfy the following lemma.
\begin{lemma}\label{lemma:inner-loop}
    Suppose Assumptions~\ref{assumption:l-condition}, \ref{assumption:regularity}, \ref{assumption:fourth-moment}, \ref{assumption:gradient-noise-I} and \ref{assumption:gradient-noise} hold. Then
    \begin{align*}
        &\bbE\|\bs r\|_{\bs H}^2\leq\frac{L_\ell h_k}{8\alpha}\|\nabla F(\tbs y_{k-1})\|_{\bs H^{-1}}^2,\quad \bbE\bs v=\bs 0, \\
        &\bbE\|\bs v\|_{\bs H}^2\leq\frac{320 L_\ell\left(3 L_\ell\tr(\bs H^{-1}\bs Q)+8\eta\tilde{\kappa}\tr\bs Q\right)}{T}+\frac{160 L_\ell^2(6L+\tilde{\kappa}(7+16\eta B))(F(\tilde{\bs y})-F(\bs x^*))}{T},
    \end{align*}
    where the expectation is taken with respect to the samples drawn in the $k$-th outer iteration.
\end{lemma}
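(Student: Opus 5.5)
We will follow the bias--variance route of Section~\ref{sec:sketch-inner-loop}. With $\tilde{\bs y}=\tilde{\bs y}_{k-1}$ fixed, the update $\hat{\bs g}_t$ is a stochastic gradient of the quadratic subproblem, rescaled by $h_k$. Its minimizer is $\tbs x_+^*=\tilde{\bs y}-h_k\bs\Sigma^{-1}\nabla F(\tilde{\bs y})$. Hence the inner loop is accelerated SGD on a linear regression with covariates $\bs a$ and mis-specified responses, whose noise at the optimum is
\[
\bs\xi_t=h_k\bigl(\ell'(\bs a_t^\top\tilde{\bs y},b_t)\bs a_t-\nabla F(\tilde{\bs y})\bigr)+(\bs a_t\bs a_t^\top-\bs\Sigma)(\tbs x_+^*-\tilde{\bs y}),
\]
which has zero mean. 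We split $\bs\eta_t=\bs\eta_t^\rmbias+\bs\eta_t^\rmvar$ and tail-average. The bias part, normalized by $h_k$, defines $\bs r$. The variance part, normalized by $h_k$, defines $\bs v$. Since $\bs\eta_t^\rmvar$ is a linear combination of the zero-mean $\bs\xi_s$ with coefficients built from the independent later samples, we get $\bbE\bs v=\bs 0$ directly.

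\textbf{Bias.} We follow the potential-function argument of \citet{jain2018accelerating}. With the stated $\gamma,\theta,\eta$, the quantity $\|\bs x_t-\tbs x_+^*\|_{\bs\Sigma}^2+\mu\|\bs z_t-\tbs x_+^*\|_{\bs\Sigma^{-1}\cdot}^2$-type potential contracts in expectation by the factor $(1-c\sqrt{\mu\eta/\tilde{\kappa}})$ per step; here the fourth-moment bounds $R^2$ and $\tilde{\kappa}$ control the $\bs a\bs a^\top$ fluctuations. The initial error satisfies $\|\tilde{\bs y}-\tbs x_+^*\|_{\bs\Sigma}^2=h_k^2\|\nabla F(\tilde{\bs y})\|_{\bs\Sigma^{-1}}^2$. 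Converting the potential back to $\|\cdot\|_{\bs\Sigma}$ costs a factor $\kappa(\bs\Sigma)$. After $T/2\ge\sqrt{\tilde{\kappa}/(\mu\eta)}\ln\kappa(\bs\Sigma)\ln(4/\tilde{\theta}_K^2)$ steps this factor is absorbed, leaving a residual factor of order $\tilde{\theta}_k^2\asymp L_\ell h_k/\alpha$. Dividing by $h_k^2$ and using $\bs H=L_\ell\bs\Sigma$ gives $\bbE\|\bs r\|_{\bs H}^2\le \frac{L_\ell h_k}{8\alpha}\|\nabla F(\tilde{\bs y})\|_{\bs H^{-1}}^2$.

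\textbf{Variance (main obstacle).} The covariance $\bs R=\bbE\bs\xi\bs\xi^\top$ does not commute with $\bs\Sigma$, so we use the Layer-Peeled Decomposition $\bs C_t=\sum_\ell\tbs C_t^{(\ell)}$.
\begin{enumerate}
\item For the core dynamics $\tbs\Pi_t(\bs M)$ with deterministic $\bs A$, we bound the tail-averaged $\bs\Sigma$-loss $L_\rmvar(\{\tbs\Pi_t(\bs M)\})\lesssim \tr(\bs\Sigma^{-1}\bs M)/T+\eta\tr(\bs M)\cdot(\text{const})/T$. This uses a spectral, mode-by-mode analysis, which is valid because $\bs A$ is a polynomial in $\bs\Sigma$ even though $\bs M$ is general. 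We also bound the stationary limit $\tbs\Pi_\infty(\bs M)$.
\item We bound $\tr(\bs\Sigma^{-1}\bs R)$ and $\tr\bs R$. The gradient part is localized around $\bs x^*$ as $\ell'(\cdot,\tilde{\bs y})=\ell'(\cdot,\bs x^*)+\text{difference}$. This yields $h_k^2\tr(\bs\Sigma^{-1}\bs Q)$ and $h_k^2\tr\bs Q$, plus $h_k^2L_\ell L(F(\tilde{\bs y})-F^*)$ and $h_k^2L_\ell B(F(\tilde{\bs y})-F^*)$ through Assumptions~\ref{assumption:gradient-noise-I}--\ref{assumption:gradient-noise}. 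The $(\bs a\bs a^\top-\bs\Sigma)$ part gives $\tilde{\kappa}h_k^2\|\nabla F(\tilde{\bs y})\|_{\bs\Sigma^{-1}}^2\lesssim\tilde{\kappa}h_k^2L_\ell(F(\tilde{\bs y})-F^*)$.
\item The layer-$0$ term gives the leading contribution.
\item For $\ell\ge1$, we have $(\cB-\bs A\otimes\bs A)\circ\tbs C^{(\ell-1)}\preceq$ a block matrix in $\bbE\bs a\bs a^\top\bs S\bs a\bs a^\top$. Its $\bs\Sigma^{-1}$-trace is at most $\tilde{\kappa}\tr(\bs\Sigma\bs S)$, and its trace is at most $R^2\tr(\bs\Sigma\bs S)$. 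Feeding the stationary bound back in, each layer shrinks by a factor like $\eta R^2\le1/16$ or $\gamma\cdot$(stuff). The geometric series therefore converges, and it introduces only the extra $\eta\tilde{\kappa}\tr\bs Q$ mis-specification term.
\end{enumerate}
Summing the layers, dividing by $h_k^2$, and multiplying by $L_\ell$ for the $\bs H$-norm should give the stated constants.

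The delicate point is step (d): we must show the layer recursion contracts with a sufficiently small ratio. This requires a sharp, non-commuting stationary bound on $\tbs\Pi_\infty$ under momentum, specifically that its $\bs\Sigma$-weighted trace is $\lesssim \eta\tr\bs M$ rather than $\gamma\tr\bs M$. We expect most of the work to sit there.
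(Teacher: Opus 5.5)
Your architecture matches the paper's, and steps (a)--(c) are fine. The bias goes through the potential of \citet{jain2018accelerating}, with the $\kappa(\bs\Sigma)$ conversion loss absorbed by the logarithms in $T$. The variance goes through the layer-peeled decomposition, with $\bs R$ localized via Assumptions~\ref{assumption:gradient-noise-I} and~\ref{assumption:gradient-noise}.

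The gap is in step (d), exactly where you put the difficulty. The stationary bound you say is required, $\langle\mathrm{diag}(\bs O,\bs\Sigma),\tbs\Pi_\infty(\bs M)\rangle\lesssim\eta\tr\bs M$, is false. In the eigenbasis of $\bs\Sigma$ this quantity depends only on the diagonal entries $m_i$ of $\bs M$. So it is an exact per-mode $2\times 2$ Lyapunov computation, and non-commutativity plays no role here; it enters only through $\tr(\bs\Sigma^{-1}\bs R^{(\ell)})$ and $\tr\bs R^{(\ell)}$. Solving it gives
\[
\lambda_i\bigl(\tbs\Pi_\infty(\bs M)\bigr)_{22,i}=m_i\,\frac{(1+c)(q-c\eta)+c\eta(q+c\eta)\lambda_i}{\bigl(2+2c-(q+c\eta)\lambda_i\bigr)\bigl(1-c+c\eta\lambda_i\bigr)}.
\]
Since $q-c\eta=\theta(\gamma+\eta)/(1+\theta)$ and $1-c=2\theta/(1+\theta)$, every mode with $\eta\lambda_i\ll\theta$ has $\lambda_i(\tbs\Pi_\infty)_{22,i}\approx(\gamma+\eta)m_i/4$. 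In slow directions the momentum method has the stationary variance of SGD with step of order $\gamma$. With $\eta\le 1/(16R^2)$ one gets $\eta\mu/\theta=4\sqrt{\eta\tilde{\kappa}\mu}\le\sqrt{\tilde{\kappa}/\kappa}$ and $\gamma/\eta\ge\sqrt{\kappa/\tilde{\kappa}}$. So precisely in the regime $\tilde{\kappa}\ll\kappa$ where acceleration matters, the bottom of the spectrum carries stationary variance of order $\gamma\tr\bs M$, not $\eta\tr\bs M$. A layer recursion driven only by $\tr\bs R^{(\ell)}\le R^2\sigma^{(\ell-1)}$ therefore cannot contract, since $\gamma R^2$ need not be small.

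The missing idea is to pay for this $\gamma$-size variance with the $\bs\Sigma^{-1}$-trace, using the coupling $\theta\gamma=\eta/(16\tilde{\kappa})$ built into the hyperparameters. Your hedge ``$\gamma\cdot$(stuff)'' is where the real mechanism lives.
\begin{itemize}
\item Split off the $\theta\gamma$ part of $q-c\eta$ and bound $1/(1-c+c\eta\lambda_i)\le 1/(c\eta\lambda_i)$ there. Its contribution to $\lambda_i(\tbs\Pi_\infty)_{22,i}$ is then at most $\frac{2\theta\gamma}{(1-\theta^2)\eta}\cdot\frac{m_i}{\lambda_i}\le\frac{m_i}{4\tilde{\kappa}\lambda_i}$.
\item This yields the mixed bound $\langle\mathrm{diag}(\bs O,\bs\Sigma),\tbs\Pi_\infty(\bs M)\rangle\le\frac{\tr(\bs\Sigma^{-1}\bs M)}{4\tilde{\kappa}}+4\eta\tr\bs M$ of Lemma~\ref{lemma:Pi-stationary-bound}.
\item Then use both fourth-moment inequalities you already derived together: $\sigma^{(\ell)}\le\frac{1}{4\tilde{\kappa}}\cdot\tilde{\kappa}\,\sigma^{(\ell-1)}+4\eta R^2\sigma^{(\ell-1)}\le\frac12\sigma^{(\ell-1)}$. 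The per-layer factor is $\tilde{\kappa}\theta\gamma/\eta=1/16$ plus $\eta R^2\le 1/16$, up to the constant $4$ in each term.
\end{itemize}
This is why $\tilde{\kappa}$ must appear in $\gamma$ and $\theta$ at all. It is also the origin of the mis-specification term: $\sum_{\ell\ge1}L_\rmvar(\{\tbs C_t^{(\ell)}\}_{t\in\bbN})\le\sum_{\ell\ge1}64\tilde{\kappa}\sigma^{(\ell-1)}/T\le 128\tilde{\kappa}\sigma^{(0)}/T$, and $\tilde{\kappa}\sigma^{(0)}\le\frac14\tr(\bs\Sigma^{-1}\bs R)+4\eta\tilde{\kappa}\tr\bs R$. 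With this replacement in step (d), the rest of your proposal goes through as in the paper.
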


Without loss of generality, we assume $h_k=1$ in the following of this section. From now on, we choose a basis that diagonalizes $\bs\Sigma$. Therefore, $\bs\Sigma$ and $\bs H$ are diagonal matrices.

\subsection{Preliminary}\label{sec:preliminary}
\subsubsection{Bias-variance Decomposition}\label{sec:bias-variance-decomposition}
We first provide a concise matrix form of the inner loop update.
\begin{lemma}
    The update of the inner loop has the following form:
    \begin{equation}
        \begin{pmatrix}
            \bs x_t-\tbs x_+^* \\
            \bs y_t-\tbs x_+^*        
        \end{pmatrix}=\begin{pmatrix}
            \bs O & \bs I-\eta\bs a_t\bs a_t^\top \\
            -c\bs I & (1+c)\bs I-q\bs a_t\bs a_t^\top
        \end{pmatrix}
        \begin{pmatrix}
            \bs x_{t-1}-\tbs x_+^* \\
            \bs y_{t-1}-\tbs x_+^*        
        \end{pmatrix}+\begin{pmatrix}
            \eta\varepsilon_t\bs a_t \\
            q\varepsilon_t\bs a_t
        \end{pmatrix}.\label{eq:inner-matrix-form}
    \end{equation}
    where
    \begin{equation}
        c=\frac{1-\theta}{1+\theta},\quad q=\frac{\eta+\theta\gamma}{1+\theta},\quad \varepsilon_t=-\ell'(\bs a_t^\top\tbs y,b_t)+\bs a_t^\top(\tbs y-\tbs x_+^*).\label{eq:def-c-q-epsilon}
    \end{equation}
    \begin{equation*}
    \end{equation*}
\end{lemma}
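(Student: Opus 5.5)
This is a direct algebraic verification of the inner loop update under the convention $h_k=1$. The plan is to express $\bs x_t$ and the new extrapolation point $\bs y_t$ in terms of $\bs x_{t-1}$ and $\bs y_{t-1}$ alone, eliminating $\bs z$, and then subtract the subproblem minimizer $\tbs x_+^*$.

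\textbf{Step 1: rewrite the stochastic gradient.} With $\tbs y=\tilde{\bs y}_{k-1}$ and $\varepsilon_t$ as in \eqref{eq:def-c-q-epsilon}, expand
\[
\hat{\bs g}_t=\ell'(\bs a_t^\top\tbs y,b_t)\bs a_t+\bs a_t\bs a_t^\top(\bs y_{t-1}-\tbs x_+^*)-\bs a_t\bs a_t^\top(\tbs y-\tbs x_+^*).
\]
The first and third terms combine to $-\varepsilon_t\bs a_t$, so
\[
\hat{\bs g}_t=\bs a_t\bs a_t^\top(\bs y_{t-1}-\tbs x_+^*)-\varepsilon_t\bs a_t.
\]
The first row of \eqref{eq:inner-matrix-form} then follows immediately from $\bs x_t=\bs y_{t-1}-\eta\hat{\bs g}_t$.

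\textbf{Step 2: eliminate $\bs z$.} The definition $\bs y_{t}=\frac{1}{1+\theta}\bs x_{t}+\frac{\theta}{1+\theta}\bs z_{t}$ gives $\theta\bs z_t=(1+\theta)\bs y_t-\bs x_t$. Apply this at index $t-1$ to write $\theta\bs z_{t-1}=(1+\theta)\bs y_{t-1}-\bs x_{t-1}$. Multiply the $\bs z$-update by $\theta$, then substitute both $\bs z_{t-1}$ and $\bs x_t=\bs y_{t-1}-\eta\hat{\bs g}_t$. This yields
\[
(1+\theta)\bs y_t=\bs y_{t-1}-\eta\hat{\bs g}_t+\theta^2\bs y_{t-1}+(1-\theta)\bigl((1+\theta)\bs y_{t-1}-\bs x_{t-1}\bigr)-\theta\gamma\hat{\bs g}_t.
\]
Collecting terms, the coefficient of $\bs y_{t-1}$ is $1+\theta^2+1-\theta^2=2$, the coefficient of $\bs x_{t-1}$ is $-(1-\theta)$, and the coefficient of $\hat{\bs g}_t$ is $-(\eta+\theta\gamma)$. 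Dividing by $1+\theta$ gives
\[
\bs y_t=(1+c)\bs y_{t-1}-c\bs x_{t-1}-q\hat{\bs g}_t,
\]
since $\frac{2}{1+\theta}=1+c$.

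\textbf{Step 3: subtract the minimizer.} Subtract $\tbs x_+^*$ from both sides; this works because the coefficients $(1+c)-c=1$ sum to one. Then plug in the rewritten $\hat{\bs g}_t$ from Step 1 to obtain the second row, including the noise term $q\varepsilon_t\bs a_t$.

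The only delicate point is the elimination of $\bs z$ at the right time index, and checking the initialization $\bs x_0=\bs z_0=\bs y_0$, which is consistent. It is also worth noting that $\bbE\varepsilon_t\bs a_t=\bs 0$ follows from the first-order optimality of $\tbs x_+^*$, although the identity itself does not need it.
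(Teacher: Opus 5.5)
Your proposal is correct and follows essentially the same route as the paper: eliminate $\bs z$ to obtain the recursion $\bs y_t=(1+c)\bs y_{t-1}-c\bs x_{t-1}-q\hat{\bs g}_t$, rewrite $\hat{\bs g}_t=\bs a_t\bs a_t^\top(\bs y_{t-1}-\tbs x_+^*)-\varepsilon_t\bs a_t$, and subtract $\tbs x_+^*$ using $(1+c)-c=1$. The only difference is that you write out the $\bs z$-elimination (including the check that the coefficient of $\bs y_{t-1}$ is $2$), which the paper summarizes as ``algebraic transformations.''
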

\begin{proof}
    Recall that the update rule is
    \begin{align*}
        \bs x_t&=\bs y_{t-1}-\eta\bs g_t, \\
        \bs z_t&=\theta\bs y_{t-1}+(1-\theta)\bs z_{t-1}-\gamma\bs g_t, \\
        \bs y_t&=\frac{1}{1+\theta}\bs x_t+\frac{\theta}{1+\theta}\bs z_t,
    \end{align*}
    where the gradient $\bs g_t$ is
    \begin{equation*}
        \bs g_t=\ell'(\bs a_t^\top\tbs y,b_t)\bs a_t+\bs a_t\bs a_t^\top(\bs y_{t-1}-\tbs y),
    \end{equation*}
    By algebraic transformations to eliminate $\bs z_t$ and $\bs z_{t-1}$, we obtain the following matrix form:
    \begin{equation}
        \begin{pmatrix}
            \bs x_t \\
            \bs y_t        
        \end{pmatrix}=\begin{pmatrix}
            \bs O & \bs I \\
            -c\bs I & (1+c)\bs I
        \end{pmatrix}
        \begin{pmatrix}
            \bs x_{t-1} \\
            \bs y_{t-1}        
        \end{pmatrix}-\begin{pmatrix}
            \eta\bs g_t \\
            q\bs g_t
        \end{pmatrix},\label{eq:inner-matrix-form-simple}
    \end{equation}
    where $c$ and $q$ are define in \eqref{eq:def-c-q-epsilon}. Note that by the definition of $\varepsilon_t$ in \eqref{eq:def-c-q-epsilon}, we have
    \begin{equation}
        \bs g_t=\bs a_t\bs a_t^\top(\bs y_{t-1}-\tbs x_+^*)-\varepsilon_t\bs a_t.\label{eq:gradient-epsilon-form}
    \end{equation}
    We complete the proof by substituting \eqref{eq:gradient-epsilon-form} into \eqref{eq:inner-matrix-form-simple}.
\end{proof}

Define
\begin{equation*}
    \bs\eta_t=\begin{pmatrix}
        \bs x_t-\tbs x_+^* \\
        \bs y_t-\tbs x_+^*        
    \end{pmatrix},\quad\hbs A_t=\begin{pmatrix}
        \bs O & \bs I-\eta\bs a_t\bs a_t^\top \\
        -c\bs I & (1+c)\bs I-q\bs a_t\bs a_t^\top
    \end{pmatrix},\quad \bs\zeta_t=\begin{pmatrix}
        \eta\varepsilon_t\bs a_t \\
        q\varepsilon_t\bs a_t
    \end{pmatrix}.
\end{equation*}
Then the dynamic of $\bs\eta_t$ have the following form,
\begin{equation*}
    \bs\eta_t=\hbs{A}_t\bs\eta_{t-1}+\bs\zeta_t,\quad\bs\eta_0=\begin{pmatrix}
        \tbs y-\tbs x_+^* \\
        \tbs y-\tbs x_+^* \\
    \end{pmatrix}.
\end{equation*}
Following the bias-variance decomposition technique~\citep{dieuleveut2016nonparametric,jain2018accelerating,zou2023benign}, we decompose $\bs\eta_t$ into the bias dynamic $\bs\eta_t^\rmbias$ and variance dynamic $\bs\eta_t^\rmvar$ as follows:
\begin{equation}
\begin{aligned}
    &\bs\eta^\rmbias_t=\hbs{A}_t\bs\eta^\rmbias_{t-1},&\quad&\bs\eta^\rmbias_0=\begin{pmatrix}
        \tbs y-\tbs x_+^* \\
        \tbs y-\tbs x_+^* \\
    \end{pmatrix}, \\
    &\bs\eta^\rmvar_t=\hbs{A}_t\bs\eta^\rmvar_{t-1}+\bs\zeta_t, &\quad&\bs\eta^\rmvar_0=\bs 0.
\end{aligned}\label{eq:def-eta}
\end{equation}
Note that $\bs\eta_t=\bs\eta_t^\rmbias+\bs\eta_t^\rmvar$ since the dynamics are linear. 
With the bias-variance decomposition given above, we construct $\bs r$ and $\bs v$ as follows.
\begin{definition}\label{def:r-v}
    Let
    \begin{equation*}
        \bs r=\frac{2}{h_k T}\left(\sum_{t=T/2+1}^T\bs\eta_t^\rmbias\right)_1,\quad \bs v=\frac{2}{h_k T}\left(\sum_{t=T/2+1}^T\bs\eta_t^\rmvar\right)_1,
    \end{equation*}
    where for $\bs\eta\in\bbR^{2d}$, $\bs\eta_1\in\bbR^d$ denotes the first $d$ elements of $\bs\eta$.
\end{definition}

\subsubsection{Covariance Dynamics}\label{sec:covariance-dynamics}
Define
\begin{equation*}
    \bs R=\bbE_{\bs a,b\sim\cD}\left(\varepsilon^2\bs a\bs a^\top\right).
\end{equation*}
where $\varepsilon$ is defined in \eqref{eq:def-c-q-epsilon}.  We introduce the following linear operators on $\bbR^{d\times d}$,
\begin{equation*}
    \cB=\bbE(\hbs A_t\otimes\hbs A_t),\quad \tilde{\cB}=\bs A\otimes\bs A,
\end{equation*}
where $\bs A\otimes\bs B$ defines a linear matrix operator for matrix $\bs A$ and $\bs B$ that satisfies $(\bs A\otimes\bs B)\circ\bs C=\bs A\bs C\bs B^\top$, and $\circ$ denotes the operation of a linear matrix operator on a matrix, and. Let
\begin{equation}
    \cM=\bbE(\bs a\otimes\bs a\otimes\bs a\otimes\bs a),\quad\tilde{\cM}=\bs\Sigma\otimes\bs\Sigma.\label{eq:def-M}
\end{equation}
Then for $\bs M=\begin{pmatrix}
    \bs M_{11} & \bs M_{12} \\
    \bs M_{21} & \bs M_{22}
\end{pmatrix}\in\bbR^{2d\times 2d}$,
\begin{equation*}
    (\cB-\tilde{\cB})\circ\bs M=\begin{pmatrix}
        \eta^2(\cM-\tilde{\cM})\circ\bs M_{22} & \eta q(\cM-\tilde{\cM})\circ\bs M_{22} \\
        \eta q(\cM-\tilde{\cM})\circ\bs M_{22} & q^2(\cM-\tilde{\cM})\circ\bs M_{22}
    \end{pmatrix}.
\end{equation*}

Let
\begin{equation*}
    \bs C_t=\bbE\left(\bs\eta_t^\rmvar(\bs\eta_t^\rmvar)^\top\right),
\end{equation*}
Then we can write the update of $\bs C_t$ as follows
\begin{equation*}
    \bs C_t=\cB\circ\bs C_{t-1}+\begin{pmatrix}
        \eta^2\bs R & \eta q\bs R \\
        \eta q\bs R & q^2\bs R
    \end{pmatrix},\quad \bs C_0=\bs O.
\end{equation*}

\subsubsection{Properties of Momentum Matrix}\label{sec:property-A}
We note that $\bs A$ is a block-diagonal matrix, and the $i$-th block\footnote{To be more precise, $\bs A_i$ is the $2\times 2$ submatrix of $\bs A$ formed by the entries $(i,i)$, $(d+i,i)$, $(i,d+i)$ and $(d+i,d+i)$ of $\bs A$.} is
\begin{equation*}
    \bs A_i=\begin{pmatrix}
        0 & 1-\eta\lambda_i \\        
        c & 1+c-q\lambda_i
    \end{pmatrix},
\end{equation*}
where $\lambda_i$ is the $i$-th eigenvalue of $\bs\Sigma$. We define
\begin{equation*}
    \bs A(\lambda)=\begin{pmatrix}
        0 & 1-\eta\lambda \\        
        c & 1+c-q\lambda
    \end{pmatrix}.
\end{equation*}

\begin{lemma}\label{lemma:A-inverse}
    We have
    \begin{equation*}
        (\bs I-\bs A(\lambda))^{-1}\begin{pmatrix}
            \eta \\ q
        \end{pmatrix}=\frac{1}{\lambda}\begin{pmatrix}
            1 \\ 1
        \end{pmatrix}.
    \end{equation*}
\end{lemma}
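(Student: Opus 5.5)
The plan is to avoid inverting $\bs I-\bs A(\lambda)$ explicitly. Instead I would show that $\bs I-\bs A(\lambda)$ is invertible and that it maps $\frac{1}{\lambda}(1,1)^\top$ to $(\eta,q)^\top$; uniqueness of the preimage then gives the claim. Throughout I take $\bs A(\lambda)$ to be the $i$-th $2\times 2$ block of $\bs A=\bbE\hbs A_t$ read off from \eqref{eq:inner-matrix-form}. Its lower-left entry there is $-c$, coming from the $-c\bs I$ block, so
\begin{equation*}
    \bs A(\lambda)=\begin{pmatrix} 0 & 1-\eta\lambda \\ -c & 1+c-q\lambda\end{pmatrix}.
\end{equation*}
This is the sign the identity needs. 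With $+c$ in that entry, the second coordinate below would not cancel, so I would read the displayed $c$ as a typo for $-c$.

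First I would compute
\begin{equation*}
    \bs I-\bs A(\lambda)=\begin{pmatrix} 1 & -1+\eta\lambda \\ c & -c+q\lambda\end{pmatrix},
\end{equation*}
and apply it to $(1,1)^\top$. The first coordinate is $1-1+\eta\lambda=\eta\lambda$. The second is $c-c+q\lambda=q\lambda$. Dividing by $\lambda>0$ gives $(\bs I-\bs A(\lambda))\frac{1}{\lambda}(1,1)^\top=(\eta,q)^\top$. Here $\lambda>0$ because $\lambda$ is an eigenvalue of $\bs\Sigma$, which is positive definite by Assumption~\ref{assumption:regularity}.

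Next I would check invertibility through the determinant:
\begin{equation*}
    \det(\bs I-\bs A(\lambda))=(-c+q\lambda)+c(1-\eta\lambda)=\lambda(q-c\eta).
\end{equation*}
Substituting the definitions of $c$ and $q$ from \eqref{eq:def-c-q-epsilon} gives
\begin{equation*}
    q-c\eta=\frac{\eta+\theta\gamma-(1-\theta)\eta}{1+\theta}=\frac{\theta(\gamma+\eta)}{1+\theta}.
\end{equation*}
This is strictly positive, since the hyperparameter choice \eqref{eq:param-choice-inner} makes $\theta,\gamma,\eta>0$. Hence the determinant is positive, the matrix is invertible, and multiplying the identity from the previous step by $(\bs I-\bs A(\lambda))^{-1}$ proves the lemma.

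There is no real obstacle here; the computation is routine. The only delicate point is the sign convention for $c$ in $\bs A(\lambda)$, which I would settle by going back to the derivation of \eqref{eq:inner-matrix-form}.
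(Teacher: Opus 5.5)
Your proof is correct and is essentially the paper's argument: the paper writes down $(\bs I-\bs A(\lambda))^{-1}=\frac{1}{(q-c\eta)\lambda}\begin{pmatrix} q\lambda-c & 1-\eta\lambda\\ -c & 1\end{pmatrix}$ and multiplies it by $(\eta,q)^\top$. You instead apply $\bs I-\bs A(\lambda)$ to $\frac{1}{\lambda}(1,1)^\top$ and check that $\det(\bs I-\bs A(\lambda))=\lambda(q-c\eta)=\lambda\theta(\gamma+\eta)/(1+\theta)>0$, which makes explicit the invertibility the paper leaves implicit. Your reading of the lower-left entry as $-c$ is also right, since the paper's own inverse formula, eigenvalue formula and $\bs A^k(\lambda)$ formula are consistent only with $-c$ there.
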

\begin{proof}The inverse of $\bs I-\bs A(\lambda)$ is given by
    \begin{equation*}
    (\bs I-\bs A(\lambda))^{-1}=\frac{1}{(q-c\eta)\lambda}\begin{pmatrix}
        q\lambda-c & 1-\eta\lambda \\ -c & 1
    \end{pmatrix}.
    \end{equation*}
    Therefore,
    \begin{equation*}
    (\bs I-\bs A(\lambda))^{-1}\begin{pmatrix}\eta\\ q\end{pmatrix}
    =\frac{1}{\lambda(q-c\eta)}\begin{pmatrix}\eta(q\lambda-c)+q(1-\eta\lambda)\\ c\eta+q\end{pmatrix}
    =\frac{1}{\lambda}\begin{pmatrix}1\\ 1\end{pmatrix}.
    \end{equation*}
    This completes the proof.
\end{proof}

\paragraph{Spectral properties of momentum matrix.}
The eigenvalues of $\bs A(\lambda)$ are
\begin{equation*}
\begin{aligned}
    x_1(\lambda)&=\frac{1+c-q\lambda}{2}-\frac{\sqrt{(1+c-q\lambda)^2-4c(1-\eta\lambda)}}{2}, \\
    x_1(\lambda)&=\frac{1+c-q\lambda}{2}+\frac{\sqrt{(1+c-q\lambda)^2-4c(1-\eta\lambda)}}{2}.
\end{aligned}
\end{equation*}
Let $(1+c-q\lambda)^2-4c(1-\eta\lambda)<0$, we obtain the region of $\lambda$ such that $x_1(\lambda),x_2(\lambda)\in\bbC$:
\begin{equation*}
    \underbrace{\frac{(1-c)^2}{\left(\sqrt{q-c\eta}+\sqrt{c(q-\eta)}\right)^2}}_{\lambda_L}<\lambda<\underbrace{\frac{(1-c)^2}{\left(\sqrt{q-c\eta}-\sqrt{c(q-\eta)}\right)^2}}_{\lambda_U}.
\end{equation*}
We adopt Lemma E.2 from \cite{li2024risk}, which bounds $x_1(\lambda)$ and $x_2(\lambda)$.
\begin{lemma}\label{lemma:radius-bound}
    Let $\lambda\geq 0$ and $\eta\lambda<1$.
    \begin{itemize}
        \item If $\lambda\leq\lambda_L$, then $x_1(\lambda)$ and $x_2(\lambda)$ are real, and $x_1(\lambda)\leq x_2(\lambda)\leq 1-(q-c\delta)\lambda/(1-c)$.
        \item If $\lambda_L<\lambda<\lambda_U$, then $x_1(\lambda)$ and $x_2(\lambda)$ are complex, and $\| x_1(\lambda)\|=\| x_2(\lambda)\|=\sqrt{c\left(1-\delta\lambda\right)}$.
        \item If $\lambda\geq\lambda_U$, then $x_1(\lambda)$ and $x_2(\lambda)$ are real, and $x_1(\lambda)\leq x_2(\lambda)\leq c\delta/q$.
    \end{itemize}
\end{lemma}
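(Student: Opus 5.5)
Throughout I read $\delta$ as the step size $\eta$, as in \citet{li2024risk}. I will also use the inequality $q\geq\eta$, i.e.\ $\gamma\geq\eta$. This holds under \eqref{eq:param-choice-inner}, since $\gamma/\eta=\tfrac14(\eta\tilde{\kappa}\mu)^{-1/2}\geq 1$ because $\tilde{\kappa}\mu\leq R^2$ and $\eta R^2\leq 1/16$. Together with $c\in(0,1)$ this gives $q-c\eta\geq q-\eta\geq 0$.

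The plan is to work directly with the characteristic polynomial
\begin{equation*}
p_\lambda(x)=x^2-s x+P,\qquad s=1+c-q\lambda,\qquad P=c(1-\eta\lambda)\geq 0.
\end{equation*}
Its roots are $x_1(\lambda)$ and $x_2(\lambda)$, and its discriminant $D(\lambda)=s^2-4P$ is a quadratic in $\lambda$ with roots exactly $\lambda_L$ and $\lambda_U$. So $D<0$ on $(\lambda_L,\lambda_U)$ and $D\geq 0$ outside this interval.

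\emph{Middle case.} This case is immediate. The roots are complex conjugates, so $|x_1|^2=|x_2|^2=x_1x_2=P=c(1-\eta\lambda)$.

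\emph{Real cases.} For each real case I will use the elementary criterion that a real-rooted monic quadratic has its larger root at most $b$ whenever $p_\lambda(b)\geq 0$ and $b\geq s/2$.

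For $\lambda\leq\lambda_L$, set $u=(q-c\eta)\lambda/(1-c)$ and $b=1-u$. Since $p_\lambda(1)=(q-c\eta)\lambda$ and $p_\lambda'(1)=1-c+q\lambda$,
\begin{equation*}
p_\lambda(1-u)=(q-c\eta)\lambda-u(1-c+q\lambda)+u^2=u(u-q\lambda),\qquad u-q\lambda=\frac{c(q-\eta)\lambda}{1-c}\geq 0.
\end{equation*}
Hence $p_\lambda(b)\geq 0$. It then remains to check $b\geq s/2$, i.e.\ $2u-q\lambda\leq 1-c$, for $\lambda\leq\lambda_L$. I would verify this first at $\lambda=\lambda_L$, where the double root equals $s/2$, by plugging in the explicit formula for $\lambda_L$. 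I would then propagate it to smaller $\lambda$ by monotonicity, since $2u-q\lambda$ is linear in $\lambda$ and vanishes at $\lambda=0$. If that algebra gets messy, the fallback is a continuity argument. At $\lambda=0$ we have $b=1=x_2$. Since $p_\lambda(b(\lambda))\geq 0$ throughout, $b$ can never lie strictly between the two distinct real roots, so on $[0,\lambda_L]$ it cannot cross from above $x_2$ to below $x_1$ without first meeting the double root at $\lambda_L$.

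For $\lambda\geq\lambda_U$ (with $\eta\lambda<1$), take $w=c\eta/q\in[0,c]$. A direct computation gives
\begin{equation*}
p_\lambda(w)=w^2-(1+c)w+c+\lambda(qw-c\eta)=(1-w)(c-w)\geq 0.
\end{equation*}
It remains to show $s/2\leq w$, i.e.\ $1+c-q\lambda\leq 2c\eta/q$. I would check this at $\lambda=\lambda_U$ using its explicit form; this is equivalent to the double root $s/2$ there satisfying $s^2/4=P$ together with $s/2\leq w$. The inequality then extends to larger $\lambda$ because $s$ is decreasing in $\lambda$. The left inequality $x_1\leq x_2$ is trivial in both real cases.

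I expect the main obstacle to be the two vertex conditions $b\geq s/2$ and $w\geq s/2$, i.e.\ the boundary algebra at $\lambda_L$ and $\lambda_U$. The evaluations of $p_\lambda$ are clean identities. If the boundary algebra fails in some corner, I would use the continuity-from-$\lambda=0$ argument in the first case. In the third case I would use the analogous argument: at $\lambda=\lambda_U$ the double root lies below $w$, and as $\lambda$ grows the root pair cannot pass through $w$, because $p_\lambda(w)\geq 0$ prevents $w$ from sitting strictly between two distinct roots.
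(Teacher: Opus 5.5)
Your proof is correct. It differs from the paper mainly in that the paper gives no argument: it imports the statement verbatim as Lemma E.2 of \citet{li2024risk}, including the symbol $\delta$, which in this parametrization is $\eta$, as you assumed.

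You instead give a self-contained argument. You evaluate $p_\lambda$ at each candidate bound and obtain the identities $p_\lambda(1-u)=u\cdot c(q-\eta)\lambda/(1-c)$ and $p_\lambda(c\eta/q)=(1-c\eta/q)(c-c\eta/q)$. You then check that the bound lies to the right of the vertex $s/2$. The citation buys brevity, but it leaves the reader to match parametrizations. It also hides the hypothesis $q\ge\eta$ (equivalently $\gamma\ge\eta$). Your route makes that hypothesis explicit. It is already implicit in the statement, since $\lambda_L,\lambda_U$ involve $\sqrt{c(q-\eta)}$. Your derivation of it from $\tilde{\kappa}\mu\le R^2$ relies on the paper's convention $\tilde{\kappa}\le\kappa$.

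The vertex conditions you flagged as the main obstacle are immediate, so the continuity fallback is unnecessary. Set $a=q-c\eta$ and $b=c(q-\eta)$, so that $a+b=(1+c)q-2c\eta$ and $a-b=q(1-c)$. Your two conditions become $\lambda(a+b)\le(1-c)^2$ and $q^2\lambda\ge a+b$. Now $\lambda_L=(1-c)^2/(\sqrt{a}+\sqrt{b})^2$ and $\lambda_U=(1-c)^2/(\sqrt{a}-\sqrt{b})^2=(\sqrt{a}+\sqrt{b})^2/q^2$. Since $(\sqrt{a}+\sqrt{b})^2\ge a+b$, the first condition holds for all $\lambda\le\lambda_L$ and the second for all $\lambda\ge\lambda_U$.
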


\paragraph{Bound of power of momentum matrix. } The $k$-th power of $\bs A(\lambda)$ is
\begin{equation*}
    \bs A^k(\lambda)=\begin{pmatrix}
        -c(1-\eta\lambda)a_{k-1}(\lambda) & (1-\eta\lambda)a_k(\lambda) \\
        -ca_k(\lambda) & a_{k+1}(\lambda)
    \end{pmatrix},\quad\text{where}\quad a_k(\lambda)=\frac{x_2^k(\lambda)-x_1^k(\lambda)}{x_2(\lambda)-x_1(\lambda)},
\end{equation*}
and $a_k(\lambda)\in\bbR$.

The following lemma is modified from Lemma~20 in \citep{liu2025optimal}, which bounds $\bs A^k(\lambda)$. We present the proof for completeness.
\begin{lemma}\label{lemma:A-power-bound}
    Suppose $\eta\lambda<1$ and $\eta\leq q$, then we have
    \begin{equation*}
        \left\|\left(\bs A^{k}(\lambda)\begin{pmatrix}
            1 \\ 1
        \end{pmatrix}\right)_1\right\|\leq 2.
    \end{equation*}
\end{lemma}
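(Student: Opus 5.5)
The plan is to reduce the statement to a scalar second-order recurrence and then bound the resulting sequence separately in each spectral regime of Lemma~\ref{lemma:radius-bound}. First I will fix the sign convention: from \eqref{eq:inner-matrix-form} the lower-left entry is $-c$, so $\det\bs A(\lambda)=c(1-\eta\lambda)=x_1x_2$ and $\tr\bs A(\lambda)=1+c-q\lambda=x_1+x_2$. Using the stated formula for $\bs A^k(\lambda)$, the quantity to bound is
\begin{equation*}
\left(\bs A^k(\lambda)\begin{pmatrix}1\\1\end{pmatrix}\right)_1=(1-\eta\lambda)\,s_k,\qquad s_k\defeq a_k(\lambda)-c\,a_{k-1}(\lambda).
\end{equation*}
Since $0\le 1-\eta\lambda\le 1$, it suffices to show $|s_k|\le 2$.

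\textbf{Step 1: recurrences for $s_k$.} The sequence $a_k$ satisfies $a_{k+1}=(x_1+x_2)a_k-x_1x_2a_{k-1}$, and $s_k$ inherits the same linear recurrence. I will also record two further expressions. A direct computation gives
\begin{equation*}
s_{k+1}-s_k=-\lambda\,(q\,a_k-c\eta\,a_{k-1}),
\end{equation*}
so that $s_k=1-\lambda\sum_{j<k}(q a_j-c\eta a_{j-1})$. The closed form is
\begin{equation*}
s_k=\frac{x_2^{k-1}(x_2-c)-x_1^{k-1}(x_1-c)}{x_2-x_1}.
\end{equation*}
The first expression is a partial sum of a series whose total is fixed: summing to $k=\infty$ and using Lemma~\ref{lemma:A-inverse} (equivalently $\sum_j\bs A^j(\lambda)(\eta,q)^\top=\lambda^{-1}(1,1)^\top$) shows the full series makes $s_\infty=0$. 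So $s_k$ runs from $1$ to $0$, and the task is to bound its overshoot.

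\textbf{Step 2: real-root regimes.} For $\lambda\le\lambda_L$ the roots are real and positive with $x_2\le 1-(q-c\eta)\lambda/(1-c)$. Then $a_j\ge 0$ and $a_j$ is nondecreasing in the relevant range. Using $\eta\le q$, I expect to show $q a_j-c\eta a_{j-1}\ge 0$. This makes $s_k$ monotonically decreasing from $1$ to its limit $0$, hence $s_k\in[0,1]$. For $\lambda\ge\lambda_U$ the roots are real and bounded by $c\eta/q\le c<1$. Here I will bound the closed form directly: $|x_i-c|\le 1$, and the divided difference $(x_2^{k-1}-x_1^{k-1})/(x_2-x_1)$ is at most $(k-1)\max|x_i|^{k-2}$, which is small. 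Alternatively I will split $s_k=x_2^{k-1}+(x_1-c)a_{k-1}$ to get $|s_k|\le 1+|x_1-c|\,|a_{k-1}|$ and control $|a_{k-1}|$ by the geometric decay.

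\textbf{Step 3 (main obstacle): the complex regime $\lambda_L<\lambda<\lambda_U$.} Here $|x_i|=\sqrt{c(1-\eta\lambda)}$, and the denominator $x_2-x_1$ can be arbitrarily small near $\lambda_L$ or $\lambda_U$, so the naive bound on the closed form blows up. To handle this, I will write $x_{1,2}=\rho e^{\pm i\phi}$ and express
\begin{equation*}
s_k=\rho^{k-1}\,\frac{\rho\sin(k\phi)-c\sin((k-1)\phi)}{\rho\sin\phi}.
\end{equation*}
Rewriting the numerator as $(\rho-c)\sin(k\phi)+c\,(\sin k\phi-\sin(k-1)\phi)$ gives $|s_k|\le\rho^{k-1}\bigl(|\rho-c|\,k+c\bigr)$ up to a $\cos$ factor bounded by $1$, using $|\sin k\phi|\le k|\sin\phi|$. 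The remaining difficulty is to show $\rho^{k-1}k|\rho-c|\le 1$. This follows from $|\rho-c|\lesssim 1-\rho$, which holds because $\rho^2=c(1-\eta\lambda)\le c$ and $\rho\ge c$ in this regime (this is where $\eta\le q$ and the lower edge $\lambda>\lambda_L$ are used), combined with $\sup_k k\rho^{k-1}(1-\rho)\le 1$. Together these give $|s_k|\le 1+c\le 2$. If the inequality $\rho\ge c$ fails, I would fall back on a continuity argument: $s_k(\lambda)$ is a polynomial in $\lambda$, so the bounds at the boundary values $\lambda_L$ and $\lambda_U$, obtained from Step 2, extend through the complex region via the modulus bound.
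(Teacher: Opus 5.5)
Your reduction to $s_k=a_k-ca_{k-1}$ and the three-regime split are the paper's. However, the complex-regime estimate in Step~3, which you rightly call the crux, is mis-derived and does not give the constant $2$. Since $s_k=\rho^{k-2}\bigl(\rho\sin(k\phi)-c\sin((k-1)\phi)\bigr)/\sin\phi$, your regrouping and the triangle inequality actually give
\[
|s_k|\le\rho^{k-2}\Bigl(k|\rho-c|+c\,\frac{|\cos((k-\tfrac12)\phi)|}{\cos(\phi/2)}\Bigr),
\]
not $\rho^{k-1}(k|\rho-c|+c)$. Two things went wrong: the $\rho$ in the denominator was dropped, and the cosine ratio can exceed $1$. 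With prefactor $\rho^{k-2}$, the term $k\rho^{k-2}|\rho-c|$ is no longer bounded by $1$; at $k=2$ it is $2|\rho-c|$. The resulting bound genuinely exceeds $2$. For example, take $c=0.9$, $\eta\lambda=0.99$, $q\lambda=1.9$, so that $\phi=\pi/2$ and $\rho\approx0.095$. At $k=2$ the bound is about $2.5$, while in fact $s_2=1-q\lambda=-0.9$.

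The missing idea is the paper's grouping. Expand $\sin(k\phi)=\sin((k-1)\phi)\cos\phi+\cos((k-1)\phi)\sin\phi$ to get
\[
s_k=\rho^{k-2}\Bigl(\rho\cos((k-1)\phi)+\frac{\rho-c}{\sin\phi}\sin((k-1)\phi)-\rho\tan\frac{\phi}{2}\,\sin((k-1)\phi)\Bigr).
\]
Now $\rho-c$ carries the factor $k-1$ rather than $k$, and it combines with the first term into $\rho^{k-2}\bigl(1+(k-2)(1-\rho)\bigr)\le1$ by Lemma~\ref{lemma:aux-lemma-r}. The last term is at most $\rho^{k-1}\le1$, provided $\phi\le\pi/2$.

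Two smaller points in Step~3. First, $|\rho-c|\le1-\rho$ follows from $\rho^2\le c\le1$ alone. Your justification via $\rho\ge c$ is wrong: that inequality is equivalent to $\eta\lambda\le1-c$, which fails in part of the complex regime. Second, the continuity fallback is not an argument, since a polynomial bounded at the endpoints of an interval need not obey the same bound inside.

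Step~2 is repairable, but the mechanisms you name are not the operative ones. For $\lambda\le\lambda_L$, $a_j$ is not monotone (for $\lambda>0$ it tends to $0$). What gives $qa_j-c\eta a_{j-1}\ge\eta s_j>0$ is $x_1\ge c$. This follows from Vieta, $x_1x_2=c(1-\eta\lambda)$, together with $x_2\le1-\eta\lambda$ (this is where $\eta\le q$ enters), and it yields $s_j\ge a_j-x_1a_{j-1}=x_2^{j-1}$. With that in hand, your monotone decrease from $s_1=1$ is a fine substitute for the paper's direct upper bound.

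For $\lambda\ge\lambda_U$, the quantity $(k-1)\max_i|x_i|^{k-2}$ is not small when $c\eta/q$ is close to $1$. A generic geometric bound on $a_{k-1}$ also does not control $(c-x_1)a_{k-1}$. Use instead
\[
(c-x_1)a_{k-1}\le(c-x_1)\sum_{i=0}^{k-2}x_1^ic^{k-2-i}=c^{k-1}-x_1^{k-1},
\]
valid for $0\le x_1\le x_2\le c$. Your split then gives $s_k=x_2^{k-1}-(c-x_1)a_{k-1}\in[-1,1]$.

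Finally, both the nonnegativity of the real roots and $\phi\le\pi/2$ require $1+c-q\lambda\ge0$, which is not among the stated hypotheses. Without it the claim fails: with $\eta\to0$, $c=0.9$, $q\lambda=4$ one gets $s_2=-3$. The paper uses this condition implicitly. It does hold for the paper's hyperparameters, because $\theta\gamma\le\eta$ gives $q\lambda\le2\eta\lambda/(1+\theta)<1+c$.
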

\begin{proof}
    For notational simplicity, we write $a_k=a_k(\lambda)$ and $x_{1,2}=x_{1,2}(\lambda)$. We have
    \begin{equation*}
        \left(\bs A^{k}(\lambda)\begin{pmatrix}
            1 \\ 1
        \end{pmatrix}\right)_1=(1-\eta\lambda)(a_k-c a_{k-1}).
    \end{equation*}
    The result holds for $k=0$, so we need to show that $\| a_{k}-c a_{k-1}\|\leq 2$ for $k\geq 1$.
    \begin{enumerate}
        \item If $\lambda\leq\lambda_L$, by Lemma~\ref{lemma:radius-bound}, and $\delta\leq q$, we have $a_k\geq 0$, and
        \begin{equation*}
            x_1\leq x_2\leq 1-\frac{q-c\delta}{1-c}\lambda\leq 1-\delta\lambda.
        \end{equation*}
        By Vieta's formula, $x_1x_2=c(1-\delta\lambda)$, so $c\leq x_1\leq x_2$. Thus, we bound $a_k-ca_{k-1}$ as follows:
        \begin{equation*}
        \begin{aligned}
            a_k-c a_{k-1}&\geq a_k-x_1 a_{k-1}=x_2^{k-1}>0,\\
            a_k-c a_{k-1}&\leq a_k-x_1 x_2 a_{k-1}=x_2^{k-1}+(1-x_2)\sum_{i=1}^{k-1}{x_1^ix_2^{k-i-1}} \\
            &\leq x_2^{k-1}\left[1+(k-1)(1-x_2)\right]\stackrel{a}{\leq} 1,
        \end{aligned}
        \end{equation*}
        where $\stackrel{a}{\leq}$ uses Lemma~\ref{lemma:aux-lemma-r} and $k\geq 1$.
        \item If $\lambda_L<\lambda<\lambda_U$, let $x_{1,2}=r(\cos\theta\pm\mathrm{i}\sin\theta)$, where $0\leq\theta\leq\pi/2$. then we have $r=\sqrt{c(1-\delta\lambda)}\leq 1$ and $2r\cos\theta=x_1+x_2=1+c-q\lambda\geq 0$. Therefore,
        \begin{equation*}
        \begin{aligned}            
            a_k-c a_{k-1}&=\frac{r^{k-1}\sin\left(k\theta\right)}{\sin\theta}-\frac{r^{k-2}\sin\left((k-1)\theta\right)}{\sin\theta} \\
            &=r^{k-2}\left(r\cos(( k-1)\theta)+\frac{r-c}{\sin\theta}\sin((k-1)\theta)-r\tan\frac{\theta}{2}\sin(( k-1)\theta)\right),
        \end{aligned}
        \end{equation*}
        Apply triangular inequality, and $\left\|\sin k\theta\right\|\leq 1$, $\left\|\cos k\theta\right\|\leq 1$, $\left\|\tan(\theta/2)\right\|\leq 1$
        \begin{equation*}    
            \left\| a_k-c a_{k-1}\right\|\stackrel{a}{\leq}r^{k-2}\left(r+(k-1)(1-r)\right)+r^{k-1}=r^{k-2}\left(1+(k-2)(1-r)\right)+r^{k-1}\stackrel{b}{\leq}2,
        \end{equation*}
        where $\stackrel{a}{\leq}$ holds since $r^2\leq c\leq 1$ implies $\left\| r-c\right\|\leq1-r$ and $\left\|\sin(k\theta)/\sin\theta\right\|\leq k$, $\stackrel{b}{\leq}$ follows from Lemma~\ref{lemma:aux-lemma-r}, $k\geq 1$ and $0\leq r\leq 1$.
        \item If $\lambda>\lambda_U$, we have $a_k\geq 0$, and $x_1\leq x_2\leq c\delta/q\leq c$ by Lemma~\ref{lemma:radius-bound}, and $\delta\leq q$. Then we have the following bounds:
        \begin{equation*}
        \begin{aligned}
            a_k-c a_{k-1}\geq& a_k-a_{k-1}=\sum_{i=0}^{k-1}{x_1^i x_2^{k-i-1}}-\sum_{i=0}^{k-2}{x_1^ix_2^{k-i-2}} \\
            =&x_1^{k-1}-(1-x_2)\sum_{i=0}^{k-2}{x_1^ix_2^{k-i-2}} \\
            \geq&x_1^{k-1}-(k-1)(1-x_2)x_2^{k-2} \\
            \geq&-x_2^{k-2}\left(1+(k-2)x_2^{k-1}\right)\stackrel{a}{\geq} -1, \\
            a_k-c a_{k-1}\leq& a_k-x_2 a_{k-1}=x_1^{k-1}\leq 1,
        \end{aligned}
        \end{equation*}
        where $\stackrel{a}{\geq}$ if from Lemma~\ref{lemma:aux-lemma-r} and $k\geq 1$.
    \end{enumerate}
    We complete the proof by combining the above cases.
\end{proof}

\subsection{Variance Upper Bound}\label{sec:variance-bound}
\paragraph{Organization.}
In the following, we introduce $L_\rmvar(\{\bs X_t\}_{t\in\bbN})$, which acts like a loss function for covariance matrices $\{\bs X_t\}_{t\in\bbN}$. In Appendix~\ref{sec:layer-peeled-decomposition}, we state the layer-peeled decomposition technique. We introduce dynamics $\tbs C_t^{(\ell)}$ and establish the relation $\bs C_t=\tbs C_t+\sum_{\ell=1}^\infty\tbs C_t^{(\ell)}$ in Lemma~\ref{lemma:layer-peeled-decomposition}. In Appendix~\ref{sec:localize-R}, we decompose the noise matrix $\bs R$ defined in \eqref{eq:def-R}, which is the key step to establish the statistical term that depends on the noise covariance matrix at minimizer $\bs x^*$ (i.e., $\bs Q$). In Appendix~\ref{sec:core-dynamics}, we construct the core auxiliary iteration $\tbs\Pi_t(\bs M)$ in \eqref{eq:def-tilde-Pi}, where the covariance matrix is a general PSD matrix $\bs M$.
In Appendix~\ref{sec:bound-tilde-C}, we set $\bs M=\bs R$ to obtain the bounds of $\tbs C_t$. In Appendix~\ref{sec:bound-C-1}, we bound $\tbs C_t^{(\ell)}$ by recursively obtaining bounds of $\tbs C_t^{(\ell)}$. Finally, by combing the bounds in Appendix~\ref{sec:bound-tilde-C} and Appendix~\ref{sec:bound-C-1}, we obtain the variance bound by proving Lemma~\ref{lemma:v-bound} in Appendix~\ref{sec:proof-lemma-v-bound}.

The goal of this section is to prove the following lemma.
\begin{lemma}[Variance Upper Bound]\label{lemma:v-bound}
    Suppose Assumptions~\ref{assumption:l-condition}, \ref{assumption:regularity}, \ref{assumption:fourth-moment}, \ref{assumption:gradient-noise-I} and \ref{assumption:gradient-noise} hold. Let $\bs v$ defined in Definition~\ref{def:r-v} Then $\bbE\bs v=\bs 0$ and
    \begin{equation*}
        \bbE\|\bs v\|_{\bs\Sigma}^2\leq\frac{320 \left(3\tr(\bs\Sigma^{-1}\bs Q)+8\eta\tilde{\kappa}L_\ell\tr\bs Q\right)}{T}+\frac{160 L_\ell(6L+\tilde{\kappa}(7+16\eta B))(F(\tilde{\bs y})-F(\bs x^*))}{T},
    \end{equation*}
    where the expectation is taken with respect to the samples drawn in the $k$-th outer iteration.
\end{lemma}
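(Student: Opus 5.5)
The plan follows the organization announced before the statement: the layer-peeled decomposition, a localization of the noise matrix $\bs R$, and bounds on the core dynamics $\tbs\Pi_t(\cdot)$.

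\emph{Zero mean.} Since $\bs\eta_t^\rmvar=\sum_{s\le t}\hbs A_t\cdots\hbs A_{s+1}\bs\zeta_s$ and $\bs\zeta_s$ is built from the fresh sample $(\bs a_s,b_s)$, independent of the later $\hbs A$'s, we get $\bbE\bs\eta_t^\rmvar=\sum_s \bs A^{t-s}\bbE\bs\zeta_s$. Now $\bbE\varepsilon\bs a=\bs 0$ is exactly the first-order optimality condition of the subproblem defining $\tbs x_+^*$ (with $h_k=1$). Hence $\bbE\bs v=\bs 0$.

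\emph{Second moment.} Write $\bbE\|\bs v\|_{\bs\Sigma}^2=\frac{4}{T^2}\sum_{s,t>T/2}\bbE\langle(\bs\eta_s^\rmvar)_1,(\bs\eta_t^\rmvar)_1\rangle_{\bs\Sigma}$. For $t>s$, the cross term equals $\bbE\langle(\bs A^{t-s}\bs\eta_s^\rmvar)_1,(\bs\eta_s^\rmvar)_1\rangle_{\bs\Sigma}$, by independence of the future multipliers. This expresses the quantity as a linear functional $L_\rmvar$ of $\{\bs C_t\}$. I expand $\bs C_t=\tbs C_t+\sum_{\ell\ge1}\tbs C_t^{(\ell)}$, which is a telescoping identity obtained from $\cB=\tilde\cB+(\cB-\tilde\cB)$. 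The one subtlety is that $\cB-\tilde\cB$ is not PSD-preserving; I would instead use $(\cM-\tilde\cM)\circ\bs X\preceq\cM\circ\bs X$ for PSD $\bs X$, so each layer is dominated by a PSD iterate.

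\emph{Layer 0.} By blockwise diagonalization, $\tbs\Pi_t(\bs M)$ decouples into $2\times2$ problems per eigenvalue $\lambda_i$ when $\bs M$ is diagonal; for general $\bs M$, only the diagonal of $\bs M$ in the $\bs\Sigma$-basis contributes to the trace functional. Using Lemma~\ref{lemma:A-inverse}, $\sum_{j\ge0}\bs A(\lambda)^j(\eta,q)^\top=\lambda^{-1}(1,1)^\top$, and Lemma~\ref{lemma:A-power-bound} bounds the partial sums by a factor of $2$. Averaging over the last $T/2$ steps then gives $L_\rmvar(\{\tbs\Pi_t(\bs M)\})\lesssim \tr(\bs\Sigma^{-1}\bs M)/T$. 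This is the linear-regression "$d\sigma^2/T$" computation, and I need $T\gtrsim\sqrt{\tilde\kappa/(\mu\eta)}$ so that the burn-in transient is negligible. Separately, I bound the stationary value $\tbs\Pi_\infty(\bs M)_{22}$ in a form like $\eta\tr(\bs M)$-scale, i.e. $(\tbs\Pi_\infty)_{22}\preceq c\,(\eta\,\mathrm{diag}\text{ stuff})$, to feed the next layer.

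\emph{Localizing $\bs R$.} Write $\varepsilon=-\ell'(\bs a^\top\bs x^*,b)-[\ell'(\bs a^\top\tbs y,b)-\ell'(\bs a^\top\bs x^*,b)]+\bs a^\top(\tbs y-\tbs x_+^*)$ and use $(u+v+w)^2\le3(u^2+v^2+w^2)$. The first piece gives $3\bs Q$. The second, via Assumption~\ref{assumption:gradient-noise}, gives $\tr(\bs\Sigma^{-1}\cdot)\le 6L_\ell L(F(\tbs y)-F^*)$, and via Assumption~\ref{assumption:gradient-noise-I} gives $\tr(\cdot)\le 6L_\ell B(F-F^*)$. The third uses Assumption~\ref{assumption:fourth-moment}, where $\|\tbs y-\tbs x_+^*\|_{\bs\Sigma}^2=\|\nabla F(\tbs y)\|_{\bs\Sigma^{-1}}^2\le 2L_\ell(F-F^*)$, contributing $\tilde\kappa$ and $R^2$ times that. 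This produces the $3\tr(\bs\Sigma^{-1}\bs Q)$ and $(6L+7\tilde\kappa)$ terms.

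\emph{Higher layers and the main obstacle.} Layer $\ell\ge1$ is $\tbs\Pi(\bs R^{(\ell)})$ with $\bs R^{(\ell)}=\cM\circ(\tbs C^{(\ell-1)})_{22}$, scaled appropriately. Using $\cM\circ\bs X\preceq \tr(\bs\Sigma^{-1}\ldots)$-type bounds, namely $\tr(\bs\Sigma^{-1}\cM\circ\bs X)\le\tilde\kappa\tr(\bs\Sigma\bs X)$ and $\tr(\cM\circ \bs X)\le R^2\tr(\bs\Sigma\bs X)$, I need the stationary estimate $\tr(\bs\Sigma\,\tbs\Pi_\infty(\bs M)_{22})\lesssim \eta\tr\bs M+(\text{small})\tr(\bs\Sigma^{-1}\bs M)$. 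That yields a geometric recursion with ratio $\lesssim\eta R^2\le1/16$, which sums. The first layer contributes $\tilde\kappa\eta\tr\bs R/T$, giving the $8\eta\tilde\kappa L_\ell\tr\bs Q$ and $16\eta B\tilde\kappa$ terms. The hardest step is exactly this sharp stationary bound on $\tbs\Pi_\infty(\bs M)$ for non-commuting $\bs M$ under momentum: the $2\times2$ Lyapunov equation per mode must be solved and shown to carry only an $\eta$ (not $\gamma$) factor. I would attempt it by explicitly solving $\bs X=\bs A(\lambda)\bs X\bs A(\lambda)^\top+\bs m(\eta,q)(\eta,q)^\top$ in closed form, invoking Lemma~\ref{lemma:radius-bound}.
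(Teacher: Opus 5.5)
Your plan has the same architecture as the paper's proof: the identity $\bbE\|\bs v\|_{\bs\Sigma}^2=L_\rmvar(\{\bs C_t\}_{t\in\bbN})$, the layer-peeled decomposition, the averaged bound $L_\rmvar(\{\tbs\Pi_t(\bs M)\}_{t\in\bbN})\leq 64\tr(\bs\Sigma^{-1}\bs M)/T$ from Lemmas~\ref{lemma:A-inverse} and~\ref{lemma:A-power-bound}, the localization of $\bs R$, and a geometric sum over layers. Your zero-mean argument is also correct.

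The problem is that the step you single out as the hardest is aimed at a false target. A stationary bound that ``carries only an $\eta$ factor'' does not exist. Solving the $2\times2$ Lyapunov equation gives $(\tbs\Pi_{\infty,i}(\bs M))_{22}=m_i\frac{(1+c)(q-c\eta)+c\eta(q+c\eta)\lambda_i}{(2+2c-(q+c\eta)\lambda_i)(1-c+c\eta\lambda_i)\lambda_i}$. Take $\lambda_i=\mu$:
\begin{itemize}
\item since $\eta\tilde{\kappa}\mu\leq 1/16$, one has $\theta\geq\eta\mu$, hence $1-c+c\eta\mu\leq 4\theta$;
\item meanwhile $q-c\eta=\theta(\gamma+\eta)/(1+\theta)$;
\item together these give $\lambda_i(\tbs\Pi_{\infty,i}(\bs M))_{22}\geq\gamma m_i/16$.
\end{itemize}
This exceeds $\eta m_i$ by the factor $\gamma/\eta=1/(4\sqrt{\eta\tilde{\kappa}\mu})$, which is large exactly in the regime where acceleration helps.

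The momentum contribution has to be charged to $\tr(\bs\Sigma^{-1}\bs M)$ instead. The paper does this as follows:
\begin{itemize}
\item it splits $q-c\eta=\frac{\theta\gamma}{1+\theta}+\frac{(1-c)\eta}{2}$;
\item it drops $1-c$ from the factor $1-c+c\eta\lambda_i$ for the first piece, and drops $c\eta\lambda_i$ for the second;
\item it uses the hyperparameter identity $\eta=16\tilde{\kappa}\theta\gamma$ to get the coefficient $\frac{2\theta\gamma}{(1-\theta^2)\eta}\leq\frac{1}{4\tilde{\kappa}}$, which is Lemma~\ref{lemma:Pi-stationary-bound}.
\end{itemize}

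Consequently your contraction ratio is also wrong. With $\tr(\bs\Sigma^{-1}\bs R^{(\ell)})\leq\tilde{\kappa}\sigma^{(\ell-1)}$ and $\tr\bs R^{(\ell)}\leq R^2\sigma^{(\ell-1)}$, the recursion is $\sigma^{(\ell)}\leq\bigl(\tfrac14+4\eta R^2\bigr)\sigma^{(\ell-1)}\leq\tfrac12\sigma^{(\ell-1)}$, not one with ratio $\lesssim\eta R^2$. The $\tr(\bs\Sigma^{-1}\cdot)$ channel contributes an order-one constant. So your ``(small)'' must be $c_0/\tilde{\kappa}$ with $c_0+4\eta R^2<1$. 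That is precisely what the coupling between $\gamma$ and $\theta$ in the hyperparameters delivers; with a larger constant the layer recursion would not contract. For the same reason, the higher layers feed back into the $\tr(\bs\Sigma^{-1}\bs Q)$ and $\tilde{\kappa}(F(\tbs y)-F(\bs x^*))$ terms, not only into the $\eta\tilde{\kappa}$ terms as you assert.

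There are two smaller points. First, $\cB-\tilde{\cB}$ is PSD-preserving, because $(\cB-\tilde{\cB})\circ\bs X=\bbE\bigl[(\hbs A_t-\bs A)\bs X(\hbs A_t-\bs A)^\top\bigr]$. Your detour through $\cM\circ\bs X$ is therefore unnecessary, though harmless. What does need an argument is replacing a layer by its dominating iterate inside $L_\rmvar$. $L_\rmvar$ is not monotone on arbitrary PSD sequences: in the $i$-th mode, its weight on $\bs X_{T-1}$ is $\lambda_i\begin{pmatrix}1&1-\eta\lambda_i\\1-\eta\lambda_i&0\end{pmatrix}$ up to the factor $4/T^2$, which is indefinite. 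It is, however, nonnegative on every sequence with $\bs X_t=\bs A\bs X_{t-1}\bs A^\top+\bs N_t$, $\bs N_t\succeq\bs O$, $\bs X_0=\bs O$, since it is then the second moment of a tail average of a linear process. The difference between $\tbs\Pi_t(\bs R^{(\ell)})$ and the layer-$\ell$ iterate is of this form; that is the justification you should give.

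Second, the layer-$0$ averaged bound needs no lower bound on $T$. Lemma~\ref{lemma:A-power-bound} controls $\bigl(\bs A^k(\lambda)(1,1)^\top\bigr)_1$ uniformly in $k$, so the bound $64\tr(\bs\Sigma^{-1}\bs M)/T$ holds for every $T$. The condition on $T$ enters only through the bias.
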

The proof is deferred to Appendix~\ref{sec:proof-lemma-v-bound}.

To simplify notation, we introduce the following mapping $L_\rmvar$ on a sequence of PSD matrices.
\begin{definition}\label{def:L-var}
    For a PSD matrices sequence $\{\bs X_t\in\bbR^{2d\times2d}\}_{t\in\bbN}$, we define
    \begin{equation}
        L_\rmvar(\{\bs X_t\}_{t\in\bbN})\defeq\frac{4}{T^2}\left\langle\begin{pmatrix}
                \bs\Sigma & \bs O \\
                \bs O & \bs O
            \end{pmatrix},\sum_{s,t=T/2+1}^{T}\bs A^{\max\{s-t,0\}}\bs X_{\min\{s,t\}}\left(\bs A^{\max\{t-s,0\}}\right)^\top\right\rangle.\label{eq:def-L-var}
    \end{equation}
\end{definition}

The mapping $L_\rmvar$ is linear and monotonic in the following sense.
\begin{lemma}\label{lemma:L-var-linearity}
    For PSD matrices sequence $\{\bs X_t\}_{t\in\bbN}$ and $\{\bs T_t\}_{t\in\bbN}$, we have
    \begin{equation}
        L_\rmvar(\{\alpha\bs X_t+\beta\bs Y_t\}_{t\in\bbN})=\alpha L_\rmvar(\{\bs X_t\}_{t\in\bbN})+\beta L_\rmvar(\{\bs Y_t\}_{t\in\bbN}).\label{eq:linear-L-var}
    \end{equation}
\end{lemma}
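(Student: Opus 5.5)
This is a direct verification from Definition~\ref{def:L-var}. The plan is to observe that $L_\rmvar$ is a composition of linear operations applied to the sequence $\{\bs X_t\}_{t\in\bbN}$, and then to check linearity at each stage.

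For each fixed pair $(s,t)$ with $T/2+1\leq s,t\leq T$, the summand is $\bs A^{\max\{s-t,0\}}\bs X_{\min\{s,t\}}(\bs A^{\max\{t-s,0\}})^\top$. The matrices $\bs A^{\max\{s-t,0\}}$ and $\bs A^{\max\{t-s,0\}}$ do not depend on the sequence. Hence the map $\bs X\mapsto\bs A^{p}\bs X(\bs A^{p'})^\top$ is linear in $\bs X$ by distributivity of matrix multiplication. Substituting $\alpha\bs X_{\min\{s,t\}}+\beta\bs Y_{\min\{s,t\}}$ therefore gives $\alpha$ times the $\bs X$-summand plus $\beta$ times the $\bs Y$-summand.

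Next, the finite double sum over $(s,t)$ preserves this linear decomposition. The Frobenius inner product $\langle \bs P,\cdot\rangle$ with the fixed matrix $\bs P=\begin{pmatrix}\bs\Sigma&\bs O\\\bs O&\bs O\end{pmatrix}$ is linear in its second argument, since it equals $\tr(\bs P^\top\cdot)$. The same holds for multiplication by the constant $4/T^2$. Chaining these steps yields \eqref{eq:linear-L-var}.

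There is no real obstacle. The only points to note are the following. Linearity holds for arbitrary real $\alpha,\beta$ and does not use positive semidefiniteness; PSD-ness is only needed if $L_\rmvar$ is to be read as a nonnegative loss. The sequence written $\{\bs T_t\}$ in the statement should be read as $\{\bs Y_t\}$.

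As a remark useful for later, the same argument gives monotonicity. If $\bs X_t\preceq\bs Y_t$ for all $t$, then each summand difference, after averaging over $(s,t)$ and $(t,s)$, pairs into a quadratic form. More simply, the whole sum equals the second moment of a tail average of a linear process. Since $\bs P\succeq\bs O$, this gives $L_\rmvar(\{\bs X_t\})\leq L_\rmvar(\{\bs Y_t\})$. That is the property the subsequent lemmas will invoke.
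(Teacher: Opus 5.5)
Your linearity argument is correct and matches the paper's proof, which just says the identity follows directly from Definition~\ref{def:L-var}; you are also right that positive semidefiniteness is not used and that $\{\bs T_t\}$ should read $\{\bs Y_t\}$.

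One caution about your closing monotonicity remark, which the paper's companion monotonicity lemma also asserts for arbitrary PSD sequences. Pointwise $\bs X_t\preceq\bs Y_t$ does \emph{not} imply $L_\rmvar(\{\bs X_t\})\le L_\rmvar(\{\bs Y_t\})$ in general. Pairing $(s,t)$ with $(t,s)$ gives $\bs A^k\bs D+\bs D(\bs A^k)^\top$, which need not be PSD. For instance, take $\bs X_t=\bs O$ and $\bs Y_t=\bs e_i\bs e_i^\top\bbI_{\{t=T-2\}}$, with $i\le d$ in the eigenbasis of $\bs\Sigma$. Since $(\bs A)_{ii}=0$, $(\bs A^2)_{ii}=-c(1-\eta\lambda_i)$ and $c\approx 1$, this gives $L_\rmvar(\{\bs Y_t\})=\frac{4\lambda_i}{T^2}\bigl(1-2c(1-\eta\lambda_i)\bigr)<0$. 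So the tail-average representation you invoke needs the extra structure $\bs D_t-\bs A\bs D_{t-1}\bs A^\top\succeq\bs O$ for $\bs D_t=\bs Y_t-\bs X_t$. That structure is what the later comparisons (e.g.\ $\tbs\Pi_t(\bs R^{(\ell)})$ versus $\tbs C_t^{(\ell)}$) actually provide.
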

\begin{lemma}
    For PSD matrices sequence $\{\bs X_t\}_{t\in\bbN}$ and $\{\bs Y_t\}_{t\in\bbN}$ such that $\bs X_t\preceq\bs Y_t$ for all $t\in\bbN$, then we have
    \begin{equation}
        L_\rmvar(\{\bs X_t\}_{t\in\bbN})\leq L_\rmvar(\{\bs Y_t\}_{t\in\bbN}).\label{eq:increasing-L-var}
    \end{equation}
\end{lemma}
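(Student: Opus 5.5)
By Lemma~\ref{lemma:L-var-linearity}, $L_\rmvar(\{\bs Y_t\})-L_\rmvar(\{\bs X_t\})=L_\rmvar(\{\bs D_t\}_{t\in\bbN})$ with $\bs D_t\defeq\bs Y_t-\bs X_t\succeq\bs O$. So the plan is to reduce \eqref{eq:increasing-L-var} to the single claim that $L_\rmvar$ is nonnegative on every PSD sequence. Linearity applies again, this time index by index: the claim reduces further to nonnegativity of the contribution of a single PSD matrix placed at one time $\tau\in\{T/2+1,\dots,T\}$, with all other entries zero.

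The next step is to write that contribution explicitly. Since $\bs D_\tau$ enters \eqref{eq:def-L-var} only through the pairs with $\min\{s,t\}=\tau$, its contribution is
\begin{equation*}
\frac{4}{T^2}\Bigl\langle \bs\Sigma_{1},\ \bs D_\tau+\textstyle\sum_{m=1}^{T-\tau}\bigl(\bs A^m\bs D_\tau+\bs D_\tau(\bs A^m)^\top\bigr)\Bigr\rangle,
\qquad \bs\Sigma_1\defeq\begin{pmatrix}\bs\Sigma&\bs O\\ \bs O&\bs O\end{pmatrix}.
\end{equation*}
Because $\bs A$ is block diagonal with $2\times 2$ blocks $\bs A(\lambda_i)$ in the eigenbasis of $\bs\Sigma$, and $\bs\Sigma_1$ is diagonal in the same basis, the contribution splits over eigendirections $i$. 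Each $2\times2$ diagonal block of $\bs D_\tau$ is itself PSD, so it is enough to treat a single rank-one block $\bs g\bs g^\top$ with $\bs g\in\bbR^2$. For that block the quantity to sign is
\begin{equation*}
\lambda_i\Bigl(g_1^2+2g_1\textstyle\sum_{m=1}^{M}\bigl(\bs A^m(\lambda_i)\bs g\bigr)_1\Bigr),\qquad M=T-\tau.
\end{equation*}

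This one-dimensional sign question is where I expect the main obstacle, and I am not confident it can be closed for arbitrary $\bs g$. The cross terms $\bs A^m\bs D+\bs D(\bs A^m)^\top$ are not PSD. Completing the square, the expression equals $\bigl(\sum_{m=0}^{M}(\bs A^m\bs g)_1\bigr)^2-\bigl(\sum_{m=1}^{M}(\bs A^m\bs g)_1\bigr)^2$, which is nonnegative only if the tail sum $\sum_{m\ge1}(\bs A^m\bs g)_1$ is controlled by $|g_1|$. I would try to get that control from the explicit power formula $\bs A^m(\lambda)$ in terms of $a_m(\lambda)$, using the three-regime case split of Lemma~\ref{lemma:radius-bound}, in the same style as the proof of Lemma~\ref{lemma:A-power-bound}.

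For $\bs g\propto(1,1)^\top$, the direction generated by the noise vector $(\eta,q)^\top$ via Lemma~\ref{lemma:A-inverse}, Lemma~\ref{lemma:A-power-bound} gives exactly this kind of control. For a general $\bs g$ with $g_1$ near zero but $g_2\neq0$, however, the second term need not vanish, and the bracket can then be negative. If that obstruction is genuine, inequality \eqref{eq:increasing-L-var} cannot be proved for completely arbitrary PSD sequences. What this argument would then establish is monotonicity under the additional requirement that each $\bs D_t$ lies in the cone where the completed square is nonnegative, which includes PSD matrices built from the $(\eta,q)$ noise directions. I would confirm this with an explicit $2\times2$ check, and otherwise record the extra condition as what the argument actually proves.
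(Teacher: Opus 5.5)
Your suspicion is right, and you should commit to it: the statement is false for arbitrary PSD sequences. The paper's one-line justification (``follows directly from the definition'') overlooks exactly the point you isolate, namely that the cross summands $\bs A^{s-t}\bs X_t$ in \eqref{eq:def-L-var} are not PSD. Your reduction to one time index and one $2\times 2$ block is correct, and the sign question already fails at $M=1$. Take $d=1$, $\bs\Sigma=\lambda$, $T\geq 4$, $\bs X_t=\bs O$ for all $t$, and $\bs Y_t=\bs O$ except $\bs Y_{T-1}=\bs g\bs g^\top$ with $\bs g=(1,-1)^\top$. The first row of $\bs A(\lambda)$ is $(0,\,1-\eta\lambda)$, so $(\bs A\bs g)_1=-(1-\eta\lambda)$. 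Only the pairs $(T-1,T-1),(T,T-1),(T-1,T)$ contribute, giving
\begin{equation*}
    L_\rmvar(\{\bs Y_t\}_{t\in\bbN})=\frac{4\lambda}{T^2}\Bigl(g_1^2+2g_1(\bs A\bs g)_1\Bigr)=\frac{4\lambda}{T^2}\left(2\eta\lambda-1\right)<0=L_\rmvar(\{\bs X_t\}_{t\in\bbN}),
\end{equation*}
since $\eta\lambda\leq\eta R^2\leq 1/16$. Embedding this in one eigendirection of $\bs\Sigma$ gives a counterexample in any dimension, even under the paper's hyperparameters.

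The weak part of your proposal is the repair. Your per-index cone condition is not established: Lemma~\ref{lemma:A-power-bound} is only a two-sided magnitude bound $|\cdot|\leq 2$, not the one-sided control your completed square needs. More importantly, it is the wrong condition. Where the paper invokes the lemma, in Lemma~\ref{lemma:L-var-l-bound}, the difference $\tbs\Pi_t(\bs R^{(\ell)})-\tbs C_t^{(\ell)}$ is a sum of terms $\bs A^k(\cdot)(\bs A^k)^\top$ whose blocks point along rotated directions $\bs A^k(\lambda)(\eta,q)^\top$. There is no reason for it to satisfy your condition index by index; its nonnegativity under $L_\rmvar$ is a global property.

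The true and sufficient statement is the following. If, in addition, $\bs D_t\defeq\bs Y_t-\bs X_t$ satisfies $\bs D_t-\bs A\bs D_{t-1}\bs A^\top\succeq\bs O$ for $T/2+1<t\leq T$, then $L_\rmvar(\{\bs D_t\}_{t\in\bbN})\geq 0$. To prove it, realize $\bs D_t=\bbE\bs\psi_t\bs\psi_t^\top$ via $\bs\psi_t=\bs A\bs\psi_{t-1}+\bs\xi_t$, where the $\bs\xi_t$ are independent, zero-mean, with covariance $\bs D_t-\bs A\bs D_{t-1}\bs A^\top$, and $\bs\psi_{T/2+1}$ has covariance $\bs D_{T/2+1}$. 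Then $\bbE\bs\psi_s\bs\psi_t^\top=\bs A^{s-t}\bs D_t$ for $s\geq t$, so $L_\rmvar(\{\bs D_t\}_{t\in\bbN})=\bbE\|(\bar{\bs\psi})_1\|_{\bs\Sigma}^2\geq 0$ with $\bar{\bs\psi}=\frac{2}{T}\sum_{t=T/2+1}^T\bs\psi_t$. This is the representation the paper itself records after Lemma~\ref{lemma:v-L-var}.

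This hypothesis does hold in Lemma~\ref{lemma:L-var-l-bound}. The increments of $\tbs\Pi_t(\bs R^{(\ell)})-\tbs C_t^{(\ell)}$ are the block matrices with blocks $\eta^2\bs N_t$, $\eta q\bs N_t$, $q^2\bs N_t$, where $\bs N_t=\bs R^{(\ell)}-(\cM-\tilde{\cM})\circ\tbs C^{(\ell-1)}_{t-1,22}$. We have $\bs N_t\succeq\bs O$ because $\bs N\mapsto(\cM-\tilde{\cM})\circ\bs N=\bbE(\bs a\bs a^\top-\bs\Sigma)\bs N(\bs a\bs a^\top-\bs\Sigma)$ preserves the Loewner order, and $\tbs C^{(\ell-1)}_{t-1}\preceq\tbs C^{(\ell-1)}_\infty$ by induction on $\ell$. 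So the downstream variance bound survives once the lemma is restated with this hypothesis.
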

The proof follows directly from the definition of $L_\rmvar$. We can represent $\bbE\|\bs v\|_{\bs\Sigma}^2$ by $L_\rmvar$ as follows.

\begin{lemma}\label{lemma:v-L-var}
    We have
    \begin{equation*}
        \bbE\|\bs v\|_{\bs\Sigma}^2=L_\rmvar(\{\bs C_t\}_{t\in\bbN}).
    \end{equation*}
\end{lemma}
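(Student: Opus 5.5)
The plan is to expand $\|\bs v\|_{\bs\Sigma}^2$ as a double sum over pairs of tail indices and compute each cross term with the tower property, using the conditional mean dynamics of $\bs\eta_t^\rmvar$. Throughout, $h_k=1$ as assumed in this section.

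\emph{Step 1: write the quadratic form as a double sum.} By Definition~\ref{def:r-v}, $\bs v=\frac{2}{T}\bigl(\sum_{t=T/2+1}^T\bs\eta_t^\rmvar\bigr)_1$. Let $\bs P=\begin{pmatrix}\bs\Sigma&\bs O\\ \bs O&\bs O\end{pmatrix}$. Then
\begin{equation*}
\|\bs v\|_{\bs\Sigma}^2=\frac{4}{T^2}\sum_{s,t=T/2+1}^T(\bs\eta_s^\rmvar)^\top\bs P\,\bs\eta_t^\rmvar=\frac{4}{T^2}\Bigl\langle\bs P,\sum_{s,t}\bs\eta_s^\rmvar(\bs\eta_t^\rmvar)^\top\Bigr\rangle,
\end{equation*}
so it suffices to compute $\bbE\bigl[\bs\eta_s^\rmvar(\bs\eta_t^\rmvar)^\top\bigr]$ for every pair $(s,t)$.

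\emph{Step 2: compute the cross-covariances.} Take $s\geq t$ and let $\cF_{r}$ be the sigma-algebra generated by the first $r$ inner samples, with $\tbs y$ treated as fixed.
\begin{itemize}
\item The matrix $\hbs A_{r}$ depends only on the fresh sample $\bs a_r$, which is independent of $\cF_{r-1}$, so $\bbE[\hbs A_r\mid\cF_{r-1}]=\bs A$.
\item The noise has conditional mean zero: $\bbE[\bs\zeta_r\mid\cF_{r-1}]=\bbE[\varepsilon\bs a]\,(\eta,q)^\top=\bs 0$. This holds because $\tbs x_+^*$ minimizes subproblem~\eqref{eq:subproblem}, and its first-order condition is exactly $\bbE[-\ell'(\bs a^\top\tbs y,b)\bs a+\bs a\bs a^\top(\tbs y-\tbs x_+^*)]=\bs 0$. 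With $h_k=1$, this is $\bbE[\varepsilon\bs a]=\bs 0$ by the definition of $\varepsilon$ in \eqref{eq:def-c-q-epsilon}.
\end{itemize}
Combining the two facts gives $\bbE[\bs\eta_r^\rmvar\mid\cF_{r-1}]=\bs A\bs\eta_{r-1}^\rmvar$. Iterating this from $r=s$ down to $r=t+1$ yields $\bbE[\bs\eta_s^\rmvar\mid\cF_t]=\bs A^{s-t}\bs\eta_t^\rmvar$. Hence
\begin{equation*}
\bbE\bigl[\bs\eta_s^\rmvar(\bs\eta_t^\rmvar)^\top\bigr]=\bs A^{s-t}\bs C_t.
\end{equation*}
For $s<t$, taking transposes symmetrically gives $\bs C_s(\bs A^{t-s})^\top$. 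Both cases are captured by the single expression $\bs A^{\max\{s-t,0\}}\bs C_{\min\{s,t\}}(\bs A^{\max\{t-s,0\}})^\top$.

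\emph{Step 3: conclude.} Substituting Step 2 into Step 1 gives exactly \eqref{eq:def-L-var} with $\bs X_t=\bs C_t$, which is the claim.

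The same argument also gives $\bbE\bs v=\bs 0$: unrolling $\bbE\bs\eta_t^\rmvar=\bs A\,\bbE\bs\eta_{t-1}^\rmvar$ from $\bs\eta_0^\rmvar=\bs 0$ shows $\bbE\bs\eta_t^\rmvar=\bs 0$ for all $t$.

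The only substantive point, and the one I expect to need care, is the mean-zero property of $\bs\zeta_r$. It holds because the centering point $\tbs x_+^*$ is the exact subproblem minimizer, even though the noise is misspecified: $\varepsilon$ and $\bs a$ are correlated beyond first order, so $\bbE[\varepsilon^2\bs a\bs a^\top]$ need not be proportional to $\bs\Sigma$. This correlation does not affect the argument, because only first-order orthogonality $\bbE[\varepsilon\bs a]=\bs 0$ is used; everything else is bookkeeping.
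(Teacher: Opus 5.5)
Your proof is correct and matches the paper's argument. Both write $\|\bs v\|_{\bs\Sigma}^2$ as a double sum over tail indices paired with the block matrix having $\bs\Sigma$ in its top-left corner, then use the tower property to get $\bbE[\bs\eta_s^\rmvar(\bs\eta_t^\rmvar)^\top]=\bs A^{s-t}\bs C_t$ for $s\geq t$. You also verify explicitly that $\bbE[\varepsilon\bs a]=\bs 0$ follows from the first-order optimality of $\tbs x_+^*$; this justifies the conditional-mean recursion, a step the paper uses without comment.
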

\begin{proof}
    By the definition of $\bs v$, we have
    \begin{equation*}
    \begin{aligned}
        \bbE\|\bs v\|_{\bs\Sigma}^2&=\left\langle\begin{pmatrix}
            \bs\Sigma & \bs O \\
            \bs O & \bs O
        \end{pmatrix},\bbE\left[\left(\frac{2}{T}\sum_{t=T/2+1}^{T}\bs\eta_t^\rmvar\right)\left(\frac{2}{T}\sum_{t=T/2+1}^{T}\bs\eta_t^\rmvar\right)^\top\right]\right\rangle \\
        &=\left\langle\begin{pmatrix}
            \bs\Sigma & \bs O \\
            \bs O & \bs O
        \end{pmatrix},\frac{4}{T^2}\sum_{s,t=T/2+1}^{T}\bbE(\bs\eta_s^\rmvar\bs\eta_t^\rmvar)\right\rangle, \\
        &\stackrel{a}{=}\left\langle\begin{pmatrix}
            \bs\Sigma & \bs O \\
            \bs O & \bs O
        \end{pmatrix},\frac{4}{T^2}\sum_{s,t=T/2+1}^{T}\bs A^{\max\{s-t,0\}}\bs C_{\min\{s,t\}}\left(\bs A^{\max\{t-s,0\}}\right)^\top\right\rangle,
    \end{aligned}
    \end{equation*}
    where $\stackrel{a}{=}$ uses
        \begin{equation*}        \bbE\bs\eta_s^\rmvar(\bs\eta_t^\rmvar)^\top=\bbE\left(\bbE\left(\bs\eta_s^\rmvar|\cF_t\right)(\bs\eta_t^\rmvar)^\top\right)=\bs A^{s-t}\bbE\bs\eta_t^\rmvar(\bs\eta_t^\rmvar)^\top=\bs A^{s-t}\bs C_t,\quad s\geq t
    \end{equation*}
    and collecting the terms.
\end{proof}

The quantity $L_\rmvar(\{\bs X_t\}_{t\in\bbN})$ can be interpreted as the loss incurred by the tail-averaged sequence with covariance $\bs X_t$. Specifically, we construct the dynamics $\bs\psi_t\in\bbR^{2d\times 2d}$ as follows:
\begin{equation*}
    \bs\psi_t=\bs A\bs\psi_{t-1}+\begin{pmatrix}
        \eta\bs\xi_t \\ q\bs\xi_t
    \end{pmatrix},\quad \bs\psi_0=\bs 0,
\end{equation*}
where $\bs\xi_t\sim\text{i.i.d.}\ \cN(\bs 0,\bs\Sigma_{\bs\xi})$ and $\bbE\bs\psi_t\bs\psi_t^\top=\bs X_t$, then we have (the proof is identical to the proof of Lemma~\ref{lemma:v-L-var})
\begin{equation*}
    L_\rmvar(\{\bs X_t\}_{t\in\bbN})=\left\langle\begin{pmatrix}
            \bs\Sigma & \bs O \\
            \bs O & \bs O
        \end{pmatrix},\bbE\left[\left(\frac{2}{T}\sum_{t=T/2+1}^{T}\bs\psi_t\right)\left(\frac{2}{T}\sum_{t=T/2+1}^{T}\bs\psi_t\right)^\top\right]\right\rangle.
\end{equation*}
However, $L_\rmvar(\{\bs X_t\}_{t\in\bbN})$ can be calculated for any sequence $\{\bs X_t\}_{t\in\bbN}$ and does not necessarily require $\bs X_t$ to be the covariance of some dynamics $\bs\psi_t$.

\subsubsection{Layer-peeled Decomposition}\label{sec:layer-peeled-decomposition}
The goal of this section is to bound $\bbE\|\bs v\|_{\bs\Sigma}^2=L_\rmvar(\{\bs C_t\}_{t\in\bbN})$. To tightly characterize $\bs C_t$, this paper proposes the \textbf{layer-peeled decomposition}, which decomposes the covariance matrix $\bs C_t$ into the sum of the following dynamics.
\begin{definition}[Layer $0$ Dynamics]
    Let
    \begin{equation}
        \tbs C_t=\bs A\tbs C_{t-1}\bs A^\top+\begin{pmatrix}
            \eta^2\bs R & \eta q\bs R \\
            \eta q\bs R & q^2\bs R
        \end{pmatrix},\quad\tbs C_0=\bs O,\label{eq:def-var-tilde-C}
    \end{equation}
\end{definition}
\begin{definition}[Layer $\ell$ Dynamics]
    Let
    \begin{equation}
        \tbs C_t^{(\ell)}=\bs A\tbs C_{t-1}^{(\ell)}\bs A^\top+(\cB-\tilde{\cB})\circ\tbs C_{t-1}^{(\ell-1)},\quad\tbs C_0^{(\ell)}=\bs O.\label{eq:def-var-tilde-C-l}
    \end{equation}
\end{definition}
We call \eqref{eq:def-var-tilde-C} the layer $0$ dynamics and call \eqref{eq:def-var-tilde-C-l} layer $\ell$ dynamics. The following lemma shows that $\bs C_t=\tbs C_t+\sum_{\ell=1}^\infty\tbs C_t^{(\ell)}$.

\begin{lemma}\label{lemma:layer-peeled-decomposition}
    We have
    \begin{equation*}
        \bs C_t=\tbs C_t+\sum_{\ell=1}^\infty\tbs C_t^{(\ell)}.
    \end{equation*}
    We also have $\tbs C_t^{(\ell)}=\bs O$ for $\ell\geq t$, so the above display is a finite sum.
\end{lemma}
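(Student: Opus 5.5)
We argue by induction on $t$, proving the two claims together: $\tbs C_t^{(\ell)}=\bs O$ for all $\ell\geq t$, and $\bs C_t=\tbs C_t+\sum_{\ell=1}^{t-1}\tbs C_t^{(\ell)}$. The first claim makes every sum finite, so no convergence questions arise. We need only two facts: the linearity of $\cB$ and $\tilde{\cB}$, and the splitting $\cB=\tilde{\cB}+(\cB-\tilde{\cB})$. Write $\bs N=\begin{pmatrix}\eta^2\bs R & \eta q\bs R\\ \eta q\bs R & q^2\bs R\end{pmatrix}$ for the common forcing term, so $\bs C_t=\cB\circ\bs C_{t-1}+\bs N$ and $\tbs C_t=\tilde{\cB}\circ\tbs C_{t-1}+\bs N$. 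Throughout we use the convention $\tbs C_t^{(0)}=\tbs C_t$, so that the layer-$1$ recursion \eqref{eq:def-var-tilde-C-l} is driven by $\tbs C_{t-1}$.

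\emph{Vanishing of high layers.} At $t=0$ every layer is $\bs O$ by initialization. Suppose $\tbs C_{t-1}^{(\ell)}=\bs O$ for all $\ell\geq t-1$, and take $\ell\geq t$. Then both $\ell\geq t-1$ and $\ell-1\geq t-1$, so both $\tbs C_{t-1}^{(\ell)}$ and $\tbs C_{t-1}^{(\ell-1)}$ are $\bs O$ (the latter needs $\ell\geq1$, which holds once $t\geq1$). Hence
\begin{equation*}
\tbs C_t^{(\ell)}=\tilde{\cB}\circ\tbs C_{t-1}^{(\ell)}+(\cB-\tilde{\cB})\circ\tbs C_{t-1}^{(\ell-1)}=\bs O .
\end{equation*}

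\emph{The decomposition.} Let $\bs S_t=\sum_{\ell\geq0}\tbs C_t^{(\ell)}$, which is a finite sum by the previous step. Summing the layer recursions over $\ell\geq1$ and adding the layer-$0$ recursion gives
\begin{equation*}
\bs S_t=\tilde{\cB}\circ\tbs C_{t-1}+\bs N+\sum_{\ell\geq1}\tilde{\cB}\circ\tbs C_{t-1}^{(\ell)}+\sum_{\ell\geq1}(\cB-\tilde{\cB})\circ\tbs C_{t-1}^{(\ell-1)} .
\end{equation*}
The first and third terms combine to $\tilde{\cB}\circ\bs S_{t-1}$. Reindexing the last sum with $m=\ell-1$ turns it into $(\cB-\tilde{\cB})\circ\bs S_{t-1}$. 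Therefore
\begin{equation*}
\bs S_t=\tilde{\cB}\circ\bs S_{t-1}+(\cB-\tilde{\cB})\circ\bs S_{t-1}+\bs N=\cB\circ\bs S_{t-1}+\bs N,
\end{equation*}
with $\bs S_0=\bs O$. This is exactly the recursion and initial value of $\bs C_t$, so by induction $\bs S_t=\bs C_t$ for every $t$, which is the claimed decomposition.

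\emph{Main obstacle.} The argument is short, and the only delicate point is the indexing. The decomposition telescopes only if each layer recursion feeds $\tbs C_{t-1}^{(\ell-1)}$ into layer $\ell$, as \eqref{eq:def-var-tilde-C-l} does, and if layer $0$ carries the forcing $\bs N$ exactly once. The sketch in Section~\ref{sec:sketch-inner-loop} writes $(\cB-\bs A\otimes\bs A)\circ\tbs C_t^{(\ell-1)}$ with index $t$; the appendix definition with index $t-1$ is the correct one and is the one used here.
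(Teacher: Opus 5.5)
Your proof is correct. The paper reaches the same identity with different bookkeeping. It introduces residual sequences $\bs C_t^{(\ell)}=\cB\circ\bs C_{t-1}^{(\ell)}+(\cB-\tilde{\cB})\circ\tbs C_{t-1}^{(\ell-1)}$ with $\bs C_0^{(\ell)}=\bs O$. Using the same splitting $\cB=\tilde{\cB}+(\cB-\tilde{\cB})$, it notes that $\bs C_t=\tbs C_t+\bs C_t^{(1)}$ and $\bs C_t^{(\ell)}=\tbs C_t^{(\ell)}+\bs C_t^{(\ell+1)}$. It then telescopes, and the remainder $\bs C_t^{(\ell)}$ vanishes once $\ell\geq t$.

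You instead sum all layers at once and check that the sum satisfies the recursion and initial condition of $\bs C_t$, so you need no auxiliary family. The paper's version makes the ``peeling'' explicit: $\bs C_t^{(\ell+1)}$ is exactly what remains of $\bs C_t$ after removing layers $0,\ldots,\ell$. Yours is slightly more economical. It also supplies the induction showing $\tbs C_t^{(\ell)}=\bs O$ for $\ell\geq t$, which the paper only asserts.

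Your indexing remark is also right. With the sketch's $\tbs C_t^{(\ell-1)}$ in place of $\tbs C_{t-1}^{(\ell-1)}$, one gets $\tbs C_1^{(\ell)}=(\cB-\tilde{\cB})^{\ell}\circ\bs N$, which is generally nonzero for every $\ell$. The sum would then no longer satisfy the $\bs C_t$ recursion.
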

\begin{proof}
    We construct a series of dynamics (indexed by $\ell$) as follows:
    \begin{equation}
        \bs C_t^{(\ell)}=\cB\circ\bs C_{t-1}^{(\ell)}+(\cB-\tilde{\cB})\circ\tbs C_{t-1}^{(\ell-1)},\quad\bs C_0^{(\ell)}=\bs O.\label{eq:def-var-C-l}
    \end{equation}    
    Recall that $\tilde{\cB}=\bs A\otimes\bs A$, so $\bs C_t=\tbs C_t+\bs C_t^{(1)}$, $\bs C_t^{(\ell)}=\tbs C_t^{(\ell)}+\bs C_t^{(\ell+1)}$, and
    \begin{equation*}
        \bs C_t^{(\ell)}=\bs O,\quad\tbs C_t^{(\ell)}=\bs O,\quad\text{for all $0\leq t\leq\ell$},
    \end{equation*}
    so we have the following decomposition:
    \begin{equation}
    \begin{aligned}
        \bs C_t^{(1)}&=\tbs C_t^{(1)}+\bs C_t^{(2)} \\
        &=\tbs C_t^{(1)}+\tbs C_t^{(2)}+\bs C_t^{(3)} \\
        &\cdots \\
        &=\sum_{\ell=1}^{\infty}\tbs C_t^{(\ell)}.\quad\quad\text{(note that $\tbs C_t^{(\ell)}=\bs O$ for all $\ell\geq t$})\label{eq:decomposition-C-1}
    \end{aligned}
    \end{equation}
    This completes the proof.
\end{proof}

\subsubsection{Core Auxilary Dynamics}\label{sec:core-dynamics}
In this section, we analyze the following dynamics $\tbs\Pi_t(\bs M)\in\bbR^{2d\times 2d}$,
\begin{equation}
    \tbs\Pi_t(\bs M)=\bs A\tbs\Pi_{t-1}(\bs M)\bs A^\top+\begin{pmatrix}
        \eta^2\bs M & \eta q\bs M \\
        \eta q\bs M & q^2\bs M
    \end{pmatrix},\quad \tbs\Pi_0=\bs O,\label{eq:def-tilde-Pi}
\end{equation}
where $\bs M$ is any PSD matrix. We also write $\tbs\Pi_t$ for notational simplicity. We focus on the update of the diagonal\footnote{To be more precise, $\tbs\Pi_{t,i}$ is the $2\times 2$ submatrix of $\tbs\Pi_{t,i}$ formed by the entries $(i,i)$, $(d+i,i)$, $(i,d+i)$ and $(d+i,d+i)$ of $\tbs\Pi_t$.} of $\tbs\Pi_t$, which is
\begin{equation*}
    \tbs\Pi_{t,i}=\bs A_i\tbs\Pi_{t-1,i}\bs A_i^\top+m_i\begin{pmatrix}
        \eta^2 & \eta q \\
        \eta q & q^2
    \end{pmatrix},
\end{equation*}
where $m_i$ denotes the $i$-th diagonal element of $\bs M$.

\paragraph{Upper Bound of $L_\rmvar(\{\tbs\Pi_t(\bs M)\}_{t\in\bbN})$.}
We bound $L_\rmvar(\{\tbs\Pi_t(\bs M)\}_{t\in\bbN})$ from above. The first step is to derive a matrix form.
\begin{lemma}\label{lemma:L-var-matrix-form}
    We have $L_\rmvar(\{\tbs\Pi_t(\bs M)\}_{t\in\bbN})=4(A_{T/2}+B_{T/2})/T^2$, where
    \begin{equation*}
    \begin{aligned}
        A_m&=\left\langle\begin{pmatrix}
            \bs\Sigma & \bs O \\
            \bs O & \bs O
        \end{pmatrix},\left(\sum_{k=0}^{T-m-1}\bs A^k\right)\tbs\Pi_{m+1}\left(\sum_{k=0}^{T-m-1}\bs A^k\right)^\top\right\rangle, \\
        B_m&=\sum_{s=1}^{T-m-1}\left\langle\begin{pmatrix}
            \bs\Sigma & \bs O \\
            \bs O & \bs O
        \end{pmatrix},\left(\sum_{k=0}^{s-1}\bs A^k\right)\begin{pmatrix}
            \eta^2\bs M & \eta q\bs M \\
            \eta q\bs M & q^2\bs M
        \end{pmatrix}\left(\sum_{k=0}^{s-1}\bs A^k\right)^\top\right\rangle.
    \end{aligned}
    \end{equation*}
\end{lemma}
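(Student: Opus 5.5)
We prove the identity by unrolling the definition of $L_\rmvar$ in \eqref{eq:def-L-var} with $\bs X_t=\tbs\Pi_t$ and regrouping the double sum. We use the linear recursion \eqref{eq:def-tilde-Pi} to write each later iterate in terms of an earlier one. Write $\bs N=\begin{pmatrix}\eta^2\bs M&\eta q\bs M\\ \eta q\bs M&q^2\bs M\end{pmatrix}$ and $\bs E=\begin{pmatrix}\bs\Sigma&\bs O\\ \bs O&\bs O\end{pmatrix}$.

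\textbf{Step 1: realize $\tbs\Pi_t$ as a covariance.} As in the remark after Lemma~\ref{lemma:v-L-var}, take the Gaussian surrogate $\bs\psi_t=\bs A\bs\psi_{t-1}+\bs w_t$, with $\bs\psi_0=\bs 0$ and $\bs w_t$ i.i.d.\ of covariance $\bs N$. Then $\bbE\bs\psi_t\bs\psi_t^\top=\tbs\Pi_t$ and
\begin{equation*}
L_\rmvar(\{\tbs\Pi_t\}_{t\in\bbN})=\frac{4}{T^2}\bbE\Bigl\|\Bigl(\sum_{t=T/2+1}^T\bs\psi_t\Bigr)_1\Bigr\|_{\bs\Sigma}^2 .
\end{equation*}
Set $m=T/2$. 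Expanding $\bs\psi_t=\bs A^{t-m-1}\bs\psi_{m+1}+\sum_{j=m+2}^{t}\bs A^{t-j}\bs w_j$ for $t\ge m+1$ and summing over $t=m+1,\dots,T$ gives
\begin{equation*}
\sum_{t=m+1}^T\bs\psi_t=\Bigl(\sum_{k=0}^{T-m-1}\bs A^k\Bigr)\bs\psi_{m+1}+\sum_{j=m+2}^{T}\Bigl(\sum_{k=0}^{T-j}\bs A^k\Bigr)\bs w_j .
\end{equation*}

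\textbf{Step 2: take the second moment.} The vectors $\bs\psi_{m+1},\bs w_{m+2},\dots,\bs w_T$ are independent with mean zero, so all cross terms vanish. The first term yields $A_m$, since $\bbE\bs\psi_{m+1}\bs\psi_{m+1}^\top=\tbs\Pi_{m+1}$. For the second, reindex by $s=T-j+1$, which runs from $1$ to $T-m-1$; each summand is then $\langle\bs E,(\sum_{k=0}^{s-1}\bs A^k)\bs N(\sum_{k=0}^{s-1}\bs A^k)^\top\rangle$, and these sum to $B_m$. Multiplying by $4/T^2$ gives the claim.

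The Gaussian surrogate is only a device; the identity is purely algebraic. Equivalently, one can expand $\tbs\Pi_{\min\{s,t\}}$ directly in the defining double sum and regroup by the injection time of $\bs N$. That alternative is more tedious but avoids any probabilistic construction.

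\textbf{Main obstacle.} The only delicate part is the index bookkeeping. We must check that the $\max/\min$ powers in \eqref{eq:def-L-var} match the expectation $\bbE\bs\psi_s\bs\psi_t^\top=\bs A^{s-t}\tbs\Pi_t$ for $s\ge t$, which holds by the same conditioning argument as in Lemma~\ref{lemma:v-L-var}. We must also check that the endpoint ranges are $T-m-1$ in $A_m$ and $s\le T-m-1$ in $B_m$, so that no noise term is double-counted with $\tbs\Pi_{m+1}$.
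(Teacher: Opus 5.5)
Your proof is correct, but it takes a different route from the paper. The paper's proof of this lemma uses no random process. It defines the truncated double sum $L_m=\bigl\langle\bs E,\sum_{s,t=m+1}^{T}\bs A^{\max\{s-t,0\}}\tbs\Pi_{\min\{s,t\}}(\bs A^{\max\{t-s,0\}})^\top\bigr\rangle$ and proves $L_m=A_m+B_m$ by backward induction from $m=T-1$, where $B_{T-1}=0$ and $A_{T-1}=\langle\bs E,\tbs\Pi_T\rangle$. In the inductive step it does three things:
\begin{itemize}
\item it computes the increment $L_{m-1}-L_m$, which is the new row and column indexed by $m$;
\item it writes $\sum_{k=0}^{T-m}\bs A^k=\bs A\sum_{k=0}^{T-m-1}\bs A^k+\bs I$ inside $A_{m-1}$;
\item it uses $\tbs\Pi_{m+1}=\bs A\tbs\Pi_m\bs A^\top+\bs N$ to show $A_{m-1}-A_m=(L_{m-1}-L_m)-(B_{m-1}-B_m)$.
\end{itemize}

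You instead unroll once from time $m+1$ and read off both terms from the independence of $\bs\psi_{m+1}$ and the later injections. Your reindexing $s=T-j+1\in\{1,\dots,T-m-1\}$ checks out, and so does the cross-covariance identity $\bbE\bs\psi_s\bs\psi_t^\top=\bs A^{\max\{s-t,0\}}\tbs\Pi_{\min\{s,t\}}(\bs A^{\max\{t-s,0\}})^\top$.

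Your argument shows where the formula comes from and needs no ansatz: $A_m$ is the state at time $m+1$ carried through the remaining window, and $B_m$ is the noise injected afterwards, weighted by its remaining horizon. The paper's induction must know the closed form in advance, but it is purely algebraic and needs no auxiliary randomness.

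The one thing your device requires is that $\bs N$ be a valid covariance. This holds: take $\bs w_t=(\eta\bs\xi_t^\top,q\bs\xi_t^\top)^\top$ with $\bs\xi_t\sim\cN(\bs 0,\bs M)$, which is exactly the surrogate the paper describes after Lemma~\ref{lemma:v-L-var}.
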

\begin{proof}
    We show by induction that
    \begin{equation*}
        L_m\defeq\left\langle\begin{pmatrix}
            \bs\Sigma & \bs O \\
            \bs O & \bs O
        \end{pmatrix},\sum_{s,t=m+1}^{T}\bs A^{\max\{s-t,0\}}\tbs\Pi_{\min\{s,t\}}\left(\bs A^{\max\{t-s,0\}}\right)^\top\right\rangle=A_m+B_m.
    \end{equation*}
    The above display holds for $m=T-1$. Assuming the above display holds for $m\leq T-1$, we show that it holds for $m-1$. First, note that
    \begin{equation}
    \begin{aligned}        
        L_{m-1}-L_m&=\left\langle\begin{pmatrix}
            \bs\Sigma & \bs O \\
            \bs O & \bs O
        \end{pmatrix},\tbs\Pi_{m}\right\rangle+\left\langle\begin{pmatrix}
            \bs\Sigma & \bs O \\
            \bs O & \bs O
        \end{pmatrix},\left(\sum_{k=1}^{T-m}\bs A^k\right)\tbs\Pi_{m}\right\rangle \\
        &\phrel+\left\langle\begin{pmatrix}
            \bs\Sigma & \bs O \\
            \bs O & \bs O
        \end{pmatrix},\tbs\Pi_{m}\left(\sum_{k=1}^{T-m}\bs A^k\right)^\top\right\rangle.\label{eq:L-diff}
    \end{aligned}
    \end{equation}
    Therefore, we have
    \begin{equation*}
    \begin{aligned}
        &\phrel A_{m-1}-A_m \\
        &=\left\langle\begin{pmatrix}
            \bs\Sigma & \bs O \\
            \bs O & \bs O
        \end{pmatrix},\left(\bs A\sum_{k=0}^{T-m-1}\bs A^k+\bs I\right)\tbs\Pi_{m}\left(\bs A\sum_{k=0}^{T-m-1}\bs A^k+\bs I\right)^\top\right\rangle \\
        &\phrel-\left\langle\begin{pmatrix}
            \bs\Sigma & \bs O \\
            \bs O & \bs O
        \end{pmatrix},\left(\sum_{k=0}^{T-m-1}\bs A^k\right)\tbs\Pi_{m+1}\left(\sum_{k=0}^{T-m-1}\bs A^k\right)^\top\right\rangle \\
        &\stackrel{a}{=}(L_{m-1}-L_m)+\left\langle\begin{pmatrix}
            \bs\Sigma & \bs O \\
            \bs O & \bs O
        \end{pmatrix},\left(\sum_{k=0}^{T-m-1}\bs A^k\right)\left(\tbs\Pi_{m+1}-\bs A\tbs\Pi_m\bs A^\top\right)\left(\sum_{k=0}^{T-m-1}\bs A^k\right)^\top\right\rangle \\
        &\stackrel{b}{=}(L_{m-1}-L_m)-\left\langle\begin{pmatrix}
            \bs\Sigma & \bs O \\
            \bs O & \bs O
        \end{pmatrix},\left(\sum_{k=0}^{T-m-1}\bs A^k\right)\begin{pmatrix}
            \eta^2\bs M & \eta q\bs M \\
            \eta q\bs M & q^2\bs M
        \end{pmatrix}\left(\sum_{k=0}^{T-m-1}\bs A^k\right)^\top\right\rangle \\
        &=(L_{m-1}-L_m)-(B_{m-1}-B_{m}),
    \end{aligned}
    \end{equation*}
    where we expand the product and apply \eqref{eq:L-diff} in $\stackrel{a}{=}$, and $\stackrel{b}{=}$ applies the update rule of $\tbs\Pi_t$ in \eqref{eq:def-tilde-Pi}. Thus, $A_{m-1}+B_{m-1}=A_m+B_m+(L_{m-1}-L_m)=L_{m-1}$.
\end{proof}

We further derive a dimension-wise summation form of $L_\rmvar(\{\tbs\Pi_t(\bs M)\}_{t\in\bbN})$.
\begin{lemma}\label{lemma:L-var}
    Let $m_i$ denote the $i$-th element of the diagonal of $\bs M$, then
    \begin{equation*}
    \begin{aligned}
        L_\rmvar(\{\tbs\Pi_t(\bs M)\}_{t\in\bbN})&=\frac{4}{T^2}\sum_{i=1}^d\sum_{k=0}^{T/2}\lambda_i m_i\left(\bs A_i^k\left(\bs I-\bs A_i^{T/2}\right)(\bs I-\bs A_i)^{-1}\begin{pmatrix}
            \eta \\ q
        \end{pmatrix}\right)_1^2 \\
        &\phrel+\frac{4}{T^2}\sum_{i=1}^d\sum_{k=1}^{T/2}\lambda_i m_i\left(\left(\bs I-\bs A_i^{T/2-k}\right)(\bs I-\bs A_i)^{-1}\begin{pmatrix}
            \eta \\ q
        \end{pmatrix}\right)_1^2.        
    \end{aligned}
    \end{equation*}
\end{lemma}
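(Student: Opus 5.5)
The plan is to start from Lemma~\ref{lemma:L-var-matrix-form}, which gives $L_\rmvar(\{\tbs\Pi_t(\bs M)\}_{t\in\bbN})=4(A_{T/2}+B_{T/2})/T^2$. I will then evaluate $A_{T/2}$ and $B_{T/2}$ coordinate by coordinate.

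\textbf{Step 1 (reduction to $2\times 2$ blocks).} Write $\bs u=(\eta,q)^\top$ and $\bs N=\begin{pmatrix}\eta^2\bs M&\eta q\bs M\\ \eta q\bs M&q^2\bs M\end{pmatrix}$. In the eigenbasis of $\bs\Sigma$, the matrix $\bs A$ is block diagonal with $2\times 2$ blocks $\bs A_i$ acting on coordinates $(i,d+i)$. Hence every matrix of the form $\bs P\bs N\bs Q^\top$, with $\bs P,\bs Q$ polynomials in $\bs A$, has $(i,i)$-block equal to $m_i\,\bs P_i\bs u\bs u^\top\bs Q_i^\top$. The off-diagonal entries of $\bs M$ only enter off-diagonal blocks. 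Since $\begin{pmatrix}\bs\Sigma&\bs O\\ \bs O&\bs O\end{pmatrix}$ is diagonal, the inner product only reads the $(1,1)$ entry of each $(i,i)$-block, weighted by $\lambda_i$. It follows that
\[
\left\langle\begin{pmatrix}\bs\Sigma&\bs O\\ \bs O&\bs O\end{pmatrix},\bs P\bs N\bs P^\top\right\rangle=\sum_{i=1}^d\lambda_i m_i\bigl((\bs P_i\bs u)_1\bigr)^2 .
\]

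\textbf{Step 2 (the term $A_{T/2}$).} Unrolling the recursion \eqref{eq:def-tilde-Pi} gives $\tbs\Pi_{m+1}=\sum_{j=0}^{m}\bs A^j\bs N(\bs A^j)^\top$. With $m=T/2$, the outer factors are $\bs S=\sum_{k=0}^{T/2-1}\bs A^k$. Blockwise, $\bs S_i=(\bs I-\bs A_i^{T/2})(\bs I-\bs A_i)^{-1}$. The inverse exists because $\det(\bs I-\bs A_i)=(q-c\eta)\lambda_i\neq 0$, as computed in Lemma~\ref{lemma:A-inverse}. Since $\bs S_i$ is a polynomial in $\bs A_i$, it commutes with $\bs A_i^j$. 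Applying Step 1 with $\bs P=\bs S\bs A^j$ and summing over $j=0,\dots,T/2$ yields exactly the first double sum in the statement, with $k=j$.

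\textbf{Step 3 (the term $B_{T/2}$).} Similarly, $\sum_{k=0}^{s-1}\bs A_i^k=(\bs I-\bs A_i^{s})(\bs I-\bs A_i)^{-1}$, so Step 1 gives
\[
B_{T/2}=\sum_{i}\sum_{s=1}^{T/2-1}\lambda_i m_i\Bigl(\bigl((\bs I-\bs A_i^{s})(\bs I-\bs A_i)^{-1}\bs u\bigr)_1\Bigr)^2 .
\]
Reindex with $s=T/2-k$, so $k$ runs from $1$ to $T/2-1$. Then add the $k=T/2$ term, which vanishes because $\bs I-\bs A_i^0=\bs O$. This gives the second sum in the statement.

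The main obstacle is bookkeeping rather than any estimate. One must check that the summation ranges in Lemma~\ref{lemma:L-var-matrix-form} are correct at $m=T/2$, namely $T-m-1=T/2-1$ and $\tbs\Pi_{T/2+1}$ having $T/2+1$ terms. One must also justify that the cross terms from non-diagonal $\bs M$ really drop out, which follows from the block-diagonality of $\bs A$ together with the diagonal test matrix.
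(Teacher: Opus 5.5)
Your proposal is correct and follows essentially the same route as the paper: apply Lemma~\ref{lemma:L-var-matrix-form} at $m=T/2$, unroll $\tbs\Pi_{T/2+1}$ into $T/2+1$ terms, and evaluate $A_{T/2}$ and $B_{T/2}$ blockwise using $\sum_{k=0}^{s-1}\bs A_i^k=(\bs I-\bs A_i^{s})(\bs I-\bs A_i)^{-1}$ together with the reindexing $s=T/2-k$. You also supply two justifications the paper omits when it simply asserts that the matrices in $B_{T/2}$ are block-diagonal: that off-diagonal entries of $\bs M$ only populate off-diagonal blocks, which the test matrix ignores, and that $\bs I-\bs A_i$ is invertible because $(q-c\eta)\lambda_i>0$.
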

\begin{proof}
    By Lemma~\ref{lemma:L-var-matrix-form}, we have $L_\rmvar(\{\tbs\Pi_t(\bs M)\}_{t\in\bbN})=4(A_{T/2}+B_{T/2})/T^2$, where
    \begin{equation*}
    \begin{aligned}
        A_{T/2}&=\left\langle\begin{pmatrix}
                \bs\Sigma & \bs O \\
                \bs O & \bs O
            \end{pmatrix},\left(\sum_{k=0}^{T/2-1}\bs A^k\right)\tbs\Pi_{T/2+1}\left(\sum_{k=0}^{T/2-1}\bs A^k\right)^\top\right\rangle, \\
        B_{T/2}&=\sum_{k=1}^{T/2}\left\langle\begin{pmatrix}
                \bs\Sigma & \bs O \\
                \bs O & \bs O
            \end{pmatrix},\left(\sum_{k=0}^{T/2-k-1}\bs A^k\right)\begin{pmatrix}
                \eta^2\bs M & \eta q\bs M \\
                \eta q\bs M & q^2\bs M
            \end{pmatrix}\left(\sum_{k=0}^{T/2-k-1}\bs A^k\right)^\top\right\rangle.
    \end{aligned}
    \end{equation*}
    Note that the matrices in $B_{T/2}$ are block-diagonal. Thus,
    \begin{equation*}
        B_{T/2}=\sum_{i=1}^d\sum_{k=1}^{T/2}\lambda_i m_i\left(\left(\bs I-\bs A_i^{T/2-k}\right)(\bs I-\bs A_i)^{-1}\begin{pmatrix}
            \eta \\ q
        \end{pmatrix}\right)_1^2.
    \end{equation*}
    By the definition of $\tbs\Pi$ in \eqref{eq:def-tilde-Pi}, we have
    \begin{equation*}
        \tbs\Pi_{T/2+1}=\sum_{k=0}^{T/2}\bs A^k\begin{pmatrix}
                \eta^2\bs M & \eta q\bs M \\
                \eta q\bs M & q^2\bs M
            \end{pmatrix}\left(\bs A^k\right)^\top.
    \end{equation*}
    Thus, $A_{T/2}$ has the following form:
    \begin{equation*}
        A_{T/2}=\sum_{i=1}^d\sum_{k=0}^{T/2}\lambda_i m_i\left(\bs A_i^k\left(\bs I-\bs A_i^{T/2}\right)(\bs I-\bs A_i)^{-1}\begin{pmatrix}
            \eta \\ q
        \end{pmatrix}\right)_1^2,
    \end{equation*}    
    where $(\cdot)_1$ denotes the first element of a $2$-dimensional vector. The result follows by applying $L_\rmvar(\{\tbs\Pi_t(\bs M)\}_{t\in\bbN})=4(A_{T/2}+B_{T/2})/T^2$.
\end{proof}

The following lemma bounds $L_\rmvar(\{\tbs\Pi_t(\bs M)\}_{t\in\bbN})$.
\begin{lemma}\label{lemma:Pi-average-bound}
    Suppose we choose the hyperparameters as specified in Appendix~\ref{sec:hyperparameter-choice}, then we have
    \begin{equation*}
        L_\rmvar(\{\tbs\Pi_t(\bs M)\}_{t\in\bbN})\leq\frac{64\tr(\bs\Sigma^{-1}\bs M)}{T}.
    \end{equation*}   
\end{lemma}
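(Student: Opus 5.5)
We start from the dimension-wise formula of Lemma~\ref{lemma:L-var}. Since the matrices involved are block diagonal in the eigenbasis of $\bs\Sigma$, it suffices to prove, for each eigenvalue $\lambda=\lambda_i$ with weight $m=m_i$, that the two inner sums are each at most $8T\,m/\lambda$ (in fact we aim for $16T m_i/\lambda_i$ in total). Then multiplying by $4/T^2$ and summing over $i$ gives at most $64\sum_i m_i/\lambda_i/T=64\tr(\bs\Sigma^{-1}\bs M)/T$. Here we use that $\bs\Sigma$ is diagonal, so $\tr(\bs\Sigma^{-1}\bs M)=\sum_i m_i/\lambda_i$.

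The key tool is Lemma~\ref{lemma:A-inverse}: $(\bs I-\bs A_i)^{-1}(\eta,q)^\top=\lambda_i^{-1}(1,1)^\top$. Substituting it, the second sum becomes
\begin{equation*}
\frac{4}{T^2}\sum_{k=1}^{T/2}\lambda_i m_i\lambda_i^{-2}\Bigl(\bigl((\bs I-\bs A_i^{T/2-k})(1,1)^\top\bigr)_1\Bigr)^2 .
\end{equation*}
By the triangle inequality and Lemma~\ref{lemma:A-power-bound}, which applies because $\eta\lambda_i\le\eta R^2<1$ and $\eta\le q$, the first coordinate is at most $1+2=3$. So this term is bounded by $\frac{4}{T^2}\cdot\frac{T}{2}\cdot\frac{9m_i}{\lambda_i}=\frac{18m_i}{T\lambda_i}$. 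To check $\eta\le q$: $q=(\eta+\theta\gamma)/(1+\theta)\ge\eta$ iff $\gamma\ge\eta$, and $\gamma=\frac14\sqrt{\eta/(\tilde\kappa\mu)}\ge\eta$ follows from $\eta\le1/(16R^2)$ together with $\tilde\kappa\mu\le R^2$. The last inequality holds since $\tilde\kappa\le\kappa$; more precisely, one takes the trace of $\bbE\|\bs a\|^2_{\bs\Sigma^{-1}}\bs a\bs a^\top\preceq\tilde\kappa\bs\Sigma$, giving $\tilde\kappa\ge d$, and some care with the direction of this inequality is needed.

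The first sum is the main obstacle. After substitution its $k$-th term is $\lambda_i^{-1}m_i\bigl((\bs A_i^k-\bs A_i^{k+T/2})(1,1)^\top\bigr)_1^2$. A crude bound of $4^2$ per term over $T/2+1$ terms gives $O(m_i/(T\lambda_i))$ after the $4/T^2$ prefactor, again with a constant of order $32$. The difficulty is fitting the total within $64$. We would split off $k=0$, bound each term by $(2+2)^2=16$ via Lemma~\ref{lemma:A-power-bound} applied to both powers, and obtain $\frac{4}{T^2}(T/2+1)\frac{16m_i}{\lambda_i}\le\frac{34 m_i}{T\lambda_i}$ for $T\ge 2$. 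Adding $18$ gives $52\le64$.

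If the constants fail to fit, we would use a sharper variant: since $\bs A_i^k(1,1)^\top$ has first coordinate $(1-\eta\lambda)(a_k-ca_{k-1})$, the difference $\bs A^k-\bs A^{k+T/2}$ can be bounded by $2$ rather than $4$ using the sign structure from the case analysis in the proof of Lemma~\ref{lemma:A-power-bound}. No contraction argument, and hence no condition on $T$, is needed beyond $T\ge2$.
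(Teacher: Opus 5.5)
Your argument is the paper's: substitute Lemma~\ref{lemma:A-inverse} into the coordinate-wise formula of Lemma~\ref{lemma:L-var}, bound the first coordinate of $\bs A_i^j(1,1)^\top$ by $2$ via Lemma~\ref{lemma:A-power-bound}, and count terms. (The paper simply uses the per-term bound $16$ in both sums.) It is correct up to one arithmetic slip.

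\textbf{The slip.} $\frac{4}{T^2}(\frac T2+1)\cdot 16=\frac{32}{T}+\frac{64}{T^2}$ is at most $\frac{34}{T}$ only when $T\ge 32$. So as written your total is $\frac{50}{T}+\frac{64}{T^2}$, which exceeds $\frac{64}{T}$ for $T\le 4$.

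\textbf{The fix.} Carry out the bookkeeping you mention but do not execute. The $k=0$ term of the first sum is $\bigl(1-(\bs A_i^{T/2}(1,1)^\top)_1\bigr)^2\le 9$. The $k=T/2$ term of the second sum vanishes, since $\bs I-\bs A_i^{0}=\bs O$. Hence the two sums total at most $\bigl(9+8T+9(\frac T2-1)\bigr)\frac{m_i}{\lambda_i}=\frac{25T}{2}\cdot\frac{m_i}{\lambda_i}$, i.e.\ $\frac{50\,m_i}{T\lambda_i}$ after the prefactor, for every even $T$. With the paper's uniform bound $16$, the same observation gives $16(\frac T2+1)+16(\frac T2-1)=16T$, which is exactly $64/T$.

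\textbf{The condition $\eta\le q$.} This is equivalent to $\gamma\ge\eta$, i.e.\ $\eta\le 1/(16\tilde{\kappa}\mu)$. What you need is $\tilde{\kappa}\le\kappa$. This follows from $\|\bs a\|_{\bs\Sigma^{-1}}^2\le\|\bs a\|^2/\mu$ together with $\bbE\|\bs a\|^2\bs a\bs a^\top\preceq R^2\bs\Sigma$, so $\kappa$ is an admissible value of $\tilde{\kappa}$. The trace computation giving $\tilde{\kappa}\ge d$ points the wrong way and plays no role.

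\textbf{The fallback.} Your ``sharper variant'' is unjustified, since for complex eigenvalues $a_k-ca_{k-1}$ oscillates in sign. It is not needed.
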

\begin{proof}
    By Lemma~\ref{lemma:A-inverse} and Lemma~\ref{lemma:A-power-bound}, we have
    \begin{equation*}
        A_{T/2}=\sum_{i=1}^d\sum_{k=0}^{T/2}\frac{m_i}{\lambda_i}\left(\bs A_i^k\left(\bs I-\bs A_i^{T/2}\right)\begin{pmatrix}
            1 \\ 1
        \end{pmatrix}\right)_1^2\leq\sum_{i=1}^d\sum_{k=0}^{T/2}\frac{16m_i}{\lambda_i}=\frac{8\tr(\bs\Sigma^{-1}\bs M)}{T},
    \end{equation*}
    \begin{equation*}        
        B_{T/2}=\sum_{i=1}^d\sum_{k=1}^{T/2}\lambda_i m_i\left(\left(\bs I-\bs A_i^{T/2-k}\right)\begin{pmatrix}
            1 \\ 1
        \end{pmatrix}\right)_1^2\leq\sum_{i=1}^d\sum_{k=0}^{T/2}\frac{16m_i}{\lambda_i}=\frac{8\tr(\bs\Sigma^{-1}\bs M)}{T}.
    \end{equation*}
    We complete the proof by $L_\rmvar(\{\tbs\Pi_t(\bs M)\}_{t\in\bbN})=4(A_{T/2}+B_{T/2})/T^2\leq 64\tr(\bs\Sigma^{-1}\bs M)/T$.
\end{proof}

\paragraph{Derivation of Stationary Covariance.}
We derive the stationary state of $\tbs\Pi_t(\bs M)$. The following lemma shows that $\tbs\Pi_t(\bs M)$ is increasing.
\begin{lemma}
    Let $\tbs\Pi_t(\bs M)$ be defined in \eqref{eq:def-tilde-Pi}. Then
    \begin{equation*}
        \tbs\Pi_0(\bs M)\preceq\tbs\Pi_1(\bs M)\preceq\cdots\preceq\tbs\Pi_t(\bs M)\preceq\tbs\Pi_{t+1}(\bs M)\preceq\cdots
    \end{equation*}
\end{lemma}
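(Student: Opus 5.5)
The plan is to unroll the recursion \eqref{eq:def-tilde-Pi} into a closed form and compare consecutive terms. Write
\begin{equation*}
\bs N\defeq\begin{pmatrix}
        \eta^2\bs M & \eta q\bs M \\
        \eta q\bs M & q^2\bs M
    \end{pmatrix}=\begin{pmatrix}\eta\\ q\end{pmatrix}\begin{pmatrix}\eta & q\end{pmatrix}\otimes_{\mathrm{Kr}}\bs M,
\end{equation*}
that is, the Kronecker product of a rank-one PSD $2\times2$ matrix with the PSD matrix $\bs M$. A Kronecker product of PSD matrices is PSD, so $\bs N\succeq\bs O$. This is the only fact about $\bs M$ the argument needs; no commutativity with $\bs\Sigma$ is required. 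By induction from $\tbs\Pi_0=\bs O$, I then show
\begin{equation*}
\tbs\Pi_t(\bs M)=\sum_{k=0}^{t-1}\bs A^k\bs N(\bs A^k)^\top .
\end{equation*}
This is the same expansion already used for $\tbs\Pi_{T/2+1}$ in the proof of Lemma~\ref{lemma:L-var}.

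From this closed form, the difference of consecutive terms is
\begin{equation*}
\tbs\Pi_{t+1}(\bs M)-\tbs\Pi_t(\bs M)=\bs A^t\bs N(\bs A^t)^\top .
\end{equation*}
This is a congruence of a PSD matrix, so it is PSD. That gives $\tbs\Pi_t\preceq\tbs\Pi_{t+1}$ for every $t\geq0$, and the whole chain follows.

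Alternatively, I can avoid the closed form and argue by induction directly on the differences $\bs D_t\defeq\tbs\Pi_{t+1}-\tbs\Pi_t$. The recursion gives $\bs D_t=\bs A\bs D_{t-1}\bs A^\top$ with $\bs D_0=\bs N\succeq\bs O$, and congruence preserves the PSD order, so $\bs D_t\succeq\bs O$ for all $t$.

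There is no real obstacle here. The only point to state carefully is that $\bs N$ is PSD for a general PSD $\bs M$, which the Kronecker-product argument above settles.
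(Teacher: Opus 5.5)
Your proposal is correct and is the paper's proof: the paper unrolls \eqref{eq:def-tilde-Pi} into $\tbs\Pi_t(\bs M)=\sum_{k=0}^{t-1}\bs A^k\bs N(\bs A^k)^\top$ and concludes because this is a sum of PSD matrices. Your Kronecker-product check that $\bs N\succeq\bs O$ for a general PSD $\bs M$ makes explicit the one step the paper leaves unstated.
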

\begin{proof}
    We expand the recursion of $\tbs\Pi_t$ to obtain
    \begin{equation*}
        \tbs\Pi_t(\bs M)=\sum_{k=0}^{t-1}\bs A^k\begin{pmatrix}
            \eta^2\bs M & \eta q\bs M \\
            \eta q\bs M & q^2\bs M
        \end{pmatrix}\left(\bs A^k\right)^\top.
    \end{equation*}
    Since it is a summation of PSD matrices, the conclusion follows.
\end{proof}

We define the stationary state of $\tbs\Pi$ as follows:
\begin{equation}
    \tbs\Pi_\infty(\bs M)=\sum_{k=0}^{\infty}\bs A^k\begin{pmatrix}
        \eta^2\bs M & \eta q\bs M \\
        \eta q\bs M & q^2\bs M
    \end{pmatrix}\left(\bs A^k\right)^\top,\label{eq:stationary-tilde-Pi}
\end{equation}
so $\tbs\Pi_t(\bs M)\preceq\tbs\Pi_\infty(\bs M)$ for all $t\in\bbN$. Note that
\begin{equation}
    \tbs\Pi_\infty(\bs M)=\bs A\tbs\Pi_\infty(\bs M)\bs A^\top+\begin{pmatrix}
        \eta^2\bs M & \eta q\bs M \\
        \eta q\bs M & q^2\bs M
    \end{pmatrix}\label{eq:Pi-recursion}.
\end{equation}
The following lemma explicitly calculates $\tbs\Pi_\infty(\bs M)$.
\begin{lemma}
    Let $m_i$ denote the $i$-th  diagonal element of $\bs M$, then
    \begin{equation*}
        \tbs\Pi_{\infty,i}(\bs M)=m_i\begin{pmatrix}
            * & * \\
            * & \dfrac{(1+c)(q-c\eta)+c\eta(q+c\eta)\lambda_i}{(2+2c-(q+c\eta)\lambda_i)(1-c+c\eta\lambda_i)\lambda_i}
        \end{pmatrix}
    \end{equation*}
\end{lemma}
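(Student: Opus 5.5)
The plan is to compute the stationary solution blockwise, by directly solving the $2\times 2$ discrete Lyapunov equation. Since $\bs A$ and the forcing matrix in \eqref{eq:def-tilde-Pi} are block-diagonal in the eigenbasis of $\bs\Sigma$, the fixed-point relation \eqref{eq:Pi-recursion} decouples over $i$:
\begin{equation*}
\tbs\Pi_{\infty,i}=\bs A_i\tbs\Pi_{\infty,i}\bs A_i^\top+m_i\bs v\bs v^\top,\quad \bs v=\begin{pmatrix}\eta\\ q\end{pmatrix}.
\end{equation*}
The series \eqref{eq:stationary-tilde-Pi} converges under the hyperparameter choice of Appendix~\ref{sec:hyperparameter-choice}, because Lemma~\ref{lemma:radius-bound} gives spectral radius of $\bs A_i$ strictly less than $1$. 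Hence this linear system has a unique solution, and it suffices to exhibit the $(2,2)$ entry of that solution. Fix $i$, drop the index, and write $a=1-\eta\lambda$ and $b=1+c-q\lambda$. Following the matrix form \eqref{eq:inner-matrix-form}, I take $\bs A=\begin{pmatrix}0&a\\-c&b\end{pmatrix}$. Let $\tbs\Pi_{\infty,i}=\begin{pmatrix}p_{11}&p_{12}\\p_{12}&p_{22}\end{pmatrix}$.

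Next I write out the three scalar equations. Expanding $\bs A\bs P\bs A^\top$ gives
\begin{equation*}
p_{11}=a^2p_{22}+m\eta^2,\qquad p_{12}=-ac\,p_{12}+ab\,p_{22}+m\eta q,\qquad p_{22}=c^2p_{11}-2cb\,p_{12}+b^2p_{22}+mq^2 .
\end{equation*}
From the second equation, $p_{12}=(ab\,p_{22}+m\eta q)/(1+ac)$. I substitute this and the first equation into the third, then multiply through by $1+ac$. This yields a single linear equation in $p_{22}$:
\begin{equation*}
p_{22}\bigl[(1+ac)^2(1-ac)-b^2(1-ac)\bigr]=m\bigl[(1+ac)(q^2+c^2\eta^2)-2cb\eta q\bigr].
\end{equation*}

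The remaining step, which I expect to be the only real obstacle, is the algebraic simplification into the stated form. The coefficient factors as $(1-ac)(1+ac-b)(1+ac+b)$, and each factor simplifies:
\begin{equation*}
1-ac=1-c+c\eta\lambda,\qquad 1+ac-b=(q-c\eta)\lambda,\qquad 1+ac+b=2+2c-(q+c\eta)\lambda .
\end{equation*}
For the right-hand side, I substitute $a$ and $b$ and collect powers of $\lambda$. The aim is to show it equals
\begin{equation*}
(q-c\eta)\bigl[(1+c)(q-c\eta)+c\eta(q+c\eta)\lambda\bigr].
\end{equation*}
To check this, the $\lambda^0$ part is $(1+c)(q^2+c^2\eta^2)-2c(1+c)\eta q=(1+c)(q-c\eta)^2$. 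The $\lambda^1$ part is $-c\eta(q^2+c^2\eta^2)+2c\eta q^2=c\eta(q^2-c^2\eta^2)$. Both match the target expression, so there is no $\lambda^2$ term. The common factor $(q-c\eta)$ then cancels against the factor in $1+ac-b$, giving exactly the claimed $(2,2)$ entry. The starred entries are not needed downstream, so I would leave them as determined by the first two equations.
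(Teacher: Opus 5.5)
Your proposal is correct and takes the paper's approach: the paper only says that the blockwise $2\times 2$ fixed-point system \eqref{eq:Pi-recursion} is solved ``by direct calculation,'' and your elimination, the factorization $(1-ac)(1+ac-b)(1+ac+b)$, and the cancellation of $q-c\eta=\theta(\gamma+\eta)/(1+\theta)>0$ fill in those details correctly. You were also right to take the $(2,1)$ entry of $\bs A_i$ as $-c$, following \eqref{eq:inner-matrix-form}, rather than the $+c$ in the paper's displayed $\bs A_i$, since the stated formula holds only with $-c$.
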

\begin{proof}
    We note that \eqref{eq:Pi-recursion} is a system of linear equations for $\bs M$. The desired result follows from direct calculations.
\end{proof}

The following bounds $\left\langle\begin{pmatrix}
    \bs O & \bs O \\
    \bs O & \bs\Sigma
\end{pmatrix},\tbs\Pi_\infty(\bs M)\right\rangle$.
\begin{lemma}\label{lemma:Pi-stationary-bound}
    Suppose we choose the hyperparamter as specified in Appendix~\ref{sec:hyperparameter-choice}, then we have
    \begin{equation*}
        \left\langle\begin{pmatrix}
            \bs O & \bs O \\
            \bs O & \bs\Sigma
        \end{pmatrix},\tbs\Pi_\infty(\bs M)\right\rangle\leq\frac{\tr(\bs\Sigma^{-1}\bs M)}{4\tilde{\kappa}}+4\eta\tr\bs M.
    \end{equation*}
\end{lemma}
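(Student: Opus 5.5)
Since everything is block-diagonal in the eigenbasis of $\bs\Sigma$, the left-hand side reduces to a sum of scalar terms. Only the diagonal entries $m_i$ of $\bs M$ enter the diagonal $2\times2$ blocks $\tbs\Pi_{\infty,i}(\bs M)$, so
\begin{equation*}
\left\langle\begin{pmatrix}\bs O&\bs O\\ \bs O&\bs\Sigma\end{pmatrix},\tbs\Pi_\infty(\bs M)\right\rangle=\sum_{i=1}^d \lambda_i m_i\,\frac{(1+c)(q-c\eta)+c\eta(q+c\eta)\lambda_i}{(2+2c-(q+c\eta)\lambda_i)(1-c+c\eta\lambda_i)\lambda_i}.
\end{equation*}
The factor $\lambda_i$ cancels. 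It therefore suffices to prove the scalar inequality
\begin{equation*}
\frac{(1+c)(q-c\eta)+c\eta(q+c\eta)\lambda}{(2+2c-(q+c\eta)\lambda)(1-c+c\eta\lambda)}\le \frac{1}{4\tilde\kappa\lambda}+4\eta
\end{equation*}
for every $\lambda\in[\mu,\lambda_{\max}(\bs\Sigma)]$. Summing it against $m_i$ gives $\tr(\bs\Sigma^{-1}\bs M)/(4\tilde\kappa)+4\eta\tr\bs M$.

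\textbf{Parameter sizes.} Plug in the hyperparameters. We have $c=(1-\theta)/(1+\theta)$, so $1-c=2\theta/(1+\theta)\asymp\theta$ and $1+c\in[1,2]$. Also $q=(\eta+\theta\gamma)/(1+\theta)$ with $\theta\gamma=\eta/(16\tilde\kappa)$, hence $q\le\eta(1+1/(16\tilde\kappa))\le 2\eta$. Further $q-c\eta=(\eta+\theta\gamma-\eta(1-\theta))/(1+\theta)=(\theta\eta+\theta\gamma)/(1+\theta)\le\theta(\eta+\gamma)$. Since $\lambda\le\lambda_{\max}\le R^2$ and $\eta\le 1/(16R^2)$, we get $(q+c\eta)\lambda\le 3\eta R^2\le 1/4$, so the first denominator factor is at least $2+2c-1/4\ge 1$ (with room to spare).

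\textbf{Splitting the numerator.} Split it into two pieces and bound each separately.

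The second piece is $c\eta(q+c\eta)\lambda\le 3\eta^2\lambda$. Divide by the denominator, using $1-c+c\eta\lambda\ge c\eta\lambda$ and the first factor $\ge 1$. This gives at most $3\eta^2\lambda/(c\eta\lambda)\cdot(1/1)\lesssim\eta$, which lands in the $4\eta$ term (constants to be checked with $c\ge 1/2$).

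The first piece is $(1+c)(q-c\eta)\le 2\theta(\eta+\gamma)$. Use the denominator lower bound $(1)\cdot(1-c+c\eta\lambda)\ge 1-c\ge\theta$ (since $1-c=2\theta/(1+\theta)\ge\theta$). This yields a bound $\lesssim\eta+\gamma$. The $\eta$ part is harmless, but $\gamma$ need not be $\lesssim 1/(\tilde\kappa\lambda)$ for large $\lambda$. So for this piece the lower bound $1-c+c\eta\lambda\ge\max\{\theta,\,c\eta\lambda\}$ must be used more carefully:
\begin{equation*}
\frac{\theta(\eta+\gamma)}{\max\{\theta,c\eta\lambda\}}\le \eta+\frac{\theta\gamma}{c\eta\lambda}=\eta+\frac{1}{16c\tilde\kappa\lambda}.
\end{equation*}
Here I used $\theta\gamma=\eta/(16\tilde\kappa)$, so the $\gamma$ part becomes exactly of order $1/(\tilde\kappa\lambda)$, with constant $\tfrac{1}{16c}\cdot 2\le\tfrac{1}{4}$ once $c\ge 1/2$ (true since $\theta\le 1/4$). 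Adding the pieces gives the scalar inequality.

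\textbf{Main obstacle.} The delicate step is the constant bookkeeping. The first-piece estimate must come out with exactly $1/(4\tilde\kappa)$: the factor $1+c\le 2$, the denominator factor $2+2c-(q+c\eta)\lambda\ge 2c+7/4$ (rather than just $1$), and $c\ge 3/5$ must be tracked. Likewise the $\eta$ contributions must total at most $4\eta$. If the crude bounds lose a factor, I would keep the exact first denominator $\approx 2(1+c)$, which cancels the $(1+c)$ in the numerator and recovers a factor of $2$.
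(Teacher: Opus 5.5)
Your proposal is correct and follows the paper's proof essentially step for step: the same reduction to a per-eigenvalue scalar inequality, and the same bound $2+2c-(q+c\eta)\lambda\ge 1$ on the first denominator factor. It also uses the same split $q-c\eta=\frac{\theta\eta}{1+\theta}+\frac{\theta\gamma}{1+\theta}$ (the paper writes this as $\frac{(1-c)\eta}{2}+\bigl(q-\frac{(1+c)\eta}{2}\bigr)$), with the $\eta$-part divided by $1-c$ and the $\gamma$-part divided by $c\eta\lambda$, and the same inequality $c\eta\lambda\le 1-c+c\eta\lambda$ for the second numerator piece.

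On constants, your flagged fallback is genuinely needed. With the first factor bounded only by $1$, your lossy steps give an $\eta$-coefficient of $2+3/c>4$; those steps are bounding by $3\eta^2\lambda$ before dividing, and using $1-c\ge\theta$. Either retain the first denominator factor, or do as the paper does. The paper divides $c\eta(q+c\eta)\lambda$ by $c\eta\lambda$ directly to get $q+c\eta\le 3\eta$. It then uses $1-c=2\theta/(1+\theta)$ exactly, so the first piece contributes $\eta/(1+\theta)$, for a total of at most $4\eta$.
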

\begin{proof}
    We have
    \begin{equation*}
    \begin{aligned}
        \left\langle\begin{pmatrix}
            \bs O & \bs O \\
            \bs O & \bs\Sigma
        \end{pmatrix},\tbs\Pi_\infty(\bs M)\right\rangle&\leq\sum_{i=1}^d\lambda_i\left(\tbs\Pi_\infty(\bs M)\right)_{22} \\
        &\leq\sum_{i=1}^d\lambda_i m_i\cdot\dfrac{(1+c)(q-c\eta)+c\eta(q+c\eta)\lambda_i}{(2+2c-(q+c\eta)\lambda_i)(1-c+c\eta\lambda_i)\lambda_i}.
    \end{aligned}
    \end{equation*}
    Since $\eta\geq\theta\gamma$, and
    \begin{equation*}
        q=\frac{\eta+\theta\gamma}{1+\theta}\leq\frac{2\eta}{1+\theta}=2c\eta,
    \end{equation*}
    so we have
    \begin{equation*}
        2+2c-(q+c\eta)\lambda_i\geq 2+2c-(2+c)\eta\lambda_i\geq 1.
    \end{equation*}
    For the first term,
    \begin{equation*}
    \begin{aligned}
        \frac{(1+c)(q-c\eta)}{(1-c+c\eta\lambda_i)\lambda_i}&=\frac{1+c}{(1-c+c\eta\lambda_i)\lambda_i}\left(q-\frac{(1+c)\eta}{2}+\frac{(1-c)\eta}{2}\right) \\
        &\leq\frac{1+c}{c\eta\lambda_i^2}\left(q-\frac{(1+c)\eta}{2}\right)+\frac{(1+c)\eta}{2\lambda_i} \\
        &=\frac{2\theta\gamma}{(1-\theta^2)\eta\lambda_i^2}+\frac{\eta}{(1+\theta)\lambda_i}\stackrel{a}{=}\frac{1}{8(1-\theta^2)\tilde{\kappa}\lambda_i^2}+\frac{\eta}{(1+\theta)\lambda_i} \\
        &\stackrel{b}{\leq}\frac{1}{4\tilde{\kappa}\lambda_i}+\frac{\eta}{\lambda_i}
    \end{aligned}
    \end{equation*}
    where $\stackrel{a}{=}$ uses $\eta=16\tilde{\kappa}\theta\gamma$, and $\stackrel{b}{\leq}$ uses $1-\theta^2\geq1/2$. For the second term,
    \begin{equation*}
    \begin{aligned}
        \frac{c\eta(q+c\eta)\lambda_i}{(1-c+c\eta\lambda_i)\lambda_i}\stackrel{a}{\leq}\frac{q+c\eta}{\lambda_i}=\frac{(3+\theta)\eta}{(1+\theta)\lambda_i}\leq\frac{3\eta}{\lambda_i},
    \end{aligned}
    \end{equation*}
    where $\stackrel{a}{\leq}$ uses $c\eta\lambda_i\leq 1-c+c\eta\lambda_i$. Combining the results, we obtain
    \begin{equation*}
        \left\langle\begin{pmatrix}
            \bs O & \bs O \\
            \bs O & \bs\Sigma
        \end{pmatrix},\tbs\Pi_\infty(\bs M)\right\rangle\leq\sum_{i=1}^d \frac{m_i}{4\tilde{\kappa}\lambda_i}+4\eta m_i=\frac{\tr(\bs\Sigma^{-1}\bs M)}{4\tilde{\kappa}}+4\eta\tr\bs M.
    \end{equation*}
    This completes the proof.
\end{proof}

\subsubsection{Decomposition of Noise Covariance}\label{sec:localize-R}

Define
\begin{equation}
    \bs R=\bbE_{\bs a,b\sim\cD}\left(\varepsilon^2\bs a\bs a^\top\right).\label{eq:def-R}
\end{equation}
where $\varepsilon$ is defined in \eqref{eq:def-c-q-epsilon}. In this subsection, we decompose the $\bs\Sigma^{-1}$-norm of the stochastic gradient, which is $\tr(\bs\Sigma^{-1}\bs R)$, into a term $\tr(\bs\Sigma^{-1}\bs Q)$ corresponding to the noise at $\bs x^*$, and an additional term proportional to excess risk $F(\tbs y)-F(\bs x^*)$ in Lemma~\ref{lemma:noise-bound-Sigma-inverse}. We derive a similar decomposition of the $2$-norm of the stochastic gradient $\tr(\bs R)$ in Lemma~\ref{lemma:noise-bound-2-norm}. These results are key to deriving statistical complexity that is localized to $\bs x^*$. i.e., obtaining statistical term that only depends on $\bs Q$, which is the covariance of the stochastic gradient at $\bs x^*$.

We need the following lemma to bound the norm of the gradient of $F$.
\begin{lemma}\label{lemma:gradient-bound}
    Suppose Assumption~\ref{assumption:l-condition} holds. Then for any $\bs x\in\bbR^d$, we have
    \begin{equation*}
        \|\nabla F(\bs x)\|_{\bs\Sigma^{-1}}^2\leq 2 L_\ell(F(\bs x)-F(\bs x^*)),
    \end{equation*}
    where $\bs x^*$ is a minimizer of $F$.
\end{lemma}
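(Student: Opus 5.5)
Write $g(\bs a,b)\defeq\ell'(\bs a^\top\bs x,b)-\ell'(\bs a^\top\bs x^*,b)$. The plan is a generalized-linear version of the standard fact that for an $L$-smooth convex function $\|\nabla f(\bs x)\|^2\leq 2L(f(\bs x)-f(\bs x^*))$, with the Euclidean norm replaced by the $\bs\Sigma^{-1}$-norm. First-order optimality of $\bs x^*$ gives $\nabla F(\bs x^*)=\bbE\,\ell'(\bs a^\top\bs x^*,b)\bs a=\bs 0$. Hence
\begin{equation*}
    \nabla F(\bs x)=\bbE\bigl[g(\bs a,b)\,\bs a\bigr].
\end{equation*}

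We bound $\|\bbE[g\bs a]\|_{\bs\Sigma^{-1}}$ by duality: $\|\bs u\|_{\bs\Sigma^{-1}}=\sup_{\|\bs w\|_{\bs\Sigma}\leq 1}\langle \bs u,\bs w\rangle$. For such $\bs w$, Cauchy--Schwarz gives
\begin{equation*}
    \bbE[g\,\bs a^\top\bs w]\leq \sqrt{\bbE g^2}\,\sqrt{\bbE(\bs a^\top\bs w)^2}=\sqrt{\bbE g^2}\,\|\bs w\|_{\bs\Sigma}.
\end{equation*}
Therefore $\|\nabla F(\bs x)\|_{\bs\Sigma^{-1}}^2\leq \bbE\,g(\bs a,b)^2$.

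It remains to show $\bbE\,g^2\leq 2L_\ell(F(\bs x)-F(\bs x^*))$, and this is the key step. Fix $(\bs a,b)$ and let $\phi(u)=\ell(u,b)$, which is convex with $0\leq\phi''\leq L_\ell$ by Assumption~\ref{assumption:l-condition}. The one-dimensional co-coercivity inequality for smooth convex functions reads
\begin{equation*}
    \phi(u)\geq \phi(u^*)+\phi'(u^*)(u-u^*)+\frac{1}{2L_\ell}\bigl(\phi'(u)-\phi'(u^*)\bigr)^2.
\end{equation*}
To prove it, minimize the convex function $\psi(v)=\phi(v)-\phi'(u^*)v$ and use $\psi(u)-\min\psi\geq \psi'(u)^2/(2L_\ell)$, which follows from the descent lemma.

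Apply this with $u=\bs a^\top\bs x$ and $u^*=\bs a^\top\bs x^*$, then take expectations. The linear term becomes $\langle\nabla F(\bs x^*),\bs x-\bs x^*\rangle=0$, leaving
\begin{equation*}
    F(\bs x)-F(\bs x^*)\geq \frac{1}{2L_\ell}\,\bbE\,g(\bs a,b)^2.
\end{equation*}
Combining this with the previous display proves the lemma.

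The main care point is that both $\bbE g^2$ and $F(\bs x)-F(\bs x^*)$ are finite. This holds since $|g|\leq L_\ell|\bs a^\top(\bs x-\bs x^*)|$ and $\bs\Sigma$ is finite (Assumption~\ref{assumption:regularity}). Interchanging differentiation and expectation to write $\nabla F$ is justified by the same domination.
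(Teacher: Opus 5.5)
Your proof is correct, but it takes a different route from the paper. The paper uses a one-line preconditioned descent argument. Since $\nabla^2 F\preceq L_\ell\bs\Sigma$, taking $\bs u=\bs\Sigma^{-1}\nabla F(\bs x)/L_\ell$ gives $F(\bs x^*)\leq F(\bs x-\bs u)\leq F(\bs x)-\frac{1}{2L_\ell}\|\nabla F(\bs x)\|_{\bs\Sigma^{-1}}^2$. In other words, it treats $F$ as $L_\ell$-smooth in the $\|\cdot\|_{\bs\Sigma}$ geometry and compares the minimum with the value after one step.

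You instead work sample by sample. One-dimensional co-coercivity of $u\mapsto\ell(u,b)$ gives $\bbE\,g(\bs a,b)^2\leq 2L_\ell(F(\bs x)-F(\bs x^*))$. Cauchy--Schwarz in the dual $\bs\Sigma$-norm then yields $\|\nabla F(\bs x)\|_{\bs\Sigma^{-1}}^2\leq\bbE\,g^2$.

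The two approaches buy different things:
\begin{itemize}
\item The paper's argument is shorter. It uses only the upper Hessian bound and global minimality of $\bs x^*$, with neither convexity nor the generalized-linear form, so it holds for any $F$ that is $L_\ell$-smooth with respect to $\|\cdot\|_{\bs\Sigma}$.
\item Your argument needs per-sample convexity and the GLM structure. In exchange, its intermediate inequality is stronger than the lemma: it controls the second moment of the scalar gradient difference, not just the $\bs\Sigma^{-1}$-norm of its mean. For example, if $\|\bs a\|^2\leq R^2$ almost surely, it immediately gives Assumption~\ref{assumption:gradient-noise-I} with $B=R^2$, instead of the $B=\alpha R^2$ obtained in Lemma~\ref{lemma:gradient-noise-I}.
\end{itemize}

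One small point on your closing remark. The bound $|g|\leq L_\ell|\bs a^\top(\bs x-\bs x^*)|$ alone does not dominate $\ell'(\bs a^\top\bs x,b)\bs a$. You also need $\bbE|\ell'(\bs a^\top\bs x^*,b)|\,\|\bs a\|<\infty$, which follows from $\tr\bs Q<\infty$. The paper leaves this implicit as well.
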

\begin{proof}
    Let $\bs u=\bs\Sigma^{-1}\nabla F(\bs x)/L_\ell$. Since $\nabla^2 F\preceq L_\ell\bs\Sigma$, so
    \begin{equation*}
        F(\bs x^*)\leq F(\bs x-\bs u)\leq F(\bs x)-\langle\nabla F(\bs x),\bs u\rangle+\frac{1}{2}\|\bs u\|_{L_\ell\bs\Sigma}^2=F(\bs x)-\frac{1}{2 L_\ell}\|\nabla F(\bs x)\|_{\bs\Sigma^{-1}}^2.
    \end{equation*}
    The result follows by rearranging the terms.
\end{proof}

With Lemma~\ref{lemma:gradient-bound}, we can bound $\tr(\bs\Sigma^{-1}\bs R)$ as follows.
\begin{lemma}[Noise Upper Bound]\label{lemma:noise-bound-Sigma-inverse}
    Suppose Assumptions~\ref{assumption:fourth-moment} and \ref{assumption:gradient-noise} hold. Then
    \begin{equation*}
        \tr(\bs\Sigma^{-1}\bs R)\leq 5\tr(\bs\Sigma^{-1}\bs Q)+5L_\ell(L+\tilde{\kappa})(F(\tbs y)-F(\bs x^*)).
    \end{equation*}
\end{lemma}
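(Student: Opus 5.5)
The plan is to write the noise $\varepsilon$ explicitly in terms of $\nabla F(\tbs y)$, split it into the noise at $\bs x^*$ plus two localized pieces, and bound each piece by an earlier assumption or lemma.

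\textbf{Step 1: identify the subproblem minimizer.} With $h_k=1$, the subproblem objective is $\langle\nabla F(\tbs y),\bs x-\tbs y\rangle+\frac12\|\bs x-\tbs y\|_{\bs\Sigma}^2$. Here I use $\bbE\,\ell'(\bs a^\top\tbs y,b)\bs a=\nabla F(\tbs y)$. Hence
\begin{equation*}
\tbs x_+^*=\tbs y-\bs\Sigma^{-1}\nabla F(\tbs y),\qquad \bs a^\top(\tbs y-\tbs x_+^*)=\bs a^\top\bs g,\qquad \bs g\defeq\bs\Sigma^{-1}\nabla F(\tbs y).
\end{equation*}

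\textbf{Step 2: split the noise.} Substituting into \eqref{eq:def-c-q-epsilon} gives $\varepsilon=u+w$, with
\begin{equation*}
u=-\ell'(\bs a^\top\bs x^*,b),\qquad w=-\bigl[\ell'(\bs a^\top\tbs y,b)-\ell'(\bs a^\top\bs x^*,b)\bigr]+\bs a^\top\bs g.
\end{equation*}
I use $(u+w)^2\le 5u^2+\frac54 w^2$, which is Young's inequality with weight $4$. I also use $w^2\le 2\bigl[\ell'(\bs a^\top\tbs y,b)-\ell'(\bs a^\top\bs x^*,b)\bigr]^2+2(\bs a^\top\bs g)^2$. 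Together these give
\begin{equation*}
\varepsilon^2\le 5u^2+\tfrac52\bigl[\ell'(\bs a^\top\tbs y,b)-\ell'(\bs a^\top\bs x^*,b)\bigr]^2+\tfrac52(\bs a^\top\bs g)^2.
\end{equation*}

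\textbf{Step 3: bound each term.} Since $\tr(\bs\Sigma^{-1}\bs R)=\bbE\,\varepsilon^2\|\bs a\|_{\bs\Sigma^{-1}}^2$, I take expectations term by term.
\begin{itemize}
\item \emph{First term.} By the definition of $\bs Q$, $\bbE\,u^2\|\bs a\|_{\bs\Sigma^{-1}}^2=\tr(\bs\Sigma^{-1}\bs Q)$.
\item \emph{Second term.} Assumption~\ref{assumption:gradient-noise} with $\bs x=\tbs y$ bounds it by $2L_\ell L(F(\tbs y)-F(\bs x^*))$.
\item \emph{Third term.} The second part of Assumption~\ref{assumption:fourth-moment} gives $\bbE(\bs a^\top\bs g)^2\|\bs a\|_{\bs\Sigma^{-1}}^2=\bs g^\top\bbE(\|\bs a\|_{\bs\Sigma^{-1}}^2\bs a\bs a^\top)\bs g\le\tilde\kappa\|\bs g\|_{\bs\Sigma}^2$. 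Moreover $\|\bs g\|_{\bs\Sigma}^2=\|\nabla F(\tbs y)\|_{\bs\Sigma^{-1}}^2$. Lemma~\ref{lemma:gradient-bound} then bounds this by $2L_\ell\tilde\kappa(F(\tbs y)-F(\bs x^*))$.
\end{itemize}
Multiplying by the weights $5,\frac52,\frac52$ and summing yields
\begin{equation*}
\tr(\bs\Sigma^{-1}\bs R)\le 5\tr(\bs\Sigma^{-1}\bs Q)+5L_\ell(L+\tilde\kappa)(F(\tbs y)-F(\bs x^*)),
\end{equation*}
which is the claimed bound.

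\textbf{Main obstacle.} There is no deep difficulty; the only delicate point is the constant. The naive split $(a+b+c)^2\le 3(a^2+b^2+c^2)$ gives $3\tr(\bs\Sigma^{-1}\bs Q)+6L_\ell(L+\tilde\kappa)(F(\tbs y)-F(\bs x^*))$. This fails to match the stated $5L_\ell(L+\tilde\kappa)$ in the localized term. The asymmetric Young weighting in Step 2, which keeps the localized terms together before splitting them, recovers exactly the constants $5$ and $5$.
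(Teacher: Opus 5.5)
Your proof is correct and matches the paper's argument step for step. It uses the same three-way split of $\varepsilon$ with weights $5,\tfrac52,\tfrac52$, Assumption~\ref{assumption:gradient-noise} for the middle term, and Assumption~\ref{assumption:fourth-moment} with Lemma~\ref{lemma:gradient-bound} (via $\|\tbs y-\tbs x_+^*\|_{\bs\Sigma}=\|\nabla F(\tbs y)\|_{\bs\Sigma^{-1}}$ at $h_k=1$) for the third, while your Young's-inequality derivation makes explicit the weighted inequality the paper only asserts.
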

\begin{proof}
    Note that by \eqref{eq:def-c-q-epsilon}, we have
    \begin{equation}
    \begin{aligned}
        \varepsilon^2\bs a\bs a^\top&\preceq 5\left(\ell'(\bs a^\top\bs x^*,b)\right)^2\bs a\bs a^\top+\frac{5}{2}\left(\ell'(\bs a^\top\tbs y,b)-\ell'(\bs a^\top\bs x^*,b)\right)^2\bs a\bs a^\top \\
        &\phrel+\frac{5}{2}\left(\bs a^\top(\tbs y-\tbs x_+^*)\right)^2\bs a\bs a^\top,\label{eq:var-noise-bound}
    \end{aligned}
    \end{equation}
    where we use $(a+b+c)^2\leq 5a^2+5b^2/2+5c^2/2$. Taking inner product with $\bs\Sigma^{-1}$ and take expectation, the first term is bounded by $\tr(\bs\Sigma^{-1}\bs Q)$, and the second term is bounded by $2L_\ell L(F(\tbs y)-F(\bs x^*))$ by Assumption~\ref{assumption:gradient-noise}, since $\ell'(\bs a^\top\tbs y,b)\bs a$ is the stochastic gradient of $F(\tbs y)$. We bound the third term as follows
    \begin{equation*}
        \bbE\left(\bs a^\top(\tbs y-\tbs x_+^*)\right)^2\|\bs a\|_{\bs\Sigma^{-1}}^2\stackrel{a}{\leq}\tilde{\kappa}\|\tbs y-\tbs x_+^*\|_{\bs\Sigma}^2=\tilde{\kappa}\|\nabla F(\tbs y)\|_{\bs\Sigma^{-1}}^2\stackrel{b}{\leq} 2L_\ell\tilde{\kappa}(F(\tbs y)-F(\bs x^*)),
    \end{equation*}
    where $\stackrel{a}{\leq}$ uses Assumption~\ref{assumption:fourth-moment}, and $\stackrel{b}{\leq}$ uses Lemma~\ref{lemma:gradient-bound}. The result follows by combining the three bounds.
\end{proof}
We also need the upper bound of $\tr\bs R$. The proof is similar to the proof of Lemma~\ref{lemma:noise-bound-Sigma-inverse}.
\begin{lemma}\label{lemma:noise-bound-2-norm}
    Suppose Assumptions~\ref{assumption:fourth-moment} and \ref{assumption:gradient-noise-I} hold. Then
    \begin{equation*}
        \tr\bs R\leq 5\tr\bs Q+5L_\ell(B+R^2)(F(\tbs y)-F(\bs x^*)).
    \end{equation*}
\end{lemma}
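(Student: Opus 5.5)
We follow the proof of Lemma~\ref{lemma:noise-bound-Sigma-inverse}, replacing the $\bs\Sigma^{-1}$-weighted trace by the plain trace. Recall from \eqref{eq:def-c-q-epsilon} that $\varepsilon=-\ell'(\bs a^\top\tbs y,b)+\bs a^\top(\tbs y-\tbs x_+^*)$. We split
\begin{equation*}
    \varepsilon=-\ell'(\bs a^\top\bs x^*,b)-\bigl(\ell'(\bs a^\top\tbs y,b)-\ell'(\bs a^\top\bs x^*,b)\bigr)+\bs a^\top(\tbs y-\tbs x_+^*).
\end{equation*}
The elementary inequality $(a+b+c)^2\leq 5a^2+\tfrac52 b^2+\tfrac52 c^2$ then gives the matrix domination \eqref{eq:var-noise-bound} for $\varepsilon^2\bs a\bs a^\top$. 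Taking the trace and the expectation over $(\bs a,b)\sim\cD$ leaves three scalar terms, which we bound one at a time.

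The first term is $5\,\bbE(\ell'(\bs a^\top\bs x^*,b))^2\|\bs a\|^2=5\tr\bs Q$, directly from the definition of $\bs Q$. For the second term, $\bbE(\ell'(\bs a^\top\tbs y,b)-\ell'(\bs a^\top\bs x^*,b))^2\|\bs a\|^2$, Assumption~\ref{assumption:gradient-noise-I} at $\bs x=\tbs y$ gives the bound $2L_\ell B(F(\tbs y)-F(\bs x^*))$. Multiplied by $5/2$, this contributes $5L_\ell B(F(\tbs y)-F(\bs x^*))$.

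For the third term, $\bbE(\bs a^\top(\tbs y-\tbs x_+^*))^2\|\bs a\|^2$, we use the first inequality of Assumption~\ref{assumption:fourth-moment}, $\bbE\|\bs a\|^2\bs a\bs a^\top\preceq R^2\bs\Sigma$. This bounds the term by $R^2\|\tbs y-\tbs x_+^*\|_{\bs\Sigma}^2$. We are working with the normalization $h_k=1$, so the first-order condition of the subproblem gives $\bs\Sigma(\tbs x_+^*-\tbs y)=-\nabla F(\tbs y)$. Hence $\|\tbs y-\tbs x_+^*\|_{\bs\Sigma}^2=\|\nabla F(\tbs y)\|_{\bs\Sigma^{-1}}^2$, which Lemma~\ref{lemma:gradient-bound} bounds by $2L_\ell(F(\tbs y)-F(\bs x^*))$. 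With the factor $5/2$, this contributes $5L_\ell R^2(F(\tbs y)-F(\bs x^*))$.

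Summing the three contributions gives $\tr\bs R\leq 5\tr\bs Q+5L_\ell(B+R^2)(F(\tbs y)-F(\bs x^*))$, as claimed. No step here is a genuine obstacle. The only point needing care is the identity $\tbs x_+^*=\tbs y-\bs\Sigma^{-1}\nabla F(\tbs y)$ under the $h_k=1$ convention, which relies on $\nabla F(\tbs y)=\bbE\,\ell'(\bs a^\top\tbs y,b)\bs a$; this is the same fact already used in Lemma~\ref{lemma:noise-bound-Sigma-inverse}.
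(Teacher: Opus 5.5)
Your proposal is correct and matches the paper's proof essentially step for step. It uses the same splitting of $\varepsilon$ with weights $(5,\tfrac52,\tfrac52)$ as in \eqref{eq:var-noise-bound}, Assumption~\ref{assumption:gradient-noise-I} for the middle term, and, for the last term, the first fourth-moment bound together with $\|\tbs y-\tbs x_+^*\|_{\bs\Sigma}^2=\|\nabla F(\tbs y)\|_{\bs\Sigma^{-1}}^2$ and Lemma~\ref{lemma:gradient-bound}.
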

\begin{proof}
    We bound the trace of \eqref{eq:var-noise-bound}. The first term is bound by $\tr\bs Q$. The second term is bounded by
    \begin{equation*}
        \bbE\left(\ell'(\bs a^\top\tbs y,b)-\ell'(\bs a^\top\bs x^*,b)\right)^2\|\bs a\|^2\leq 2L_\ell B(F(\tbs y)-F(\bs x^*)),
    \end{equation*}
    where we use Assumption~\ref{assumption:gradient-noise-I}. The third term is bounded by
    \begin{equation*}
        R^2\|\tbs y-\tbs x_+^*\|_{\bs\Sigma}^2=R^2\|\nabla F(\tbs y)\|_{\bs\Sigma^{-1}}^2\stackrel{a}{\leq}2L_\ell R^2(F(\tbs y)-F(\bs x^*)),
    \end{equation*}
    where we use Assumption~\ref{assumption:fourth-moment}. The result follows by combining the three bounds.
\end{proof}

\subsubsection{\texorpdfstring{Bound of Layer $0$ Dynamics $\tbs C_t$}{Bound of Layer 0 Dynamics}}\label{sec:bound-tilde-C}
By the definition of $\tbs\Pi$ in \eqref{eq:def-tilde-Pi}, we have
\begin{equation*}
    \tbs C_t=\tbs\Pi_t(\bs R).
\end{equation*}
Therefore, by applying Lemma~\ref{lemma:Pi-average-bound} and Lemma~\ref{lemma:Pi-stationary-bound} in Appendix~\ref{sec:core-dynamics}, we have the following lemmas.
\begin{lemma}\label{lemma:L-var-C-tilde}
    Suppose we choose the hyperparameters as specified in Appendix~\ref{sec:hyperparameter-choice}, then we have
    \begin{equation*}
        L_\rmvar\left(\left\{\tbs C_t\right\}_{t\in\bbN}\right)\leq\frac{320\tr(\bs\Sigma^{-1}\bs Q)}{T}+\frac{320(L+\tilde{\kappa})(F(\tbs y)-F(\bs x^*))}{T}.
    \end{equation*}
\end{lemma}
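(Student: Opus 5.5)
The plan is to chain the two results already in hand: the core auxiliary bound of Lemma~\ref{lemma:Pi-average-bound} and the noise decomposition of Lemma~\ref{lemma:noise-bound-Sigma-inverse}.

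First, compare the recursion \eqref{eq:def-var-tilde-C} for the layer-$0$ dynamics with the recursion \eqref{eq:def-tilde-Pi} for the core dynamics. They have the same transition $\bs X\mapsto\bs A\bs X\bs A^\top$, the same driving term with $\bs M=\bs R$, and the same initialization $\bs O$. Hence $\tbs C_t=\tbs\Pi_t(\bs R)$ for every $t$, and the sequences coincide as arguments of $L_\rmvar$. The noise matrix $\bs R=\bbE(\varepsilon^2\bs a\bs a^\top)$ is PSD, so it is an admissible $\bs M$.

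Second, under the hyperparameter choice of Appendix~\ref{sec:hyperparameter-choice}, Lemma~\ref{lemma:Pi-average-bound} gives
\begin{equation*}
    L_\rmvar(\{\tbs C_t\}_{t\in\bbN})=L_\rmvar(\{\tbs\Pi_t(\bs R)\}_{t\in\bbN})\leq\frac{64\tr(\bs\Sigma^{-1}\bs R)}{T}.
\end{equation*}
Third, substitute Lemma~\ref{lemma:noise-bound-Sigma-inverse}, namely $\tr(\bs\Sigma^{-1}\bs R)\leq 5\tr(\bs\Sigma^{-1}\bs Q)+5L_\ell(L+\tilde{\kappa})(F(\tbs y)-F(\bs x^*))$. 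This yields $\frac{320\tr(\bs\Sigma^{-1}\bs Q)}{T}+\frac{320L_\ell(L+\tilde{\kappa})(F(\tbs y)-F(\bs x^*))}{T}$, since $64\cdot5=320$.

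The main obstacle is a factor of $L_\ell$. The chain above carries $L_\ell$ in the second term, but the displayed statement omits it. I would first check the normalization. Lemma~\ref{lemma:gradient-bound} and the third term in the proof of Lemma~\ref{lemma:noise-bound-Sigma-inverse} produce a factor $L_\ell$. The target bound in Lemma~\ref{lemma:v-bound} also keeps $L_\ell$ on the $F(\tilde{\bs y})-F(\bs x^*)$ term. So the correct reading is presumably that this term carries $L_\ell$, or equivalently that the statement uses the normalization $L_\ell=1$ implicit in setting $h_k=1$. I would state the bound with the $L_\ell$ factor made explicit and note that it is consistent with Lemma~\ref{lemma:v-bound}.

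Apart from this bookkeeping, the only things to verify are the hypotheses of Lemma~\ref{lemma:Pi-average-bound}: $\eta\leq q$ and $\eta\lambda_i<1$ for all $i$, as used in Lemma~\ref{lemma:A-power-bound}. Both follow from $\eta\leq 1/(16R^2)$, $\lambda_i\leq R^2$, and $\theta\gamma\leq\eta$ under the chosen $\gamma,\theta$.
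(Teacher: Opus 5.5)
Your proposal is correct and follows the paper's argument: identify $\tbs C_t=\tbs\Pi_t(\bs R)$, apply Lemma~\ref{lemma:Pi-average-bound}, then substitute Lemma~\ref{lemma:noise-bound-Sigma-inverse}. You are also right that the second term should carry $L_\ell$; the statement has a typo, since the paper itself uses this bound as $320L_\ell(L+\tilde{\kappa})(F(\tbs y)-F(\bs x^*))/T$ in the proof of Lemma~\ref{lemma:v-bound}, and setting $h_k=1$ does not normalize $L_\ell$ away.

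One small slip concerns your hypothesis check, which is redundant anyway because the cited lemma already assumes the Appendix~\ref{sec:hyperparameter-choice} hyperparameters. Since $q-\eta=\theta(\gamma-\eta)/(1+\theta)$, the condition $\eta\leq q$ is equivalent to $\gamma\geq\eta$, i.e.\ $\eta\leq 1/(16\tilde{\kappa}\mu)$. That follows from $\eta\leq 1/(16R^2)$ together with $\tilde{\kappa}\mu\leq R^2$, not from $\theta\gamma\leq\eta$, which only gives the upper bound $q\leq 2\eta/(1+\theta)$.
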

\begin{lemma}\label{lemma:var-tilde-C-bound}
    Suppose we choose the hyperparameters as specified in Appendix~\ref{sec:hyperparameter-choice}, then we have
    \begin{equation*}
    \begin{aligned}
        \left\langle\begin{pmatrix}
            \bs O & \bs O \\
            \bs O & \bs\Sigma
        \end{pmatrix},\tbs C_\infty\right\rangle&\leq\frac{5\tr(\bs\Sigma^{-1}\bs Q)+5(L+\tilde{\kappa})(F(\tbs y)-F(\bs x^*))}{4\tilde{\kappa}} \\
        &\phrel+20\eta\tr\bs Q+20\eta(B+R^2)(F(\tbs y)-F(\bs x^*))
    \end{aligned}
    \end{equation*}
\end{lemma}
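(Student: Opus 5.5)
The statement to prove is Lemma~\ref{lemma:var-tilde-C-bound}. Here $\tbs C_\infty$ denotes the limit of $\tbs C_t=\tbs\Pi_t(\bs R)$, that is, $\tbs C_\infty=\tbs\Pi_\infty(\bs R)$ as defined in \eqref{eq:stationary-tilde-Pi}.

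The plan is to specialize Lemma~\ref{lemma:Pi-stationary-bound} to $\bs M=\bs R$ and then replace $\tr(\bs\Sigma^{-1}\bs R)$ and $\tr\bs R$ by their localized bounds.

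First, note that $\bs R=\bbE(\varepsilon^2\bs a\bs a^\top)$ is PSD, so $\tbs\Pi_\infty(\bs R)$ is well defined. Under the hyperparameter choice of Appendix~\ref{sec:hyperparameter-choice}, the spectral radius of each block $\bs A_i$ is below one by Lemma~\ref{lemma:radius-bound}, so the series converges. Lemma~\ref{lemma:Pi-stationary-bound} then gives
\begin{equation*}
    \left\langle\begin{pmatrix}
        \bs O & \bs O \\
        \bs O & \bs\Sigma
    \end{pmatrix},\tbs C_\infty\right\rangle\leq\frac{\tr(\bs\Sigma^{-1}\bs R)}{4\tilde{\kappa}}+4\eta\tr\bs R.
\end{equation*}

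One subtlety is that Lemma~\ref{lemma:Pi-stationary-bound} was derived by summing only diagonal entries $m_i$ in the eigenbasis of $\bs\Sigma$. This is legitimate for a general, non-diagonal PSD $\bs M$. The quantity $\langle \mathrm{diag}(\bs O,\bs\Sigma),\tbs\Pi_\infty(\bs M)\rangle$ only involves the diagonal $2\times2$ blocks. Because $\bs A$ is block-diagonal per eigen-coordinate, each such block evolves according to $m_i=\bs M_{ii}$ alone. Hence that bound holds with $\sum_i m_i/\lambda_i=\tr(\bs\Sigma^{-1}\bs M)$ and $\sum_i m_i=\tr\bs M$.

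Second, apply Lemma~\ref{lemma:noise-bound-Sigma-inverse} to the first term:
\begin{equation*}
    \tr(\bs\Sigma^{-1}\bs R)\leq5\tr(\bs\Sigma^{-1}\bs Q)+5L_\ell(L+\tilde{\kappa})(F(\tbs y)-F(\bs x^*)).
\end{equation*}
Apply Lemma~\ref{lemma:noise-bound-2-norm} to the second term:
\begin{equation*}
    \tr\bs R\leq5\tr\bs Q+5L_\ell(B+R^2)(F(\tbs y)-F(\bs x^*)).
\end{equation*}
Multiplying the second bound by $4\eta$ produces $20\eta\tr\bs Q+20\eta L_\ell(B+R^2)(F(\tbs y)-F(\bs x^*))$, which matches the stated terms.

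The only bookkeeping point is the factor $L_\ell$ in front of the excess-risk terms. It is absent in the statement, consistent with the normalization conventions of this section (as also in Lemma~\ref{lemma:L-var-C-tilde}). I would check this against the intended normalization. If $L_\ell$ should appear, it is carried through unchanged; if not, it is absorbed by the section's scaling convention. No step is a genuine obstacle, since everything is a direct substitution into earlier lemmas.
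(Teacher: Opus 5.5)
Your proposal is correct and follows the paper's own argument exactly: identify $\tbs C_t=\tbs\Pi_t(\bs R)$, apply Lemma~\ref{lemma:Pi-stationary-bound} with $\bs M=\bs R$, and substitute Lemmas~\ref{lemma:noise-bound-Sigma-inverse} and~\ref{lemma:noise-bound-2-norm}. Your remark that only the diagonal entries $\bs R_{ii}$ enter, because $\bs A$ couples only coordinates $i$ and $d+i$, is a valid justification the paper leaves implicit.

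On the bookkeeping point, no convention in this section removes $L_\ell$ (only $h_k=1$ is normalized). What your argument proves is therefore the bound with $L_\ell$ multiplying both excess-risk terms. The printed statement drops it by typo, as the paper's downstream use in Lemma~\ref{lemma:L-var-bound-C-l} and in the proof of Lemma~\ref{lemma:v-bound} confirms.
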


\subsubsection{\texorpdfstring{Bound of Layer $\ell$ Dynamics $\tbs C_t^{(\ell)}$}{Bound of Layer l Dynamics}}\label{sec:bound-C-1}

The following lemma derives a decomposition of $\bbE\|\bs v_k\|_{\bs H}^2$.
\begin{lemma}\label{lemma:v-decom}
    We have
    \begin{equation*}
        \bbE\|\bs v_k\|_{\bs H}^2=L_\rmvar\left(\left\{\tbs C_t\right\}_{t\in\bbN}\right)+\sum_{\ell=1}^\infty L_\rmvar\left(\left\{\tbs C_t^{(\ell)}\right\}_{t\in\bbN}\right),
    \end{equation*}
\end{lemma}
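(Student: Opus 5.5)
The identity follows by chaining three facts already established, together with the normalization convention of this section. Throughout we have set $h_k=1$. The statement uses the $\bs H$-norm while Lemma~\ref{lemma:v-L-var} uses the $\bs\Sigma$-norm; since $\bs H=L_\ell\bs\Sigma$, the two differ by the constant factor $L_\ell$. I will therefore read the claim as the $\bs\Sigma$-norm identity, i.e.\ with $\|\bs v_k\|_{\bs\Sigma}^2$ on the left. This is consistent with how Lemma~\ref{lemma:v-bound} is stated, and the $\bs H$-version is obtained by multiplying both sides by $L_\ell$. I would make this rescaling explicit rather than hide it.

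\emph{Step 1.} Invoke Lemma~\ref{lemma:v-L-var}, which gives $\bbE\|\bs v\|_{\bs\Sigma}^2=L_\rmvar(\{\bs C_t\}_{t\in\bbN})$. This rests on the martingale identity $\bbE\bs\eta^\rmvar_s(\bs\eta^\rmvar_t)^\top=\bs A^{s-t}\bs C_t$ for $s\geq t$. That identity holds because $\bs\zeta_{t'}$ for $t'>t$ has conditional mean zero given $\cF_{t'-1}$: the vector $\tbs x_+^*$ solves the subproblem, so $\bbE\varepsilon\bs a=\bs 0$. It also uses that $\hbs A_{t'}$ is independent of the past, with mean $\bs A$.

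\emph{Step 2.} Invoke Lemma~\ref{lemma:layer-peeled-decomposition}, which gives $\bs C_t=\tbs C_t+\sum_{\ell\geq1}\tbs C_t^{(\ell)}$ for every $t$. For each fixed $t$ this sum is finite, since $\tbs C_t^{(\ell)}=\bs O$ when $\ell\geq t$.

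\emph{Step 3.} Plug the decomposition into $L_\rmvar$ and use the linearity in Lemma~\ref{lemma:L-var-linearity}. The definition \eqref{eq:def-L-var} involves only the indices $t\leq T$. Hence only layers $\ell<T$ contribute, every sum is finite, and linearity applies directly with no convergence issue. The terms with $\ell\geq T$ have $L_\rmvar=0$, so writing the sum up to $\infty$ is harmless.

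One point needs care, and I expect it to be the only real obstacle, though a minor one. Lemma~\ref{lemma:L-var-linearity} is phrased for PSD sequences, but the layer matrices $\tbs C_t^{(\ell)}$ need not be PSD, because $(\cB-\tilde{\cB})$ need not preserve positive semidefiniteness. However, $L_\rmvar$ as defined in \eqref{eq:def-L-var} is manifestly linear on arbitrary sequences of symmetric matrices: it is a trace inner product against a fixed sum of linear maps $\bs X\mapsto\bs A^{p}\bs X(\bs A^{r})^\top$. So I would simply note that linearity holds without the PSD hypothesis and conclude.
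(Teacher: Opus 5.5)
Your proposal is correct and matches the paper's argument: Lemma~\ref{lemma:v-L-var} gives $\bbE\|\bs v_k\|_{\bs\Sigma}^2=L_\rmvar(\{\bs C_t\}_{t\in\bbN})$, then Lemma~\ref{lemma:layer-peeled-decomposition} decomposes $\bs C_t$, then linearity of $L_\rmvar$ finishes; you are also right that the left-hand side should carry the $\bs\Sigma$-norm, as in the proof of Lemma~\ref{lemma:v-bound}, since the $\bs H$-version differs by the factor $L_\ell$. One small correction to your side remark: the layer matrices are in fact PSD, because $(\cB-\tilde{\cB})\circ\bs M=\bbE\bigl[(\hbs A_t-\bs A)\bs M(\hbs A_t-\bs A)^\top\bigr]\succeq\bs O$ whenever $\bs M\succeq\bs O$, and induction from $\tbs C_t\succeq\bs O$ then gives $\tbs C_t^{(\ell)}\succeq\bs O$, so Lemma~\ref{lemma:L-var-linearity} applies as stated (your observation that $L_\rmvar$ is linear on arbitrary sequences makes this moot anyway).
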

\begin{proof}
    By Lemma~\ref{lemma:layer-peeled-decomposition}, we have
    \begin{equation*}
        \bs C_t=\tbs C_t+\sum_{\ell=1}^\infty\tbs C_t^{(\ell)}.
    \end{equation*}
    The desired result follows from the linearity of $L_\rmvar$ in Lemma~\ref{lemma:L-var-linearity}.
\end{proof}

Define the $\ell$-level stationary state $\tbs C_\infty^{(\ell)}$ as follows:

\begin{equation*}
    \tbs C_\infty^{(\ell)}=\begin{pmatrix}
        \tbs C_{\infty,11}^{(\ell)} & \tbs C_{\infty,12}^{(\ell)} \\
        \tbs C_{\infty,21}^{(\ell)} & \tbs C_{\infty,22}^{(\ell)}
    \end{pmatrix}\defeq\tbs\Pi_\infty(\bs R^{(\ell)}),\quad\text{where $\bs R^{(\ell)}=(\cM-\tilde{\cM})\circ\tbs C_{\infty,22}^{(\ell-1)}$}.
\end{equation*}
We set $\tbs C_\infty^{(0)}=\tbs C_\infty\defeq\tbs\Pi_\infty(\bs R)$ defined in \eqref{eq:stationary-tilde-Pi}. One can verify $\tbs C_t^{(\ell)}\preceq\tbs C_\infty^{(\ell)}$. Thus,
\begin{equation*}
    L_\rmvar\left(\left\{\tbs C_t^{(\ell)}\right\}_{t\in\bbN}\right)\leq L_\rmvar\left(\left\{\tbs\Pi_t(\bs R^{(\ell)})\right\}_{t\in\bbN}\right)
\end{equation*}

We apply the results in Appendix~\ref{sec:core-dynamics} for $\bs M=\bs R^{(\ell)}$ to obtain the following lemmas.
\begin{lemma}\label{lemma:L-var-l-bound}
    We have
    \begin{equation*}
        L_\rmvar\left(\left\{\tbs C_t^{(\ell)}\right\}_{t\in\bbN}\right)\leq L_\rmvar\left(\left\{\tbs\Pi_t(\bs R^{(\ell)})\right\}_{t\in\bbN}\right)\leq\frac{64\tr(\bs\Sigma^{-1}\bs R^{(\ell)})}{T}.
    \end{equation*}
\end{lemma}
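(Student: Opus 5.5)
The lemma has two inequalities, and I would prove them in order: first a domination $\tbs C_t^{(\ell)}\preceq\tbs\Pi_t(\bs R^{(\ell)})$ for every $t$, which with the monotonicity \eqref{eq:increasing-L-var} of $L_\rmvar$ gives the first inequality, and then the second inequality, which is Lemma~\ref{lemma:Pi-average-bound} applied with $\bs M=\bs R^{(\ell)}$. That application needs $\bs R^{(\ell)}$ to be PSD, so I first check that $(\cM-\tilde{\cM})\circ\bs X$ is PSD whenever $\bs X$ is PSD. Writing $(\cM-\tilde{\cM})\circ\bs X=\bbE[\bs a\bs a^\top\bs X\bs a\bs a^\top]-\bs\Sigma\bs X\bs\Sigma$, this is the covariance of the random matrix $\bs a\bs a^\top\bs X^{1/2}$ (in the sense $\bbE\bs Y\bs Y^\top-\bbE\bs Y\,\bbE\bs Y^\top$), hence PSD. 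Because $\tbs C_{\infty,22}^{(\ell-1)}$ is a diagonal block of the PSD matrix $\tbs\Pi_\infty(\cdot)$, it is PSD, and so $\bs R^{(\ell)}$ is PSD.

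For the domination I would use induction on $\ell$, with a nested induction on $t$. For the base case $\ell=0$, $\tbs C_t=\tbs\Pi_t(\bs R)\preceq\tbs\Pi_\infty(\bs R)=\tbs C_\infty^{(0)}$ by the monotonicity of $\tbs\Pi_t$ established before \eqref{eq:stationary-tilde-Pi}.

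For the inductive step, assume $\tbs C_s^{(\ell-1)}\preceq\tbs C_\infty^{(\ell-1)}$ for all $s$. From the block formula for $(\cB-\tilde{\cB})$,
\begin{equation*}
(\cB-\tilde{\cB})\circ\tbs C_{t-1}^{(\ell-1)}=\begin{pmatrix}\eta^2\bs N & \eta q\bs N\\ \eta q\bs N & q^2\bs N\end{pmatrix},\qquad \bs N=(\cM-\tilde{\cM})\circ\tbs C_{t-1,22}^{(\ell-1)}.
\end{equation*}
Here $\bs N\preceq\bs R^{(\ell)}$, because $(\cM-\tilde{\cM})$ is order preserving by the covariance argument above and the $22$ block inherits the Loewner order. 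The map $\bs N\mapsto\begin{pmatrix}\eta^2\bs N&\eta q\bs N\\ \eta q\bs N&q^2\bs N\end{pmatrix}=(v v^\top)\otimes\bs N$, with $v=(\eta,q)^\top$, is also order preserving. Unrolling the recursion \eqref{eq:def-var-tilde-C-l} then gives
\begin{equation*}
\tbs C_t^{(\ell)}=\sum_{s=0}^{t-1}\bs A^{t-1-s}\bigl[(\cB-\tilde{\cB})\circ\tbs C_{s}^{(\ell-1)}\bigr](\bs A^{t-1-s})^\top\preceq\sum_{k=0}^{t-1}\bs A^k\bigl[(v v^\top)\otimes\bs R^{(\ell)}\bigr](\bs A^k)^\top=\tbs\Pi_t(\bs R^{(\ell)}),
\end{equation*}
and this is $\preceq\tbs\Pi_\infty(\bs R^{(\ell)})=\tbs C_\infty^{(\ell)}$, which closes the induction. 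The per-$t$ bound $\tbs C_t^{(\ell)}\preceq\tbs\Pi_t(\bs R^{(\ell)})$ is exactly what the first inequality needs.

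The main obstacle is this PSD and order-preservation property of $\cM-\tilde{\cM}$; the covariance representation resolves it. Convergence of $\tbs\Pi_\infty$ also has to be confirmed, which needs $\bs A$ stable under the hyperparameter choice (Lemma~\ref{lemma:radius-bound}). The remaining work is to substitute $\bs M=\bs R^{(\ell)}$ into Lemma~\ref{lemma:Pi-average-bound}.
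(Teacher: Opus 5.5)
Your route is the paper's: per-$t$ domination, monotonicity of $L_\rmvar$, then Lemma~\ref{lemma:Pi-average-bound} with $\bs M=\bs R^{(\ell)}$. Your nested induction is actually sharper than the paper's remark $\tbs C_t^{(\ell)}\preceq\tbs C_\infty^{(\ell)}$, which by itself does not compare $\tbs C_t^{(\ell)}$ with $\tbs\Pi_t(\bs R^{(\ell)})$ at each $t$.

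The gap is the step you borrow, \eqref{eq:increasing-L-var}. $L_\rmvar$ is \emph{not} monotone on arbitrary PSD sequences, because the cross terms $\bs A^{s-t}\bs X_t+\bs X_t(\bs A^{s-t})^\top$ with $s>t$ need not be PSD. For a counterexample, take $d=1$, $\bs\Sigma=\lambda$, $T\ge 4$, $\bs X_t=\bs O$ for $t\neq T-1$, and $\bs X_{T-1}=\bs X=\begin{pmatrix}1&-1\\-1&1\end{pmatrix}\succeq\bs O$. Only the pairs $(T-1,T-1)$, $(T,T-1)$, $(T-1,T)$ contribute, and $\bs A_{11}=0$, $\bs A_{12}=1-\eta\lambda$, so
\[
\frac{T^2}{4}L_\rmvar(\{\bs X_t\}_{t\in\bbN})=\lambda\bigl(X_{11}+2(1-\eta\lambda)X_{21}\bigr)=-\lambda(1-2\eta\lambda)<0=L_\rmvar(\{\bs O\}_{t\in\bbN}).
\]
Hence Loewner domination $\tbs C_t^{(\ell)}\preceq\tbs\Pi_t(\bs R^{(\ell)})$ alone does not give the first inequality, and citing \eqref{eq:increasing-L-var} leaves it unproved. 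The paper's proof has the same defect.

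The repair is already in your unrolling: compare the injected noise, not the matrices themselves. Let $\bs N_s=(\cM-\tilde{\cM})\circ\tbs C_{s,22}^{(\ell-1)}$ and $\bs D_t=\tbs\Pi_t(\bs R^{(\ell)})-\tbs C_t^{(\ell)}$. Your computation gives $\bs D_t=\sum_{j=1}^{t}\bs A^{t-j}\bs G_j(\bs A^{t-j})^\top$ with $\bs G_j=(vv^\top)\otimes(\bs R^{(\ell)}-\bs N_{j-1})\succeq\bs O$, using your bound $\bs N_{j-1}\preceq\bs R^{(\ell)}$. For a sequence of this form, the $(s,t)$ summand in \eqref{eq:def-L-var} equals $\sum_{j\le\min\{s,t\}}\bs A^{s-j}\bs G_j(\bs A^{t-j})^\top$. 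Therefore
\[
\sum_{s,t=T/2+1}^{T}\bs A^{\max\{s-t,0\}}\bs D_{\min\{s,t\}}\bigl(\bs A^{\max\{t-s,0\}}\bigr)^\top=\sum_{j=1}^{T}\bs P_j\bs G_j\bs P_j^\top\succeq\bs O,\qquad \bs P_j=\sum_{t=\max\{j,T/2+1\}}^{T}\bs A^{t-j},
\]
so $L_\rmvar(\{\bs D_t\}_{t\in\bbN})\ge 0$, and linearity (Lemma~\ref{lemma:L-var-linearity}) gives the first inequality. Put differently, $L_\rmvar$ is an expected squared norm of a tail average only for covariance sequences of $\bs A$-driven processes with PSD injections, and $\bs D_t$ is such a sequence.

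The rest of your argument is correct: PSD-ness and order preservation of $\cM-\tilde{\cM}$ via the covariance of $\bs a\bs a^\top\bs X^{1/2}$, stability of $\bs A$ for the existence of $\tbs\Pi_\infty$, and the substitution into Lemma~\ref{lemma:Pi-average-bound}.
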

\begin{lemma}\label{lemma:var-tilde-C-l-bound}
    We have
    \begin{equation*}
    \begin{aligned}
        \left\langle\begin{pmatrix}
            \bs O & \bs O \\
            \bs O & \bs\Sigma
        \end{pmatrix},\tbs C_t^{(\ell)}\right\rangle&\leq\left\langle\begin{pmatrix}
            \bs O & \bs O \\
            \bs O & \bs\Sigma
        \end{pmatrix},\tbs C_\infty^{(\ell)}\right\rangle\leq\frac{\tr(\bs\Sigma^{-1}\bs R^{(\ell)})}{4\tilde{\kappa}}+4\eta\tr\bs R^{(\ell)}.
    \end{aligned}
    \end{equation*}
\end{lemma}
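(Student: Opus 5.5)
The statement has two inequalities. The first, $\langle \mathrm{diag}(\bs O,\bs\Sigma),\tbs C_t^{(\ell)}\rangle\leq\langle\mathrm{diag}(\bs O,\bs\Sigma),\tbs C_\infty^{(\ell)}\rangle$, follows from a monotone comparison $\tbs C_t^{(\ell)}\preceq\tbs C_\infty^{(\ell)}$, proved by induction on $\ell$. The second inequality then comes from applying Lemma~\ref{lemma:Pi-stationary-bound} with $\bs M=\bs R^{(\ell)}$, since $\tbs C_\infty^{(\ell)}=\tbs\Pi_\infty(\bs R^{(\ell)})$ by definition. So the proof reduces to two facts: the comparison, and that $\bs R^{(\ell)}$ is PSD, so that Lemma~\ref{lemma:Pi-stationary-bound} applies.

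\textbf{Monotonicity of the forcing operator.} First I check that $(\cB-\tilde{\cB})$ maps PSD matrices to PSD matrices and is monotone. By the block formula in Appendix~\ref{sec:covariance-dynamics}, $(\cB-\tilde{\cB})\circ\bs X$ only involves $(\cM-\tilde{\cM})\circ\bs X_{22}$, tensored with the PSD $2\times2$ matrix $\begin{pmatrix}\eta^2&\eta q\\ \eta q&q^2\end{pmatrix}$. For a PSD $\bs X_{22}$,
\begin{equation*}
(\cM-\tilde{\cM})\circ\bs X_{22}=\bbE\big[\bs a\bs a^\top\bs X_{22}\bs a\bs a^\top\big]-\bs\Sigma\bs X_{22}\bs\Sigma .
\end{equation*}
This is the covariance of the random matrix $\bs a\bs a^\top\bs X_{22}^{1/2}$ and hence PSD. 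By linearity it is also monotone in $\bs X_{22}$. Consequently $\bs R^{(\ell)}\succeq\bs O$ whenever $\tbs C_{\infty,22}^{(\ell-1)}\succeq\bs O$, and the latter holds because each $\tbs\Pi_\infty(\cdot)$ of a PSD argument is PSD.

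\textbf{Induction on $\ell$.} The base case is $\tbs C_t=\tbs\Pi_t(\bs R)\preceq\tbs\Pi_\infty(\bs R)=\tbs C_\infty^{(0)}$, which holds by the monotonicity lemma for $\tbs\Pi_t$. For the inductive step, assume $\tbs C_s^{(\ell-1)}\preceq\tbs C_\infty^{(\ell-1)}$ for all $s$. Unrolling \eqref{eq:def-var-tilde-C-l} gives
\begin{equation*}
\tbs C_t^{(\ell)}=\sum_{k=0}^{t-1}\bs A^k\big[(\cB-\tilde{\cB})\circ\tbs C_{t-1-k}^{(\ell-1)}\big](\bs A^k)^\top .
\end{equation*}
Each bracket is dominated by $(\cB-\tilde{\cB})\circ\tbs C_\infty^{(\ell-1)}$, which equals the $\tbs\Pi$ forcing term with $\bs M=\bs R^{(\ell)}$. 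Since congruence by $\bs A^k$ preserves the PSD order, it follows that
\begin{equation*}
\tbs C_t^{(\ell)}\preceq\tbs\Pi_t(\bs R^{(\ell)})\preceq\tbs\Pi_\infty(\bs R^{(\ell)})=\tbs C_\infty^{(\ell)}.
\end{equation*}
Pairing with the PSD matrix $\mathrm{diag}(\bs O,\bs\Sigma)$ preserves the inequality, which gives the first bound.

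\textbf{Main obstacle.} The one step needing care is the PSD/monotone property of $\cM-\tilde{\cM}$, via the covariance representation above. The rest is bookkeeping. Lemma~\ref{lemma:Pi-stationary-bound} only used the diagonal entries $m_i$ of $\bs M$, whose sums give $\tr(\bs\Sigma^{-1}\bs M)$ and $\tr\bs M$. Its hyperparameter conditions do not depend on $\bs M$, so it applies verbatim for $\bs M=\bs R^{(\ell)}$.
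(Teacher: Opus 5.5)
Your proposal is correct and follows the paper's route. The paper asserts $\tbs C_t^{(\ell)}\preceq\tbs C_\infty^{(\ell)}$ without proof (``one can verify'') and then applies Lemma~\ref{lemma:Pi-stationary-bound} with $\bs M=\bs R^{(\ell)}$. Your induction on $\ell$, using that $\cB-\tilde{\cB}$ is a covariance-type operator (PSD-preserving and monotone), supplies exactly the verification the paper leaves implicit.
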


We define
\begin{equation*}
    \sigma^{(\ell)}=\left\langle\begin{pmatrix}
            \bs O & \bs O \\
            \bs O & \bs\Sigma
        \end{pmatrix},\tbs C_\infty^{(\ell)}\right\rangle.
\end{equation*}
The following lemma is an implication of Assumption~\ref{assumption:fourth-moment}, which is the first step to relate $\sigma^{(\ell-1)}$ to $\sigma^{(\ell)}$.
\begin{lemma}\label{lemma:var-fourth-moment}
    Suppose Assumption~\ref{assumption:fourth-moment} holds, then we have
    \begin{equation*}
        \tr\bs R^{(\ell)}\leq R^2\sigma^{(\ell-1)},\quad\tr(\bs\Sigma^{-1}\bs R^{(\ell)})\leq\tilde{\kappa}\sigma^{(\ell-1)}.
    \end{equation*}
\end{lemma}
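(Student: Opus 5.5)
The plan is to unfold the definition $\bs R^{(\ell)}=(\cM-\tilde{\cM})\circ\tbs C_{\infty,22}^{(\ell-1)}$ and drop the subtracted term by positivity. Write $\bs P\defeq\tbs C_{\infty,22}^{(\ell-1)}$. This matrix is PSD, since $\tbs C_\infty^{(\ell-1)}=\tbs\Pi_\infty(\bs R^{(\ell-1)})$ is a sum of PSD matrices; this uses $\bs R^{(\ell-1)}\succeq\bs O$, which holds by induction and is checked in the same step. Then
\begin{equation*}
    \bs R^{(\ell)}=\bbE\left[(\bs a^\top\bs P\bs a)\,\bs a\bs a^\top\right]-\bs\Sigma\bs P\bs\Sigma .
\end{equation*}
For the traces, linearity of trace and expectation gives $\tr\bs R^{(\ell)}=\bbE[(\bs a^\top\bs P\bs a)\|\bs a\|^2]-\tr(\bs\Sigma\bs P\bs\Sigma)$. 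The subtracted term is nonnegative because $\bs\Sigma\bs P\bs\Sigma\succeq\bs O$, so it can be discarded. Similarly, $\tr(\bs\Sigma^{-1}\bs R^{(\ell)})=\bbE[(\bs a^\top\bs P\bs a)\|\bs a\|_{\bs\Sigma^{-1}}^2]-\tr(\bs P\bs\Sigma)$, and again the subtracted term is nonnegative.

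Next I will apply Assumption~\ref{assumption:fourth-moment} in its matrix form by pairing with the PSD matrix $\bs P$. We have $\bbE[(\bs a^\top\bs P\bs a)\|\bs a\|^2]=\langle\bs P,\bbE\|\bs a\|^2\bs a\bs a^\top\rangle\leq R^2\langle\bs P,\bs\Sigma\rangle$. Likewise, $\bbE[(\bs a^\top\bs P\bs a)\|\bs a\|_{\bs\Sigma^{-1}}^2]\leq\tilde{\kappa}\langle\bs P,\bs\Sigma\rangle$. Both steps use the fact that $\langle\bs P,\bs X\rangle\leq\langle\bs P,\bs Y\rangle$ whenever $\bs X\preceq\bs Y$ and $\bs P\succeq\bs O$.

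Finally, I identify $\langle\bs P,\bs\Sigma\rangle=\left\langle\begin{pmatrix}\bs O&\bs O\\ \bs O&\bs\Sigma\end{pmatrix},\tbs C_\infty^{(\ell-1)}\right\rangle=\sigma^{(\ell-1)}$, which gives both claimed bounds.

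The only delicate point is the PSD-ness needed to drop terms and pair with inequalities. Nonnegativity of $\bs R^{(\ell)}$ itself is not needed for these two trace bounds, but it is needed for the PSD-ness of $\tbs C^{(\ell)}_\infty$ at the next level. I would justify it from $(\cM-\tilde{\cM})\circ\bs P=\bbE[(\bs a\bs a^\top-\bs\Sigma)\bs P(\bs a\bs a^\top-\bs\Sigma)]\succeq\bs O$. This identity holds because $\bbE[\bs a\bs a^\top\bs P\bs\Sigma]=\bs\Sigma\bs P\bs\Sigma$. It also requires reading $\cM\circ\bs P$ as $\bbE[\bs a\bs a^\top\bs P\bs a\bs a^\top]$, consistent with the $\otimes$ convention.
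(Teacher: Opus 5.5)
Your proof is correct and follows the paper's argument. You expand $\bs R^{(\ell)}=\bbE[(\bs a^\top\bs P\bs a)\,\bs a\bs a^\top]-\bs\Sigma\bs P\bs\Sigma$, drop the PSD subtracted term, pair Assumption~\ref{assumption:fourth-moment} with $\bs P=\tbs C_{\infty,22}^{(\ell-1)}$, and identify $\langle\bs P,\bs\Sigma\rangle=\sigma^{(\ell-1)}$ (an equality by definition). Your explicit checks that $\bs P\succeq\bs O$ and that $\bs R^{(\ell)}=\bbE[(\bs a\bs a^\top-\bs\Sigma)\bs P(\bs a\bs a^\top-\bs\Sigma)]\succeq\bs O$ supply positivity facts the paper uses only implicitly.
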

\begin{proof}
    By the definition of $\bs R^{(\ell)}$,
    \begin{equation*}
        \bs R^{(\ell)}=(\cM-\tilde{\cM})\circ\tbs C_{\infty,22}^{(\ell-1)}=\bbE\left(\|\bs a\|_{\tbs C_{\infty,22}^{(\ell-1)}}^2\bs a\bs a^\top\right)-\bs\Sigma\tbs C_{\infty,22}^{(\ell-1)}\bs\Sigma.
    \end{equation*}
    We drop the last term, which is a PSD matrix, so
    \begin{align*}
        \tr\bs R^{(\ell)}&\leq\bbE\left(\|\bs a\|^2\|\bs a\|_{\tbs C_{\infty,22}^{(\ell-1)}}^2\right)=\left\langle\bbE\left(\|\bs a\|^2\bs a\bs a^\top\right),\tbs C_{\infty,22}^{(\ell-1)}\right\rangle \\
        &\stackrel{a}{\leq}\left\langle R^2\bs\Sigma,\tbs C_{\infty,22}^{(\ell-1)}\right\rangle=R^2\left\langle\begin{pmatrix}
            \bs O & \bs O \\
            \bs O & \bs\Sigma
        \end{pmatrix},\tbs C_\infty^{(\ell-1)}\right\rangle\stackrel{b}{\leq}R^2\sigma^{(\ell-1)}. \\
        \tr(\bs\Sigma^{-1}\bs R^{(\ell)})&\leq\bbE\left(\|\bs a\|_{\bs\Sigma^{-1}}^2\|\bs a\|_{\tbs C_{\infty,22}^{(\ell-1)}}^2\right)=\left\langle\bbE\left(\|\bs a\|_{\bs\Sigma^{-1}}^2\bs a\bs a^\top\right),\tbs C_{\infty,22}^{(\ell-1)}\right\rangle \\
        &\stackrel{a}{\leq}\left\langle\tilde{\kappa}\bs\Sigma,\tbs C_{\infty,22}^{(\ell-1)}\right\rangle=\tilde{\kappa}\left\langle\begin{pmatrix}
            \bs O & \bs O \\
            \bs O & \bs\Sigma
        \end{pmatrix},\tbs C_\infty^{(\ell-1)}\right\rangle\stackrel{b}{\leq}\tilde{\kappa}\sigma^{(\ell-1)}.
    \end{align*}
    where $\stackrel{a}{\leq}$ is from Assumption~\ref{assumption:fourth-moment}, and $\stackrel{b}{\leq}$ uses Lemma~\ref{lemma:var-tilde-C-l-bound}.
\end{proof}

Combine Lemma~\ref{lemma:var-tilde-C-l-bound} and Lemma~\ref{lemma:var-fourth-moment}, we relate $\sigma^{(\ell-1)}$ to $\sigma^{(\ell)}$ as follows.
\begin{lemma}\label{lemma:var-sigma-l-recursion}
    We have $\sigma^{(\ell)}\leq\sigma^{(\ell-1)}/2$ for all $\ell\geq 1$.
\end{lemma}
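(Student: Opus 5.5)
The plan is to chain Lemma~\ref{lemma:var-tilde-C-l-bound} with Lemma~\ref{lemma:var-fourth-moment}, then use the hyperparameter constraint $\eta\le 1/(16R^2)$ from Appendix~\ref{sec:hyperparameter-choice}. Fix $\ell\ge 1$. By definition $\sigma^{(\ell)}=\langle\mathrm{diag}(\bs O,\bs\Sigma),\tbs C_\infty^{(\ell)}\rangle$ and $\tbs C_\infty^{(\ell)}=\tbs\Pi_\infty(\bs R^{(\ell)})$. Lemma~\ref{lemma:Pi-stationary-bound}, applied with $\bs M=\bs R^{(\ell)}$, gives
\begin{equation*}
\sigma^{(\ell)}\le\frac{\tr(\bs\Sigma^{-1}\bs R^{(\ell)})}{4\tilde{\kappa}}+4\eta\tr\bs R^{(\ell)} .
\end{equation*}
This application is legitimate: $\bs R^{(\ell)}=\bbE(\|\bs a\|^2_{\tbs C^{(\ell-1)}_{\infty,22}}\bs a\bs a^\top)-\bs\Sigma\tbs C^{(\ell-1)}_{\infty,22}\bs\Sigma$ is PSD. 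Indeed, it is the covariance-type expression $\bbE[\bs a\bs a^\top\bs C\bs a\bs a^\top]-\bbE[\bs a\bs a^\top]\bs C\bbE[\bs a\bs a^\top]$ with $\bs C\succeq 0$, and this is PSD by Jensen applied to the PSD-valued map $\bs a\bs a^\top\bs C\bs a\bs a^\top$. For the diagonal-entry formula used there, I would only need the diagonal entries of $\bs M$ in the eigenbasis of $\bs\Sigma$. Since $\tbs\Pi_\infty$ is linear and the blocks of $\bs A$ are diagonal, the $i$-th $2\times2$ block depends only on $m_i$, as in the proof of that lemma.

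Next I plug in Lemma~\ref{lemma:var-fourth-moment}, namely $\tr(\bs\Sigma^{-1}\bs R^{(\ell)})\le\tilde{\kappa}\sigma^{(\ell-1)}$ and $\tr\bs R^{(\ell)}\le R^2\sigma^{(\ell-1)}$. This yields
\begin{equation*}
\sigma^{(\ell)}\le\Bigl(\frac14+4\eta R^2\Bigr)\sigma^{(\ell-1)}\le\Bigl(\frac14+\frac14\Bigr)\sigma^{(\ell-1)}=\frac{\sigma^{(\ell-1)}}{2},
\end{equation*}
where the second inequality uses $\eta\le 1/(16R^2)$.

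One point needs care: the proof of Lemma~\ref{lemma:var-fourth-moment} cites Lemma~\ref{lemma:var-tilde-C-l-bound} at step (b), and this could look circular. In fact step (b) is just the identity $\langle R^2\bs\Sigma,\tbs C^{(\ell-1)}_{\infty,22}\rangle=R^2\sigma^{(\ell-1)}$, by the definition of $\sigma^{(\ell-1)}$. The base case $\ell=1$ uses $\tbs C^{(0)}_\infty=\tbs C_\infty$, so $\sigma^{(0)}$ is finite by Lemma~\ref{lemma:var-tilde-C-bound}.

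The main obstacle is the PSD and diagonal-reduction justification for $\bs R^{(\ell)}$, since $\tbs C^{(\ell-1)}_{\infty,22}$ need not be diagonal. Once that is settled, the contraction is an immediate two-line calculation.
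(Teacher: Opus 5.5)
Your proof is correct and follows the paper's argument exactly: you apply Lemma~\ref{lemma:var-tilde-C-l-bound} (equivalently Lemma~\ref{lemma:Pi-stationary-bound} with $\bs M=\bs R^{(\ell)}$), substitute the two trace bounds of Lemma~\ref{lemma:var-fourth-moment}, and use $\eta\le 1/(16R^2)$ to get the factor $\frac14+\frac14=\frac12$. Your extra checks are all valid and simply make explicit what the paper leaves implicit: that $\bs R^{(\ell)}\succeq 0$ (so the diagonal entries $m_i$ are nonnegative, which the per-coordinate bound needs), that only the diagonal of $\bs M$ in the eigenbasis of $\bs\Sigma$ matters, and that step (b) in Lemma~\ref{lemma:var-fourth-moment} is just the definition of $\sigma^{(\ell-1)}$ rather than a circular appeal.
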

\begin{proof}
    Note that
    \begin{equation*}
    \begin{aligned}
        \sigma^{(\ell)}\stackrel{a}{\leq}\frac{\tr(\bs\Sigma^{-1}\bs R^{(\ell)})}{4\tilde{\kappa}}+4\eta\tr\bs R^{(\ell)}\stackrel{a}{\leq}\frac{1}{4\tilde{\kappa}}\cdot\tilde{\kappa}\sigma^{(\ell-1)}+4\eta\cdot R^2\sigma^{(\ell-1)}\stackrel{b}{\leq}\frac{1}{2}\sigma^{(\ell-1)},
    \end{aligned}
    \end{equation*}
    where $\stackrel{a}{\leq}$ uses Lemma~\ref{lemma:var-tilde-C-l-bound}, $\stackrel{b}{\leq}$ applies Lemma~\ref{lemma:var-fourth-moment}, and $\stackrel{c}{\leq}$ uses $\eta\leq 1/(16R^2)$.
\end{proof}

The follow lemma bounds $L_\rmvar(\{\tbs C_t^{(\ell)}\}_{t\in\bbN})$ for $\ell\geq 1$
\begin{lemma}\label{lemma:L-var-bound-C-l}
    We have
    \begin{equation*}
    \begin{aligned}
        L_\rmvar\left(\left\{\tbs C_t^{(\ell)}\right\}_{t\in\bbN}\right)&\leq\frac{64\tilde{\kappa}}{2^{\ell-1}T}\bigg(\frac{5\tr(\bs\Sigma^{-1}\bs Q)+5L_\ell(L+\tilde{\kappa})(F(\tbs y)-F(\bs x^*))}{4\tilde{\kappa}} \\
        &\phrel+20\eta\tr\bs Q+20\eta L_\ell(B+R^2)(F(\tbs y)-F(\bs x^*))\bigg).
    \end{aligned}
    \end{equation*}
\end{lemma}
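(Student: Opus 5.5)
The bound follows by chaining the three preceding lemmas into a geometric recursion started from the layer-$0$ stationary variance.

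First, by Lemma~\ref{lemma:L-var-l-bound}, for every $\ell\geq 1$ we have
\begin{equation*}
L_\rmvar(\{\tbs C_t^{(\ell)}\}_{t\in\bbN})\leq\frac{64\tr(\bs\Sigma^{-1}\bs R^{(\ell)})}{T}.
\end{equation*}
Lemma~\ref{lemma:var-fourth-moment} then gives $\tr(\bs\Sigma^{-1}\bs R^{(\ell)})\leq\tilde{\kappa}\sigma^{(\ell-1)}$. Combining the two,
\begin{equation*}
L_\rmvar(\{\tbs C_t^{(\ell)}\}_{t\in\bbN})\leq\frac{64\tilde{\kappa}\,\sigma^{(\ell-1)}}{T}.
\end{equation*}
This reduces the claim to controlling $\sigma^{(\ell-1)}$.

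Second, iterating Lemma~\ref{lemma:var-sigma-l-recursion}, which states $\sigma^{(j)}\leq\sigma^{(j-1)}/2$, from $j=1$ to $j=\ell-1$ yields $\sigma^{(\ell-1)}\leq 2^{-(\ell-1)}\sigma^{(0)}$. This is valid for $\ell=1$ trivially, and it produces exactly the $2^{\ell-1}$ in the denominator of the statement.

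Third, $\sigma^{(0)}=\langle\mathrm{diag}(\bs O,\bs\Sigma),\tbs C_\infty\rangle$ with $\tbs C_\infty=\tbs\Pi_\infty(\bs R)$. I bound it using Lemma~\ref{lemma:Pi-stationary-bound} with $\bs M=\bs R$, which gives
\begin{equation*}
\sigma^{(0)}\leq\frac{\tr(\bs\Sigma^{-1}\bs R)}{4\tilde{\kappa}}+4\eta\tr\bs R.
\end{equation*}
I then substitute Lemma~\ref{lemma:noise-bound-Sigma-inverse}, $\tr(\bs\Sigma^{-1}\bs R)\leq 5\tr(\bs\Sigma^{-1}\bs Q)+5L_\ell(L+\tilde{\kappa})(F(\tbs y)-F(\bs x^*))$, and Lemma~\ref{lemma:noise-bound-2-norm}, $\tr\bs R\leq 5\tr\bs Q+5L_\ell(B+R^2)(F(\tbs y)-F(\bs x^*))$. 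With $4\eta\cdot 5=20\eta$, this reproduces exactly the bracketed expression in the statement; it is Lemma~\ref{lemma:var-tilde-C-bound} with the $L_\ell$ factors kept explicit.

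There is no real obstacle, only bookkeeping. One point needs care: Lemma~\ref{lemma:var-fourth-moment} and Lemma~\ref{lemma:var-sigma-l-recursion} are phrased for the stationary objects $\tbs C^{(\ell)}_\infty$ and $\bs R^{(\ell)}$ defined from $\tbs C^{(\ell-1)}_{\infty,22}$, not for the finite-time iterates. I will therefore make sure every step is stated for these stationary quantities, and pass to the finite-$t$ dynamics only through the domination $\tbs C_t^{(\ell)}\preceq\tbs\Pi_t(\bs R^{(\ell)})$ already used in Lemma~\ref{lemma:L-var-l-bound}. That domination follows by induction on $\ell$ and $t$, using the monotonicity of $\tbs\Pi_t$ and of $\cB-\tilde{\cB}$, where $\cB-\tilde{\cB}$ maps PSD matrices to PSD matrices since $\cM-\tilde{\cM}$ is the covariance of $\bs a\bs a^\top$.

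Multiplying the three steps together gives the stated inequality.
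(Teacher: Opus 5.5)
Your proposal is correct and follows the same chain as the paper's proof: Lemma~\ref{lemma:L-var-l-bound}, then Lemma~\ref{lemma:var-fourth-moment}, then the halving recursion of Lemma~\ref{lemma:var-sigma-l-recursion}, and finally the bound on $\sigma^{(0)}$ from Lemmas~\ref{lemma:Pi-stationary-bound}, \ref{lemma:noise-bound-Sigma-inverse} and~\ref{lemma:noise-bound-2-norm}. Your indexing $\tr(\bs\Sigma^{-1}\bs R^{(\ell)})\leq\tilde{\kappa}\sigma^{(\ell-1)}$ is the correct one (the paper's displayed chain writes $\sigma^{(\ell)}$ at that step, which is a slip), and keeping the $L_\ell$ factors explicit when bounding $\sigma^{(0)}$ is exactly what reproduces the bracket in the statement.
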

\begin{proof}
    By Lemma~\ref{lemma:var-tilde-C-bound}, we have
    \begin{equation*}
    \begin{aligned}
        \sigma^{(0)}\defeq
        \left\langle\begin{pmatrix}
            \bs O & \bs O \\
            \bs O & \bs\Sigma
        \end{pmatrix},\tbs C_\infty\right\rangle&\leq\frac{5\tr(\bs\Sigma^{-1}\bs Q)+5L_\ell(L+\tilde{\kappa})(F(\tbs y)-F(\bs x^*))}{4\tilde{\kappa}} \\
        &\phrel+20\eta L_\ell\tr\bs Q+20\eta(B+R^2)(F(\tbs y)-F(\bs x^*)).
    \end{aligned}
    \end{equation*}
    By Lemma~\ref{lemma:var-sigma-l-recursion}, we have
    \begin{equation*}
    \begin{aligned}
        L_\rmvar\left(\left\{\tbs C_t^{(\ell)}\right\}_{t\in\bbN}\right)&\leq L_\rmvar\left(\left\{\tbs\Pi_t(\bs R^{(\ell))}\right\}_{t\in\bbN}\right)\leq\frac{64\tr(\bs\Sigma^{-1}\bs R^{(\ell)})}{T}\leq\frac{64\tilde{\kappa}\sigma^{(\ell)}}{T}\leq\frac{64\tilde{\kappa}\sigma^{(0)}}{2^{\ell-1}T} \\
        &\leq\frac{64\tilde{\kappa}}{2^{\ell-1}T}\bigg(\frac{5\tr(\bs\Sigma^{-1}\bs Q)+5L_\ell(L+\tilde{\kappa})(F(\tbs y)-F(\bs x^*))}{4\tilde{\kappa}} \\
        &\phrel+20\eta\tr\bs Q+20\eta L_\ell(B+R^2)(F(\tbs y)-F(\bs x^*))\bigg),
    \end{aligned}
    \end{equation*}
    This completes the proof.
\end{proof}

Finally, we are ready to bound $\sum_{\ell=1}^{\infty}L_\rmvar(\{\tbs C_t^{(\ell)}\}_{t\in\bbN})$.
\begin{lemma}\label{lemma:C-1-bound}    
    Suppose Assumptions~\ref{assumption:fourth-moment}, \ref{assumption:gradient-noise-I} and \ref{assumption:gradient-noise} hold, then we have
    \begin{equation*}
    \begin{aligned}
        \sum_{\ell=1}^{\infty}L_\rmvar\left(\left\{\tbs C_t^{(\ell)}\right\}_{t\in\bbN}\right)&\leq\frac{640\tr(\bs\Sigma^{-1}\bs Q)+640L_\ell(L+\tilde{\kappa})(F(\tbs y)-F(\bs x^*))}{T} \\
        &\phrel+\frac{2560\eta\tilde{\kappa}\tr\bs Q+2560\eta\tilde{\kappa}L_\ell(B+R^2)(F(\tbs y)-F(\bs x^*))}{T}.        
    \end{aligned}
    \end{equation*}
\end{lemma}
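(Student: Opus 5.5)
The plan is to sum the per-layer bound of Lemma~\ref{lemma:L-var-bound-C-l} over $\ell\geq 1$ as a geometric series. Lemma~\ref{lemma:var-sigma-l-recursion} gives $\sigma^{(\ell)}\leq\sigma^{(\ell-1)}/2$. Lemmas~\ref{lemma:L-var-l-bound} and \ref{lemma:var-fourth-moment} then give, for each $\ell\geq1$,
\[
L_\rmvar\bigl(\{\tbs C_t^{(\ell)}\}_{t\in\bbN}\bigr)\leq\frac{64\tr(\bs\Sigma^{-1}\bs R^{(\ell)})}{T}\leq\frac{64\tilde{\kappa}\sigma^{(\ell-1)}}{T}\leq\frac{64\tilde{\kappa}\sigma^{(0)}}{2^{\ell-1}T}.
\]
Since $\sum_{\ell\geq1}2^{-(\ell-1)}=2$, summing yields $\sum_{\ell\geq1}L_\rmvar(\{\tbs C_t^{(\ell)}\}_{t\in\bbN})\leq 128\tilde{\kappa}\sigma^{(0)}/T$. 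Interchanging the sum with $L_\rmvar$ is harmless, because the terms are nonnegative and for each fixed $t$ only finitely many $\tbs C_t^{(\ell)}$ are nonzero (Lemma~\ref{lemma:layer-peeled-decomposition}).

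Next I substitute the bound on $\sigma^{(0)}=\langle\mathrm{diag}(\bs O,\bs\Sigma),\tbs C_\infty\rangle$. It comes from applying Lemma~\ref{lemma:Pi-stationary-bound} with $\bs M=\bs R$ and then using the noise decompositions of Lemmas~\ref{lemma:noise-bound-Sigma-inverse} and \ref{lemma:noise-bound-2-norm}:
\[
\sigma^{(0)}\leq\frac{5\tr(\bs\Sigma^{-1}\bs Q)+5L_\ell(L+\tilde{\kappa})(F(\tbs y)-F(\bs x^*))}{4\tilde{\kappa}}+20\eta\tr\bs Q+20\eta L_\ell(B+R^2)(F(\tbs y)-F(\bs x^*)).
\]
Multiplying by $128\tilde{\kappa}/T$ gives $128\cdot\tfrac54=160$ as the coefficient of $\tr(\bs\Sigma^{-1}\bs Q)$ and of $L_\ell(L+\tilde{\kappa})(F(\tbs y)-F(\bs x^*))$. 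It gives $128\cdot20=2560$ as the coefficient of $\eta\tilde{\kappa}\tr\bs Q$ and of $\eta\tilde{\kappa}L_\ell(B+R^2)(F(\tbs y)-F(\bs x^*))$. The first two constants, $160$, sit comfortably below the stated $640$, and the last two match exactly, so the claim follows.

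The only delicate point is bookkeeping of where $L_\ell$ appears. Lemmas~\ref{lemma:var-tilde-C-bound} and \ref{lemma:L-var-bound-C-l} as printed are inconsistent about whether $L_\ell$ multiplies $\tr\bs Q$ or the $(B+R^2)$ term. I will follow the normalization $h_k=1$ and the definitions of $\bs R$ and $\bs Q$ directly. In that normalization the noise bound of Lemma~\ref{lemma:noise-bound-2-norm} reads $\tr\bs R\leq5\tr\bs Q+5L_\ell(B+R^2)(F(\tbs y)-F(\bs x^*))$, so $L_\ell$ attaches to the excess-risk terms, which is the form in the target inequality. Beyond this, I expect no obstacle: the proof reduces to the geometric contraction already established, with the condition $\eta\leq1/(16R^2)$ entering only through Lemma~\ref{lemma:var-sigma-l-recursion}.
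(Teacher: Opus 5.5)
Your proposal is correct and follows the paper's argument. You bound each layer by $64\tilde{\kappa}\sigma^{(0)}/(2^{\ell-1}T)$ via Lemmas~\ref{lemma:L-var-l-bound}, \ref{lemma:var-fourth-moment} and \ref{lemma:var-sigma-l-recursion}, sum the geometric series, and substitute the bound on $\sigma^{(0)}$ obtained from Lemma~\ref{lemma:Pi-stationary-bound} with $\bs M=\bs R$ together with Lemmas~\ref{lemma:noise-bound-Sigma-inverse} and \ref{lemma:noise-bound-2-norm}. Your bookkeeping is cleaner than the printed version: you correctly use $\sigma^{(\ell-1)}$ rather than $\sigma^{(\ell)}$ in the per-layer step, you attach $L_\ell$ to the excess-risk terms, and you obtain the constant $160$, which the stated $640$ dominates.
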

\begin{proof}
    We apply Lemma~\ref{lemma:var-sigma-l-recursion},
    \begin{equation*}
    \begin{aligned}
        \sum_{\ell=1}^{\infty}L_\rmvar\left(\left\{\tbs C_t^{(\ell)}\right\}_{t\in\bbN}\right)&\leq\sum_{\ell=1}^\infty\frac{64\tilde{\kappa}}{2^{\ell-1}T}\bigg(\frac{5\tr(\bs\Sigma^{-1}\bs Q)+5L_\ell(L+\tilde{\kappa})(F(\tbs y)-F(\bs x^*))}{4\tilde{\kappa}} \\
        &\phrel+20\eta\tr\bs Q+20\eta L_\ell(B+R^2)(F(\tbs y)-F(\bs x^*))\bigg) \\
        &\leq\frac{640\tr(\bs\Sigma^{-1}\bs Q)+640 L_\ell(L+\tilde{\kappa})(F(\tbs y)-F(\bs x^*))}{T} \\
        &\phrel+\frac{2560\eta\tilde{\kappa}\tr\bs Q+2560\eta\tilde{\kappa} L_\ell(B+R^2)(F(\tbs y)-F(\bs x^*))}{T}.
    \end{aligned}
    \end{equation*}
    This completes the proof.
\end{proof}

\subsubsection{Proof of Lemma~\ref{lemma:v-bound}}\label{sec:proof-lemma-v-bound}
\begin{proof}[Proof of Lemma~\ref{lemma:v-bound}]
    By the definition of $\bs\eta_t^\rmvar$, we have $\bbE\bs\eta_t=\bs\eta_0=\bs 0$. Thus, by the definition of $\bs v$ in Definition~\ref{def:r-v}, we have $\bbE\bs v=\bs 0$. To bound $\bbE\|\bs v\|_{\bs\Sigma}^2$, we apply Lemma~\ref{lemma:v-decom},
    \begin{equation*}
        \bbE\|\bs v_k\|_{\bs H}^2=L_\rmvar\left(\left\{\tbs C_t\right\}_{t\in\bbN}\right)+\sum_{\ell=1}^\infty L_\rmvar\left(\left\{\tbs C_t^{(\ell)}\right\}_{t\in\bbN}\right).
    \end{equation*}
    
    For the bound of $\bs v_k$, we have
    \begin{equation*}
    \begin{aligned}
        \bbE\|\bs v_k\|_{\bs\Sigma}^2&=L_\rmvar\left(\left\{\tbs C_t\right\}_{t\in\bbN}\right)+\sum_{\ell=1}^\infty L_\rmvar\left(\left\{\tbs C_t^{(\ell)}\right\}_{t\in\bbN}\right) \\
        &\stackrel{a}{\leq}\frac{320\tr(\bs\Sigma^{-1}\bs Q)}{T}+\frac{320L_\ell(L+\tilde{\kappa})(F(\tbs y)-F(\bs x^*))}{T} \\
        &\phrel+\frac{640\tr(\bs\Sigma^{-1}\bs Q)+640L_\ell(L+\tilde{\kappa})(F(\tbs y)-F(\bs x^*))}{T} \\
        &\phrel+\frac{2560\eta\tilde{\kappa}\tr\bs Q+2560\eta\tilde{\kappa}L_\ell(B+R^2)(F(\tbs y)-F(\bs x^*))}{T}.        
    \end{aligned}
    \end{equation*}
    where $\stackrel{a}{\leq}$ applies Lemma~\ref{lemma:L-var-C-tilde} and Lemma~\ref{lemma:C-1-bound}. This completes the proof.
\end{proof}

\subsection{Bias Upper Bound}\label{sec:bias-bound}
The goal of this section is to prove the following lemma.
\begin{lemma}[Bias Upper Bound]\label{lemma:r-bound}
    Suppose Assumptions~\ref{assumption:l-condition}, \ref{assumption:regularity}, \ref{assumption:fourth-moment}, \ref{assumption:gradient-noise-I} and \ref{assumption:gradient-noise} hold. Let $\bs r$ defined in Definition~\ref{def:r-v}. Then we have
    \begin{equation*}
        \bbE\|\bs r\|_{\bs\Sigma}^2\leq\frac{L_\ell}{8\alpha}\|\nabla F(\tbs y_{k-1})\|_{\bs\Sigma^{-1}}^2,
    \end{equation*}
    where the expectation is taken with respect to the samples drawn in the $k$-th outer iteration.
\end{lemma}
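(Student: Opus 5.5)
We bound the bias iterate by following the accelerated SGD analysis of \citet{jain2018accelerating}, adapted to the noiseless dynamics $\bs\eta_t^\rmbias=\hbs A_t\bs\eta_{t-1}^\rmbias$. Recall that $h_k=1$ in this section. With this normalization, $\tbs x_+^*=\tbs y-\bs\Sigma^{-1}\nabla F(\tbs y)$, so the initial condition has both blocks equal to $\tbs y-\tbs x_+^*=\bs\Sigma^{-1}\nabla F(\tbs y)$, and
\begin{equation*}
\|\tbs y-\tbs x_+^*\|_{\bs\Sigma}^2=\|\nabla F(\tbs y)\|_{\bs\Sigma^{-1}}^2 .
\end{equation*}
It therefore suffices to show that the tail average satisfies
\begin{equation*}
\bbE\Big\|\tfrac{2}{T}\sum_{t=T/2+1}^T(\bs\eta_t^\rmbias)_1\Big\|_{\bs\Sigma}^2\leq \frac{1}{8\alpha}\cdot\text{(initial $\bs\Sigma$-error)}.
\end{equation*}
Restoring $h_k$ then gives the factor $L_\ell h_k/(8\alpha)$ in Lemma~\ref{lemma:inner-loop}, since $\bs H=L_\ell\bs\Sigma$.

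The first step is to bound the tail average by the worst iterate. By Jensen's inequality, $\bbE\|\bs r\|_{\bs\Sigma}^2\leq\max_{T/2<t\leq T}\bbE\|(\bs\eta_t^\rmbias)_1\|_{\bs\Sigma}^2$, so we need only show per-iterate contraction after $T/2$ steps. For this we use the potential from \citet{jain2018accelerating}:
\begin{equation*}
\Phi_t=\|\bs x_t-\tbs x_+^*\|_{\bs\Sigma}^2+\mu\tilde{\kappa}\cdot c'\,\|\bs z_t-\tbs x_+^*\|_{\bs\Sigma^{-1}}^2 ,
\end{equation*}
where $\bs z_t$ is recovered from $(\bs x_t,\bs y_t)$. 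We then prove the one-step bound
\begin{equation*}
\bbE\Phi_t\leq\Bigl(1-c''\sqrt{\mu\eta/\tilde{\kappa}}\Bigr)\Phi_{t-1}
\end{equation*}
under the choice \eqref{eq:param-choice-inner}. The proof is the standard accelerated coordinate-descent argument. The $\bs x$-step is an SGD step with $\eta\leq1/(16R^2)$, and it gives descent $\tfrac{\eta}{2}\|\bs\Sigma(\bs y-\tbs x_+^*)\|^2$ using $\bbE\|\bs a\|^2\bs a\bs a^\top\preceq R^2\bs\Sigma$. The $\bs z$-step is a mirror step in the $\bs\Sigma^{-1}$ geometry. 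Its second-moment term is controlled through $\bbE\|\bs a\|^2_{\bs\Sigma^{-1}}\bs a\bs a^\top\preceq\tilde{\kappa}\bs\Sigma$, which is exactly where $\gamma=\tfrac14\sqrt{\eta/(\tilde{\kappa}\mu)}$ and $\theta=\tfrac14\sqrt{\mu\eta/\tilde{\kappa}}$ balance.

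Next we compare $\Phi_0$ with the initial error. At $t=0$ we have $\bs z_0=\bs x_0=\tbs y$, so
\begin{equation*}
\Phi_0\leq\|\bs\eta_0\|^2_{\bs\Sigma}\bigl(1+c'\mu\tilde{\kappa}/\mu\bigr)\lesssim\tilde{\kappa}\,\kappa(\bs\Sigma)\cdot\|\tbs y-\tbs x_+^*\|^2_{\bs\Sigma}.
\end{equation*}
This is crude, but it only costs logarithms. Iterating the contraction for $t\geq T/2$ steps gives a factor $\exp(-c''T\sqrt{\mu\eta/\tilde{\kappa}}/2)$. The hyperparameter choice $T\geq\sqrt{\tilde{\kappa}/(\mu\eta)}\ln\kappa(\bs\Sigma)\ln(4/\tilde{\theta}_K^2)$ makes this at most $\tilde{\theta}_K^2/(\text{poly}\cdot\kappa(\bs\Sigma))$. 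It remains to check this against the target $1/(8\alpha)$. By the definitions of $h_k$ and $\tilde\theta_k$, we have $\tilde{\theta}_K^2\lesssim 1/\alpha$, so the target is met, with the $\tilde\kappa$ factor absorbed by the $\ln\kappa(\bs\Sigma)$ slack or by a constant adjustment.

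The main obstacle is establishing the potential contraction cleanly for the specific parametrization $(c,q)$ in \eqref{eq:inner-matrix-form}, and in particular handling the cross terms between the $\bs x$ and $\bs z$ updates caused by the fourth-moment fluctuation $\bs a\bs a^\top-\bs\Sigma$. I would first try to import Lemma/Theorem of \citet{jain2018accelerating} (the bias contraction of accelerated SGD under the $R^2,\tilde\kappa$ conditions) essentially verbatim. Failing that, I would redo their two-step argument with our constants, checking that $\eta\leq1/(16R^2)$ and the factor $1/4$ in $\gamma,\theta$ leave enough room. A secondary care point is the $\kappa(\bs\Sigma)$ loss when converting between $\Phi$ and the $\bs\Sigma$-norm, which is exactly why $\ln\kappa(\bs\Sigma)$ appears in $T$.
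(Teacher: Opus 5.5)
Your outline has the same shape as the paper's argument: Jensen over the tail average, a per-step accelerated contraction, and a condition-number loss from norm conversions at the two ends, paid for by the $\ln\kappa(\bs\Sigma)$ factor in $T$. The gap is that your Lyapunov function is the wrong one, and the inequality you use to justify its contraction is false. With $\bs v=\bs y_{t-1}-\tbs x_+^*$, the $\bs x$-step gives
\begin{equation*}
\bbE_{t-1}\|\bs x_t-\tbs x_+^*\|_{\bs\Sigma}^2=\|\bs v\|_{\bs\Sigma}^2-2\eta\|\bs\Sigma\bs v\|^2+\eta^2\,\bbE\bigl[\|\bs a\|_{\bs\Sigma}^2(\bs a^\top\bs v)^2\bigr].
\end{equation*}
The condition $\bbE\|\bs a\|^2\bs a\bs a^\top\preceq R^2\bs\Sigma$ bounds the last term only by $\eta^2\lambda_\rmmax(\bs\Sigma)R^2\|\bs v\|_{\bs\Sigma}^2$, not by a multiple of $\|\bs\Sigma\bs v\|^2$. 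Concretely, take $\bs a\sim\cN(\bs 0,\mathrm{diag}(1,\epsilon))$ and $\bs v=\bs e_2$. Then the right-hand side exceeds $\|\bs v\|_{\bs\Sigma}^2$ whenever $\epsilon<\eta/2$, so the claimed descent $\frac{\eta}{2}\|\bs\Sigma\bs v\|^2$ fails and the step can increase the $\bs\Sigma$-error.

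Even granting some descent, it would be measured in $\|\bs\Sigma\bs v\|^2$, which can be as small as $\mu\|\bs v\|_{\bs\Sigma}^2$. The $\bs z$-noise you control through $\tilde\kappa$ is a multiple of $\|\bs v\|_{\bs\Sigma}^2$, so the balance $\eta\geq\tilde\kappa\theta\gamma$ built into the hyperparameters does not close the argument. The structural reason is that $\bbE\bs x_t=\bs y_{t-1}-\eta\bs\Sigma\bs v$ and $\bbE\bs z_t=\bs w_{t-1}-\gamma\bs\Sigma\bs v$, with $\bs w_{t-1}=\theta\bs y_{t-1}+(1-\theta)\bs z_{t-1}$, are a gradient step and a mirror step in the $\bs\Sigma^{-1}$-geometry for the \emph{Euclidean} objective $\frac12\|\bs x-\tbs x_+^*\|^2$. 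That Euclidean distance is what must sit in the first slot of the potential.

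The potential that works is the one you would get by importing \citet{jain2018accelerating} verbatim, and it is what the paper's Lemma~\ref{lemma:bias-bound} uses: $\frac12\|\bs x_t-\tbs x_+^*\|^2+\frac{\theta}{2\gamma}\|\bs z_t-\tbs x_+^*\|_{\bs\Sigma^{-1}}^2$ with $\theta/\gamma=\mu$. Here $\mu$ is the strong-convexity modulus of $\frac12\|\cdot\|^2$ relative to $\frac12\|\cdot\|_{\bs\Sigma^{-1}}^2$. The proof then runs as follows.
\begin{itemize}
\item The $\bs x$-step gives $\frac12\|\bs v\|^2-\frac{\eta}{2}\|\bs v\|_{\bs\Sigma}^2$, using exactly the $R^2$ condition.
\item The $\bs z$-terms come from three ingredients: the convex combination (weights $\theta$ and $1-\theta$) of the two quadratic identities at $\bs y_{t-1}$, the relation $\bs y_{t-1}-(1-\theta)\bs x_{t-1}-\theta\tbs x_+^*=\theta(\bs w_{t-1}-\tbs x_+^*)$, and the three-point identity in $\|\cdot\|_{\bs\Sigma^{-1}}$.
\item The $\bs z$-noise $\frac{\theta\gamma}{2}\bbE\|\bs a\|_{\bs\Sigma^{-1}}^2(\bs a^\top\bs v)^2\leq\frac{\theta\gamma\tilde\kappa}{2}\|\bs v\|_{\bs\Sigma}^2$ is absorbed because $\eta=16\tilde\kappa\theta\gamma$.
\item The term $\frac{\theta^2}{2\gamma}\|\bs v\|_{\bs\Sigma^{-1}}^2\leq\frac{\theta}{2}\|\bs v\|^2$ is absorbed because $\gamma\mu\geq\theta$.
\end{itemize}
This yields contraction by $1-\theta$ per step. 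At the end you convert with $\|\cdot\|_{\bs\Sigma}^2\leq\lambda_\rmmax(\bs\Sigma)\|\cdot\|^2$. At the start, $\|\cdot\|^2$ and $\mu\|\cdot\|_{\bs\Sigma^{-1}}^2$ are both at most $\mu^{-1}\|\cdot\|_{\bs\Sigma}^2$. Together these cost only $O(\kappa(\bs\Sigma))$, with no $\tilde\kappa$, which is why $T$ contains $\ln\kappa(\bs\Sigma)$ and no $\ln\tilde\kappa$.

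Your weight $\mu\tilde\kappa c'$ would instead create a $\tilde\kappa\,\kappa(\bs\Sigma)$ loss. The $\tilde\kappa$ part cannot be absorbed into $\ln\kappa(\bs\Sigma)$ as you suggest: for nearly isotropic $\bs\Sigma$ one has $\kappa(\bs\Sigma)\approx 1$ while $\tilde\kappa\geq d$.

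Your Jensen step over the last $T/2$ iterates is fine. It is in fact more careful than the paper, which writes $(1-\theta)^T$ where $(1-\theta)^{T/2}$ is what the tail average gives.
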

The proof is deferred to the end of this subsection.

Recall the definition of $\bs\eta_t^\rmbias\in\bbR^{2d}$ in \eqref{eq:def-eta}:
\begin{equation*}
    \bs\eta^\rmbias_t=\hbs{A}_t\bs\eta^\rmbias_{t-1},\quad\bs\eta^\rmbias_0=\begin{pmatrix}
        \tbs y-\tbs x_+^* \\
        \tbs y-\tbs x_+^* \\
    \end{pmatrix}.
\end{equation*}
Let
\begin{equation*}
    \bs\eta_t^\rmbias=\begin{pmatrix}
        \bs x_t^\rmbias \\ \bs y_t^\rmbias
    \end{pmatrix},\quad\text{where $\bs x_t^\rmbias, \bs y_t^\rmbias\in\bbR^d$}
\end{equation*}
For simplicity, we drop the superscript of $\bs x_t^\rmbias, \bs y_t^\rmbias,\bs z_t^\rmbias$ in this subsection. The bias iteration can be written in the following form:
\begin{align*}
    \bs y_{t-1}&=\frac{1}{1+\theta}\bs x_{t-1}+\frac{\theta}{1+\theta}\bs z_{t-1}, \\
    \bs x_t&=\bs y_{t-1}-\eta\bs a_t\bs a_t^\top(\bs y_{t-1}-\tbs x_+^*), \\
    \bs z_t&=\theta\bs y_{t-1}+(1-\theta)\bs z_{t-1}-\gamma\bs a_t\bs a_t^\top(\bs y_{t-1}-\tbs x_+^*),
\end{align*}
where $\bs x_0=\bs z_0=\tbs y-\tbs x_+^*$. The following lemma shows that $\|\bs x_t-\tbs x_+^*\|$ converges exponentially fast.
\begin{lemma}\label{lemma:bias-bound}
    Suppose Assumption~\ref{assumption:fourth-moment} holds and let the hyperparameter be chosen as specified in Appendix~\ref{sec:hyperparameter-choice}, then we have
    \begin{equation}
        \bbE\|\bs x_T-\tbs x_+^*\|^2+\frac{\theta}{2\gamma}\bbE\|\bs z_T-\tbs x_+^*\|_{\bs\Sigma^{-1}}^2\leq(1-\theta)^T\left(\|\bs x_0-\tbs x_+^*\|^2+\frac{\theta}{2\gamma}\|\bs x_0-\tbs x_+^*\|_{\bs\Sigma^{-1}}^2\right).
    \end{equation}
\end{lemma}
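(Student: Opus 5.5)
We prove a one-step contraction of the potential
\[
\Phi_t \defeq \bbE\|\bs x_t-\tbs x_+^*\|^2+\frac{\theta}{2\gamma}\bbE\|\bs z_t-\tbs x_+^*\|_{\bs\Sigma^{-1}}^2,
\]
namely $\Phi_t\le(1-\theta)\Phi_{t-1}$. Iterating this $T$ times gives the claim, since $\bs x_0=\bs z_0$. This is the accelerated randomized coordinate-descent argument of \citet{nesterov2012efficiency}, as adapted to least squares by \citet{jain2018accelerating}. Throughout we shift coordinates so that $\tbs x_+^*=\bs 0$, and write $\bs g_t=\bs a_t\bs a_t^\top\bs y_{t-1}$. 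Then $\bbE_t\bs g_t=\bs\Sigma\bs y_{t-1}$.

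\textbf{Step 1: the $\bs x$-part.} We have $\|\bs x_t\|^2=\|\bs y_{t-1}\|^2-2\eta\bs y_{t-1}^\top\bs g_t+\eta^2\|\bs g_t\|^2$. The second moment satisfies $\bbE_t\|\bs g_t\|^2=\bs y_{t-1}^\top\bbE(\|\bs a\|^2\bs a\bs a^\top)\bs y_{t-1}\le R^2\|\bs y_{t-1}\|_{\bs\Sigma}^2$. Combined with $\eta\le 1/(16R^2)$, this gives
\[
\bbE_t\|\bs x_t\|^2\le\|\bs y_{t-1}\|^2-\tfrac{15}{8}\eta\|\bs y_{t-1}\|_{\bs\Sigma}^2 .
\]

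\textbf{Step 2: the $\bs z$-part.} Expand $\|\theta\bs y_{t-1}+(1-\theta)\bs z_{t-1}-\gamma\bs g_t\|_{\bs\Sigma^{-1}}^2$ in three pieces.
\begin{itemize}
\item Convexity of the norm bounds the deterministic part by $\theta\|\bs y_{t-1}\|_{\bs\Sigma^{-1}}^2+(1-\theta)\|\bs z_{t-1}\|_{\bs\Sigma^{-1}}^2$.
\item The cross term has expectation $-2\gamma\langle\theta\bs y_{t-1}+(1-\theta)\bs z_{t-1},\bs y_{t-1}\rangle$.
\item The quadratic term is $\gamma^2\bbE_t\|\bs g_t\|_{\bs\Sigma^{-1}}^2\le\gamma^2\tilde{\kappa}\|\bs y_{t-1}\|_{\bs\Sigma}^2$, using the second inequality of Assumption~\ref{assumption:fourth-moment}.
\end{itemize}
Multiply the whole expansion by $\theta/(2\gamma)$. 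Next use the coupling $\bs y_{t-1}=\frac{1}{1+\theta}\bs x_{t-1}+\frac{\theta}{1+\theta}\bs z_{t-1}$, which gives $(1-\theta)\bs z_{t-1}=\frac{(1-\theta)}{\theta}\big((1+\theta)\bs y_{t-1}-\bs x_{t-1}\big)$. This rewrites the cross term as $-\|\bs y_{t-1}\|^2\cdot(\text{const})+(1-\theta)\langle\bs x_{t-1}-\bs y_{t-1},\bs y_{t-1}\rangle$-type terms. Identify these with $\|\bs x_{t-1}\|^2-\|\bs y_{t-1}\|^2-\|\bs x_{t-1}-\bs y_{t-1}\|^2$ through the polarization identity, and then drop the negative $\|\bs x_{t-1}-\bs y_{t-1}\|^2$ part.

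\textbf{Step 3: combine the two parts.} Add Steps 1 and 2. The goal is
\[
\Phi_t\le(1-\theta)\|\bs x_{t-1}\|^2+(1-\theta)\tfrac{\theta}{2\gamma}\|\bs z_{t-1}\|_{\bs\Sigma^{-1}}^2,
\]
which requires three things to be nonpositive.
\begin{itemize}
\item The $\|\bs y_{t-1}\|_{\bs\Sigma}^2$ coefficient is $-\tfrac{15}{8}\eta+\tfrac{\theta\gamma\tilde{\kappa}}{2}$. The parameter choice gives $\theta\gamma=\eta/(16\tilde{\kappa})$, so this equals $-\tfrac{15}{8}\eta+\tfrac{\eta}{32}<0$.
\item The term $\tfrac{\theta^2}{2\gamma}\|\bs y_{t-1}\|_{\bs\Sigma^{-1}}^2$ from the convexity piece must be absorbed. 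Since $\|\bs y\|_{\bs\Sigma^{-1}}^2\le\mu^{-1}\|\bs y\|^2$, this needs $\theta^2/(2\gamma\mu)\lesssim\theta$, i.e.\ $\theta\lesssim\gamma\mu$. This holds because $\theta/\gamma=\mu$ exactly under the stated choices $\gamma=\frac14\sqrt{\eta/(\tilde{\kappa}\mu)}$ and $\theta=\frac14\sqrt{\mu\eta/\tilde{\kappa}}$.
\item The $\|\bs y_{t-1}\|^2$ terms must net out nonpositive.
\end{itemize}

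\textbf{Main obstacle.} The delicate part is Step 3: the constants must close with $(1-\theta)$ exactly, and the cross-term algebra between $\bs x_{t-1},\bs y_{t-1},\bs z_{t-1}$ must produce the right sign. I would follow Jain et al.'s Lemma on the accelerated bias potential line by line, possibly bounding $\|\bs y\|^2\le\frac{1}{1+\theta}\|\bs x\|^2+\frac{\theta}{1+\theta}\|\bs z\|^2$ by convexity. Failing exact constants, it would suffice to show contraction by $(1-c\theta)$, which only changes $T$ by a constant factor.
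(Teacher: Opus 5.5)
The gap is the weight in your potential. With $\frac{\theta}{2\gamma}$, copied from the statement, Step~3 cannot close, and the failure is exactly in the bullet you left open. Otherwise your route is the paper's: the $R^2$ bound for the $\bs x$-step, convexity plus the $\tilde\kappa$ bound for the $\bs z$-step, the coupling identity, and $\theta\gamma=\eta/(16\tilde\kappa)$, $\theta=\gamma\mu$.

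Here is where Step~3 breaks. Write $\bs w_{t-1}=\theta\bs y_{t-1}+(1-\theta)\bs z_{t-1}$, so that $\theta\bs w_{t-1}=\bs y_{t-1}-(1-\theta)\bs x_{t-1}$. After multiplying by $\frac{\theta}{2\gamma}$, the cross term is $-\theta\langle\bs w_{t-1},\bs y_{t-1}\rangle=-\|\bs y_{t-1}\|^2+(1-\theta)\langle\bs x_{t-1},\bs y_{t-1}\rangle$.
\begin{itemize}
\item Its $-\|\bs y_{t-1}\|^2$ is used up cancelling the $+\|\bs y_{t-1}\|^2$ from Step~1. Polarization then leaves $\frac{1-\theta}{2}\|\bs x_{t-1}\|^2+\frac{1-\theta}{2}\|\bs y_{t-1}\|^2$, on top of $\frac{\theta^2}{2\gamma}\|\bs y_{t-1}\|^2_{\bs\Sigma^{-1}}$.
\item So the $\|\bs y_{t-1}\|^2$ coefficient is at least $\frac{1-\theta}{2}$. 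The only negative $\bs y$-term is $-\frac{15}{8}\eta\|\bs y_{t-1}\|_{\bs\Sigma}^2\ge-\frac{15}{128}\|\bs y_{t-1}\|^2$, since $\eta\lambda_{\max}\le\eta R^2\le\frac1{16}$.
\item The spare $\frac{1-\theta}{2}\|\bs x_{t-1}\|^2$ cannot pay for it, because $\bs y_{t-1}\neq\bs 0$ even when $\bs x_{t-1}=\bs 0$.
\item Bounding $\|\bs y\|^2\le\frac{1}{1+\theta}\|\bs x\|^2+\frac{\theta}{1+\theta}\|\bs z\|^2$ creates a Euclidean $\|\bs z\|^2$. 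That term cannot be dominated by $\|\bs z\|^2_{\bs\Sigma^{-1}}$, since only $\mu\|\bs z\|^2_{\bs\Sigma^{-1}}\le\|\bs z\|^2$ holds.
\end{itemize}
The defect is of order one, so weakening to $1-c\theta$ does not help.

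In fact the one-step claim is false for this weight. Set $\tbs x_+^*=\bs 0$, $t=1$, and $\bs x_0=\bs z_0=\bs v$, a unit eigenvector of $\bs\Sigma$ for the eigenvalue $\mu$. Then $\bs y_0=\bs v$, $\bs x_1=(\bs I-\eta\bs a_1\bs a_1^\top)\bs v$, $\bs z_1=(\bs I-\gamma\bs a_1\bs a_1^\top)\bs v$, and
\[
\Phi_1-(1-\theta)\Phi_0=\frac{\theta}{2}-2\eta\mu+(\text{nonnegative fourth-moment terms}).
\]
This is positive as soon as $256\,\eta\mu\tilde\kappa<1$, for example when $\kappa>16\tilde\kappa$ at $\eta=1/(16R^2)$. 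So the displayed inequality, read literally, already fails at $T=1$.

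The fix is to double the $\bs z$-weight, which is what the paper's computation actually proves. Its last one-step display is for $\frac12\|\bs x_t-\tbs x_+^*\|^2+\frac{\theta}{2\gamma}\|\bs z_t-\tbs x_+^*\|^2_{\bs\Sigma^{-1}}$, and the factor $2$ is lost when it is restated. Take $\Phi'_t=\bbE\|\bs x_t\|^2+\frac{\theta}{\gamma}\bbE\|\bs z_t\|^2_{\bs\Sigma^{-1}}$; your three bullets then close exactly.
\begin{itemize}
\item The cross term becomes $-2\|\bs y_{t-1}\|^2+2(1-\theta)\langle\bs x_{t-1},\bs y_{t-1}\rangle$. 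With Step~1 and polarization this gives $(1-\theta)\|\bs x_{t-1}\|^2-\theta\|\bs y_{t-1}\|^2-(1-\theta)\|\bs x_{t-1}-\bs y_{t-1}\|^2$.
\item The convexity piece satisfies $\frac{\theta^2}{\gamma}\|\bs y_{t-1}\|^2_{\bs\Sigma^{-1}}\le\frac{\theta^2}{\gamma\mu}\|\bs y_{t-1}\|^2=\theta\|\bs y_{t-1}\|^2$. It cancels $-\theta\|\bs y_{t-1}\|^2$, and this is where $\theta=\gamma\mu$ enters.
\item The $\|\bs y_{t-1}\|^2_{\bs\Sigma}$ coefficient is $-\frac{15}{8}\eta+\theta\gamma\tilde\kappa=-\frac{29}{16}\eta<0$.
\end{itemize}
Hence $\Phi'_t\le(1-\theta)\Phi'_{t-1}$. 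Since $\Phi_T\le\Phi'_T$ and $\Phi'_0\le2\Phi_0$, you obtain the stated bound up to a factor $2$ on the right-hand side. That is harmless for Lemma~\ref{lemma:r-bound}, where it is absorbed by the choice of $T$.
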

\begin{proof}
    Let $\bbE_t$ denote the expectation conditioned on $\{(\bs a_s,b_s)\}_{s=1}^t$. We consider the energy function $\|\bs x_t-\tbs x_+^*\|^2/2$. By the update rule,
    \begin{align}
        \frac{1}{2}\bbE_{t-1}\|\bs x_t-\tbs x_+^*\|^2&=\frac{1}{2}\|\bs y_{t-1}-\tbs x_+^*\|^2+\bbE_{t-1}\langle\bs y_{t-1}-\tbs x_+^*,\bs x_t-\bs y_{t-1}\rangle+\frac{1}{2}\bbE_{t-1}\|\bs x_t-\bs y_{t-1}\|^2\notag \\
        &=\frac{1}{2}\|\bs y_{t-1}-\tbs x_+^*\|^2-\left\langle\bs y_{t-1}-\tbs x_+^*,\bs\Sigma(\bs y_{t-1}-\tbs x_+^*)\right\rangle\notag \\
        &\phrel+\frac{\eta^2}{2}\bbE_{t-1}\left\|\bs a_t\bs a_t^\top(\bs y_{t-1}-\tbs x_+^*)\right\|^2\notag \\
        &\stackrel{a}{\leq}\frac{1}{2}\|\bs y_{t-1}-\tbs x_+^*\|^2-\left(\eta-\frac{\eta^2 R^2}{2}\right)\|\bs y_{t-1}-\tbs x_+^*\|_{\bs\Sigma}^2\notag \\
        &\stackrel{b}{\leq}\frac{1}{2}\|\bs y_{t-1}-\tbs x_+^*\|^2-\frac{\eta}{2}\|\bs y_{t-1}-\tbs x_+^*\|_{\bs\Sigma}^2,\label{eq:bias-decrease}
    \end{align}
    where $\stackrel{a}{\leq}$ uses $\bbE\left(\|\bs a\|^2\bs a\bs a^\top\right)\preceq R^2\bs\Sigma$, and $\stackrel{b}{\leq}$ uses $\eta\leq 1/R^2$. Then, we have to bound $\|\bs y_{t-1}-\tbs x_+^*\|^2/2$. By the convexity of $\|\cdot\|$, we have
    \begin{align}
        \frac{1}{2}\|\bs y_{t-1}-\tbs x_+^*\|^2&\leq\langle\bs y_{t-1}-\tbs x_+^*,\bs y_{t-1}-\tbs x_+^*\rangle-\frac{1}{2}\|\bs y_{t-1}-\tbs x_+^*\|^2,\label{eq:bias-x-opt} \\
        \frac{1}{2}\|\bs y_{t-1}-\tbs x_+^*\|^2&\leq\frac{1}{2}\|\bs x_{t-1}-\tbs x_+^*\|^2+\langle\bs y_{t-1}-\tbs x_+^*,\bs y_{t-1}-\bs x_{t-1}\rangle.\label{eq:bias-x-prev}
    \end{align}
    Let $\bs x=(1-\theta)\bs x_{t-1}+\theta\bs x_+^*$, then $\theta\times\eqref{eq:bias-x-opt}+(1-\theta)\times\eqref{eq:bias-x-prev}$ yields
    \begin{equation*}
        \frac{1}{2}\|\bs y_{t-1}-\bs x_{t-1}\|^2\leq\frac{1}{2}(1-\theta)\|\bs x_{t-1}-\tbs x_+^*\|^2+\langle\bs y_{t-1}-\tbs x_+^*,\bs y_{t-1}-\bs x\rangle-\frac{\theta}{2}\|\bs y_{t-1}-\tbs x_+^*\|^2.
    \end{equation*}
    Let $\bs w_{t-1}=\theta\bs y_{t-1}+(1-\theta)\bs z_{t-1}$, and note that
    \begin{equation*}
         \bbE(\bs x_t-\bs y_{t-1})=-\eta\bs\Sigma(\bs y_{t-1}-\tbs x_+^*),
    \end{equation*}
    \begin{equation*}
         \bs y_{t-1}-\bs x=\theta(\bs w_k-\tbs x_+^*),\quad 
         \bs x_t-\bs y_{t-1}=\frac{\eta}{\gamma}(\bs z_t-\bs w_{t-1}).
    \end{equation*}
    Therefore, we have
    \begin{equation*}
    \begin{aligned}        
        &\phrel\frac{1}{2}\|\bs y_{t-1}-\bs x_{t-1}\|^2 \\
        &\leq\frac{1}{2}(1-\theta)\|\bs x_{t-1}-\tbs x_+^*\|^2+\frac{\theta}{\gamma}\bbE\langle\bs w_{t-1}-\bs z_{t-1},\bs w_{t-1}-\tbs x_+^*\rangle_{\bs\Sigma^{-1}}-\frac{\theta}{2}\|\bs y_{t-1}-\tbs x_+^*\|^2 \\
        &=\frac{1}{2}(1-\theta)\|\bs x_{t-1}-\tbs x_+^*\|^2-\frac{\theta}{2}\|\bs y_{t-1}-\tbs x_+^*\|^2 \\
        &\phrel+\frac{\theta}{2\gamma}\left(\|\bs w_{t-1}-\tbs x_+^*\|_{\bs\Sigma^{-1}}^2+\bbE_{t-1}\|\bs z_t-\bs w_{t-1}\|_{\bs\Sigma^{-1}}^2-\bbE_{t-1}\|\bs z_t-\tbs x_+^*\|_{\bs\Sigma^{-1}}^2\right) \\
        &\stackrel{a}{\leq}\frac{1}{2}(1-\theta)\|\bs x_{t-1}-\tbs x_+^*\|^2-\frac{\theta}{2}\|\bs y_{t-1}-\tbs x_+^*\|^2+\frac{\theta^2}{2\gamma}\|\bs y_{t-1}-\tbs x_+^*\|_{\bs\Sigma^{-1}}^2 \\
        &\phrel+\frac{\theta}{2\gamma}\left((1-\theta)\|\bs z_{t-1}-\tbs x_+^*\|_{\bs\Sigma^{-1}}^2+\bbE_{t-1}\|\bs z_t-\bs w_{t-1}\|_{\bs\Sigma^{-1}}^2-\bbE_{t-1}\|\bs z_t-\tbs x_+^*\|_{\bs\Sigma^{-1}}^2\right).
    \end{aligned}
    \end{equation*}
    Apply Assumption~\ref{assumption:fourth-moment}, we have
    \begin{equation*}
    \begin{aligned}        
        \frac{1}{2}\|\bs y_{t-1}-\bs x_{t-1}\|^2&\leq\frac{1}{2}(1-\theta)\|\bs x_{t-1}-\tbs x_+^*\|^2+\frac{\tilde{\kappa}\theta\gamma}{2}\|\bs y_{t-1}-\tbs x_+^*\|_{\bs\Sigma}^2-\frac{\theta(\mu\gamma-\theta)}{2\mu\gamma}\|\bs y_{t-1}-\tbs x_+^*\|^2 \\
        &\phrel+\frac{\theta}{2\gamma}\left((1-\theta)\|\bs z_{t-1}-\tbs x_+^*\|_{\bs\Sigma^{-1}}^2-\bbE_{t-1}\|\bs z_t-\tbs x_+^*\|_{\bs\Sigma^{-1}}^2\right).
    \end{aligned}
    \end{equation*}
    Combining the above inequality and \eqref{eq:bias-decrease}, we have
    \begin{equation*}
    \begin{aligned}
        &\phrel\frac{1}{2}\bbE_{t-1}\|\bs x_t-\tbs x_+^*\|^2 \\
        &\leq\frac{1}{2}(1-\theta)\|\bs x_{t-1}-\tbs x_+^*\|^2+\frac{\theta}{2\gamma}\left((1-\theta)\|\bs z_{t-1}-\tbs x_+^*\|_{\bs\Sigma^{-1}}^2-\bbE_{t-1}\|\bs z_t-\tbs x_+^*\|_{\bs\Sigma^{-1}}^2\right) \\
        &\phrel-\frac{\eta-\tilde{\kappa}\theta\gamma}{2}\|\bs y_{t-1}-\tbs x_+^*\|_{\bs\Sigma}^2-\frac{\theta(\gamma\mu-\theta)}{2\mu\gamma}\|\bs y_{t-1}-\tbs x_+^*\|^2.
    \end{aligned}
    \end{equation*}
    By the hyperparameter choice, we have $\eta-\tilde{\kappa}\theta\gamma\geq 0$ and $\gamma\mu-\theta\geq 0$, so
    \begin{equation*}
        \bbE_{t-1}\|\bs x_t-\tbs x_+^*\|^2+\frac{\theta}{2\gamma}\bbE_{t-1}\|\bs z_t-\tbs x_+^*\|_{\bs\Sigma^{-1}}^2\leq(1-\theta)\left(\|\bs x_{t-1}-\tbs x_+^*\|^2+\frac{\theta}{2\gamma}\|\bs z_{t-1}-\tbs x_+^*\|_{\bs\Sigma^{-1}}^2\right).
    \end{equation*}
    Thus, we take the expectation and obtain
    \begin{equation*}
        \bbE\|\bs x_T-\tbs x_+^*\|^2+\frac{\theta}{2\gamma}\bbE\|\bs z_T-\tbs x_+^*\|_{\bs\Sigma^{-1}}^2\leq(1-\theta)^T\left(\|\bs x_0-\tbs x_+^*\|^2+\frac{\theta}{2\gamma}\|\bs x_0-\tbs x_+^*\|_{\bs\Sigma^{-1}}^2\right).
    \end{equation*}
    This completes the proof.
\end{proof}

We are ready to prove Lemma~\ref{lemma:r-bound}.
\begin{proof}[Proof of Lemma~\ref{lemma:r-bound}]
    By Lemma~\ref{lemma:bias-bound}, we have
    \begin{equation*}
    \begin{aligned}
        \bbE\|\bs r_k\|_{\bs\Sigma}^2&\leq\lambda_\rmmax(\bs\Sigma)(1-\theta)^T\left(\|\bs x_0-\bs x_+^*\|^2+\frac{\theta}{2\gamma}\|\bs x_0-\bs x_+^*\|_{\bs\Sigma^{-1}}^2\right) \\
        &\stackrel{a}{\leq}\frac{L_\ell}{8\alpha}\|\nabla F(\tbs y_{k-1})\|_{\bs\Sigma^{-1}}^2,       
    \end{aligned}
    \end{equation*}
    where $\stackrel{a}{\leq}$ follows from the choice of $T$ and $\theta$ in Appendix~\ref{sec:hyperparameter-choice}.
\end{proof}

\subsection{Proof of Lemma~\ref{lemma:inner-loop}}\label{sec:proof-inner-loop-lemma}

\begin{proof}[Proof of Lemma~\ref{lemma:inner-loop}]
    Recall that $\bs H=L_\ell\bs\Sigma$. The desired results follow by combining Lemma~\ref{lemma:r-bound} and Lemma~\ref{lemma:v-bound}.
\end{proof}

\subsection{Auxilary Lemmas}

\begin{lemma}
    Suppose Assumption~\ref{assumption:fourth-moment} holds, then $R^2\geq\tr\bs\Sigma$ and $\tilde{\kappa}\geq d$.
\end{lemma}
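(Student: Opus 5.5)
Both inequalities come from taking traces of the two matrix inequalities in Assumption~\ref{assumption:fourth-moment} and comparing them with lower bounds obtained from Jensen (Cauchy--Schwarz) applied to the scalar quantities $\|\bs a\|^2$ and $\|\bs a\|_{\bs\Sigma^{-1}}^2$. We work throughout in the eigenbasis of $\bs\Sigma$, which is positive definite by Assumption~\ref{assumption:regularity}.

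\emph{Bound $\tilde{\kappa}\geq d$.} Take the trace inner product of $\bbE(\|\bs a\|_{\bs\Sigma^{-1}}^2\bs a\bs a^\top)\preceq\tilde{\kappa}\bs\Sigma$ with the PSD matrix $\bs\Sigma^{-1}$.
\begin{itemize}
\item The right-hand side gives $\tilde{\kappa}\tr(\bs I)=\tilde{\kappa}d$.
\item The left-hand side gives $\bbE\|\bs a\|_{\bs\Sigma^{-1}}^4$.
\item By Jensen, $\bbE\|\bs a\|_{\bs\Sigma^{-1}}^4\geq(\bbE\|\bs a\|_{\bs\Sigma^{-1}}^2)^2$.
\item Since $\bbE\|\bs a\|_{\bs\Sigma^{-1}}^2=\tr(\bs\Sigma^{-1}\bbE\bs a\bs a^\top)=\tr(\bs\Sigma^{-1}\bs\Sigma)=d$, the left-hand side is at least $d^2$.
\end{itemize}
Combining these, $d^2\leq\tilde{\kappa}d$, so $\tilde{\kappa}\geq d$.

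\emph{Bound $R^2\geq\tr\bs\Sigma$.} Take the trace of $\bbE(\|\bs a\|^2\bs a\bs a^\top)\preceq R^2\bs\Sigma$, i.e., its inner product with $\bs I$.
\begin{itemize}
\item The left-hand side is $\bbE\|\bs a\|^4\geq(\bbE\|\bs a\|^2)^2=(\tr\bs\Sigma)^2$.
\item The right-hand side is $R^2\tr\bs\Sigma$.
\end{itemize}
Dividing by $\tr\bs\Sigma>0$ gives $R^2\geq\tr\bs\Sigma$.

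The only point needing care is that taking inner products with the PSD matrices $\bs I$ and $\bs\Sigma^{-1}$ preserves the Loewner order, which is standard. I do not expect any real obstacle; the whole proof is two lines of trace computation plus Jensen.
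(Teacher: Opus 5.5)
Your proof is correct and matches the paper's argument: both take traces of the two fourth-moment inequalities (against $\bs I$ and $\bs\Sigma^{-1}$, respectively), combine them with the Jensen bounds $(\bbE\|\bs a\|^2)^2\leq\bbE\|\bs a\|^4$ and $(\bbE\|\bs a\|_{\bs\Sigma^{-1}}^2)^2\leq\bbE\|\bs a\|_{\bs\Sigma^{-1}}^4$, and then cancel $\tr\bs\Sigma$ or $d$.
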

\begin{proof}
    For the first statement, note that
    \begin{equation*}
        (\tr\bs\Sigma)^2=\left(\bbE\|\bs a\|^2\right)^2\leq\bbE\|\bs a\|^4\leq R^2\tr\bs\Sigma.
    \end{equation*}
    For the second statement, note that
    \begin{equation*}
        d^2=\left(\bbE\|\bs a\|_{\bs\Sigma^{-1}}^2\right)^2\leq\bbE\|\bs a\|_{\bs\Sigma^{-1}}^4\leq \tilde{\kappa}d.
    \end{equation*}
    The results follow by cancelling $\tr\bs\Sigma$ or $d$.
\end{proof}

\begin{lemma}\label{lemma:aux-lemma-r}
    For $k\geq 0$, we have $x^k\left[1+k(1-x)\right]\leq 1$ for $x\in[0,1]$.
\end{lemma}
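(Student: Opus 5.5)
The inequality $x^k[1+k(1-x)]\leq 1$ on $[0,1]$ reduces to the comparison $x^{-k}\geq 1+k(1-x)$, and the plan is to prove it by a Bernoulli-type argument. The endpoint cases are handled separately: for $x=0$ with $k\geq 1$ the left-hand side is $0$, and with $k=0$ it equals $1$ under the convention $0^0=1$. For $x=1$ the left-hand side is $1$, so equality holds. It therefore remains to treat $x\in(0,1)$.

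For $x\in(0,1)$, divide both sides by $x^k>0$, so that the claim becomes $x^{-k}\geq 1+k(1-x)$. Write $x^{-1}=1+u$ with $u=(1-x)/x\geq 1-x\geq 0$. Bernoulli's inequality $(1+u)^k\geq 1+ku$, valid for $u\geq -1$ and integer $k\geq 0$, then gives
\begin{equation*}
    x^{-k}\geq 1+ku\geq 1+k(1-x),
\end{equation*}
which is exactly the required inequality.

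There is also a direct alternative that avoids Bernoulli's inequality. Set $f(x)=x^k(1+k-kx)$ and differentiate:
\begin{equation*}
    f'(x)=kx^{k-1}(1+k-kx)-kx^k=k(k+1)x^{k-1}(1-x)\geq 0 .
\end{equation*}
Thus $f$ is nondecreasing on $[0,1]$, and $f(x)\leq f(1)=1$. This version works even for real $k\geq 0$.

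I expect no real obstacle here. The only points needing care are the $0^0$ convention at $k=0$ and the check that $u\geq 1-x$, which follows from $x\leq 1$. I would present the monotonicity argument, since it is the shortest.
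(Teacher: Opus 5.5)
Your proposal is correct, and the monotonicity argument you say you would present is exactly the paper's proof: it computes $f'(x)=k(k+1)x^{k-1}(1-x)\geq 0$ on $[0,1]$ and concludes $f(x)\leq f(1)=1$. Your Bernoulli route is also valid for integer $k\geq 0$, which is the only case the paper uses, but it is not needed.
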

\begin{proof}
    Let $f(x)=x^k\left[1+k(1-x)\right]$. The derivative of $f(x)$ is $f'(x)=k(k+1)x^{k-1}(1-x)\geq 0$ for $x\in[0,1]$. Thus, we have $f(x)\leq f(1)=1$.
\end{proof}

\section{Part II: Analysis of Outer Loop}\label{sec:outer-loop}
In this section, we analyze the outer loop of  Algorithm~\ref{alg:sada}. In Section~\ref{sec:excess-risk-outer}, we restate Lemma~\ref{lemma:inner-loop}, under which we analyze the outer loop.

\subsection{Excess Risk Bound of Outer Loop}\label{sec:excess-risk-outer}
Let $\cF_k$ denote the filtration generated by the samples of the first $k$ inner loops. By Lemma~\ref{lemma:inner-loop}, we write the update of the outer loop as follows:
\begin{align}
    \tilde{\bs y}_{k-1}&=\tbs x_{k-1}+\beta(\tbs x_{k-1}-\tbs x_{k-2}),\label{eq:def-tilde-y-update} \\
    \tbs x_k&=\tbs y_{k-1}-L_\ell h_k\bs H^{-1}\nabla F(\tbs y_{k-1})+h_k\bs r_k+h_k\bs v_k.\label{eq:def-tilde-x-update}
\end{align}
where $\bs r_k$ and $\bs v_k$ satisfies
\begin{align*}
    \bbE(\|\bs r_k\|_{\bs H}^2|\cF_{k-1})&\leq \frac{L_\ell h_k}{8\alpha}\|\nabla F(\tbs y_{k-1})\|_{\bs H^{-1}}^2,\quad\bbE(\bs v_k|\cF_{k-1})=0, \\
    \bbE(\|\bs v_k\|_{\bs H}^2|\cF_{k-1})&\leq\frac{320 L_\ell\left(3 L_\ell\tr(\bs H^{-1}\bs Q)+8 \eta\tilde{\kappa}\tr\bs Q\right)}{T} \\
    &\phrel+\frac{160 L_\ell^2(6L+\tilde{\kappa}(7+16\eta B))(F(\tilde{\bs y})-F(\bs x^*))}{T} \\    
    &\defeq L_\ell \sigma^2+L_\rmeff L_\ell^2(F(\tbs y_{k-1})-F(\bs x^*)),
\end{align*}
where
\begin{equation*}
    \sigma^2=\frac{320\left(3 L_\ell\tr(\bs H^{-1}\bs Q)+8 \eta\tilde{\kappa}\tr\bs Q\right)}{T},\quad L_\rmeff=\frac{160 (6L+\tilde{\kappa}(7+16\eta B))(F(\tilde{\bs y})-F(\bs x^*))}{T}.
\end{equation*}

\begin{lemma}
    Assume we have
    \begin{equation*}
        h_k\leq\min\left\{\frac{1}{6L_\ell},\frac{T^2}{288\alpha L_\ell L_\rmeff^2}\right\},\quad\theta_k=\sqrt{\frac{L_\ell h_k}{2\alpha}},\quad\beta_k=\frac{1-\theta_k}{1+\theta_k}.
    \end{equation*}
    Let $\tbs z_k=\tbs y_k+(\tbs y_k-\tbs x_k)/\theta_k$, then we have
    \begin{equation*}
    \begin{aligned}        
        &\phrel\bbE_{k-1}F(\tbs x_k)-F(\bs x^*)+\frac{2\theta_k^2}{3 L_\ell h_k}\bbE_{k-1}\|\tbs z_k-\bs x^*\|_{\bs H}^2 \\
        &\leq\left(1-\frac{\theta_k}{2}\right)\left(F(\tbs x_{k-1})-F(\bs x^*)+\frac{2\theta_k^2}{3 L_\ell h_k}\|\tbs z_{k-1}-\bs x^*\|_{\bs H}^2\right)+\frac{3}{2}h_k\sigma^2.
    \end{aligned}
    \end{equation*}
\end{lemma}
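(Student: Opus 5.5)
This is the one-step estimate for an inexact accelerated gradient method in the $\bs H$-geometry. We write $\bs H=L_\ell\bs\Sigma$ and treat $\tbs x_k=\tbs y_{k-1}-h\bs G+h(\bs r_k+\bs v_k)$ as a preconditioned gradient step, where $h=h_k$ and $\bs G\defeq L_\ell\bs H^{-1}\nabla F(\tbs y_{k-1})=\bs\Sigma^{-1}\nabla F(\tbs y_{k-1})$. Relative to $\bs\Sigma$, $F$ is $L_\ell$-smooth and $\mu_\ell$-strongly convex, i.e. $\frac{\mu_\ell}{2}\|\bs u\|_{\bs\Sigma}^2\le F(\bs x+\bs u)-F(\bs x)-\langle\nabla F(\bs x),\bs u\rangle\le\frac{L_\ell}{2}\|\bs u\|_{\bs\Sigma}^2$. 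The argument is the standard estimate-sequence / Lyapunov proof of strongly convex AGD (Nesterov's constant-momentum scheme, with $\theta_k=\sqrt{L_\ell h/(2\alpha)}=\sqrt{\mu_\ell h/2}$ playing the role of $\sqrt{\mu\cdot\text{step}}$). The error $h(\bs r_k+\bs v_k)$ is handled by using $\bbE_{k-1}\bs v_k=0$ to kill its cross terms, and by absorbing the remaining second moments.

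\textbf{Step 1 (descent inequality).} By $L_\ell$-smoothness in $\|\cdot\|_{\bs\Sigma}$ at $\tbs y\defeq\tbs y_{k-1}$, for any comparison point $\bs u$ we expand $F(\tbs x_k)\le F(\tbs y)+\langle\nabla F(\tbs y),\tbs x_k-\tbs y\rangle+\frac{L_\ell}{2}\|\tbs x_k-\tbs y\|_{\bs\Sigma}^2$. We take conditional expectation, so the $\bs v_k$ cross terms vanish. The $\bs r_k$ cross term is bounded by Young's inequality using $\bbE\|\bs r_k\|_{\bs H}^2\le\frac{L_\ell h}{8\alpha}\|\nabla F(\tbs y)\|_{\bs H^{-1}}^2$; this is small compared with $h\|\bs G\|_{\bs\Sigma}^2$ because $L_\ell h\le1/6$. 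The result is
\begin{equation*}
\bbE_{k-1}F(\tbs x_k)\le F(\tbs y)-c_1h\|\nabla F(\tbs y)\|_{\bs\Sigma^{-1}}^2+h^2L_\ell\,\bbE\|\bs v_k\|_{\bs\Sigma}^2
\end{equation*}
for a constant $c_1\approx 1/2$. We then insert $\bbE\|\bs v_k\|_{\bs\Sigma}^2\le\sigma^2+L_\rmeff L_\ell(F(\tbs y)-F(\bs x^*))$, which is the restated Lemma~\ref{lemma:inner-loop} divided by $L_\ell$. The deterministic part gives the $\frac32h\sigma^2$ term, after also accounting for the noise entering the $\tbs z$ update in Step 2.

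\textbf{Step 2 (estimate sequence).} We follow the classical proof and form the convex combination $(1-\theta)\times$[convexity at $\tbs x_{k-1}$] $+\ \theta\times$[strong convexity at $\bs x^*$], both expanded around $\tbs y$. With $\tbs z_k=\tbs y_k+(\tbs y_k-\tbs x_k)/\theta_k$ and $\beta_k=(1-\theta_k)/(1+\theta_k)$, one checks algebraically that $\tbs z_k=(1-\theta)\tbs z_{k-1}+\theta\tbs y-\frac{h}{\theta}(\bs G-\bs r_k-\bs v_k)$, up to the usual identification. Expanding $\|\tbs z_k-\bs x^*\|_{\bs H}^2$ with the weight $\frac{2\theta^2}{3L_\ell h}$ produces three things: the inner-product term that cancels against the $\theta\langle\nabla F(\tbs y),\bs x^*-\tbs y\rangle$ term from strong convexity, a contraction factor $(1-\theta)$ (relaxed to $1-\theta/2$ to leave slack), and a noise term $\frac{2}{3}h\,\bbE\|\bs v_k\|_{\bs\Sigma}^2$ with $\bbE$-zero cross terms. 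The choice $\theta^2=\mu_\ell h/2$ ensures that the $-\frac{\mu_\ell\theta}{2}\|\tbs y-\bs x^*\|_{\bs\Sigma}^2$ from strong convexity dominates the cross term $\theta(1-\theta)\|\tbs z_{k-1}-\tbs y\|$ as usual.

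\textbf{Step 3 (main obstacle: the multiplicative noise).} The term $h\,L_\rmeff L_\ell(F(\tbs y)-F(\bs x^*))$ depends on the extrapolated point $\tbs y$, which does not appear in the potential. We plan to absorb it into the gradient-decrease slack $-c_1h\|\nabla F(\tbs y)\|_{\bs\Sigma^{-1}}^2$. Here $\|\nabla F(\tbs y)\|_{\bs\Sigma^{-1}}^2\ge2\mu_\ell(F(\tbs y)-F(\bs x^*))$, by the same argument as Lemma~\ref{lemma:gradient-bound} with strong convexity. This reduces the requirement to $hL_\rmeff L_\ell\lesssim\mu_\ell$ in the appropriate normalization. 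We expect this is exactly what the hypothesis $h\le T^2/(288\alpha L_\ell L_\rmeff^2)$ encodes once $L_\rmeff$ carries its $1/T$ factor; equivalently, $\theta\le T/(12\sqrt2\,\alpha L_\rmeff)$. If this direct absorption is too lossy by a factor $\theta$, the fallback is different. We would bound $F(\tbs y)-F(\bs x^*)$ through the potential by $F(\tbs y)\le F(\tbs x_{k-1})+\langle\nabla F,\tbs y-\tbs x_{k-1}\rangle+\dots$, using $\tbs y-\tbs x_{k-1}=\theta(\tbs z_{k-1}-\tbs y)$. This controls the term by $O(\text{potential})$ times $hL_\rmeff L_\ell$, which is at most $\theta/2$ under the stated bound on $h$, and so is absorbed into the contraction $1-\theta/2$. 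Collecting constants then yields the claimed recursion.
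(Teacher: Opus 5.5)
\paragraph{Steps 1--2.} These follow the paper's skeleton: smoothness at $\tbs y_{k-1}$, the $\theta_k$/$(1-\theta_k)$ combination of strong convexity at $\bs x^*$ and at $\tbs x_{k-1}$, and a three-point expansion of $\|\tbs z_k-\bs x^*\|_{\bs H}^2$. That expansion rests on your identity $\tbs z_k=(1-\theta_k)\tbs z_{k-1}+\theta_k\tbs y_{k-1}-\frac{h_k}{\theta_k}(\bs G-\bs r_k-\bs v_k)$, which is correct.

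\paragraph{The gap: the first option in Step 3.} It cannot work, and it is not what the hypothesis encodes. With $L_\rmeff=160(6L+\tilde\kappa(7+16\eta B))$, the noise bound reads $\bbE_{k-1}\|\bs v_k\|_{\bs\Sigma}^2\le\sigma^2+\frac{L_\rmeff L_\ell}{T}(F(\tbs y_{k-1})-F(\bs x^*))$. The noise therefore puts a coefficient $\epsilon\defeq\frac{3L_\rmeff L_\ell h_k}{2T}$ in front of $F(\tbs y_{k-1})-F(\bs x^*)$.
\begin{itemize}
\item Absorbing this into the gradient slack via $\|\nabla F(\tbs y_{k-1})\|_{\bs\Sigma^{-1}}^2\ge 2\mu_\ell(F(\tbs y_{k-1})-F(\bs x^*))$ needs $\epsilon\lesssim\mu_\ell h_k=2\theta_k^2$, i.e.\ $T\gtrsim\alpha L_\rmeff$.
\item $h_k$ cancels from that condition, so no upper bound on $h_k$ can supply it, and the paper never assumes it.
\item What $h_k\le T^2/(288\alpha L_\ell L_\rmeff^2)$ together with $\theta_k^2=L_\ell h_k/(2\alpha)$ actually gives is $\frac{3L_\rmeff L_\ell h_k}{T}\le\frac{\theta_k}{4}$, i.e.\ $\epsilon\le\theta_k/8$.
\end{itemize}
So the factor-$\theta_k$ loss you suspected is real, and your fallback (absorbing $\epsilon$ into the contraction) is the only route.

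\paragraph{Your fallback versus the paper.} The fallback is sound and differs only mildly from the paper. The paper does not bound $F(\tbs y_{k-1})-F(\bs x^*)$ separately. It keeps $(1+\epsilon)(F(\tbs y_{k-1})-F(\bs x^*))$ from the descent step and substitutes the whole estimate-sequence inequality. This gives $(1+\epsilon)(1-\theta_k)\le 1-\theta_k/2$, and the $\tbs z$-weight becomes $(1+\epsilon)\frac{\theta_k^2}{2L_\ell h_k}\le\frac{2\theta_k^2}{3L_\ell h_k}$. That is where the $\frac23$ comes from. The weight that makes the inner products cancel exactly is $\frac{\theta_k^2}{2L_\ell h_k}$, not $\frac{2\theta_k^2}{3L_\ell h_k}$ as your Step 2 asserts.

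Your variant also closes with an absolute constant. Combine:
\begin{itemize}
\item smoothness at $\tbs x_{k-1}$;
\item Young's inequality with $\|\nabla F(\tbs x_{k-1})\|_{\bs\Sigma^{-1}}^2\le 2L_\ell(F(\tbs x_{k-1})-F(\bs x^*))$;
\item $\|\tbs x_{k-1}-\bs x^*\|_{\bs H}^2\le 2\alpha(F(\tbs x_{k-1})-F(\bs x^*))$;
\item $4\alpha\theta_k^2=2L_\ell h_k\le\frac13$.
\end{itemize}
These give $F(\tbs y_{k-1})-F(\bs x^*)\le\frac73(F(\tbs x_{k-1})-F(\bs x^*))+2\theta_k^2\|\tbs z_{k-1}-\bs x^*\|_{\bs H}^2$, which is at most $\frac73$ times the potential. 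Hence $\frac73\epsilon\le\frac{7}{24}\theta_k$ fits in the slack, and as a bonus the weight stays at its natural value.

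\paragraph{Two bookkeeping points to fix.}
\begin{itemize}
\item With $c_1\approx\frac12$ you cannot absorb the deterministic part of the $\tbs z$-quadratic term. That part is $\frac23 h_k\|\nabla F(\tbs y_{k-1})\|_{\bs\Sigma^{-1}}^2$ with your weight, and $\frac12 h_k\|\nabla F(\tbs y_{k-1})\|_{\bs\Sigma^{-1}}^2$ with the natural one. You need either essentially the full descent factor $h_k(1-L_\ell h_k/2)$, or the paper's $\frac34/\frac14$ split of the inner product. That split retains $-\frac{3-2L_\ell h_k}{4L_\ell h_k}\|\tbs x_k-\tbs y_{k-1}\|_{\bs H}^2$, which cancels the $\tbs z$-quadratic term $\frac{1+\epsilon}{2L_\ell h_k}\|\tbs x_k-\tbs y_{k-1}\|_{\bs H}^2$, noise included. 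This is one place where $h_k\le 1/(6L_\ell)$ enters.
\item In general $\bbE_{k-1}\langle\bs r_k,\bs v_k\rangle\neq 0$, since both are built from the same inner-loop samples. Use $\|\bs r_k+\bs v_k\|^2\le 2\|\bs r_k\|^2+2\|\bs v_k\|^2$, as the paper does, rather than claiming the cross terms vanish.
\end{itemize}
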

\begin{proof}
    Since $\nabla^2 F\preceq\bs H$, we have
    \begin{equation*}
        F(\tbs x_k)\leq F(\tbs y_{k-1})+\langle\nabla F(\tbs y_{k-1}),\tbs x_k-\tbs y_{k-1}\rangle+\frac{1}{2}\|\tbs x_k-\tbs y_{k-1}\|_{\bs H}^2.
    \end{equation*}
    Take expectation conditioned on $\cF_{k-1}$ and use \eqref{eq:def-tilde-x-update} twice, we have
    \begin{equation}
    \begin{aligned}
        &\phrel\bbE_{k-1}\langle\nabla F(\tbs y_{k-1}),\tbs x_k-\tbs y_{k-1}\rangle \\
        &=-\frac{1}{L_\ell h_k}\bbE_{k-1}\langle\bs H(\tbs x_k-\tbs y_{k-1}),\tbs x_k-\tbs y_{k-1}\rangle+\frac{1}{L_\ell}\bbE_{k-1}\langle\bs H(\bs r_k+\bs v_k),\tbs x_k-\tbs y_{k-1}\rangle \\
        &=-\frac{1}{L_\ell h_k}\bbE_{k-1}\|\tbs x_k-\tbs y_{k-1}\|_{\bs H}^2-h_k\bbE_{k-1}\langle\nabla F(\tbs y_{k-1}),\bs r_k\rangle+\frac{h_k}{L_\ell}\bbE_{k-1}\|\bs r_k+\bs v_k\|_{\bs H}^2. \label{eq:inner-prod-bound-1}        
    \end{aligned}
    \end{equation}
    Note that we also have
    \begin{equation}
        \bbE_{k-1}\langle\nabla F(\tbs y_{k-1}),\tbs x_k-\tbs y_{k-1}\rangle=-L_\ell h_k\|\nabla F(\tbs y_{k-1})\|_{\bs H^{-1}}^2+h_k\langle\nabla F(\tbs y_{k-1}),\bs r_k\rangle \label{eq:inner-prod-bound-2}
    \end{equation}
    Therefore, replace $\langle\nabla F(\tbs y_{k-1}),\tbs x_k-\tbs y_{k-1}\rangle$ by $(3/4)\times\eqref{eq:inner-prod-bound-1}+(1/4)\times\eqref{eq:inner-prod-bound-2}$ to obtain
    \begin{equation*}
    \begin{aligned}
        \bbE_{k-1}F(\tbs x_k)&\leq F(\tbs y_{k-1})-\frac{3-2L_\ell h_k}{4L_\ell h_k}\|\tbs x_k-\tbs y_{k-1}\|_{\bs H}^2+\frac{3h_k}{2L_\ell}\left(\bbE_{k-1}\|\bs r_k\|_{\bs H}^2+\bbE_{k-1}\|\bs v_k\|_{\bs H}^2\right) \\
        &\phrel-\frac{h_k}{2}\bbE_{k-1}\langle\nabla F(\tbs y_{k-1}),\bs r_k\rangle-\frac{L_\ell h_k}{4}\|\nabla F(\tbs y_{k-1})\|_{\bs H^{-1}}^2. \\
        &\stackrel{a}{\leq} F(\tbs y_{k-1})-\frac{3-2L_\ell h_k}{4L_\ell h_k}\|\tbs x_k-\tbs y_{k-1}\|_{\bs H}^2-\frac{L_\ell h_k}{8}\|\nabla F(\tbs y_{k-1})\|_{\bs H^{-1}}^2 \\
        &\phrel+\frac{3h_k\sigma^2}{2}+\frac{3L_\rmeff L_\ell h_k(F(\tbs y_{k-1})-F(\bs x^*))}{2}, \\
    \end{aligned}
    \end{equation*}
    where $\stackrel{a}{\leq}$ uses $\bbE_{k-1}\langle\nabla F(\tbs y_{k-1}),\bs r_k\rangle\leq L_\ell\|\nabla F(\tbs y_{k-1})\|_{\bs H^{-1}}^2/4$. Thus,
    \begin{equation}
    \begin{aligned}
        \bbE_{k-1}F(\tbs x_k)-F(\bs x^*)&\leq\left(1+\frac{3L_\rmeff L_\ell h_k}{2}\right)(F(\tbs y_{k-1})-F(\bs x^*))+\frac{3h_k\sigma^2}{2} \\
        &\phrel-\frac{3-2L_\ell h_k}{4L_\ell h_k}\|\tbs x_k-\tbs y_{k-1}\|_{\bs H}^2-\frac{L_\ell h_k}{8}\|\nabla F(\tbs y_{k-1})\|_{\bs H^{-1}}^2.\label{eq:outer-tilde-x-bound}
    \end{aligned}
    \end{equation}
    Let $\beta_k=(1-\theta_k)/(1+\theta_k)$, $\tbs z_k=\tbs y_k+(\tbs y_k-\tbs x_k)/\theta_k$, $\tbs w_k=\tbs y_k+(1-\theta_k)(\tbs y_k-\tbs x_k)$, where $\theta_k$ will be chosen later.  By Assumption~\ref{assumption:l-condition}, we have $\nabla^2 F\geq\bs H/\alpha$, so
    \begin{align}
        F(\tbs y_{k-1})&\leq F(\bs x^*)+\langle\nabla F(\tbs y_{k-1}),\tbs y_{k-1}-\bs x^*\rangle-\frac{1}{2\alpha}\|\tbs y_{k-1}-\bs x^*\|_{\bs H}^2, \label{eq:outer-x-opt}\\        
        F(\tbs y_{k-1})&\leq F(\tbs x_{k-1})+\langle\nabla F(\tbs y_{k-1}),\tbs y_{k-1}-\tbs x_{k-1}\rangle-\frac{1}{2\alpha}\|\tbs y_{k-1}-\tbs x_{k-1}\|_{\bs H}^2. \label{eq:outer-x-prev}
    \end{align}
    Let $\tbs x=(1-\theta_k)\tbs x_{k-1}+\theta_k\bs x^*$, then $\theta_k\times\eqref{eq:outer-x-opt}+(1-\theta_k)\times\eqref{eq:outer-x-prev}$ yields
    \begin{equation*}
    \begin{aligned}
        F(\tbs y_{k-1})-F(\bs x^*)&\leq(1-\theta_k)(F(\tbs x_{k-1})-F(\bs x^*))+\langle\nabla F(\tbs y_{k-1}),\tbs y_{k-1}-\tbs x\rangle \\
        &\phrel-\frac{\theta_k}{2\alpha}\|\tbs y_{k-1}-\bs x^*\|_{\bs H}^2-\frac{1-\theta_k}{2\alpha}\|\tbs y_{k-1}-\tbs x_{k-1}\|_{\bs H}^2 \\
        &\stackrel{a}{\leq}(1-\theta_k)(F(\tbs x_{k-1})-F(\bs x^*))+\langle\nabla F(\tbs y_{k-1}),\tbs y_{k-1}-\tbs x\rangle \\
        &\phrel-\frac{\theta_k}{4\alpha}\|\tbs y_{k-1}-\bs x^*\|_{\bs H}^2-\frac{1}{4\alpha}\|\tbs y_{k-1}-\tbs x\|_{\bs H}^2,        
    \end{aligned}
    \end{equation*}
    where $\stackrel{a}{\leq}$ uses the convexity of $\|\cdot\|_{\bs H}^2$ as follows
    \begin{equation*}
        \theta_k\|\tbs y_{k-1}-\bs x^*\|_{\bs H}^2+(1-\theta_k)\|\tbs y_{k-1}-\tbs x_{k-1}\|_{\bs H}^2\geq\|\tbs y_{k-1}-\tbs x\|_{\bs H}^2.
    \end{equation*}
    Note that $\nabla F(\tbs y_{k-1})=\bs H\bbE_{k-1}(\tbs y_{k-1}-\tbs x_k)/(L_\ell h_k)+\bs H\bbE_{k-1}\bs r_k/L_\ell$. Therefore,
    \begin{equation*}
    \begin{aligned}
        &\phrel\langle\nabla F(\tbs y_{k-1}),\tbs y_{k-1}-\tbs x\rangle=\frac{1}{L_\ell h_k}\bbE_{k-1}\langle\bs H(\tbs y_{k-1}-\tbs x_k),\tbs y_{k-1}-\tbs x\rangle+\frac{1}{L_\ell}\bbE_{k-1}\langle\bs H\bs r_k,\tbs y_{k-1}-\tbs x\rangle \\
        &=\frac{\theta_k^2}{L_\ell h_k}\bbE_{k-1}\langle\bs H(\tbs w_{k-1}-\tbs z_k),\tbs w_{k-1}-\bs x^*\rangle+\frac{1}{L_\ell}\bbE_{k-1}\langle\bs H\bs r_k,\tbs y_{k-1}-\tbs x\rangle \\
        &\leq\frac{\theta_k^2}{2L_\ell h_k}\left(\|\tbs w_{k-1}-\bs x^*\|_{\bs H}^2+\bbE_{k-1}\|\tbs z_k-\tbs w_{k-1}\|_{\bs H}^2-\bbE_{k-1}\|\tbs z_k-\bs x^*\|_{\bs H}^2\right) \\
        &\phrel+\frac{\alpha}{L_\ell^2}\bbE_{k-1}\|\bs r_k\|_{\bs H}^2+\frac{1}{4\alpha}\|\tbs y_{k-1}-\tbs x\|_{\bs H}^2.
    \end{aligned}
    \end{equation*}
    Thus, by applying $\alpha\bbE_{k-1}\|\bs r_k\|_{\bs H}^2\leq L_\ell^3 h_k\|\nabla F(\tbs y_{k-1})\|_{\bs H^{-1}}^2/4$, we have
    \begin{equation}
    \begin{aligned}
        &\phrel F(\tbs y_{k-1})-F(\bs x^*) \\
        &\leq(1-\theta_k)(F(\tbs x_{k-1})-F(\bs x^*))+\frac{\alpha}{L_\ell^2}\bbE_{k-1}\|\bs r_k\|_{\bs H}^2-\frac{\theta_k}{4\alpha}\|\tbs y_{k-1}-\bs x^*\|_{\bs H}^2 \\
        &\phrel+\frac{\theta_k^2}{2L_\ell h_k}\left(\|\tbs w_{k-1}-\bs x^*\|_{\bs H}^2+\bbE_{k-1}\|\tbs z_k-\tbs w_{k-1}\|_{\bs H}^2-\bbE_{k-1}\|\tbs z_k-\bs x^*\|_{\bs H}^2\right) \\
        &\leq(1-\theta_k)(F(\tbs x_{k-1})-F(\bs x^*))+\frac{\alpha}{L_\ell^2}\bbE_{k-1}\|\bs r_k\|_{\bs H}^2-\left(\frac{\theta_k}{4\alpha}-\frac{\theta_k^3}{2h_k}\right)\|\tbs y_{k-1}-\bs x^*\|_{\bs H}^2 \\
        &\phrel+\frac{\theta_k^2}{2L_\ell h_k}\left((1-\theta_k)\|\tbs z_{k-1}-\bs x^*\|_{\bs H}^2+\bbE_{k-1}\|\tbs z_k-\tbs w_{k-1}\|_{\bs H}^2-\bbE_{k-1}\|\tbs z_k-\bs x^*\|_{\bs H}^2\right). \\
    \end{aligned}
    \end{equation}
    Finally, combine above bound and \eqref{eq:outer-tilde-x-bound}, and use $3L_\rmeff L_\ell h_k\leq\theta_k/4\leq 1/3$ and Lemma~\ref{lemma:inner-loop}, we have
    \begin{equation*}
    \begin{aligned}
        \bbE_{k-1}F(\tbs x_k)-F(\bs x^*)&\leq(1-\theta_k/2)(F(\tbs x_k)-F(\bs x^*))+\frac{3}{2}h_k\sigma^2+\frac{4\alpha}{3}\bbE_{k-1}\|\bs r_k\|_{\bs H}^2 \\
        &\phrel+\frac{2\theta_k^2}{3L_\ell h_k}\left((1-\theta_k)\|\tbs z_{k-1}-\bs x^*\|_{\bs H}^2-\bbE_{k-1}\|\tbs z_k-\bs x^*\|_{\bs H}^2\right) \\
        &\phrel-\frac{1-6 L_\ell h_k}{12 L_\ell h_k}\bbE_{k-1}\|\tbs x_k-\tbs y_{k-1}\|_{\bs H^{-1}}^2-\frac{h_k}{8}\|\nabla F(\tbs y_{k-1})\|_{\bs H^{-1}}^2. 
    \end{aligned}
    \end{equation*}
    We complete the proof by arranging the terms.
\end{proof}

For convenience, we analyze the following recursion,
\begin{equation*}
    L_k\leq(1-a\theta_k)L_{k-1}+c\theta_k^2,
\end{equation*}
where $\theta_k\leq 1/d$. The following lemma solves the recursion.
\begin{lemma}\label{lemma:auxilary-recursion}
    Suppose we choose
    \begin{equation*}
        \theta_k=\begin{cases}
            \theta_\rmmax, &k\leq K/2,\\
            \dfrac{2}{a(2/(a\theta_\rmmax)+k-K/2)}, &k>K/2.
        \end{cases}
    \end{equation*}
    Then we have
    \begin{equation*}
        L_K\leq\left(1-a\theta_\rmmax\right)^{K/2} L_0+\frac{12c}{a^2K}.
    \end{equation*}
\end{lemma}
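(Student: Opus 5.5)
We split the horizon into the two phases built into the schedule and treat each separately. Throughout, the recursion is $L_k\leq(1-a\theta_k)L_{k-1}+c\theta_k^2$ with nonnegative $L_k$. We use the standing restriction $\theta_k\leq 1/d$ (with $d$ chosen so that $a\theta_k\leq 1$, i.e.\ every contraction factor lies in $[0,1]$).

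\emph{Phase one ($k\leq K/2$).} Here $\theta_k=\theta_\rmmax$ is constant. Unrolling the recursion $K/2$ times and bounding the geometric sum gives
\begin{equation*}
L_{K/2}\leq(1-a\theta_\rmmax)^{K/2}L_0+c\theta_\rmmax^2\sum_{j\geq 0}(1-a\theta_\rmmax)^j\leq(1-a\theta_\rmmax)^{K/2}L_0+\frac{c\,\theta_\rmmax}{a}.
\end{equation*}

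\emph{Phase two ($k>K/2$).} Reindex by $j=k-K/2$ and set $s_j\defeq 2/(a\theta_\rmmax)+j$, so that $a\theta_k=2/s_j$. With this substitution the phase-two recursion reads
\begin{equation*}
L_{K/2+j}\leq\Bigl(1-\frac{2}{s_j}\Bigr)L_{K/2+j-1}+\frac{4c}{a^2s_j^2}.
\end{equation*}
Multiply through by $s_j^2$. Since $s_{j-1}=s_j-1$, we have $s_j^2(1-2/s_j)=s_j(s_j-2)\leq (s_j-1)^2=s_{j-1}^2$. Writing $\Phi_j\defeq s_j^2L_{K/2+j}$, this yields
\begin{equation*}
\Phi_j\leq\Phi_{j-1}+\frac{4c}{a^2}.
\end{equation*}
This telescoping identity is the key step. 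Summing over $j=1,\dots,K/2$ and using $s_0=2/(a\theta_\rmmax)$ gives
\begin{equation*}
L_K\leq\frac{s_0^2}{s_{K/2}^2}L_{K/2}+\frac{4c}{a^2}\cdot\frac{K/2}{s_{K/2}^2}.
\end{equation*}

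\emph{Combining the phases.} Insert the phase-one bound into this display, using $s_0^2/s_{K/2}^2\leq 1$ and $s_{K/2}\geq K/2$.
\begin{itemize}
\item The contraction term survives with coefficient at most $(1-a\theta_\rmmax)^{K/2}L_0$.
\item The second term is at most $\frac{4c}{a^2}\cdot\frac{K/2}{(K/2)^2}=\frac{8c}{a^2K}$.
\item The leftover phase-one noise, $\frac{s_0^2}{s_{K/2}^2}\cdot\frac{c\theta_\rmmax}{a}$, is the delicate term. Use $s_0^2\theta_\rmmax=\frac{4}{a^2\theta_\rmmax}=\frac{2}{a}s_0$ and $s_{K/2}\geq\max\{s_0,K/2\}$. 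Then $\frac{s_0^2\,c\theta_\rmmax}{a\,s_{K/2}^2}=\frac{2c\,s_0}{a^2s_{K/2}^2}\leq\frac{2c}{a^2s_{K/2}}\leq\frac{4c}{a^2K}$.
\end{itemize}
Adding the three contributions gives $L_K\leq(1-a\theta_\rmmax)^{K/2}L_0+\frac{12c}{a^2K}$, as claimed.

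The only real obstacle is checking that the phase-one noise floor $c\theta_\rmmax/a$, which need not be small, is damped sufficiently by the factor $s_0^2/s_{K/2}^2$. The last computation in the third item shows it is, using only $s_{K/2}\geq s_0$ and $s_{K/2}\geq K/2$. The remaining bookkeeping is routine; we would also verify $1-2/s_j\geq 0$, which holds because $s_j\geq s_0\geq 2$ when $a\theta_\rmmax\leq 1$.
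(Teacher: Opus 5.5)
Your proof is correct and follows the same route as the paper. In the constant phase you reach the noise floor $c\theta_{\max}/a$; in the decaying phase you multiply by $s_j^2$ and telescope $s_j^2L_{K/2+j}\le s_{j-1}^2L_{K/2+j-1}+4c/a^2$. The one difference is that the paper splits into the cases $K\le 4/(a\theta_{\max})$, where $c\theta_{\max}/a\le 4c/(a^2K)$ directly, and $K>4/(a\theta_{\max})$, where it uses $s_0^2/s_{K/2}^2\le 16/(a^2\theta_{\max}^2K^2)<1$; your observation $s_{K/2}\ge\max\{s_0,K/2\}$ covers both cases at once and gives the same constant $12$.
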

\begin{proof}
    We first consider the case $K\leq 4d/a$. Note that we have
    \begin{equation*}
        L_k\leq(1-a\theta_k)L_{k-1}+c\theta_k\theta_\rmmax.
    \end{equation*}
    By rearranging the terms, we have
    \begin{equation*}
        L_k-\frac{c\theta_\rmmax}{a}\leq\left(1-a\theta_\rmmax\right)\left(L_{k-1}-\frac{c\textbf{}}{a}\right).
    \end{equation*}
    Thus, we solve the recursion,
    \begin{equation*}
        L_K\leq\left(1-a\theta_\rmmax\right)^K L_0+\frac{c\theta_\rmmax}{a}\leq\left(1-a\theta_\rmmax\right)^K L_0+\frac{4c}{a^2K}.
    \end{equation*}

    For the case $K>4/(a\theta_\rmmax)$, we have the following initial condition:
    \begin{equation*}
        L_{K/2}\leq\left(1-a\theta_\rmmax\right)^{K/2} L_0+\frac{c\theta_\rmmax}{a}.
    \end{equation*}
    By the choice of $\theta$, we have for $t>K/2$,
    \begin{equation*}
        L_k\leq\frac{2/(a\theta_\rmmax)+k-K/2-2}{2/(a\theta_\rmmax)+k-K/2}L_{k-1}+\frac{4c}{a^2(2/(a\theta_\rmmax)+k-K/2)^2}.
    \end{equation*}
    Multiply both side by $(2/(a\theta_\rmmax)+k-K/2)^2$, we have
    \begin{equation*}
        (2/(a\theta_\rmmax)+k-K/2)^2L_k\leq(2/(a\theta_\rmmax)+k-K/2-1)^2 L_{k-1}+\frac{4c}{a^2}.
    \end{equation*}
    We solve the recursion,
    \begin{equation*}
    \begin{aligned}
        L_K&\leq\frac{16}{a^2\theta_\rmmax^2 K^2}L_{K/2}+\frac{8c}{a^2 K}\leq\frac{16}{a^2\theta_\rmmax K^2}\left[\left(1-a\theta_\rmmax\right)^{K/2}L_0+\frac{c\theta_\rmmax}{a}\right]+\frac{8c}{a^2 K}, \\
        &\leq\left(1-a\theta_\rmmax\right)^{K/2}L_0+\frac{16c}{a^3\theta_\rmmax K^2}+\frac{8c}{a^2 K}\leq\left(1-a\theta_\rmmax\right)^{K/2}L_0+\frac{12c}{a^2 K}.
    \end{aligned}
    \end{equation*}
    We complete the proof by combining the two cases.
\end{proof}

\subsection{Proof of Theorem~\ref{thm:glm}}
\begin{theorem}[Full version of Theorem~\ref{thm:glm}]\label{thm:full-glm}
    Suppose we choose the hyperparameters as specified in Appendix~\ref{sec:hyperparameter-choice}. Let $\tbs x_K$ denote the output of Algorithm~\ref{alg:sada} after $K$ outer iterations, each consisting of $T\geq\tilde{\Omega}(\sqrt{\tilde{\kappa}/(\mu\eta)})$ inner updates. Then we have
    \begin{equation*}
    \begin{aligned}
        \bbE F(\tbs x_K)-F(\bs x^*)&\leq\frac{5}{3}\exp\left(\frac{K}{8\sqrt{3\alpha}+24\sqrt2\alpha L_\rmeff/T}\right)\bigl(F(\tbs x_0)-F(\bs x^*)\bigr) \\
        &\phrel+\frac{46080\alpha\tr(\bs H^{-1}\bs Q)}{KT}+\frac{122880\alpha\eta\tilde{\kappa}\tr\bs Q}{L_\ell KT}.
    \end{aligned}
    \end{equation*}
\end{theorem}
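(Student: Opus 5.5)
The plan is to chain the one-step outer-loop lemma just proved with the recursion-solving Lemma~\ref{lemma:auxilary-recursion}. Let $L_k=\bbE F(\tbs x_k)-F(\bs x^*)+\frac{2\theta_k^2}{3L_\ell h_k}\bbE\|\tbs z_k-\bs x^*\|_{\bs H}^2$. With $h_k=2\alpha\tilde{\theta}_k^2/L_\ell$ we have $\theta_k=\sqrt{L_\ell h_k/(2\alpha)}=\tilde{\theta}_k$, so the weight $\frac{2\theta_k^2}{3L_\ell h_k}=\frac{1}{3\alpha}$ is independent of $k$. Hence $L_k$ is a single consistent potential across iterations, even though the step size changes at $k=K/2$.

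\textbf{Step 1: verify the step-size conditions.} First I check that the hypotheses of the one-step lemma hold for every $k$. The condition $h_k\le 1/(6L_\ell)$ follows from $\tilde{\theta}_k\le\tilde{\theta}_\rmmax\le 1/\sqrt{12\alpha}$. The condition $h_k\le T^2/(288\alpha L_\ell L_\rmeff^2)$ follows from $\tilde{\theta}_\rmmax\le T/(12\sqrt2\alpha L_\rmeff)$. The proof also uses the auxiliary inequality $3L_\rmeff L_\ell h_k\le\theta_k/4$, where $L_\rmeff$ is interpreted without the $1/T$ and $L_\ell$ normalization clutter. Up to the bookkeeping of the constant normalization of $L_\rmeff$, this is again implied by the second branch of $\tilde{\theta}_\rmmax$, since $h_k\propto\tilde{\theta}_k^2$ makes it equivalent to $\tilde{\theta}_k\lesssim T/(\alpha L_\rmeff)$.

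\textbf{Step 2: take expectations and solve the recursion.} Taking total expectation in the one-step lemma gives
\[
L_k\le(1-\tfrac12\theta_k)L_{k-1}+\tfrac32h_k\sigma^2=(1-a\theta_k)L_{k-1}+c\theta_k^2,
\]
with $a=1/2$ and $c=3\alpha\sigma^2/L_\ell$. The schedule for $\tilde{\theta}_k$ is exactly the one in Lemma~\ref{lemma:auxilary-recursion} with $a=1/2$, since $2/(a(\cdot))=4/(\cdot)$. Applying that lemma yields
\[
L_K\le(1-\tilde{\theta}_\rmmax/2)^{K/2}L_0+\frac{48c}{K}.
\]
Substituting $c$ and $\sigma^2=320(3L_\ell\tr(\bs H^{-1}\bs Q)+8\eta\tilde{\kappa}\tr\bs Q)/T$ gives the two statistical terms, proportional to $\alpha\tr(\bs H^{-1}\bs Q)/(KT)$ and $\alpha\eta\tilde{\kappa}\tr\bs Q/(L_\ell KT)$. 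The stated constants come out of this arithmetic, with $48\cdot 3\cdot 320=46080$ for the first term.

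\textbf{Step 3: the optimization term.} I bound $(1-\tilde{\theta}_\rmmax/2)^{K/2}\le\exp(-K\tilde{\theta}_\rmmax/4)$ and write
\[
1/\tilde{\theta}_\rmmax=\max\{\sqrt{12\alpha},\,12\sqrt2\alpha L_\rmeff/T\}\le\sqrt{12\alpha}+12\sqrt2\alpha L_\rmeff/T.
\]
This gives the exponent $-K/(8\sqrt{3\alpha}+\cdots)$; the sign in the displayed statement should be negative. For the initial potential, $\tbs x_{-1}=\tbs x_0$ gives $\tbs y_0=\tbs x_0=\tbs z_0$. Then
\[
\tfrac{1}{3\alpha}\|\tbs x_0-\bs x^*\|_{\bs H}^2\le\tfrac{2}{3}(F(\tbs x_0)-F(\bs x^*))
\]
by the $\bs H/\alpha$ strong convexity, so $L_0\le\frac53(F(\tbs x_0)-F(\bs x^*))$. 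Finally, $\bbE F(\tbs x_K)-F(\bs x^*)\le L_K$.

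The main obstacle is Step 1. The one-step lemma requires the noise factor proportional to $F(\tbs y_{k-1})-F(\bs x^*)$ to be absorbed by the contraction, which forces the second cap in $\tilde{\theta}_\rmmax$. The constants of $L_\rmeff$ appear inconsistently: it is defined once with $1/T$ and once without. I would fix the convention $L_\rmeff=160(6L+\tilde{\kappa}(7+16\eta B))$, with the $1/T$ and $L_\ell^2$ placed explicitly in the variance bound, and then recheck that $3L_\rmeff L_\ell h_k/T\le\theta_k/4$ holds under the chosen $\tilde{\theta}_\rmmax$. Everything else is a direct substitution.
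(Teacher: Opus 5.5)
Your route is the paper's own. You take total expectations in the one-step outer-loop lemma, use $\theta_k=\tilde{\theta}_k$ so the potential weight $2\theta_k^2/(3L_\ell h_k)=1/(3\alpha)$ does not depend on $k$, apply Lemma~\ref{lemma:auxilary-recursion} with $a=1/2$, and substitute $\sigma^2$. Two of your additions are right and left implicit by the paper: the bound $L_0\le\frac53(F(\tbs x_0)-F(\bs x^*))$ from $\tbs x_{-1}=\tbs x_0$, and the correction of the sign in the exponent. What does not hold is your claim that the stated constants come out of the arithmetic. Three of your constant computations are off.

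\emph{Step~2.} Your $c=3\alpha\sigma^2/L_\ell$ is the correct reading of the $\frac32 h_k\sigma^2$ term. It gives $48c/K=144\alpha\sigma^2/(L_\ell K)$. Since $\sigma^2=(960L_\ell\tr(\bs H^{-1}\bs Q)+2560\eta\tilde{\kappa}\tr\bs Q)/T$, the statistical constants are $138240$ and $368640$, three times the stated ones. Your product $48\cdot 3\cdot 320$ drops the factor $3$ multiplying $L_\ell\tr(\bs H^{-1}\bs Q)$ inside $\sigma^2$. The paper reaches $46080=48\cdot 960$ only by setting $c=\alpha\sigma^2/L_\ell$, which is not what the $\frac32 h_k\sigma^2$ term gives.

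\emph{Step~1.} The cap $\tilde{\theta}_{\rmmax}\le T/(12\sqrt2\alpha L_{\rmeff})$ yields $h_k\le T^2/(144\alpha L_\ell L_{\rmeff}^2)$. That is twice the lemma's allowance $T^2/(288\alpha L_\ell L_{\rmeff}^2)$. The lemma's hypothesis and your normalized condition $3L_{\rmeff}L_\ell h_k/T\le\theta_k/4$ are both equivalent to $\theta_k\le T/(24\alpha L_{\rmeff})$. The prescribed cap does not guarantee this, since $12\sqrt2<24$. So the recheck you propose fails: with the hyperparameters as specified, the one-step lemma is not literally applicable. You must shrink the second cap of $\tilde{\theta}_{\rmmax}$ to $T/(24\alpha L_{\rmeff})$.

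\emph{Step~3.} Bounding the max by the sum gives the denominator $8\sqrt{3\alpha}+48\sqrt2\alpha L_{\rmeff}/T$, or $8\sqrt{3\alpha}+96\alpha L_{\rmeff}/T$ after the fix above. It does not give $8\sqrt{3\alpha}+24\sqrt2\alpha L_{\rmeff}/T$.

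None of this affects the order-level Theorem~\ref{thm:glm}. The paper's three-line proof never checks the step-size hypotheses and carries the same inconsistencies. Still, your argument establishes the displayed bound only with these larger absolute constants, not the ones stated.
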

\begin{remark}
    Suppose Assumptions~\ref{assumption:l-condition}, \ref{assumption:regularity}, and \ref{assumption:fourth-moment} holds, then we have $L_\rmeff\lesssim\alpha\tilde{\kappa}$. Therefore, Themorem~\ref{thm:full-glm} implies Theorem~\ref{thm:glm}.
\end{remark}
\begin{proof}    
    We first apply Lemma~\ref{lemma:auxilary-recursion}. Let
    \begin{equation*}
        a=\frac{1}{2},\quad c=\frac{\alpha\sigma^2}{L_\ell},\quad \frac{1}{\theta_\rmmax}=\min\left\{\sqrt{\frac{1}{12\alpha}},\frac{L_\ell T}{12\sqrt{2}\alpha L_\rmeff}\right\},
    \end{equation*}
    then we have
    \begin{equation*}
        \bbE F(\tbs x_K)-F(\bs x^*)\leq\frac{5}{3}\exp\left(\frac{K}{8\sqrt{3\alpha}+24\sqrt2\alpha L_\rmeff/T}\right)\bigl(\bbE F(\tbs x_K)-F(\bs x^*)\bigr)+\frac{48\alpha\sigma^2}{L_\ell K}.
    \end{equation*}
    Note that by Lemma~\ref{lemma:inner-loop}, we have
    \begin{equation*}
        \sigma^2=\frac{960 L_\ell\tr(\bs H^{-1}\bs Q)+2560\eta\tilde{\kappa}\tr\bs Q}{T}.
    \end{equation*}
    This completes the proof.
\end{proof}

\subsection{Proof of Corollary~\ref{corollary:sample-complexity}}\label{sec:proof-sample-complexity}
\begin{proof}[Proof of Corollary~\ref{corollary:sample-complexity}]
    Note that when
    \begin{equation*}
        n\geq\tilde{\cO}\left( \left(\sqrt{\alpha\kappa\tilde{\kappa}}+\alpha^2\tilde{\kappa}\right)+\frac{\alpha\tr(\bs H^{-1}\bs Q)}{\varepsilon}+\left(\frac{\alpha^2\tilde{\kappa}^2\tr\bs Q}{L_\ell\mu\varepsilon}\right)^{1/3}\right).
    \end{equation*}
    By Theorem~\ref{thm:glm}, the optimization error and statistical error is smaller that $\cO(\varepsilon)$,
    \begin{equation*}
        \bbE F(\tbs x_K)-F(\bs x^*)\leq\cO(\varepsilon)+\frac{\alpha\eta\tilde{\kappa}\tr\bs Q}{L_\ell n}.
    \end{equation*}
    Since we have
    \begin{equation*}
        n\geq\tilde{\Omega}\left(\left(\frac{\alpha^2\tilde{\kappa}^2\tr\bs Q}{L_\ell\mu\varepsilon}\right)^{1/3}\right)\implies\tilde{\Theta}\left(\frac{\alpha\tilde{\kappa}}{\mu n^2}\right)\leq\tilde\Theta\left(\frac{n\varepsilon L_\ell}{\alpha\tilde{\kappa}\bs Q}\right),
    \end{equation*}
    \begin{equation*}
        n\geq\tilde{\Omega}(\sqrt{\alpha\kappa\tilde{\kappa}})\implies\tilde{\Theta}\left(\frac{\alpha\tilde{\kappa}}{\mu n^2}\right)\leq\frac{1}{R^2}.
    \end{equation*}
    Thus, the choice of $\eta$ is feasible. Finally, note that the choice of $\eta\leq\tilde{\Theta}\left(\frac{n\varepsilon L_\ell}{\alpha\tilde{\kappa}\tr\bs Q}\right)$ implies that $\frac{\alpha\eta\tilde{\kappa}\tr\bs Q}{L_\ell n}\lesssim\varepsilon$. This completes the proof.
\end{proof}

\section{Excess Risk of ERM}\label{sec:erm-excess-risk}
In this section, we present the excess risk bound of ERM for the generalized linear prediction by applying the local Rademacher technique. We consider ERM of the following form:
\begin{equation*}
    \hat{\bs x}=\argmin_{\bs x\in\bbR^d}\frac{1}{N}\sum_{i=1}^N\ell(\bs a_i^\top\bs x,b_i),
\end{equation*}
where $\{(\bs a_i,b_i)\}_{i=1}^N$ are  i.i.d. random variables from the distribution $\cD$. We make the following assumptions.
\begin{assumption}\label{assumption:erm}
    We make the following assumptions on the loss function $\ell$ and data distribution $\cD$:
    \begin{enumerate}
        \item The loss function $\ell(z,y)$ is $L_0$-Lipschitz with respect to the first argument, i.e., for all $z_1,z_2\in[-1,1]$,
        \begin{equation*}
            \|\ell(z_1,y)-\ell(z_2,y)\|\leq L_0\| z_1-z_2\|\quad\text{$\cD$-a.s.}
        \end{equation*}
        \item The loss function $\ell(\cdot,\cdot)$ is $\mu_\ell$-strongly convex with respect to the first argument.
        \item The data distribution $\cD$ satisfies that $\bs\Sigma^{-1/2}\bs a$ is $\sigma_a^2$-sub-Gaussian.
    \end{enumerate}
\end{assumption}

Let the function class
\begin{equation*}
    \cF_\delta=\left\{\bs a\mapsto\frac{\bs a^\top\bs x}{AX}\bbI_{\{\|\bs a^\top\bs x\|\leq AX\}}:\frac{1}{AX}\sqrt{\bbE_{a\sim\cD}(\bs a^\top\bs x)^2\bbI_{\{\|\bs a^\top\bs x\|\leq AX\}}}\leq\delta\right\},
\end{equation*}
where we choose 
\begin{equation*}
    A=C\sigma_a d\sqrt{\ln\frac{1}{\delta}},\quad X=2\|\bs x^*\|_{\bs\Sigma},
\end{equation*}
and $C$ is a sufficiently large constant such that $\bbE_{\bs a\sim\cD}\bs a\bs a^\top\bbI_{\|\bs\Sigma^{-1/2}\bs a\|\leq A}\succeq\bs\Sigma/2$, and $\|\bs\Sigma^{-1/2}\bs a_i\|\leq A$ for all $i\in[N]$ with probability at lest $1-N\delta$.

The following theorem bounds the excess risk of ERM.

\begin{theorem}[Excess risk bound of ERM]\label{thm:erm-excess-risk}
    Suppose Assumption~\ref{assumption:erm} holds. Let $N\geq 441 L_0^2d\ln(1/\delta)/\mu_\ell^2$ and $\sigma_0^2=L_0^2d/L_\ell$ Then with probability at least $1-(N+1)\delta$, we have
    \begin{equation*}
        L(\hat{\bs x})-L(\bs x^*)\leq\frac{\alpha\sigma_0^2}{\mu_\ell N}
    \end{equation*}
\end{theorem}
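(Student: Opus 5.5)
The plan is a standard localization argument for a strongly convex, Lipschitz empirical risk, using local Rademacher complexity on the truncated linear class $\cF_\delta$.

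\textbf{Step 1: high-probability events and basic inequality.} Work on the event (probability at least $1-N\delta$) that $\|\bs\Sigma^{-1/2}\bs a_i\|\leq A$ for all $i$. On this event the truncation in $\cF_\delta$ is inactive on the sample, so empirical quantities for $\bs a^\top\bs x$ coincide with those for the truncated class. By optimality of $\hat{\bs x}$,
\begin{equation*}
0\geq \hat L(\hat{\bs x})-\hat L(\bs x^*)=\bigl(L(\hat{\bs x})-L(\bs x^*)\bigr)+(\hat L-L)(\hat{\bs x})-(\hat L-L)(\bs x^*).
\end{equation*}
Strong convexity of $\ell$ together with first-order optimality of $\bs x^*$ gives $L(\hat{\bs x})-L(\bs x^*)\geq\frac{\mu_\ell}{2}\|\hat{\bs x}-\bs x^*\|_{\bs\Sigma}^2$. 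The choice of $C$, which gives a truncated covariance of at least $\bs\Sigma/2$, lets me pass between the truncated and untruncated $\bs\Sigma$-norms at the cost of a factor $2$.

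\textbf{Step 2: control the empirical process.} By the $L_0$-Lipschitz property, the increment $\ell(\bs a^\top\bs x,b)-\ell(\bs a^\top\bs x^*,b)$ is an $L_0$-Lipschitz function of $\bs a^\top(\bs x-\bs x^*)$. Ledoux--Talagrand contraction, together with symmetrization, then bounds the supremum over $\{\|\bs x-\bs x^*\|_{\bs\Sigma}\leq r\}$ by $2L_0$ times the Rademacher complexity of the linear class. That complexity is at most $r\sqrt{d/N}$, since $\bbE\|\sum_i\epsilon_i\bs\Sigma^{-1/2}\bs a_i\|\leq\sqrt{Nd}$. Talagrand/Bousquet concentration then adds a term of order $L_0 r\sqrt{\ln(1/\delta)/N}+L_0AX\ln(1/\delta)/N$. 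The envelope $AX$ appears here after the normalization by $AX$ built into $\cF_\delta$.

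\textbf{Step 3: fixed point and peeling.} Peeling over dyadic shells of $r$ (or applying Bartlett--Bousquet--Mendelson's fixed-point theorem to the star-shaped class $\cF_\delta$) gives
\begin{equation*}
\tfrac{\mu_\ell}{2}r^2\lesssim L_0 r\sqrt{d\ln(1/\delta)/N},
\end{equation*}
so $r\lesssim L_0\sqrt{d\ln(1/\delta)}/(\mu_\ell\sqrt N)$. The condition $N\geq 441L_0^2d\ln(1/\delta)/\mu_\ell^2$ guarantees that this radius is below the localization scale, so that $\hat{\bs x}$ stays in the region where the truncated class and the Lipschitz bound on $[-1,1]$ apply. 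Plugging back, together with the smoothness upper bound $L(\hat{\bs x})-L(\bs x^*)\leq\frac{L_\ell}{2}\|\hat{\bs x}-\bs x^*\|_{\bs\Sigma}^2$, gives an excess risk of order $\frac{L_\ell L_0^2 d}{\mu_\ell^2N}=\frac{\alpha\sigma_0^2}{\mu_\ell N}$ up to the stated constant. A union bound over the concentration event (probability $\delta$) and the truncation event yields probability $1-(N+1)\delta$.

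\textbf{Main obstacle.} The hardest step is tracking constants and the $\ln(1/\delta)$ factor. The envelope $A$ contains $d\sqrt{\ln(1/\delta)}$, and I must make sure the Bousquet variance term, rather than the envelope term, dominates under the sample-size condition. I would first try to absorb the envelope term using $N\gtrsim L_0^2d\ln(1/\delta)/\mu_\ell^2$, and if the logarithm does not cancel, accept a logarithmic factor hidden in the constant $441$.
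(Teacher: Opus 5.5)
Your scheme is the same localization argument the paper uses. The paper invokes Theorem~14.20 of \citet{wainwright2019high} instead of re-deriving Bousquet plus peeling. It applies it to the rescaled loss $\tilde\ell(z)=\ell(AXz)$, which is $L_0AX$-Lipschitz and $\mu_\ell A^2X^2$-strongly convex, at the critical radius $\delta_N=2\sqrt{d\ln(1/\delta)/N}$.

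\textbf{Main gap: the last step of your Step~3.} Passing through the smoothness bound gives $\frac{L_\ell}{2}\cdot\frac{L_0^2d\ln(1/\delta)}{\mu_\ell^2N}\asymp\frac{\alpha^2\sigma_0^2\ln(1/\delta)}{N}$. The identity you write is false: with $\sigma_0^2=L_0^2d/L_\ell$ one has $\frac{\alpha\sigma_0^2}{\mu_\ell N}=\frac{L_0^2d}{\mu_\ell^2N}$, a factor $L_\ell$ away from your $\frac{L_\ell L_0^2d}{\mu_\ell^2N}$.

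The substantive problem is the lost factor $\alpha$. The paper's proof bounds the excess risk directly, $L(\tilde{\bs x})-L(\bs x^*)\le 220L_0^2d\ln(1/\delta)/(\mu_\ell N)=220\,\alpha\sigma_0^2\ln(1/\delta)/N$, and $L_\ell$ never enters. You can get this within your own framework:
\begin{itemize}
\item the basic inequality already bounds the excess risk by the empirical-process deviation, which is $\lesssim L_0r\sqrt{d\ln(1/\delta)/N}$ at $r=\|\hat{\bs x}-\bs x^*\|_{\bs\Sigma}$;
\item inserting the fixed-point radius $r\lesssim L_0\sqrt{d\ln(1/\delta)}/(\mu_\ell\sqrt N)$ then gives $L_0^2d\ln(1/\delta)/(\mu_\ell N)$.
\end{itemize}
That is the $\alpha\sigma_0^2/\varepsilon$ ERM complexity claimed in the introduction. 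One factor of $\alpha$ is exactly what the paper's comparison with variance-reduction rates turns on.

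Two further points about the target. The displayed right-hand side carries a stray $1/\mu_\ell$ relative to the introduction's claim. The $\ln(1/\delta)$ factor, which the paper's own proof also carries, cannot be absorbed into the absolute constant $441$, since $\delta$ is free. So the right target is $\lesssim\alpha\sigma_0^2\ln(1/\delta)/N$, not the display verbatim.

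\textbf{Smaller gap: localization of $\hat{\bs x}$.} Because $\ell$ is Lipschitz only on a bounded range, you cannot peel over all shells for the unconstrained $\hat{\bs x}$. Arguing that the sample-size condition puts the radius below the localization scale ``so $\hat{\bs x}$ stays in the region'' is circular without a convexity step. The paper's fix:
\begin{itemize}
\item solve the ERM constrained to a $\bs\Sigma$-ball around $\bs x^*$ and localize there;
\item use $N\ge 441L_0^2d\ln(1/\delta)/\mu_\ell^2$ to show the constrained minimizer is strictly interior;
\item conclude by convexity that it coincides with $\hat{\bs x}$.
\end{itemize}
Equivalently, suppose $\|\hat{\bs x}-\bs x^*\|_{\bs\Sigma}>r_0$. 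The point of the segment $[\bs x^*,\hat{\bs x}]$ at radius $r_0$ then has empirical risk at most $\hat L(\bs x^*)$, and the basic inequality at that point contradicts the fixed-point bound.
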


Furthermore, we show that $\sigma_0$ is lower bounded by $\|\bs H^{-1/2}\bs Q\bs H^{-1/2}\| d\geq\tr(\bs H^{-1}\bs Q)$.
\begin{lemma}
    Suppose Assumption~\ref{assumption:erm} holds, then we have $\sigma_0^2\geq\|\bs H^{-1/2}\bs Q\bs H^{-1/2}\| d\geq\tr(\bs H^{-1}\bs Q)$.
\end{lemma}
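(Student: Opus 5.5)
The plan is to prove the two inequalities separately. Both rest on one identity, $\sigma_0^2=L_0^2d/L_\ell$, and on the Lipschitz bound $|\ell'(\bs a^\top\bs x^*,b)|\leq L_0$ from Assumption~\ref{assumption:erm}. Strictly speaking, Lipschitzness is only imposed on arguments in $[-1,1]$. I will assume, as is implicit in the ERM analysis, that $\bs a^\top\bs x^*$ lies in the region where the Lipschitz bound applies; if necessary I will state the bound as holding $\cD$-a.s.

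\emph{Step 1: bound on $\bs Q$.} Starting from the definition
\begin{equation*}
    \bs Q=\bbE\left(\ell'(\bs a^\top\bs x^*,b)\right)^2\bs a\bs a^\top,
\end{equation*}
I will use $(\ell')^2\leq L_0^2$ together with the fact that $\bs a\bs a^\top$ is PSD to get $\bs Q\preceq L_0^2\bbE\bs a\bs a^\top=L_0^2\bs\Sigma$. Conjugating by $\bs H^{-1/2}=(L_\ell\bs\Sigma)^{-1/2}$ then yields
\begin{equation*}
    \bs H^{-1/2}\bs Q\bs H^{-1/2}\preceq\frac{L_0^2}{L_\ell}\bs I,
\end{equation*}
so that $\|\bs H^{-1/2}\bs Q\bs H^{-1/2}\|\leq L_0^2/L_\ell$. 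Multiplying by $d$ gives $\|\bs H^{-1/2}\bs Q\bs H^{-1/2}\|\,d\leq L_0^2d/L_\ell=\sigma_0^2$, which is the first inequality.

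\emph{Step 2: the trace inequality.} Since $\bs H^{-1/2}\bs Q\bs H^{-1/2}$ is a PSD $d\times d$ matrix, its trace is at most $d$ times its largest eigenvalue. Its largest eigenvalue is its spectral norm, and by cyclicity its trace equals $\tr(\bs H^{-1}\bs Q)$. This gives $\tr(\bs H^{-1}\bs Q)\leq d\,\|\bs H^{-1/2}\bs Q\bs H^{-1/2}\|$, which is the second inequality.

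\emph{Main obstacle.} The only delicate point is justifying the pointwise bound $|\ell'|\leq L_0$ at the argument $\bs a^\top\bs x^*$, given that Lipschitzness was stated only on $[-1,1]$. If $\bs a^\top\bs x^*$ can leave that range, I will instead restrict to the truncation event used in the definition of $\cF_\delta$ and absorb the complement into the failure probability. I expect, however, that the intended reading is that the Lipschitz bound holds at $\bs x^*$ almost surely, in which case Steps 1 and 2 suffice.
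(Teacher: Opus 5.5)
Your proposal is correct and follows the paper's proof: bound $(\ell')^2\le L_0^2$ pointwise at $\bs x^*$ to get $\bs Q\preceq L_0^2\bs\Sigma=(L_0^2/L_\ell)\bs H$, then use $\tr(\cdot)\le d\,\|\cdot\|$ for the PSD matrix $\bs H^{-1/2}\bs Q\bs H^{-1/2}$. Your constant $L_0^2$ is the right one (the paper's write-up has $L_0$), and since the paper simply assumes the pointwise bound $|\ell'(\bs a^\top\bs x^*,b)|\le L_0$, your main route suffices; drop the fallback of absorbing a failure probability, which makes no sense here because $\bs Q$, $\bs H$ and $\sigma_0^2$ are deterministic population quantities.
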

\begin{proof}
    Since $\|\ell(\bs a^\top\bs x^*)\|\leq L_0$, we have
    \begin{equation*}
        \bs Q=\bbE_{\bs a,\ell\sim\cD}\left(\ell'(\bs a^\top\bs x^*)\right)^2\bs a\bs a^\top\preceq L_0\bs\Sigma=\frac{L_0\bs H}{L_\ell}.
    \end{equation*}
    Therefore, we have $L_0\geq L_\ell\|\bs H^{-1/2}\bs Q\bs H^{-1/2}\|$, which is equivalent to $\sigma_0^2\geq \|\bs H^{-1/2}\bs Q\bs H^{-1/2}\| d\geq\tr(\bs H^{-1}\bs Q)$.
\end{proof}

\subsection{Proof of Theorem~\ref{thm:erm-excess-risk}}

The following lemma bounds the Rademacher complexity of $\cF_\delta$.
\begin{lemma}
    Suppose Assumption~\ref{assumption:erm} holds, then we have
    \begin{equation*}
        \cR_N(\cF_\delta)\leq2\delta\sqrt{\frac{d}{N}}.
    \end{equation*}
\end{lemma}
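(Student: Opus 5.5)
The statement to prove is the Rademacher complexity bound $\cR_N(\cF_\delta)\leq 2\delta\sqrt{d/N}$. The plan is to treat $\cF_\delta$ as a subset of a linear class (after truncation) and use the standard bound for linear predictors in a ball, in the whitened geometry induced by $\bs\Sigma$.

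First we reparametrize. Write $\bs u=\bs\Sigma^{1/2}\bs x/(AX)$ and $\tilde{\bs a}=\bs\Sigma^{-1/2}\bs a$. Then each $f\in\cF_\delta$ is $f(\bs a)=\langle\tilde{\bs a},\bs u\rangle\bbI_{\{|\langle\tilde{\bs a},\bs u\rangle|\leq 1\}}$. By the choice of $C$ we have $\bbE\,\bs a\bs a^\top\bbI_{\{\|\tilde{\bs a}\|\leq A\}}\succeq\bs\Sigma/2$. On the event $\|\tilde{\bs a}\|\leq A$ with $\|\bs u\|\lesssim 1/A$, the truncation indicator is inactive. So the constraint $\bbE f^2\leq\delta^2$ becomes $\|\bs u\|^2/2\leq\delta^2$, i.e.\ $\|\bs u\|\leq\sqrt2\,\delta$. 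I would verify this self-consistently by splitting according to whether the indicator is active. On the high-probability event $\|\tilde{\bs a}_i\|\leq A$ for all $i$, this gives $f(\bs a_i)=\langle\tilde{\bs a}_i,\bs u\rangle$ up to truncation. Truncation $t\mapsto t\bbI_{\{|t|\leq1\}}$ is not Lipschitz, and this is the main obstacle. I would first try to show the indicator equals $1$ on the sample, which requires $\|\bs u\|A\leq 1$. Failing that, I would replace the hard truncation by a clipping (a $1$-Lipschitz contraction) and invoke the Ledoux--Talagrand contraction inequality.

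Next comes the linear-class bound. By Cauchy--Schwarz and Jensen,
\begin{equation*}
\bbE_{\boldsymbol{\epsilon}}\sup_{\|\bs u\|\leq\sqrt2\delta}\frac1N\sum_{i=1}^N\epsilon_i\langle\tilde{\bs a}_i,\bs u\rangle\leq\frac{\sqrt2\,\delta}{N}\sqrt{\sum_{i=1}^N\|\tilde{\bs a}_i\|^2}.
\end{equation*}
Taking expectation over the data and using $\bbE\|\tilde{\bs a}\|^2=\tr(\bs\Sigma^{-1}\bs\Sigma)=d$ gives at most $\sqrt2\,\delta\sqrt{d/N}\leq 2\delta\sqrt{d/N}$. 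The constant $2$ leaves room for the factor from the truncated-covariance comparison, or for one contraction step if it is needed, once constants are tracked.

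Finally, I would double-check how the definition of $\cF_\delta$ normalizes by $AX$. With that normalization, the constraint ball in $\bs u$ has radius of order $\delta$ independent of $X$, which is why the bound has no dependence on $X$.
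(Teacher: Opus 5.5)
Your skeleton matches the paper's proof. The paper encloses $\cF_\delta$ in a $\bs\Sigma$-ball of radius $2AX\delta$, which is radius $2\delta$ in your $\bs u$-coordinates; your $\sqrt2\,\delta$ is the sharper radius from $\bbE\,\bs a\bs a^\top\bbI_{\{\|\bs\Sigma^{-1/2}\bs a\|\le A\}}\succeq\bs\Sigma/2$. It then discards the indicator and finishes with Cauchy--Schwarz and Jensen using $\bbE\|\bs\Sigma^{-1/2}\bs a\|^2=d$. The paper just asserts the inclusion and the removal of the truncation. You propose to justify both, and there is a genuine gap: as described, neither justification goes through.

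\textbf{The ball.} No splitting argument can show that $\bbE f^2\le\delta^2$ forces $\|\bs u\|\le\sqrt2\,\delta$, because for $\cF_\delta$ as literally defined this is false. Take $\bs u=s\bs v$ with $\|\bs v\|=1$, and let $Z=\langle\tilde{\bs a},\bs v\rangle$ have density at most $p_0$ (e.g.\ Gaussian design). Then $\bbE f^2=\bbE[(sZ)^2\bbI_{\{|sZ|\le 1\}}]\le 2p_0/(3s)\to0$ as $s\to\infty$. So $\cF_\delta$ contains functions with arbitrarily large $\|\bs u\|$, supported on thin slabs. The missing ingredient is an a priori restriction $\|\bs x\|_{\bs\Sigma}\le X$, i.e.\ $\|\bs u\|\le 1/A$, built into the class. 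The paper's inclusion $\cF_\delta\subset\bar{\cF}_\delta$ tacitly needs the same restriction, and it fits how the lemma is used, since the ERM is constrained to a $\bs\Sigma$-ball. With it, $\|\tilde{\bs a}\|\le A$ forces the indicator to equal $1$. Hence $\bbE f^2\ge\|\bs u\|^2/2$ directly, and you get the ball of radius $\sqrt2\,\delta$.

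\textbf{The truncation on the sample.} The clipping-plus-Ledoux--Talagrand fallback bounds the Rademacher complexity of a different class. Nothing dominates the hard-truncated class by the clipped one, so it says nothing about $\cF_\delta$. Your other option removes the indicator only on the event that every $\|\tilde{\bs a}_i\|\le A$, whereas $\cR_N$ averages over all samples.

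To close this, split the indices into $G=\{i:\|\tilde{\bs a}_i\|\le A\}$ and its complement. On $G$ the indicator is $1$; off $G$, use $|f_{\bs u}(\bs a_i)|\le|\langle\tilde{\bs a}_i,\bs u\rangle|$. With $p=\Pr(\|\tilde{\bs a}\|>A)$ and Cauchy--Schwarz on the tail term, this gives
\[
\cR_N(\cF_\delta)\le\sqrt2\,\delta\sqrt{\frac{d}{N}}+\sqrt2\,\delta\,\bbE\bigl[\|\tilde{\bs a}\|\bbI_{\{\|\tilde{\bs a}\|>A\}}\bigr]\le\sqrt2\,\delta\sqrt{\frac{d}{N}}+\sqrt2\,\delta\sqrt{dp}.
\]
The right-hand side is at most $2\delta\sqrt{d/N}$ once $Np\le 3-2\sqrt2$. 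This is a mild condition on $A$, of the same type as the paper's requirement that all $N$ samples satisfy $\|\bs\Sigma^{-1/2}\bs a_i\|\le A$ with high probability. It is the concrete way to spend the slack between $\sqrt2$ and $2$ that you anticipated.
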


\begin{proof}
    Note that $\cF_\delta\subset\bar{\cF}_\delta$, where
    \begin{equation*}
        \bar{\cF}_\delta=\left\{\bs a\mapsto\frac{\bs a^\top\bs x}{AX}\bbI_{\{\|\bs a^\top\bs x\|\leq AX\}}:\frac{\|\bs x-\bs x^*\|_{\bs\Sigma}}{2AX}\leq\delta\right\}
    \end{equation*}
    By definition, we have
    \begin{equation*}
    \begin{aligned}
        &\phrel\cR_N(\cF_\delta)\leq\cR_N(\bar{\cF}_\delta) \\
        &=\bbE_{\bs a,\bs\varepsilon}\sup_{\|\bs x-\bs x^*\|_{\bs\Sigma}\leq 2AX\delta}\frac{1}{N}\sum_{i=1}^N\frac{\varepsilon_i\bs a_i^\top\bs x}{AX}\bbI_{\{|\bs a_i^\top\bs x|\leq AX\}} \leq \frac{1}{AX}\bbE_{\bs a,\bs\varepsilon}\sup_{\|\bs h\|_{\bs\Sigma}\leq 2AX\delta}\frac{1}{N}\sum_{i=1}^N \varepsilon_i \bs a_i^\top \bs h\bbI_{\{\|\bs h\|\leq AX\}} \\
        &\leq\frac{\delta}{AX}\sqrt{\bbE_{\bs a,\bs\varepsilon}\left\|\frac{1}{N}\sum_{i=1}^N \varepsilon_i \bs a_i\right\|^2}\leq 2\delta\sqrt{\frac{d}{N}}.
    \end{aligned}
    \end{equation*}
    This completes the proof.
\end{proof}

\begin{proof}[Proof of Theorem~\ref{thm:erm-excess-risk}]
    We consider a constrained ERM as follows:
    \begin{equation*}
        \tilde{\bs x}=\argmin_{\|\bs x-\bs x^*\|_{\bs\Sigma}\leq 1}\frac{1}{N}\sum_{i=1}^N\ell_i(\bs a_i^\top\bs x).
    \end{equation*}
    Let $\tilde{\ell}(z)=\ell(AXz)$, so $\tilde{\ell}$ is $L_0AX$-Lipschitz and $\mu_\ell A^2X^2$-strongly convex, and the objective can be written as:
    \begin{equation*}
        \min_{\|\bs x-\bs x^*\|_{\bs\Sigma}\leq X}\frac{1}{N}\sum_{i=1}^N\tilde{\ell}_i(\bs a_i^\top\bs x/(AX))
    \end{equation*}
    Let $\delta_n=2\sqrt{\frac{d}{N}\ln\frac{1}{\delta}}$, then $\delta_N^2\geq\cR_N(\cF_{\delta_N})$. We apply the localization technique (Theorem~14.20 in \citet{wainwright2019high}) to obtain that
    \begin{equation*}
        \bbE\left(\frac{(\bs a^\top(\tilde{\bs x}-\bs x^*))^2\bbI_{\{\|\bs a^\top\bs x\|\leq AX\}}}{A^2X^2}\middle|\bs a_1,\ldots,\bs a_N\right)\leq\frac{441L_0^2d}{\mu_\ell^2 A^2X^2N}\ln\frac{1}{\delta},
    \end{equation*}
    \begin{equation*}
        L(\tilde{\bs x})-L(\bs x^*)\leq\frac{220 L_0^2 d}{\mu_\ell N}\ln\frac{1}{\delta},
    \end{equation*}
    with probability at least $1-\delta$.  Thus, we have $\|\tilde{\bs x}-\bs x^*\|_{\bs\Sigma}\leq 1$ by $N\geq 441 L_0^2d\ln(1/\delta)/\mu_\ell^2$. Therefore, we have $\hat{\bs x}=\tilde{\bs x}$ by the strong convexity.
\end{proof}

\section{Worst-case Optimality of Statistical Term}\label{sec:statistical-optimality}
In this section, we show that the statistical term in Theorem~\ref{thm:glm} cannot be further improved without additional assumptions on the smoothness of $\ell\sim\cD$.

\paragraph{Data Generating Process.}
We consider the following statistical model:
\begin{equation}
    \bs a\sim\cN(\bs0,\bs\Sigma),\quad b\sim\cN(\bs a^\top\bs x^*,\sigma^2),\label{eq:hard-instance}
\end{equation}
where $\bs x^*$ is the ground truth. We consider the following problem class.
\begin{definition}[Problem Class]\label{def:problem-class}
    The problem class $\cP_n$ is defined as $\cP_n=\{\mu_{\bs x^*}^{\otimes n}:\bs x^*\in\bbR^d\}$, where $\mu_{\bs x^*}$ denotes the distribution in \eqref{eq:hard-instance}, and $\mu_{\bs x^*}^{\otimes n}$ denotes the distribution of $\{(\bs a_i,b_i)\}_{i=1}^n\stackrel{\text{i.i.d.}}{\sim}\mu_{\bs x^*}$.
\end{definition}

\paragraph{Loss Function.} Let $\delta>0$, we define the following function:
\begin{equation*}
    \ell_\delta(z)=\begin{cases}
        \mu_\ell z^2/2+(L_\ell-\mu_\ell)\delta z-(L_\ell-\mu_\ell)\delta^2/2,&z<-\delta. \\
        L_\ell z^2/2,&\| z\|\leq\delta, \\
        \mu_\ell z^2/2+(L_\ell-\mu_\ell)\delta z-(L_\ell-\mu_\ell)\delta^2/2,&z>\delta.
    \end{cases}
\end{equation*}
The population loss function is defined as
\begin{equation*}
    F_\delta(\bs x)=\bbE_{\bs a,b\sim\cD}\ \ell_\delta(\bs a^\top\bs x-b).
\end{equation*}
Let $F_\delta^*=F_\delta(\bs x^*)$
We denote the minimax risk by
\begin{equation}
    \cR_n(\cP_n)=\inf_{\hat{\bs x}}\sup_{\cD_n\in\cP_n}\bbE_{\{(\bs a_i,b_i)\}_{i=1}^n\sim\cD_n}F_\delta\left(\hat{\bs x}(\{(\bs a_i,b_i)\}_{i=1}^n)\right)-F_\delta(\bs x^*(\cD)),\label{eq:minimax}
\end{equation}
where $\inf$ is taken over all estimators $\hat{\bs x}:(\bbR^d\times\bbR)^{n}\to\bbR^d$.

The following theorem establishes the worst-case optimality of the statistical term.
\begin{theorem}\label{thm:minimax-risk}
    Consider the problem class $\cP_n$ defined in Definition~\ref{def:problem-class}. For sufficiently small $\delta>0$, we have the following lower bound of the minimax risk $\cR_n(\cP_n)$ defined in \eqref{eq:minimax}:
    \begin{equation*}
        \cR_n(\cP_n)\geq\frac{\alpha\tr(\bs H^{-1}\bs Q_\delta)}{2n}.
    \end{equation*}
\end{theorem}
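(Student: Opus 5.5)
The plan is a Bayesian (Gaussian-prior / van Trees) lower bound for the Gaussian location-type model \eqref{eq:hard-instance}, followed by a computation of how the misspecified Huber-type loss $\ell_\delta$ turns estimation error into excess risk. The key observation is that $\ell_\delta$ is symmetric and even, so for every $\bs x^*$ the population minimizer of $F_\delta$ is $\bs x^*$ itself. The residual $b-\bs a^\top\bs x^*$ is $\cN(0,\sigma^2)$ and independent of $\bs a$. Hence $F_\delta(\bs x)=\bbE\,\ell_\delta(\bs a^\top(\bs x-\bs x^*)-\xi)$ with $\xi\sim\cN(0,\sigma^2)$.

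The first step is to compute the local curvature. Since $\ell_\delta''=L_\ell$ on $[-\delta,\delta]$ and $\mu_\ell$ outside, for $\delta\ll\sigma$ we get $\nabla^2F_\delta(\bs x^*)=\bbE[\ell_\delta''(\xi)]\bs\Sigma=(\mu_\ell+O(\delta/\sigma))\bs\Sigma$. So near $\bs x^*$ the excess risk behaves like $\frac{\mu_\ell}{2}\|\bs x-\bs x^*\|_{\bs\Sigma}^2$. Globally, $\ell_\delta''\ge\mu_\ell$ gives $F_\delta(\bs x)-F_\delta^*\ge\frac{\mu_\ell}{2}\|\bs x-\bs x^*\|_{\bs\Sigma}^2$ exactly, with no local approximation needed. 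In the same limit, $\bs Q_\delta=\bbE[\ell_\delta'(\xi)^2]\bs\Sigma$ and $\ell_\delta'(\xi)=\mu_\ell\xi+O(\delta)$. Therefore $\tr(\bs H^{-1}\bs Q_\delta)=\frac{d\,\bbE\ell_\delta'(\xi)^2}{L_\ell}\to\frac{d\mu_\ell^2\sigma^2}{L_\ell}$.

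The second step is the estimation lower bound. Put a prior $\bs x^*\sim\cN(\bs 0,\tau^2\bs I)$. Conditional on the design, the posterior is Gaussian with covariance $(\sum_i\bs a_i\bs a_i^\top/\sigma^2+\tau^{-2}\bs I)^{-1}$. The Bayes risk for $\bbE\|\hat{\bs x}-\bs x^*\|_{\bs\Sigma}^2$ is therefore at least $\sigma^2\bbE\tr(\bs\Sigma(\sum_i\bs a_i\bs a_i^\top+\sigma^2\tau^{-2}\bs I)^{-1})$. Letting $\tau\to\infty$ and using Jensen's inequality ($\bbE\bs S^{-1}\succeq(\bbE\bs S)^{-1}$), this is at least $\sigma^2 d/n$. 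The minimax risk dominates the Bayes risk, so
\[
\cR_n\ge\frac{\mu_\ell}{2}\cdot\frac{\sigma^2 d}{n}.
\]

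Finally, compare this with the target quantity. We have $\frac{\alpha\tr(\bs H^{-1}\bs Q_\delta)}{2n}\approx\frac{\alpha d\mu_\ell^2\sigma^2}{2L_\ell n}=\frac{\mu_\ell\sigma^2 d}{2n}$, which matches the bound above. The only gap is the $O(\delta)$ correction in $\bbE\ell_\delta'(\xi)^2$ versus $\mu_\ell^2\sigma^2$. Indeed $\ell_\delta'(\xi)=\mu_\ell\xi+(L_\ell-\mu_\ell)\delta\,\mathrm{sign}(\xi)$ for $|\xi|>\delta$, which can slightly exceed $\mu_\ell|\xi|$.

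This comparison is the main obstacle. I would handle it by choosing $\delta$ sufficiently small so that $\bbE\ell_\delta'(\xi)^2\le\mu_\ell^2\sigma^2(1+o(1))$. I would then absorb the slack in one of two ways. One option is to use the exact Bayes risk $\sigma^2\bbE\tr(\bs\Sigma\hat{\bs\Sigma}_n^{-1})/n$, which strictly exceeds $\sigma^2 d/n$ for Gaussian designs (by the factor $n/(n-d-1)$). The other is to carry a constant-factor slack. Both the Gaussian-prior limit and the small-$\delta$ limit are routine continuity arguments.
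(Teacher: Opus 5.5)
Your proof is correct, provided you take your first option at the end. Its skeleton matches the paper's: bound the excess risk below by $\frac{\mu_\ell}{2}\|\hat{\bs x}-\bs x^*\|_{\bs\Sigma}^2$, use the Gaussian-design minimax risk $\frac{\sigma^2}{n}\bbE\tr(\bs\Sigma\hat{\bs\Sigma}_n^{-1})$, and let $\delta\to0$ in $\bs Q_\delta$. Two ingredients differ, both in your favour.
\begin{itemize}
\item The paper uses the pointwise bound $\ell_\delta(z)\ge\mu_\ell z^2/2$, i.e.\ $F_\delta(\bs x)\ge\frac{\mu_\ell}{2}\|\bs x-\bs x^*\|_{\bs\Sigma}^2+\frac{\mu_\ell\sigma^2}{2}$. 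It therefore has to subtract $F_\delta^*$ and also pass to the limit $F_\delta^*\to\mu_\ell\sigma^2/2$. Your route uses $\nabla F_\delta(\bs x^*)=\bbE[\ell_\delta'(-\xi)]\,\bbE\bs a=\bs 0$ plus strong convexity, which gives the excess-risk bound exactly for every $\delta$.
\item Where the paper quotes Theorem~1 of Mourtada (2022), your Gaussian-prior computation derives the same quantity directly. Monotone convergence handles $\tau\to\infty$, and for $n\le d+1$ the limit is $+\infty$, so there is nothing to prove.
\end{itemize}

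The obstacle you flag is the whole content of the theorem, and only your first option removes it. For $\alpha>1$ one has $|\ell_\delta'(z)|>\mu_\ell|z|$, with the same sign, for every $z\neq0$. Hence $\bbE\ell_\delta'(\xi)^2>\mu_\ell^2\sigma^2$ for \emph{every} $\delta>0$, and the bound $\frac{\mu_\ell\sigma^2 d}{2n}$ never suffices. There is also no constant-factor slack to carry. As $\delta\to0$ and $n\to\infty$, OLS has excess risk $\frac{\mu_\ell\sigma^2 d}{2n}(1+o(1))$, so the $\frac12$ is sharp for this construction. The paper's proof claims a limiting gap of $\frac{\alpha\tr(\bs H^{-1}\bs Q)}{2n}$, but that gap comes from a dropped factor $\frac12$ in its intermediate lemma. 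With constants tracked, the paper's route needs the same extra ingredient you do.

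So use $\bbE\tr(\bs\Sigma\hat{\bs\Sigma}_n^{-1})=\frac{nd}{n-d-1}$. Since $|\ell_\delta'(z)-\mu_\ell z|\le(L_\ell-\mu_\ell)\delta$ for all $z$, Minkowski gives $\sqrt{\bbE\ell_\delta'(\xi)^2}\le\mu_\ell\sigma+(L_\ell-\mu_\ell)\delta$. The theorem then follows for $\delta\le\frac{\sigma}{\alpha-1}\bigl(\sqrt{n/(n-d-1)}-1\bigr)$.

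State explicitly that $\delta$ depends on $n$ and $d$; this is unavoidable. For fixed $\delta$, OLS has excess risk $\frac{\sigma^2 d}{2n}\bbE\ell_\delta''(\xi)(1+o(1))$. Stein's identity gives $\sigma^2\bbE\ell_\delta''(\xi)=\bbE[\xi\ell_\delta'(\xi)]<\bbE\ell_\delta'(\xi)^2/\mu_\ell$. So with $\delta$ fixed, the claimed bound fails for large $n$.
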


\subsection{Proof of Theorem~\ref{thm:minimax-risk}}

The following lemma bounds the minimax risk of $\cP_n$.
\begin{lemma}
    Consider the problem class $\cP_n$, we have the following lower bound of the minimax risk $\cR_n(\cP_n)$ defined in \eqref{eq:minimax}:
    \begin{equation*}
        \cR_n(\cP_n)\geq\frac{\mu_\ell d}{n}+\frac{\mu_\ell\sigma^2}{n}-F_\delta^*
    \end{equation*}
\end{lemma}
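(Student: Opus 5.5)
The plan is to reduce the $\ell_\delta$-risk to a squared-loss risk, apply the classical minimax lower bound for Gaussian random-design linear regression, and then rearrange constants into the stated form $\frac{\mu_\ell d}{n}+\frac{\mu_\ell\sigma^2}{n}-F_\delta^*$.

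\emph{Step 1: pointwise lower bound on the loss.} I would show that $\ell_\delta(z)\geq \mu_\ell z^2/2$ for all $z$. On $|z|\leq\delta$ this holds because $L_\ell\geq\mu_\ell$. For $z>\delta$, rewrite the loss as
\begin{equation*}
\ell_\delta(z)=\mu_\ell z^2/2+(L_\ell-\mu_\ell)\delta(z-\delta/2),
\end{equation*}
and the last term is nonnegative. For $z<-\delta$ I read the formula as the even extension $\ell_\delta(-z)$; as printed, the sign of the linear term would make $\ell_\delta$ discontinuous at $-\delta$, so the even extension is the intended definition. Under model \eqref{eq:hard-instance} we have $\bs a^\top\bs x-b=\bs a^\top(\bs x-\bs x^*)-\xi$ with $\xi\sim\cN(0,\sigma^2)$ independent of $\bs a$. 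Hence, for any estimate $\hat{\bs x}$ that depends only on the training sample,
\begin{equation*}
F_\delta(\hat{\bs x})\geq\frac{\mu_\ell}{2}\Bigl(\|\hat{\bs x}-\bs x^*\|_{\bs\Sigma}^2+\sigma^2\Bigr).
\end{equation*}
Subtracting $F_\delta^*=F_\delta(\bs x^*(\cD))$ yields the $-F_\delta^*$ term of the claim. I will also use that $F_\delta^*$ does not depend on $\bs x^*$, since it equals $\bbE\,\ell_\delta(\xi)$.

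\emph{Step 2: estimation lower bound.} Next I bound $\inf_{\hat{\bs x}}\sup_{\bs x^*}\bbE\|\hat{\bs x}-\bs x^*\|_{\bs\Sigma}^2$ from below. I would use a Bayesian argument with prior $\bs x^*\sim\cN(\bs 0,\tau^2\bs I)$ and let $\tau\to\infty$. Conditional on the design, the Bayes risk is $\sigma^2\bbE\tr\bigl(\bs\Sigma(\bs A^\top\bs A+\sigma^2\tau^{-2}\bs I)^{-1}\bigr)$, which tends to $\sigma^2\bbE\tr(\bs\Sigma(\bs A^\top\bs A)^{-1})$. Since $\bs A^\top\bs A$ is Wishart with $n$ degrees of freedom, this limit equals $\sigma^2 d/(n-d-1)\geq\sigma^2 d/n$. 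As an alternative, Jensen's inequality with the operator convexity of the matrix inverse gives $\sigma^2 d/n$ directly, using $\bbE\bs A^\top\bs A=n\bs\Sigma$. The conclusion of this step is
\begin{equation*}
\cR_n(\cP_n)\geq\frac{\mu_\ell}{2}\Bigl(\frac{\sigma^2 d}{n}+\sigma^2\Bigr)-F_\delta^*.
\end{equation*}

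\emph{Step 3: matching the stated constants (main obstacle).} It remains to show that the bracket dominates $\frac{\mu_\ell d}{n}+\frac{\mu_\ell\sigma^2}{n}$. I would split the constant term as
\begin{equation*}
\frac{\mu_\ell\sigma^2}{2}=\frac{\mu_\ell\sigma^2}{n}+\frac{\mu_\ell\sigma^2}{2}\Bigl(1-\frac{2}{n}\Bigr).
\end{equation*}
The first piece supplies $\mu_\ell\sigma^2/n$ whenever $n\geq 2$. For the $d$-dependent piece there are two routes. One uses the statistical term directly: $\frac{\mu_\ell\sigma^2 d}{2n}\geq\frac{\mu_\ell d}{n}$ once $\sigma^2\geq 2$. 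The other uses the leftover $\frac{\mu_\ell\sigma^2}{2}(1-2/n)\geq\frac{\mu_\ell d}{n}$, which holds once $n\geq 4$ and $\sigma^2\geq 4d/n$. Since the class fixes $\sigma$ as a free parameter of the hard instance, I would take $\sigma^2\geq 2$ in the construction, so that the first route applies for every $n\geq 2$. This constant bookkeeping is where the argument is fragile: the claimed form does not scale homogeneously in $\sigma^2$, so the proof must use this normalization of the hard instance.
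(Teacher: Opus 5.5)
Your Steps 1 and 2 follow the paper's proof. The quadratic minorant $\ell_\delta(z)\geq\mu_\ell z^2/2$ gives $F_\delta(\hat{\bs x})\geq\frac{\mu_\ell}{2}(\|\hat{\bs x}-\bs x^*\|_{\bs\Sigma}^2+\sigma^2)$. The paper obtains this minorant from strong convexity, which, as you correctly note, requires reading the $z<-\delta$ branch as the even extension, since the printed branch is discontinuous at $-\delta$. The Gaussian random-design minimax bound, with $\bbE\tr(\bs\Sigma\hat{\bs\Sigma}_n^{-1})\geq d$ from operator convexity, then finishes the argument. Your Gaussian-prior limit is a self-contained replacement for citing Theorem~1 of \citet{mourtada2022exact}, and it only needs the inequality direction.

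Your concern in Step 3 is justified and is not a weakness of your argument. The paper reaches the displayed form only by writing the minimax risk as $\frac1n\bbE\tr(\bs\Sigma\hat{\bs\Sigma}_n^{-1})$, which drops the factor $\sigma^2$. It then asserts $\frac{\mu_\ell d}{2n}+\frac{\mu_\ell\sigma^2}{2}\geq\frac{\mu_\ell d}{n}+\frac{\mu_\ell\sigma^2}{n}$ without justification.

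In fact the lemma cannot hold for all $\sigma$. Take $n\geq d+2$ and use least squares, together with the facts that $\bs x^*$ minimizes $F_\delta$ and $\nabla^2F_\delta\preceq L_\ell\bs\Sigma$. This gives $\cR_n(\cP_n)\leq\frac{L_\ell\sigma^2 d}{2(n-d-1)}$, which tends to $0$ as $\sigma\to 0$. Meanwhile $F_\delta^*\leq L_\ell\sigma^2/2$, so the right-hand side is at least $\frac{\mu_\ell d}{n}-\frac{L_\ell\sigma^2}{2}$, which tends to $\frac{\mu_\ell d}{n}>0$.

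So your intermediate bound $\frac{\mu_\ell}{2}\bigl(\frac{\sigma^2 d}{n}+\sigma^2\bigr)-F_\delta^*$ is the correct statement. A normalization such as your $\sigma^2\geq 2$, $n\geq 2$ must be added to reach the stated form. For reference, your Step~3 comparison holds exactly when $\sigma^2(n+d-2)\geq 2d$.
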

\begin{proof}
    We apply Theorem~1 in \citet{mourtada2022exact} to obtain:
    \begin{equation*}
        \inf_{\hat{\bs x}}\sup_{\cD_n\in\cP_n}\bbE_{\{(\bs a_i,b_i)\}_{i=1}^n\sim\cD_n}\left\|\hat{\bs x}(\{(\bs a_i,b_i)\}_{i=1}^n)-\bs x^*\right\|_{\bs\Sigma}^2=\frac{1}{n}\bbE\tr\left(\bs\Sigma\hat{\bs\Sigma}_n^{-1}\right),
    \end{equation*}
    where $\hat{\bs\Sigma}_n=\frac{1}{n}\sum_{i=1}^n\bs a_i\bs a_i^\top$ is the sample covariance matrix. By the operator convexity of the matrix inverse, we have $\bbE\tr(\bs\Sigma\hat{\bs\Sigma}_n^{-1})\geq d$. Finally, note that $\ell_\delta$ is $\mu_\ell$-strongly convex, so we have
    \begin{equation*}
        F_\delta(\bs x)\geq\bbE_{\bs a,b\sim\cD}\frac{\mu_\ell}{2}(\bs a^\top\bs x-\bs a^\top\bs x^*+\varepsilon)^2=\frac{\mu_\ell}{2}\|\bs x-\bs x^*\|_{\bs\Sigma}^2+\frac{\mu_\ell\sigma^2}{2}.
    \end{equation*}
    Therefore, we have
    \begin{equation*}
    \begin{aligned}
        \cR_n(\cP_n)&\geq\inf_{\hat{\bs x}}\sup_{\cD_n\in\cP}\bbE_{\{(\bs a_i,b_i)\}_{i=1}^n\sim\cD_n}\frac{\mu_\ell}{2}\left\|\hat{\bs x}(\{(\bs a_i,b_i)\}_{i=1}^n)-\bs x^*(\cD)\right\|_{\bs\Sigma}^2+\frac{\mu_\ell\sigma^2}{2}-F_\delta^* \\
        &\geq\frac{\mu_\ell d}{n}+\frac{\mu_\ell\sigma^2}{n}-F_\delta^*,   
    \end{aligned}
    \end{equation*}
    This completes the proof.
\end{proof}

Let $\bs H=L_\ell\bs\Sigma$, and $\bs Q_\delta=\bbE((\ell'(\bs a^\top\bs x^*))^2\bs a\bs a^\top)$ denote the second moment of the gradient covariance matrix, then the statistical term in Theorem~\ref{thm:glm} is
\begin{equation*}
    \frac{\alpha\tr(\bs H^{-1}\bs Q_\delta)}{n}.
\end{equation*}

In the following we prove Theorem~\ref{thm:minimax-risk}.
\begin{proof}[Proof of Theorem~\ref{thm:minimax-risk}]
    Let $\bs Q=\lim_{\delta\to 0}\bs Q_\delta=\mu_\ell^2\bs\Sigma$. Note that we have
    \begin{equation*}
        \lim_{\delta\to 0}F_\delta^*=\lim_{\delta\to 0}\bbE\ell_\delta(\varepsilon)=\frac{\mu_\ell\sigma^2}{2},
    \end{equation*}
    \begin{equation*}
        \lim_{\delta\to 0}\bs Q_\delta=\lim_{\delta\to 0}\bbE(\ell'_\delta(\varepsilon))^2\bs a\bs a^\top=\bbE(\mu_\ell\varepsilon)^2\bs a\bs a^\top=\mu_\ell^2\bs\Sigma.
    \end{equation*}
    Combining the above results, we have
    \begin{equation*}
        \lim_{\delta\to 0}\left\{\cR_n(\cP_n)-\frac{\alpha\tr(\bs H^{-1}\bs Q_\delta)}{2n}\right\}=\frac{\tr(\bs\Sigma^{-1}\bs Q)}{\mu_\ell n}-\frac{\alpha\tr(\bs H^{-1}\bs Q)}{2n}=\frac{\alpha\tr(\bs H^{-1}\bs Q)}{2n}>0.
    \end{equation*}
    Thus, there exists $\delta>0$ such that
    \begin{equation*}
        \inf_{\hat{\bs x}}\sup_{\cD\in\cP}F_\delta(\hat{\bs x})-F_\delta(\bs x^*)-\frac{\alpha\tr(\bs H^{-1}\bs Q_\delta)}{2n}>0.
    \end{equation*}
    Rearrange the terms to obtain the desired result.
\end{proof}

\section{Extensions of SADA}\label{sec:sada-extension}
\subsection{SADA for Weakly Convex Objectives}

\begin{algorithm}[t]
    \caption{\textbf{S}tochastic \textbf{A}ccelerated \textbf{D}ata-Dependent \textbf{A}lgorithm for \textbf{W}eakly \textbf{C}onvex Objectives (SADA-WC)}\label{alg:sada-wc}
    \begin{algorithmic}
    \Require {Initialization $\tbs x_0=\tbs x_{-1}$, regularization parameters $\{h_k\}_{k=1}^K$, step sizes $\eta$, $\gamma$, and momentum parameters $\{\beta_k\}_{k=1}^K$, $\theta$, target accuracy $\varepsilon$, radius $D$, $M$}
    \For{$k=1,2,\ldots,K$}
        \State $\tilde{\bs y}_{k-1}\gets\tbs x_{k-1}+\beta_k(\tbs x_{k-1}-\tbs x_{k-2})$ \Comment{Extraplotation}
        \State $\bs x_0\gets\tilde{\bs y}_{k-1}$, $\bs z_0\gets\tilde{\bs y}_{k-1}$
        \For{$t=1,2,\ldots,T$} \Comment{Inner loop for solving subproblem~\eqref{eq:subproblem-wc}}
            \State Sample fresh data $(\bs a_t,b_t)\sim\cD$
            \State $\bs y_{t-1}\gets\frac{1}{1+\theta}\bs x_{t-1}+\frac{\theta}{1+\theta}\bs z_{t-1}$
            \State $\bs{\hat g}_t\gets h_k\ell'(\bs a_t^\top\tbs y_{k-1},b_t)\bs a_t+\dfrac{h_k\varepsilon}{2 M^2}\bs a+\left[\bs a_t^\top(\bs y_{t-1}-\tbs y_{k-1})\right]\bs a_t+\dfrac{\varepsilon}{L_\ell D^2}(\bs y_{t-1}-\tbs y_{k-1})$
            \State $\bs x_t\gets\bs y_{t-1}-\eta\hat{\bs g}_t$
            \State $\bs z_t\gets\theta\bs y_{t-1}+(1-\theta)\bs z_{t-1}-\gamma\hat{\bs g}_t$
        \EndFor
        \State $\tbs x_k\gets\frac{2}{T}\sum_{t=T/2+1}^T\bs x_t$ \Comment{Tail-averaging scheme}
    \EndFor

    \State\Return $\tbs x_K$
    
    \end{algorithmic}
\end{algorithm}

The algorithm SADA-WC is shown in Algorithm~\ref{alg:sada-wc}. The difference is that the inner loop solves the following subproblem:
\begin{equation}    
    \min_{\bs x\in\bbR^d}\;\bbE_{\bs a,b\sim \cD} h_k\left\langle\ell'(\bs a^\top\tbs y_{k-1},b)\bs a+\frac{\varepsilon}{2 M^2}\bs a,\bs x-\tbs y_{k-1}\right\rangle+\frac{1}{2}\|\bs x-\tbs y_{k-1}\|_{\bs \Sigma+\frac{\varepsilon}{L_\ell D^2}\bs I}^2.\label{eq:subproblem-wc}
\end{equation}

The proofs presented in Appendix~\ref{sec:variance-bound} and Appendix~\ref{sec:bias-bound} can be extended to this case with slight modifications to the proof of the noise bound. For the inner loop, we modify the definition of $\cM$ and $\tilde{\cM}$ in \eqref{eq:def-M} to
\begin{equation*}
    \cM_\varepsilon=\bbE\left(\bs a\bs a^\top+\frac{\varepsilon}{L_\ell D^2}\bs I\right)\otimes\left(\bs a\bs a^\top+\frac{\varepsilon}{L_\ell D^2}\bs I\right),\quad\tilde{\cM}_\varepsilon=\left(\bs\Sigma+\frac{\varepsilon}{L_\ell D^2}\bs I\right)\otimes\left(\bs\Sigma+\frac{\varepsilon}{L_\ell D^2}\bs I\right).
\end{equation*}
Note that $\cM_\varepsilon-\tilde{\cM}_\varepsilon=\cM-\tilde{\cM}$, so the bounds in Appendix~\ref{sec:inner-loop} hold by replacing the eigenvalue $\lambda_i$ by $\lambda_i+\frac{\varepsilon}{L_\ell D^2}$, hence the claimed bound.

\subsection{SADA with Unlabeled Data}

\begin{algorithm}[t]
\caption{\textbf{S}tochastic \textbf{A}ccelerated \textbf{D}ata-Dependent \textbf{A}lgorithm with \textbf{U}nlabeled \textbf{D}ata (SADA-UD)}\label{alg:sada-ud}
    \begin{algorithmic}
    \Require {Initialization $\tbs x_0$, regularization parameters $\{h_k\}_{k=1}^K$, step sizes $\eta$, $\gamma$, and momentum parameters $\{\beta_k\}_{k=1}^K$, $\theta$, $\tbs x_{-1}=\tbs x_0$, $T_0=\tilde{\Theta}(\sqrt{\tilde{\kappa}/(\mu\eta)}$}
    \For{$k=1,2,\ldots,K$}
        \State $\tilde{\bs y}_{k-1}\gets\tbs x_{k-1}+\beta_k(\tbs x_{k-1}-\tbs x_{k-2})$ \Comment{Extraplotation}
        \State $\bs x_0\gets\tilde{\bs y}_{k-1}$, $\bs z_0\gets\tilde{\bs y}_{k-1}$
        \For{$t=1,2,\ldots,T$} \Comment{Inner loop for solving subproblem~\eqref{eq:subproblem}}
            \State $\bs y_{t-1}\gets\frac{1}{1+\theta}\bs x_{t-1}+\frac{\theta}{1+\theta}\bs z_{t-1}$
            \State Sample fresh labeled data $(\bs a_t,b_t)\sim\cD$
            \If{$t\leq T_0$}
                \State Sample fresh $m$ unlabeled data $\{\bs a_{t,i}^\text{ul}\}_{i=1}^m\sim\text{i.i.d.}\cD$\Comment{Sample $m$ unlabeled data}
                \State $\bs{\hat g}_t\gets h_k\ell'(\bs a_t^\top\tbs y_{k-1},b_t)\bs a_t+\dfrac{1}{m+1}\left(\bs a_t\bs a_t^\top+\displaystyle\sum_{i=1}^m\bs a_{t,i}^\text{ul}(\bs a_{t,i}^\text{ul})^\top\right)(\bs y_{t-1}-\tbs y_{k-1})$
            \Else            
                \State $\bs{\hat g}_t\gets h_k\ell'(\bs a_t^\top\tbs y_{k-1},b_t)\bs a_t+\left[\bs a_t^\top(\bs y_{t-1}-\tbs y_{k-1})\right] \bs a_t$ 
            \EndIf
            \State $\bs x_t\gets\bs y_{t-1}-\eta\hat{\bs g}_t$
            \State $\bs z_t\gets\theta\bs y_{t-1}+(1-\theta)\bs z_{t-1}-\gamma\hat{\bs g}_t$
        \EndFor
        \State $\tbs x_k\gets\frac{2}{T}\sum_{t=T/2+1}^T\bs x_t$ \Comment{Tail-averaging scheme}
    \EndFor

    \State\Return $\tbs x_K$
    
    \end{algorithmic}
\end{algorithm}

The algorithm SADA-UD is shown in Algorithm~\ref{alg:sada-ud}. The difference is that for each inner iteration, we sample data pair $\bs a,b$ from $\cD$ and additional $m$ independent unlabeled data $\{\bs a_i^{\text{ul}}\}_{i=1}^m$ from the same distribution $\cD$ without label. Then, we use $\frac{1}{m+1}\left(\bs a\bs a^\top+\sum_{i=1}^m \bs a_i^{\text{ul}}\bs (\bs a_i^{\text{ul}})^\top\right)$ as the estimator of $\bs\Sigma$.

We present a sketch of the analysis of Algorithm~\ref{alg:sada-ud}. Let
\begin{equation*}
    \hbs\Sigma=\frac{1}{m+1}\left(\bs a\bs a^\top+\sum_{i=1}^m \bs a_i^{\text{ul}}\bs (\bs a_i^{\text{ul}})^\top\right).
\end{equation*}
We extend Assumption~\ref{assumption:fourth-moment} to the following lemma.
\begin{lemma}\label{lemma:fourth-moment-extension}
    Suppose Assumption~\ref{assumption:fourth-moment} holds, then we have
    \begin{equation*}
        \bbE\hbs\Sigma^2\preceq R_m^2\bs\Sigma,\quad\bbE\hbs\Sigma\bs\Sigma^{-1}\hbs\Sigma\preceq\tilde{\kappa}_m\bs\Sigma,
    \end{equation*}
    where $R_m^2=\frac{R^2+m\lambda_\rmmax(\bs\Sigma)}{m+1}$ and $\tilde{\kappa}_m=\frac{\tilde{\kappa}+m}{m+1}$.
\end{lemma}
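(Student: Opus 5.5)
We write $\hbs\Sigma=\frac{1}{m+1}\sum_{j=0}^{m}\bs a_j\bs a_j^\top$, where $\bs a_0=\bs a$ and $\bs a_j=\bs a_j^{\text{ul}}$ for $j\geq 1$. The $m+1$ vectors are i.i.d.\ copies of $\bs a\sim\cD$, and each has second moment $\bs\Sigma$. Both claims are then a direct second-moment computation for an average of independent rank-one matrices. The plan is to expand the quadratic forms $\hbs\Sigma\bs W\hbs\Sigma$ for $\bs W\in\{\bs I,\bs\Sigma^{-1}\}$ into diagonal and off-diagonal terms, and bound each type separately using Assumption~\ref{assumption:fourth-moment}.

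First expand:
\begin{equation*}
    \bbE\hbs\Sigma\bs W\hbs\Sigma=\frac{1}{(m+1)^2}\Bigl(\sum_{j=0}^m\bbE\,\bs a_j\bs a_j^\top\bs W\bs a_j\bs a_j^\top+\sum_{j\neq j'}\bbE\,\bs a_j\bs a_j^\top\bs W\bs a_{j'}\bs a_{j'}^\top\Bigr).
\end{equation*}

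For the diagonal terms, note that $\bs a_j\bs a_j^\top\bs W\bs a_j\bs a_j^\top=\|\bs a_j\|_{\bs W}^2\bs a_j\bs a_j^\top$. Assumption~\ref{assumption:fourth-moment} bounds its expectation by $R^2\bs\Sigma$ when $\bs W=\bs I$ and by $\tilde{\kappa}\bs\Sigma$ when $\bs W=\bs\Sigma^{-1}$. The $m+1$ diagonal terms together contribute at most $(m+1)R^2\bs\Sigma$, respectively $(m+1)\tilde{\kappa}\bs\Sigma$.

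For the off-diagonal terms, independence gives $\bbE\,\bs a_j\bs a_j^\top\bs W\bs a_{j'}\bs a_{j'}^\top=\bs\Sigma\bs W\bs\Sigma$. This equals $\bs\Sigma^2\preceq\lambda_\rmmax(\bs\Sigma)\bs\Sigma$ for $\bs W=\bs I$, and exactly $\bs\Sigma$ for $\bs W=\bs\Sigma^{-1}$. There are $(m+1)m$ ordered pairs, so these terms contribute at most $(m+1)m\lambda_\rmmax(\bs\Sigma)\bs\Sigma$, respectively $(m+1)m\bs\Sigma$.

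Summing and dividing by $(m+1)^2$ gives $\bbE\hbs\Sigma^2\preceq\frac{R^2+m\lambda_\rmmax(\bs\Sigma)}{m+1}\bs\Sigma=R_m^2\bs\Sigma$ and $\bbE\hbs\Sigma\bs\Sigma^{-1}\hbs\Sigma\preceq\frac{\tilde{\kappa}+m}{m+1}\bs\Sigma=\tilde{\kappa}_m\bs\Sigma$, which are exactly the claimed constants. There is no real obstacle. The only point needing care is that the cross terms must be handled as exact expectations, $\bs\Sigma\bs W\bs\Sigma$, rather than bounded by a Cauchy--Schwarz type inequality. This is what produces the improved factor $m$ in place of $mR^2$. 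The bound $\bs\Sigma^2\preceq\lambda_\rmmax(\bs\Sigma)\bs\Sigma$ holds because $\bs\Sigma$ commutes with itself.
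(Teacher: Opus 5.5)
Your proposal is correct and follows essentially the same route as the paper. Both expand $\bbE\hbs\Sigma\bs W\hbs\Sigma$ over the $m+1$ i.i.d.\ rank-one terms, bound the $m+1$ diagonal terms via Assumption~\ref{assumption:fourth-moment}, and evaluate the $m(m+1)$ cross terms exactly as $\bs\Sigma\bs W\bs\Sigma$ by independence, using $\bs\Sigma^2\preceq\lambda_\rmmax(\bs\Sigma)\bs\Sigma$ when $\bs W=\bs I$; this yields the paper's intermediate identity $\frac{1}{m+1}\bbE\|\bs a\|_{\bs W}^2\bs a\bs a^\top+\frac{m}{m+1}\bs\Sigma\bs W\bs\Sigma$.
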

\begin{remark}
    If we set $\hbs\Sigma=\bs a\bs a^\top$, then Lemma~\ref{lemma:fourth-moment-extension} is identical to Assumption~\ref{assumption:fourth-moment}.
\end{remark}
\begin{proof}
    By definition, we have
    \begin{equation*}
    \begin{aligned}
        \bbE\hbs\Sigma^2&=\frac{1}{(m+1)^2}\bbE\left(\bs a\bs a^\top+\sum_{i=1}^m \bs a_i^{\text{ul}}\bs (\bs a_i^{\text{ul}})^\top\right)\left(\bs a\bs a^\top+\sum_{i=1}^m \bs a_i^{\text{ul}}\bs (\bs a_i^{\text{ul}})^\top\right) \\
        &\stackrel{a}{=}\frac{1}{m+1}\bbE\|\bs a\|^2\bs a\bs a^\top+\frac{m}{m+1}\bs\Sigma^2 \\
        &\stackrel{b}{\preceq}\frac{R^2+m\lambda_\rmmax(\bs\Sigma)}{m+1}\bs\Sigma=R_m^2\bs\Sigma,
    \end{aligned}
    \end{equation*}
    where for $\stackrel{a}{=}$ we use $\bs a$ and $\bs a_i^{\text{ul}}$ are i.i.d. random variables, and $\stackrel{b}{\preceq}$ uses Assumption~\ref{assumption:fourth-moment}. For the second inequality, we have
    \begin{equation*}
    \begin{aligned}
        \bbE\hbs\Sigma\bs\Sigma^{-1}\hbs\Sigma&=\frac{1}{(m+1)^2}\bbE\left(\bs a\bs a^\top+\sum_{i=1}^m \bs a_i^{\text{ul}}\bs (\bs a_i^{\text{ul}})^\top\right)\bs\Sigma^{-1}\left(\bs a\bs a^\top+\sum_{i=1}^m \bs a_i^{\text{ul}}\bs (\bs a_i^{\text{ul}})^\top\right) \\
        &\stackrel{a}{=}\frac{1}{m+1}\bbE\|\bs a\|_{\bs\Sigma^{-1}}^2\bs a\bs a^\top+\frac{m}{m+1}\bs\Sigma \\
        &\stackrel{b}{\preceq}\frac{\tilde{\kappa}+m}{m+1}\bs\Sigma=\tilde{\kappa}_m^2\bs\Sigma,
    \end{aligned}
    \end{equation*}
    where for $\stackrel{a}{=}$ we use $\bs a$ and $\bs a_i^{\text{ul}}$ are i.i.d. random variables, and $\stackrel{b}{\preceq}$ uses Assumption~\ref{assumption:fourth-moment}.
\end{proof}

For the inner loop, the stochastic gradient at $\bs x$ is
\begin{equation*}
    \ell'(\bs a^\top\tbs y,b)\bs a+\hbs\Sigma(\bs x-\tbs y),
\end{equation*}
so the noise covariance $\bs Q$ is defined as
\begin{equation*}
    \bs R=\bbE\left(\ell'(\bs a^\top\tbs y,b)\bs a-\hbs\Sigma\bs\Sigma^{-1}\nabla F(\tbs y)\right)\left(\ell'(\bs a^\top\tbs y,b)\bs a+\hbs\Sigma\bs\Sigma^{-1}\nabla F(\tbs y)\right)^\top.
\end{equation*}
Follow the proof of Lemma~\ref{lemma:noise-bound-Sigma-inverse} and Lemma~\ref{lemma:noise-bound-2-norm}, one has
\begin{align*}
    \tr(\bs\Sigma^{-1}\bs R)&\leq 5\tr(\bs\Sigma^{-1}\bs Q)+5L_\ell(L+\tilde{\kappa}_m)(F(\tbs y)-F(\bs x^*)), \\
    \tr\bs R&\leq 5\tr\bs Q+5L_\ell(B+R_m^2)(F(\tbs y)-F(\bs x^*)).
\end{align*}

The proofs can be directly extended to this setting by replacing Assumption~\ref{assumption:fourth-moment} by Lemma~\ref{lemma:fourth-moment-extension}, and accordingly the quanitiy $R^2$, $\kappa$ and $\tilde{\kappa}$ will be replaced by $R_m^2$, $\kappa_m$ and $\tilde{\kappa}_m$, yielding the claimed result. Finally, note that $L\leq\alpha\tilde{\kappa}$, we derive the results in Section~\ref{sec:extensions-unlabeled-data}.

\end{document}